\newcommand{\eat}[1]{}
\newcommand{\scream}[1]{}
\newcommand{\reminder}[1]{}
\newtheorem{theorem}{Theorem}
\newtheorem{lemma}{Lemma}
\newtheorem{assumption}{Assumption}
\newcommand{\covmultidataset}{Cov}
\newcommand{\skindataset}{Skin}
\newcommand{\heartbeatdataset}{Heartbeat}
\newcommand{\SGEMMdataset}{SGEMM}
\newcommand{\higgsdataset}{HIGGS}
\newcommand{\rcvdataset}{RCV1}
\newcommand{\pro}{PrIU}
\newcommand{\proopt}{PrIU-opt}
\newcommand{\gbm}{GBM}
\newcommand{\cut}{Cut-off}
\newcommand{\std}{Std}
\newcommand{\iter}{BaseL}
\newcommand{\infl}{INFL}
\newcommand{\sgd}{SGD}
\newcommand{\gd}{GD}
\newcommand{\minisgd}{mb-SGD}
\newcommand{\subids}{\mathscr{B}^{(t)}}
\newcommand{\fliperr}{flipping-errors}
\newcommand{\newerr}{new-errors}
\newcommand{\featureerr}{feature-errors}
\newcommand{\w}{\textbf{w}}
\newcommand{\E}{\mathbb{E}}
\newcommand{\increw}{\textbf{w}_{_U}}
\newcommand{\linearw}{\textbf{w}_{_L}}
\newcommand{\linearincrew}{\textbf{w}_{_{LU}}}
\newcommand{\logistlinearincrew}{\textbf{w}_{_{RU}}}
\newcommand{\linearprov}{\mathcal{W}_{_L}}
\newcommand{\increprov}{\mathcal{W}_{_U}}
\newcommand{\linearincreprov}{\mathcal{W}_{_{LU}}}
\newcommand{\increB}{B_{_U}}
\newcommand{\closeform}{Closed-form}
\newcommand{\cifar}{cifar10}
\newcommand{\zeroprov}{0_{\mathrm{prov}}}
\newcommand{\oneprov}{1_{\mathrm{prov}}}
\newcommand\mygeqone{\stackrel{\mathclap{\normalfont\mbox{Equation \ref{eq: strong_convexity2}}}}{\geq}}
\newcommand\csineq{\stackrel{\mathclap{\normalfont\mbox{\text{Cauchy–Schwarz inequality}}}}{\leq}}
\newcommand\myleqone{\stackrel{\mathclap{\normalfont\mbox{Equation \ref{eq: piecewise_approx_rate}}}}{\leq}}
\newcommand\myleqtwo{\stackrel{\mathclap{\normalfont\mbox{Equation \ref{eq: R_property1}-\ref{eq: R_property2}}}}{\leq}}
\newcommand\myeqtwo{\stackrel{\mathclap{\normalfont\mbox{Lemma \ref{lm: matrix_norm_with_identity_matrix}}}}{=}}
\newcommand\myeqthree{\stackrel{\mathclap{\normalfont\mbox{Equation \ref{eq: g_expectation}}}}{=}}
\newcommand\numberthis{\addtocounter{equation}{1}\tag{\theequation}}
\newcommand{\REMARK}[2]{{\bf [*[#1:~#2]*]}}
\newcommand{\yinjun}[1]{}
\newcommand{\val}[1]{}
\newcommand{\susan}[1]{}
\newcommand{\struct}[1]{\REMARK{STRUCT}{#1}}
\def\BibTeX{{\rm B\kern-.05em{\sc i\kern-.025em b}\kern-.08emT\kern-.1667em\lower.7ex\hbox{E}\kern-.125emX}}
\begin{document}

\fancyhead{}

\title{\pro: A Provenance-Based Approach for Incrementally Updating Regression Models}

\author{Yinjun Wu}
\affiliation{%
  \institution{University of Pennsylvania}
  }
\email{wuyinjun@seas.upenn.edu}
\author{Val Tannen}
\affiliation{%
  \institution{University of Pennsylvania}
  }
\email{val@cis.upenn.edu}
\author{Susan B. Davidson}
\affiliation{%
  \institution{University of Pennsylvania}
  }
\email{susan@cis.upenn.edu}
\begin{abstract}
The ubiquitous use of machine learning algorithms brings new challenges to traditional database problems such as incremental view update. Much effort
  is being put in better understanding and debugging machine learning models,
  as well as in identifying and repairing errors in training datasets.
  Our focus is on how to assist these activities when they have to retrain the machine learning model after removing problematic training samples in cleaning or selecting different subsets of training data for interpretability. This paper presents an efficient provenance-based approach, \pro,
  and its optimized version, \proopt, for
  incrementally updating model parameters without sacrificing prediction accuracy.
  We prove the correctness and convergence of the incrementally updated model parameters, 
  and 
  validate it experimentally. Experimental results show that up to two orders of magnitude speed-ups can be achieved by \proopt\ compared to simply retraining the model from scratch, yet obtaining highly similar models.  
\end{abstract}

%
%

\begin{CCSXML}
<ccs2012>
<concept>
<concept_id>10002951.10002952.10002953.10010820.10003623</concept_id>
<concept_desc>Information systems~Data provenance</concept_desc>
<concept_significance>500</concept_significance>
</concept>
<concept>
<concept_id>10002950.10003648.10003688.10003699</concept_id>
<concept_desc>Mathematics of computing~Exploratory data analysis</concept_desc>
<concept_significance>300</concept_significance>
</concept>
<concept>
<concept_id>10003752.10003809.10003716.10011138.10010043</concept_id>
<concept_desc>Theory of computation~Convex optimization</concept_desc>
<concept_significance>300</concept_significance>
</concept>
</ccs2012>
\end{CCSXML}

\ccsdesc[500]{Information systems~Data cleaning,Incremental maintenance,Data provenance}
\ccsdesc[300]{Mathematics of computing~Exploratory data analysis}
\ccsdesc[300]{Theory of computation~Convex optimization}



%
\keywords{Data provenance, machine learning, deletion propagation}

%

%
\maketitle

\section{Introduction}

In database terminology, this paper is about \emph{efficient incremental
view updates}, specifically about using provenance annotations to
propagate the effect of deletions from the input data to the output. However,
the views that we consider are \emph{regression models} (linear and
binomial/multinomial logistic regression) and the input data consists
of the samples used to train these models.

The need for incremental techniques to efficiently update regression models arises in several contexts, for example data cleaning and interpretability.  
{\em Data cleaning} has been extensively studied by the database community~\cite{dasu2003exploratory, rahm2000data, fan2012quality, chu2016data}, and is typically an {iterative} and {interactive} process, allowing data analysts to alternate between analysis and cleaning tasks, as well as to interact with other parties such as IT staff and data curators~\cite{krishnan2016towards}.
Machine learning techniques are particularly sensitive to dirty data in training datasets, since it can result in erroneous models and  counter-intuitive predictions for test datasets~\cite{chu2016data}.
A number of techniques have therefore recently been proposed for detecting and repairing dirty data in machine learning, e.g., 
\cite{krishnan2017boostclean, heidari2019holodetect}. 
The work presented in this paper can be incorporated into these data cleaning pipelines by assuming that dirty data in the training set has already been detected, and addresses the next step by providing a solution for incrementally updating the machine learning model after the dirty data is removed.


{\em Interpretability} is also a major concern in machine learning (see, for example, the general discussions
in~\cite{doshi2017roadmap,Lipton16mythos}, the extensive human
subjects experiments in~\cite{poursabzi2018manipulating}, as well the
many references in these papers). 
The problem is being studied from several different perspectives 
(see Sec.~\ref{sec: related_work}).
The data-driven approaches of ~\cite{doshi2017roadmap,krishnan2017palm}
discover \emph{factors of interpretability}
by performing \emph{repeated retraining}
of models using multiple different subsets of a training dataset to understand the relationship between samples with certain feature characteristics and the model behavior.
Such repeated retraining also occurs in 
\emph{model debugging}~\cite{krishnan2016activeclean,KangRBZ18,heidari2019holodetect} 
and \emph{deletion diagnostics}~\cite{cook1977detection}.
\eat{
\scream{Yinjun: feature engineering means something else in ML. I prefer removing the following sentence}when a fine-grained analysis of the training set is needed, 
perhaps leading to feature re-engineering.}

In this respect, our work shares goals with ~\cite{koh2017understanding}, 
which develops an {\em influence function} to approximately quantify the influence of a \emph{single} training sample on the model parameters and prediction
results; this can also be
used for estimating the model parameter change
after the removal of one training sample.
However, extending the influence function
approach to multiple training samples 
significantly weakens prediction accuracy. {\em In contrast, our techniques are not only efficient but significantly more accurate.}
\reminder{actually, as accurate as baseline retraining; but our models are also highly similar to the ones from baseline retraining, as opposed to INFL.}

{\bf Connection to Provenance.} 
Note that the problem of incrementally updating the model after removing a subset of the training samples can be seen as a question of {\em data
provenance}~\cite{green2007provenance,buneman2001and,CheneyCT09}.  
Data provenance tracks the dependencies between
input and output data; in particular, the \emph{provenance semiring framework}
\cite{green2007provenance,GreenT17}
has been used for applying incremental updates (specifically deletions)
to views.

In the semiring framework, input data is annotated with provenance tokens which are carried through the operators performed on the data (e.g. select, project, join, union).
Output data is then annotated with \emph{provenance polynomials} expressed in terms of the provenance tokens.  When an input tuple is deleted, the effect on the output can be efficiently calculated by essentially ``zeroing out'' its token in the provenance polynomial.
Recently, the framework has been extended to
include basic linear algebra operations: matrix addition and
multiplication~\cite{yan2016fine}.
In this extension, the
provenance polynomials play the role of scalars and multiplication
with scalars plays the role of annotating matrices and vectors with
provenance. 

As an example, suppose that $p,q,r,s$ are provenance tokens
that annotate 
samples in a training dataset. Our methods
will show that vectors of interest (such as the vector of model parameters) can be expressed with provenance-annotated expressions such as:
$$
\mathbf{w} = (p^2q * \mathbf{u}) + (qr^4 * \mathbf{v}) + (ps * \mathbf{z})
$$
Here, $\mathbf{u,v,z}$ are numerical vectors signifying contributions to the
answer $\mathbf{w}$ and they are annotated (algebraic operation $*$) with
$p^2q, qr^4, ps$ which are provenance polynomials to be read as follows: the provenance $p^2q$ represents the use of both data items labeled $p$ and $q$ and, in fact, the first item is used twice.
Now suppose the data item annotated with $r$ is deleted while those annotated $p,q,s$
are retained. We can express the updated value of $\mathbf{w}$ under this deletion
by setting $r$ to the ``provenance 0 polynomial'', denoted $\zeroprov$
which signifies absence, and 
$p,q,s$ to the
``provenance 1 polynomial'', denoted $\oneprov$,
which signifies ``neutral'' presence, no need to track further.
The algebraic properties of provenance polynomials and of their annotation of matrices/vectors ensure what one would expect, e.g,  
$\zeroprov\cdot r^4 = \zeroprov$ as well as $\zeroprov*\mathbf{v} = \mathbf{0}$
(the all-zero vector)
and $\oneprov*\mathbf{z}=\mathbf{z}$. It follows that under this deletion 
$\mathbf{w} = \mathbf{u} + \mathbf{z}$.

{\bf Approach.}  In this paper, we use the extension of the semiring framework to matrix operations
to track the provenance of input
samples through the training of logistic regression and linear
regression models using gradient descent and its variants.
\reminder{gradient descent, (mini-batch) stochastic gradient descent (called gradient-based methods (GBMs) below).}
In each iteration of the training phase, a gradient-based ``update rule'' updates the model parameters, \eat{. When the update rule is expressed using matrix addition and multiplication (as is the case in linear regression), we can annotate the rule with provenance polynomials. }which can be annotated with provenance polynomials. For logistic regression, we can achieve this via piecewise linear interpolation over the non-linear
components in the gradient update rule.

In addition to enabling provenance tracking, the linearization of the
gradient update rule allows us to separate the contributions of
the training samples from the contributions of the model parameters from the previous iteration.  As a result, the effect of deleting
training samples on the gradient update rule can be obtained by ``zeroing out''
the provenance tokens corresponding to those samples.

{\bf Challenges.} Reasoning over provenance to enable incremental updates introduces significant overhead in the gradient descent calculation. 
To speed up incremental updates over model parameters for dense datasets, we use several optimizations in our implementation, \pro:  First, between iterations during the training phase over the full training dataset,
we cache intermediate results (some matrix expression) that
capture only the contribution of the training samples. These are annotated with
provenance. Then during the model update phase, the propagation of the deletion of a subset of samples comes down to a subtraction of the "zeroed-out" contributions of the removed samples.
Second, \eat{if the feature space of the training dataset is large or the mini-batch size is small, }we apply singular value decomposition (SVD) over the intermediate results to reduce their dimensions. \reminder{Is it clear what the SVD is of?}
An optimized version of \pro, \proopt, is also designed for further optimizations over datasets with small feature sets using incremental updates to eigenvalues.
(For logistic regression, it is used by terminating provenance tracking early when provenance expressions stabilize. See Section \ref{sec: implementation} for more details). But the optimizations above cannot work for sparse datasets, for which we use only the linearization 
of the update rule for logistic regression.

\eat{An additional subtlety for logistic regression is that piecewise linear interpolation uses coefficients which are different in different iterations, and hence need to be efficiently tracked in the computation.}

\eat{

We also noted that with stochastic gradient descent (\sgd) and mini-batch stochastic gradient descent (\minisgd), only a small portion of the training samples are used in each iteration, which can achieve significant speed-ups compared to traditional gradient descent method, thus reducing the performance gains brought by the idea of simply caching the intermediate results in the update rule. But we also observed that the rank of the cached intermediate results is small once the mini-batch size is small in \minisgd\ since the intermediate result represents the sum of multiple \textbf{rank-1} terms (each of which is contributed by one training sample from the mini-batch) and thus its rank should be no more than the mini-batch size. So we determined to decompose the intermediate result for \textbf{dense datasets} into the multiplications of two lower-rank matrix by only keeping the most important components of its singular value decomposition \cite{forsythe1967computer}.
}

As we shall see, \pro\ and \proopt\ can lead to speed-ups of up to 2 orders of magnitude when compared to a baseline of retraining the model from the updated input data; however, for sparse datasets the speedup is only 10\%. 
While the practical impact of this speed-up may be small for an engineer who only deletes one subset of training samples, especially if retraining takes only a few minutes, the impact is much greater for an engineer who repeatedly removes multiple different subsets of training samples, e.g. when exploring factors of interpretability. In this case, even one order of magnitude speed-up reduces exploration from several hours to a few minutes.  

\eat{
One challenge of using provenance to enable incremental updates over machine learning models is the overhead, due not only to capturing and reasoning over provenance but to the model complexity in machine learning pipelines.
As a result, it is difficult to achieve {\em efficient} incremental updates by straighforwardly applying the update rule-based provenance strategies illustrated above. 
We therefore develop an {\em efficient incremental model update strategy}, \pro, and its optimized version, \proopt, for several simple machine learning models. The models include linear regression and (binary and multinomial) logistic regression.
}
\scream{can we simply talk about future work in the conclusion, the following paragraph is removed (in the eat environment)}
\eat{
Extending our approach to more complicated models than
linear and logistic regression (such as deep neural network models)
is left for future work. }
\reminder{More in conclusions}

\textbf{Contributions} of this paper include:
\vspace*{-1mm}
\begin{enumerate}
    \item A theoretical framework which enables data provenance to be tracked and used for fast incremental model updates when subsets of training samples are removed.  The framework extends the approach in~\cite{green2007provenance, green2010provenance,yan2016fine} to  linear regression and (binary and multinomial) logistic regression models.
    \item Analytical results showing the convergence and accuracy of the updated model parameters for logistic regression, which are approximately computed by applying piecewise linear interpolation over the non-linear operations in the model parameter update rules. 
    \item Efficient provenance-based algorithms, \pro\ and \proopt, which achieve fast model updates after removing subsets of training samples.
    \eat{\scream{can we remove the following details since they have been mentioned above?} through efficient matrix operations, using the singular value decomposition to reduce the dimension of the intermediate results, using an incremental eigenvalue update approach to avoid expensive repetitive computation, and terminating provenance tracking early when provenance expressions stabilize. }
    \item Extensive experiments showing the effectiveness and accuracy of \pro\ and \proopt\ in incrementally updating the linear regression and logistic regression models compared to the straightforward approach of retraining from  scratch, as well as compared to implementing an extension of the influence function in \cite{koh2017understanding}.
    \item Enabling work on interpretability that seeks to understand
    the effect of removing {\em subsets} of the training data, rather than just of a single training sample. \reminder{We are not actually doing the interpretability work, just helping with the repeated retraining. And it's not clear that such work is currently taking place} 
\end{enumerate}

\vspace{-1mm}
The remainder of the paper is organized as follows. In Section \ref{sec: related_work}, we describe related work in incremental model maintenance, data provenance, data cleaning, and machine learning model interpretability. Section \ref{sec: prl} reviews the basic concepts of linear regression and logistic regression. The theoretical development of how to use provenance in the update rules of linear regression and logistic regression is presented in Section \ref{sec: model}, and its implementation provided in Section \ref{sec: implementation}. Experimental results comparing our approach to other solutions are presented in Section \ref{sec: experiment}. We conclude in Section \ref{sec: conclusion}.

To our knowledge, this is the first work to use provenance 
for the purpose of incrementally updating machine learning model parameters.

\eat{
\scream{our contributions, edited}

Another contribution of our approach is a better, more fine-grained, understanding of how the parameters of the learned model depend on the training data. For example, suppose the training data comes from several different sources and we wish
to understand how one of these sources affects the model.
This is related to the notion of {\em interpretability}~\cite{doshi2017roadmap, poursabzi2018manipulating}.  One approach to interpretability that has been proposed in the machine learning community~\cite{koh2017understanding} is to develop an {\em influence function} to approximately quantify the influence of a single training sample on the model parameters and prediction results; this can also be used to estimate the model parameter change after the removal of a training sample. 
However, our results show (see Section~\ref{sec: experiment}) that extending the influence function approach
to multiple training samples (for example, all those coming from a given source) leads to very inaccurate models.
Searching for another approach, we note that interpretability can be seen as a \emph{provenance} question.
\eat{However, we demonstrate experimentally (see Section \ref{sec: experiment}) that this approach
leads to very inaccurate models when extended
to handle the deletion of multiple training samples.}
\eat{not sure if we should say here 10\%}
\eat{
As our experiments will demonstrate  using influence functions to estimate the change of model parameters is very inaccurate 
\val{this needs to be better explained}\yinjun{done by explaining that their approach is based on Talyor expansion}
when multiple training samples (say 10\% of training samples) are removed.}

To achieve the goal of incrementally updating the model after the
removal of subsets of training samples, we use {\em data
provenance}~\cite{green2007provenance,buneman2001and,CheneyCT09}.  
In general, data provenance tracks the dependencies between
input and output data; in particular, the \emph{provenance semiring framework}
\cite{green2007provenance,GreenT17}
has been used for applying
incremental updates (specifically deletions)
to views, and more generally, through schema 
mappings~\cite{ives2008orchestra}.

In the semiring framework, output data is annotated with \emph{provenance
polynomials} expressed in terms of provenance tokens that annotate the
input data, thus enabling deletions of input data to be propagated to the output data.
The framework has proven quite versatile, allowing for extensions 
to powerful query languages (see references in~\cite{GreenT17}).
In particular, the framework has been extended to
basic linear algebra operations: matrix addition and
multiplication~\cite{yan2016fine} (also mentioned 
in~\cite{buneman2019data} as a first step in using data
provenance for interpretability of machine learning models).
In this extension, the provenance polynomials play the role of scalars and multiplication
with scalars plays the role of annotating matrices and vectors with
provenance; a similar idea can be found in ~\cite{amsterdamer2011provenance}.

In this paper we use the extension of the semiring framework to
basic linear algebra in order to track the provenance of input
samples through the training of logistic regression and linear
regression models using iterative gradient-based methods (GBMs).
In each iteration, a gradient-based ``rule'' updates the 
model parameters. We annotate the gradient update rule with 
provenance polynomials 
when it is expressed using matrix addition and multiplication. This is
the case when using GBM in linear regression. For logistic regression
we achieve this via piecewise linear interpolation of the non-linear
components in the gradient update rule.

In addition to allowing provenance tracking, the linearization of the
gradient update rule also allows us to separate the contributions of
the training samples from the contributions of the model parameters
from the previous iteration.  As a result, the effect of deleting
training samples on the gradient update rule can be obtained by
setting the provenance tokens corresponding to those samples to zero.
To speed up computation for both linear and logistic
regression, we are able to cache intermediate results between iterations that
capture only the contribution of the training samples, annotated with
provenance. Propagating the deletions of some of the samples then
comes down to subtracting a portion of these intermediate results.  An
additional subtlety for logistic regression is that the piecewise
linear interpolation uses coefficients which are different in
different iterations and hence need to be tracked in the computation.

\eat{We illustrate below how provenance can be used to incrementally update the model parameters after removing a subset of training data.

\begin{example}
Consider the UCI Skin Segmentation  \footnote{\url{https://www.csie.ntu.edu.tw/~cjlin/libsvmtools/datasets/binary.html#skin_nonskin}} dataset, which is used to classify pixels in images of faces as skin or not-skin using RGB values along with other features such as gender and race. A training dataset containing three samples 
is shown in Figure \ref{fig: example} (ignore the last column for the moment). For simplicity, we only show the feature values for R, G, B attributes as well as the label (skin\_or\_not) for each sample. 

\begin{figure}[!htb]
\includegraphics[width=0.45\textwidth, height=0.25\textwidth]{Figures/example.png}
\caption{UCI Skin Segmentation training dataset example}\label{fig: example}
\end{figure}

Suppose that logistic regression with stochastic gradient descent (SGD) is used during the training phase, resulting in the vector of model parameters $(0.423, 0.6, -0.9,...)$ shown in the middle of the figure.
When running the model over the test dataset, it turns out that the accuracy is low, only 60\%. After consulting with experts who are familiar with the dataset, the first sample is discovered to be an error and is removed from the training phase. Rather than rerunning the logistic regression over the modified training dataset, we update the model parameters directly using provenance.

We start by annotating each training sample with a unique provenance token, e.g. $p_1, p_2, p_3$ for the three samples.
The value of $p_i(i=1,2,3)$ can be 1 or 0 representing the existence or non-existence of the corresponding training sample.  During the training phase over the original (dirty) dataset, the provenance tokens are propagated through the SGD computation. As a result, the model parameters (i.e. $(0.423, 0.6, -0.9,...)$) will be broken into different pieces, each of which is associated with some {\em joint} combination of the provenance tokens (the {\em tensor product} as will be seen later). For example, in Figure \ref{fig: example}, one such tensor product is $(0.134, 0.472, -0.28,\dots)*p_1$, which intuitively means that if only the first sample exists, the model parameters will be $(0.134, 0.472, -0.28,\dots)$. We can then ``remove'' the first sample by setting $p_1$ to 0 and $p_2$, $p_3$ to 1, and thus derive the updated model parameters, $(0.84, -0.06, 0.32,\dots)$ (shown at the bottom of the figure). Note that the model does not need to be retrained over the reduced training dataset; the model parameters are updated directly.  This new model now has a much higher test accuracy (90\%), similar to that achieved by retraining from scratch.
\end{example}}

One challenge of using provenance to enable incremental updates over machine learning models is the overhead, due not only to capturing and reasoning over provenance but to the model complexity in machine learning pipelines.
As a result, it is difficult to achieve {\em efficient} incremental updates by straighforwardly applying the update rule-based provenance strategies illustrated above. 
We therefore develop an {\em efficient incremental model update strategy}, \pro, and its optimized version, \proopt, for several simple machine learning models. The models include linear regression and (binary and multinomial) logistic regression.

Extending this to more complicated models is left for future work. A possible wider target, especially for the 
optimization ideas embodied in \proopt\ is the class of \emph{generalized additive models} (GAM)~\cite{hastie1986gam}.
Note also that 1) the multinomial logistic regression model represents the last layer, i.e. the softmax layer, 
of a typical deep neural network models; and 2) one of the most widely used activation functions in 
\emph{deep neural network models (DNNs)} is the sigmoid function, which is the same as the non-linear operation in the logistic regression model\eat{; and 3) another typical activation function in DNNs is the ReLU function, which is a segmented linear function and thus closely related to piecewise linear interpolation used in our approach}.
This suggests DNNs as another future target.

\textbf{Contributions} of this paper include:
\begin{enumerate}
    \item A theoretical framework which enables data provenance to be tracked and used for very fast incremental model updates when subsets of training samples are removed.  The framework extends the approach in~\cite{green2007provenance, green2010provenance,yan2016fine} to  linear regression and (binary and multinomial) logistic regression models.
    
    \item Analytical results showing the convergence and accuracy of the updated model parameters which are approximately computed by applying piecewise linear interpolation over the non-linear operations (in logistic regression) in the model parameter update rules. 
    
    \item Efficient provenance-based algorithms, \pro\ and \proopt, which achieve very fast model updates after removing subsets of training samples through efficient matrix operations, using the singular value cdecomposition to reduce the dimenension of the intermediate results, using an incremental eigenvalue update approach to avoid expensive repetitive computation, and terminating provenance tracking early when provenance expressions stabilize. 
     
    \item Extensive experiments showing the effectiveness and accuracy of \pro\ and \proopt\ in incrementally updating the linear regression and logistic regression models compared to the straightforward approach of retraining from  scratch and compared to implementing an extension of the influence function in \cite{koh2017understanding}.
    
    \item Extending current work on interpretability by enabling the understanding the effect of removing {\em subsets} of the training data, rather than just a single training sample.
    
\eat{
- model of provenance for linear matrix operations -- WHAT ARE THESE
- use of this model to linear regression as well as a linearized version of logistic regression based on approximation
- correctness analysis: proof of convergence of resulting provenance expression whenever the model parameter w converges, proof on bound between the real and approximated model parameters
- application to cleaning, understanding influence of inputs on model
- speed up computation (optimization) of model update under deletion
}

\end{enumerate}

The remainder of the paper is organized as follows. In Section \ref{sec: related_work}, we describe related work in incremental model maintenance, data provenance, data cleaning, and machine learning model interpretability. Section \ref{sec: prl} reviews the basic concepts of linear regression, logistic regression, and piecewise linear interpolation. The theoretical development of how to use provenance in the update rules of linear regression and logistic regression is presented in Section \ref{sec: model}, and its implementation provided in Section \ref{sec: implementation}. Experimental results showing the feasibility of our approach compared to other solutions are presented in Section \ref{sec: experiment}. We conclude in Section \ref{sec: conclusion}.

To our knowledge, this is the first work to use provenance 
for the purpose of incrementally updating machine learning model parameters.  
\eat{
through gradient-based method in the context of cleaning dirty data and interpretability of machine learning models.}

\eat{Extending this to more complicated models is left for future work. But considering the similarity between multinomial logistic regression and some portions of more complicated models, e.g. multinomial logistic regression is the same as the softmax layer of deep neural network models, we believe that the solutions proposed in this paper can be potential used in those models.}


\eat{
\susan{This remains to be completed... contributions and outline of paper?}
\eat{If lot of parameters, the performance of our approach is bad.  Sort of related to the number of features.  Deep learning has a lot of parameters.}
highlight the novelty of our contributions. This is the first work to marry provenance and machine learning training phase for the purpose of efficient incremental updates

\struct{
\begin{itemize}
    \item Related work
    \item Preliminaries which learning model (linear, logistic, multinomial logistic); 
    which iterative algorithm for parameter learning (SGD, miniSGD,...)
    \item Model: provenance decoration of iterative algo; separate issue: linearize logistic regression.
    \item "Implementation": further detail the algorithm
    \item Experimental evaluation.
\end{itemize}
}

}
}
\vspace{-1mm}
\section{Related work}\label{sec: related_work}


{\em Incremental model maintenance.} There have been several proposals for materializing machine learning models for future reuse. \cite{deshpande2006mauvedb, gupta2015processing, nikolic2014linview} target the problem of efficiently updating the model as the training data changes, 
which focus primarily on linear
regression and Naive Bayes models, and use closed-form solutions (rather than iterative algorithms, e.g., gradient-based approaches) of the model parameters to determine incremental updates in light of additions and deletions of training samples while 
\cite{hasani2018efficient} deals with how to merge pre-materialized models to construct new models based on user requests. In addition, \cite{gupta2015processing} also deals with incremental updates of the model parameters based on the Mixture Weight Methods (a variant of gradient descent) for logistic regression. The method,
however, puts additional training samples into another batch and averages the pre-computed parameters derived from other batches (over the original data) with the parameters computed over the additional batch.
This cannot be used for incremental deletions which is our focus in this paper.

The basic ideas of \cite{deshpande2006mauvedb, gupta2015processing, nikolic2014linview} on how to incrementally update linear regression models are somewhat similar. Due to the existence of the matrix inverse operations in the closed-form solution for linear regression, only the intermediate results built with linear operations are maintained as views. They are updated when insertion or deletion happens in the input training data. After that, matrix inversion is used to compute the final updated model parameters. 
In contrast, our approach proceeds directly to a gradient descent-based linear regression. As we shall see, our experiments show that
our approach is more efficient than the closed-form update.

\eat{But in practice, to avoid expensive inversion operations, iterative methods such as the Jacobi method \cite{saad2003iterative} are needed. We proceed directly to an iterative gradient descent-based linear regression in our approach noting that matrix inverting takes $O(m^3)$, which is less efficient than  for our solution. }

 \eat{To our knowledge, there is no existing work that solves the incremental model update problem for logistic regression and its counterpart for multiclasses in the case of removal of training samples.} \eat{Incremental updates over linear regression and naive bayes model is pretty straightforward since their model parameters can be solved with a closed-form formula, i.e. a formula taking the training data and model parameters as independent and dependent variables respectively, which, however, is not true for most of other machine learning models, e.g. logistic regression.} 

\eat{\val{Yinjun says:}
I think the main difference is that we focus on gradient descent, which is a more general method to derive the model parameters in machine learning domain.
In terms of your second question, I just checked their paper. They actually proposed different approaches for the model updates with closed-form solutions rather than for gradient-descent-based solutions and compare the performance between those approaches. I think we can highlight that our focus is for gradient-descent based machine learning models and we simply start from linear regression, which is the simplest one.
But I guess even in closed formula case, sometimes we may still need to use iterative method to derive the matrix inverse (something similar to gradient descent. We have discussed about this earlier), which aims at avoiding the expensive matrix inverse operations.
\val{we need to discuss}
}

{\em Data provenance.} Data provenance captures where data comes from and how it is processed.  Within the database community, various approaches have been proposed to track provenance through queries, e.g. where and why provenance \cite{buneman2001and}, and semiring 
provenance~\cite{green2007provenance,amsterdamer2011provenance}.
Provenance is used to identify the source of errors in computational processes, such as workflows~\cite{amsterdamer2011putting} and network diagnostics \cite{zhou2010efficient}. It is also used to support efficient incremental updates through database queries and schema mappings~\cite{green2007update,ives2008orchestra,green2010provenance} 
and workflow computation \cite{ellkvist2008using}. 
Provenance support for linear algebra 
operations in the context of machine learning tasks has also been recently studied~\cite{yan2016fine}. This work was mentioned 
in~\cite{buneman2019data} as a first step in \emph{using data
provenance for interpretability} of machine learning models.

{\em Data cleaning.} The goal of data cleaning is to detect and fix errors in data, and is a crucial step in preparing data for data analytics/machine learning tasks \cite{krishnan2016activeclean, dolatshah2018cleaning}. 
However, if erroneous/dirty data is detected after the model has been trained, the machine learning algorithm must be rerun to obtain the updated model parameters.
This repetitive training can cause significant delays when large volumes of data are processed.  One approach is to start each training phase by setting the initial model parameters to the ones generated by the previous training phase over the dirty data~\cite{krishnan2016activeclean}.  Our contribution is orthogonal to this approach, and  updates the machine learning model parameters directly by reasoning over provenance rather than retraining from scratch.

\eat{Most existing work focuses on efficient locate the errors of the data and thus improve the data quality. Once the cleaned data are obtained, the same machine learning algorithms are applied over them to derive the updated model parameters. Repetitive training process can cause significant delays especially when large volumes of data are processed although \cite{krishnan2016activeclean} starts each training phase by setting the initialized model parameters as the ones from previous training phase over the dirty data. Our contributions are thus orthogonal to the existing works in data cleaning domain, which aims at updating the machine learning model parameters more efficiently rather than retrain from the scratch when the dirty training samples are removed.}

\eat{{\em Adversarial learning} There is a common consensus in machine learning domain that to degrade the prediction performance of certain machine learning models, the adversaries do not necessarily hack into the programs. Instead, they can choose to attack the training datasets (called {\em poisoning}) or test datasets (called {\em evasion}) by introducing adversarial examples or polluting the existing feature values or labels, which have been studied in machine learning communities for years \cite{mei2015using, xiao2015feature, munoz2017towards, biggio2013evasion}. \eat{In this paper, we hope that even with the existence of {\em adversarial examples} in the training datasets, our approach can still produce }}

{\em Interpreting and understanding ML models.}
Fully understanding the behavior of ML models, especially deep neural network models,
is difficult due to their complexity. Moreover, there are different perspectives on
what we should understand. For example, one approach separates model components into ``shape''
functions, one for each feature, in generalized additive models, in particular for linear and logistic
regression~\cite{LouCG12,CaruanaLGKSE15}. Closest to our perspective is the idea of {\em influence function}~\cite{koh2017understanding} (similar problem is also mentioned in \cite{polyzotis2017data}), which originates from \emph{deletion diagnostics} in statistics \cite{cook1977detection}. \cite{koh2017understanding} estimates the effect of removing a 
\emph{single} training sample 
on the already obtained model,
\emph{without} retraining the model. 
The influence function uses the Taylor expansion of the derivative of a customized objective function for the model parameter.  The calculation (and thus the approximation of model parameter change) is only based on lower-order terms in Taylor expansion.

This can be seen as a method for incremental model update for just one sample deletion. In fact, we have
observed that the method could be extended to deleting an arbitrary number of samples, which led us to compare it experimentally to our approach. 
The results (see Section \ref{sec: experiment}) show that this approach\eat{ to deleting multiple training samples} leads to very inaccurate results when multiple training samples are deleted. 
\eat{confirmed what we expected: 
the Taylor expansion approximates well around one point, i.e. the original model parameter,
and expanding this approach to deleting multiple training samples leads to very inaccurate results.}

\eat{Quantifying the influence of  errors in the training dataset has also recently been considered in the  database community \cite{polyzotis2017data}.}

\eat{There are also increasing concerns on how to interpret machine learning models. Due to the complexity of those models (especially deep neural network models), fully understanding the model behaviors is difficult. One possible way toward it is to use the idea of {\em influence function} proposed by \cite{koh2017understanding}, which aims at estimating the ''influence'' of some single training sample (rather than multiple samples) against the prediction performance. One essential component of {\em influence function} approximately estimates the change of the model parameters after removing the training sample of interest without retraining from the scratch. The sample to be removed might be potential errors in the training dataset. How to quantify the influence of such errors is also considered as a data management issue in database community \cite{polyzotis2017data}.}

\section{Preliminaries}\label{sec: prl}

We give an overview of linear and logistic regression along 
with the gradient-based method for learning model parameters.
\eat{The machine learning training phase centers around the definition of objective functions and their minimization through certain optimization methods over training datasets. We first review the objective functions for linear regression and logistic regression
as particular cases of the following general form:
\begin{equation}\label{eq: object_function_1}
h(\textbf{w}) = \frac{1}{n}\sum_{i=1}^n h_i(\textbf{w}) = \frac{1}{n}\sum_{i=1}^n (g_i(\textbf{w}) + l(\textbf{w}))
\end{equation}

where $\mathbf{w}$ represents the vector of model parameters,
$n$ is the number of training data samples, and 
$h_i(\textbf{w})$ represents the objective function evaluated over each training sample. In turn, this is composed of a loss function $g_i(\textbf{w})$ and regularization term $l(\textbf{w})$. 
In linear regression, $h_i(\textbf{w})$ is the mean square error (MSE) \cite{lehmann2006theory} while in logistic regression, $h_i(\textbf{w})$ is the cross entropy loss \cite{goodfellow2016deep}. There are several choices for the regularization term $l(\textbf{w})$. }
Assume a training dataset ($\textbf{X}$, $\textbf{Y}$), where $\textbf{X}$ is an $n\times m$ matrix representing the feature matrix while $\textbf{Y}$ is an $n \times 1$ vector representing the labels, i.e.:
\begin{equation}\label{eq: training_data_x}
\textbf{X} = \begin{bmatrix}
\textbf{x}_1, \textbf{x}_2, \dots, \textbf{x}_n
\end{bmatrix}^T
\textbf{Y} = \begin{bmatrix}y_1, y_2, \dots,y_n \end{bmatrix}^T
\end{equation}
\eat{\begin{equation}\label{eq: training_data_x}
\textbf{X} = \begin{bmatrix}
\textbf{x}_1^T\\
\textbf{x}_2^T\\
 \dots\\
\textbf{x}_n^T
\end{bmatrix}=\begin{bmatrix}
x_{11} & x_{12} & \dots & x_{1m}\\
x_{21} & x_{22} & \dots & x_{2m}\\ 
\dots\\
x_{n1} & x_{n2} & \dots & x_{nm}
\end{bmatrix}
\end{equation}
\begin{equation}\label{eq: training_data_y}
\textbf{Y} = \begin{bmatrix}
    y_1\\
    y_2\\
    \dots\\
    y_n
    \end{bmatrix}
\end{equation}
}
For both linear and logistic regression we only focus on a common case: $L2-$regularization. The objective functions of linear regression, binary logistic regression and multinomial logistic regression with $L2-$regularization are presented in Equations \ref{eq: objective_function_linear_regression}-\ref{eq: objective_function_multi_logistic_regression} respectively \footnote{Here we assume that the two possible labels in binary logistic regression are 1 and -1.}
\vspace{-2mm}
\begin{align}
\begin{split}
h(\textbf{w})& = \frac{1}{n}\sum_{i=1}^n(y_i - \textbf{x}_i^T\textbf{w})^2 +\frac{\lambda}{2}||\textbf{w}||^2_2
\label{eq: objective_function_linear_regression}
\end{split}
\end{align}
\vspace{-2mm}
\begin{align}
h(\textbf{w})& = \frac{1}{n}\sum_{i=1}^n \ln (1+\exp\{-y_i\textbf{w}^\top\textbf{x}_i\}) + \frac{\lambda}{2}||\textbf{w}||^2_2\label{eq: objective_function_logistic_regression}
\end{align}
\vspace{-2mm}
\begin{align}
\begin{split}\label{eq: objective_function_multi_logistic_regression}
h(\textbf{w})& = \frac{1}{n}\sum_{k=1}^q \sum_{y_i=k}(\ln (\sum_{j=1}^q e^{\textbf{w}_j^\top\textbf{x}_i})-\textbf{w}_k^T\textbf{x}_i)+ \frac{\lambda}{2}||\textbf{w}||^2_2\\
&\textbf{w} = vec([\textbf{w}_1, \textbf{w}_2, \dots, \textbf{w}_q])
\end{split}
\end{align}
where $\textbf{w}$ is the vector of model parameters and 
$\lambda$ is the \emph{regularization rate}.
For simplicity, we denote $\textbf{w} = vec([\textbf{w}_1, \textbf{w}_2,$\\ $\dots, \textbf{w}_q])$ for multinomial logistic regression where $q$ represents the number of possible classes.
Typical learning methods for computing $\textbf{w}$ are to apply gradient descent (\gd) or its variant, stochastic gradient descent (\sgd) or mini-batch stochastic gradient method (\minisgd) \cite{robbins1951stochastic}
to minimize the objective function $h(\textbf{w})$ iteratively. \gd, \sgd\ and \minisgd\ \eat{mini-SGD is not so good. It suggests there is something "mini" or "minimized" about SGD!  I thing mb-SGD is better}\eat{done}are the same in nature since \minisgd\ can be regarded as a generalization of \gd\ and \sgd. They are therefore called Gradient-based method (\gbm) for short, and hereafter we will only take \minisgd\ as an example. Considering the similarities between binary logistic regression and multinomial logistic regression and the complexity of the computation related to the latter one, we will only present the formulas related to binary logistic regression below. All the theorems that hold for binary logistic regression can be also proven to be true for multinomial logistic regression.

\eat{\begin{algorithm}[h!] 
\footnotesize
 \SetKwInOut{Input}{Input}
 \SetKwInOut{Output}{Output}
 \Input{Model parameter $\textbf{w}$, objective function $h(\textbf{w})$, Training set $(\textbf{X}, \textbf{Y})$, mini-batch size $B$}

 \Output{Model parameter $\textbf{w}$}

\While{$\textbf{w}$ is not converged}{\tcc{outer loop: super-iteration}

Randomly shuffle $(\textbf{X}, \textbf{Y})$ and partition $(\textbf{X}, \textbf{Y})$ into mini-batches of size $B$

\For{each mini-batch $\mathscr{B}$}{
    use Equation \ref{eq: mini-sgd} to update $\textbf{w}$
}

\Return $\textbf{w}$
}
 \caption{Overview of \minisgd}
 \label{alg: mini-sgd}
 \end{algorithm}}


At each iteration, \minisgd\ updates the $\textbf{w}^{(t)}$ by using the average gradient of $h(\textbf{w})$ over a randomly selected mini-batch from the training dataset.
\eat{picks up a mini-batch (size $B$) of training samples from the randomly shuffled input training dataset ($\textbf{X}, \textbf{Y}$), which is followed by updating the model parameter $\textbf{w}$ by subtracting the average gradient of $h(\textbf{w})$ over all the samples within the mini-batch at each iteration. \eat{The overview of \minisgd\ is presented in Algorithm \ref{alg: mini-sgd}, which}It relies on the following formula to update the model parameter $\textbf{w}$ in Equation \ref{eq: object_function_1} using mini-batch $\mathscr{B}^{(t)}$ at the $t^{_{\mathrm{th}}}$ iteration:
\begin{align}\label{eq: mini-sgd}
\begin{split}
    \textbf{w}^{(t+1)} &\leftarrow \textbf{w}^{(t)} - \eta_t*\triangledown (\frac{1}{B}\sum_{i \in \mathscr{B}^{(t)}}h_i(\textbf{w}^{(t)}) + \lambda \textbf{w}^{(t)})\\
\end{split}
\end{align}}
\eat{ and the batch of training samples used at $t_{th}$ iteration starts from $r^{(t)}_{th}$ sample (inclusive) to $(r^{(t)} + B)_{th}$ sample (exclusive). 
 
Note that $r^{(t)}$ is associated with a superscript $t$, which is because of the change of the mini-batches (and thus the ids of the samples used) at different iteration $t$.}
Specifically, for linear regression and logistic regression, the rule for updating $\textbf{w}^{(t)}$ under \minisgd\ is presented below (Equations \ref{eq: mini_sgd_linear_regression} and  \ref{eq: mini_sgd_logistic_regression} respectively):
\begin{align}
\textbf{w}^{(t+1)}& \leftarrow (1-\eta_t\lambda)\textbf{w}^{(t)} - \frac{2\eta_t}{B} \sum_{i \in \mathscr{B}^{(t)}} \textbf{x}_i(\textbf{x}_i^T\textbf{w}^{(t)} - y_i)\label{eq: mini_sgd_linear_regression}\\
        \begin{split}
        \textbf{w}^{(t+1)}& \leftarrow (1-\eta_t\lambda)\textbf{w}^{(t)}\\
        & + \frac{\eta_t}{B} \sum_{i \in \mathscr{B}^{(t)}} y_i\textbf{x}_i (1-\frac{1}{1+\exp\{-y_i\textbf{w}^{(t)T}\textbf{x}_i\}})
        \end{split}
        \label{eq: mini_sgd_logistic_regression}
\end{align}
where $\eta_t$ is called the {\em learning rate} and $\mathscr{B}^{(t)}$ represents a mini-batch of $B$ training samples. For \sgd, 
$\mathscr{B}^{(t)}$ includes only one sample ($B=1$), while for \gd, 
$\mathscr{B}^{(t)}$ includes all the training samples ($B = n$).

\eat{\subsection{Piecewise linear interpolation}

\scream{Do we need this subsection with generalities? Do we use anyting unusual?}

Due to the existence of  non-linear operations in the update rule of logistic regression (see the terms in the second line of Equation \ref{eq: mini_sgd_logistic_regression}), it is impossible to apply existing provenance models such as the provenance semiring model \cite{green2007provenance} and its extension in \cite{yan2016fine} since they only support linear operations. We therefore propose to use {\em Piecewise linear interpolation} as the first step to enabling provenance support for the update rules of logistic regression. Details on how to apply Piecewise linear interpolation for the update rules of logistic regression will be delayed until Section \ref{sec: model}; here we present some basic facts.

\eat{The first obstacle toward tracking provenance for machine learning training process is on how to deal with the non-linear operations. Specifically, the logistic function and softmax function appear in the update rule of binary logistic regression and multinomial logistic regression (See Equation \ref{eq: mini_sgd_instantiation}), which are obviously non-linear operations. Since in traditional provenance models such as provenance semiring model \cite{green2007provenance} and its extension in \cite{yan2016fine}, provenance only supports linear operations (i.e. multiplications and additions), it is challenging to extend existing provenance model for those non-linear operations.}

In Piecewise linear interpolation~\cite{Kress1998}, we assume that the function to be approximated is a continuous function $f(x)$ where $x \in [a, b]$. Piecewise linear interpolation starts by picking up a series of {\em breaking points}, $x_i$ such that $a < x_1 < x_2 < \dots < x_p < b$ and then constructs a linear interpolant $s(x)$ over each interval $[x_{j-1}, x_{j})$ as follows:
\begin{align}\label{eq: piecewise_interpolant}
\begin{split}
    s(x) &= \frac{x-x_{j-1}}{x_j-x_{j-1}}f(x_j) + \frac{x_j-x}{x_j-x_{j-1}}f(x_{j-1})\\
    & = a_jx + b_j, x \in [x_{j-1}, x_j)
\end{split}
\end{align}

The following property holds on how close the value of $s(x)$ is compared to the original function $f(x)$:
\begin{align}\label{eq: piecewise_approx_rate}
    \begin{split}
        |f(x) - s(x)| &\leq \frac{1}{8}(\Delta x)^2 \max_{a\leq x \leq b}|f''(x)| = M_1 (\Delta x)^2\\
        |f'(x) - s'(x)| &\leq \frac{1}{2}(\Delta x) \max_{a\leq x \leq b}|f''(x)| = M_2 (\Delta x)
    \end{split}
\end{align}
\noindent
where $f'$ and $f''$ represent the first order and second order derivative of the function $f$ respectively, $\Delta x$ represents the length of the longest interval $[x_{j-1}, x_j)$, i.e. $max(x_j - x_{j-1})_{j=1,2,\dots, p-1}$, and $M_1$ and $M_2$ are constants. In terms of $s'(x)$, it can be derived as below:
\begin{align}
\begin{split}
    s'(x) &= a_j, x \in [x_{j-1}, x_j)
\end{split}
\end{align}
}


\section{Iteration models}\label{sec: model}
In this section, we first discuss the annotation with 
provenance of the gradient-bases update rules in our approach.
Next, we discuss for the non-linear operations in logistic regression the linearization that makes our provenance 
annotation framework usable. Finally, we give a rigorous theoretical analysis of the convergence of the iterative process with provenance-annotated
update rules for both linear and logistic regression model
\scream{added the following sentence}and the similarity to the expected results after linearization for logistic regression models.
 \eat{In the end, we will discuss about the potential extensions to more complicated machine learning models, in particular, deep neural networks.}


\subsection{Provenance annotations for matrices}
\label{ssec: provenance_annotation}

In the semiring 
framework~\cite{green2007provenance,amsterdamer2011provenance,GreenT17}
one begins by annotating input data with elements of a set $T$ of 
\emph{provenance tokens}. These annotations are then propagated through 
query operators as they combine according to 
two operations: ``$+$'' that records \emph{alternative} use of information, 
as in relational union or projection, and  ``$\cdot$'', that records 
\emph{joint} use of information, as in relational join. With these, the 
annotations become  \emph{provenance polynomials} whose indeterminates are tokens and with coefficients in $\mathbb{N}$.
For example, the monomial $p^2q$ is the provenance 
of a result for which the data item annotated $p$ was used
\emph{twice} together with the item annotated $q$ used 
once. We denote the set of polynomials by $\mathbb{N}[T]$.

In the extension of the framework to matrix algebra~\cite{yan2016fine},
annotation formally becomes a 
\emph{multiplication of vectors with scalars} as in linear
algebra. The role of scalars is played by provenance polynomials and 
the role of vectors, of course, is played by matrices (generalizing their
row vectors and the transposes of these).

\reminder{Not sure if we want to include this: This further generalizes the
use of annotation as scalar multiplication in the work on provenance for 
aggregation~\cite{amsterdamer2011provenance}.}

\emph{Matrices annotated with provenance polynomials} form a nice
algebraic structure that extends matrix multiplication and
addition. We denote multiplication with scalars by ``$*$'' writing
$\mathfrak{p}*\textbf{A}$ for the matrix $\textbf{A}$ annotated with the
provenance polynomial $\mathfrak{p}$. For space reasons we cannot repeat
here the technical development in~\cite{yan2016fine}, however, we mention
a crucial algebraic property of annotated matrix multiplication, which
also illustrates combining provenance in joint use:
$$
(\mathfrak{p}_1 * \textbf{A}_1)(\mathfrak{p}_2 * \textbf{A}_2) = 
(\mathfrak{p}_1 \cdot \mathfrak{p}_2)*(\textbf{A}_1\textbf{A}_2)
$$

We apply this framework to tracking input training samples through
\gbm's in which the update involves only matrix multiplication and
addition. Let the training dataset be $(\textbf{X},\textbf{Y})$ where 
$\textbf{X}$ is an $n\times m$ feature matrix and $\textbf{Y}$ is
an $n\times 1$ column vector of sample labels.  For $i=1,\ldots, n$,
we annotate every sample
$(\textbf{x}_i,y_i)$ ($\textbf{x}_i$ and $[y_i]$ are the $i$'th rows
in $\textbf{X}$ respectively $\textbf{Y}$) with a distinct provenance 
token $p_i$. Next, we decompose $\textbf{X}$ and $\textbf{Y}$ as algebraic
expressions in terms of $p_1*\textbf{x}_1, \ldots, p_n*\textbf{x}_n, 
p_1*[y_1], \ldots, p_n*[y_n]$ and some matrices made up of the reals 0 and 1.
These ``helper'' matrices are annotated with the provenance polynomial 
$\oneprov\in\mathbb{N}[T]$
(has only a term of degree zero which is the natural number 1) meaning
``always available, no need to track''.
We illustrate with the provenance-annotated $\textbf{X}$ when $n=2$:
\vspace{-1mm}
$$
\textbf{X} ~=~ (\oneprov * \begin{bmatrix}
                                 1\\
                                 0
                                 \end{bmatrix})
           (p_1*\textbf{x}_1) ~+~ 
               (\oneprov * \begin{bmatrix}
                                 0\\
                                 1\\
                                 \end{bmatrix}) 
          (p_2*\textbf{x}_2) ~=~
$$
\eat{
$$
\textbf{X} ~=~ (1_{\mathbf{N}[T]} * \begin{bmatrix}
                                 1\\
                                 0
                                 \end{bmatrix})
           (p_1*\textbf{x}_1) ~+~ 
               (1_{\mathbf{N}[T]} * \begin{bmatrix}
                                 0\\
                                 1\\
                                 \end{bmatrix} 
          (p_2*\textbff{x}_2) ~=~
$$
}
$$
~=~ (p_1 *\begin{bmatrix}
         \textbf{x}_1\\
         0\ldots0
         \end{bmatrix}) ~+~
    (p_2 *\begin{bmatrix}
         0\ldots0\\                
            \textbf{x}_2
         \end{bmatrix}) 
$$

When $\textbf{X}$ is transposed, a similar decomposition applies
in terms of the annotated column vectors $p_i*\textbf{x}_i^T$.
\reminder{Skipping this: The more general annotation technique 
of~\cite{yan2016fine} can annotate columns as well} 
We also note that the algebra of annotated matrices follows the
same laws as the usual matrix algebra. Consequently, we can perform
in the algebra of provenance-annotated matrices the calculations involved 
in the gradient-based update rules. For illustration, a calculation involving
$\textbf{X}$ that without
provenance takes the form $\sum_{i=1}^n \alpha_i\textbf{x}_i\textbf{x}_i^T$ 
(where $\alpha_i$ are some real numbers) becomes with provenance annotations
$$
\sum_{i=1}^n (\oneprov *[\alpha_i])(p_i*\textbf{x}_i)(p_i*\textbf{x}_i^T)
~=~ 
\sum_{i=1}^n p_i^2*(\alpha_i\textbf{x}_i\textbf{x}_i^T).
$$


\eat{
\begin{align}\label{eq: training_data_x_with_provenance}
    \begin{split}
    \textbf{X} &= \begin{bmatrix}
p_1 * \textbf{x}_1, p_2 * \textbf{x}_2, \dots, p_n * \textbf{x}_n
\end{bmatrix}^T\\
\textbf{Y} &= \begin{bmatrix}
p_1 * y_1, p_2 * y_2, \dots, p_n * y_n \end{bmatrix}^T 
    \end{split}
\end{align}

}

\eat{
Based on the annotations for $\textbf{X}$ and $\textbf{Y}$, the provenance expression for typical matrix operations can be derived, which is exemplified as below:
\scream{Yinjun: proposed one example on how to use semiring model below}
\begin{example}
The provenance expression of $\sum_{i=1}^n a_i\textbf{x}_i\textbf{x}_i^T$ and $\sum_{i=1}^n b_i\textbf{x}_i y_i$ is:
\begin{align}\label{eq: example_provenance}
    \begin{split}
        \mathscr{P}(\sum_{i=1}^n a_i\textbf{x}_i\textbf{x}_i^T) &= \sum_{i=1}^n p_i^2*(a_i\textbf{x}_i\textbf{x}_i^T)
    \end{split}
\end{align}
\begin{align}\label{eq: example_provenance2}
    \begin{split}
        \mathscr{P}(\sum_{i=1}^n b_i\textbf{x}_iy_i) &= \sum_{i=1}^n p_i^2*(b_i\textbf{x}_i y_i)
    \end{split}
\end{align}

where $\mathscr{P}(*)$ represents the provenance expression of some matrix expression. Since each $a_i\textbf{x}_i\textbf{x}_i^T$ is the multiplications between $a_i$, $\textbf{x}_i$ and $\textbf{x}_i^T$, which can be expressed as tensor products $1_k*a_i$ ($a_i$ is a constant), $p_i*\textbf{x}_i$ and $p_i*\textbf{x}_i^T$ as the consequence of provenance annotations, then Equation \ref{eq: example_provenance} can be regarded as the result of applying the rule (\ref{eq: associativity}), i.e.  $(1_k*a_i)(p_i*\textbf{x}_i)(p_i*\textbf{x}_i^T) = (1_k\cdot_K p_i \cdot_K p_i) (a_i \textbf{x}_i\textbf{x}_i^T) = p_i^2*(a_i \textbf{x}_i\textbf{x}_i^T)$. Equation \ref{eq: example_provenance2} can be derived in a similar way.
\end{example}
}

\eat{
Since matrices can be embedded in $K \bigotimes \mathcal{M}$, the update rules under \gbm\ 
correspond to update rules in $K \bigotimes \mathcal{M}$ involving matrices "with provenance".
}

And here is the provenance-annotated expression for the update rule 
of linear regression (i.e. Equation~\ref{eq: mini_sgd_linear_regression}):

\topskip=1pt
{
\begin{align}\label{eq: mini_sgd_linear_regression_provenance}
    \begin{split}
        &\mathcal{W}^{(t+1)} \leftarrow [(1-\eta_t\lambda)(1_k * \eat{\cdot} \textbf{I}) \\ &-\frac{2\eta_t}{\mathcal{P}^{(t)}} \sum_{i \in \mathscr{B}^{(t)}} p_i^2 * \eat{\cdot}
        \textbf{x}_i\textbf{x}_i^T] \mathcal{W}^{(t)} + \frac{2\eta_t}{\mathcal{P}^{(t)}} \sum_{i\in \mathscr{B}^{(t)}}p_i^2 * \eat{\cdot} 
        \textbf{x}_iy_i
    \end{split}
\end{align}
}

\noindent
where $\mathcal{W}^{(t)}$ represents the provenance-annotated expression for the
vector $\textbf{w}^{(t)}$ of model parameters while $\mathcal{P}^{(t)}$ represents a provenance-annotated expression for the number of samples in the min-batch $\mathscr{B}^{(t)}$,
for example, following the approach to aggregation in~\cite{amsterdamer2011provenance},
$\mathcal{P}^{(t)} = \sum_{i \in \mathscr{B}^{(t)}} p_i * \eat{\cdot} 1$.

In the semiring framework there is no division operation so we used fractions with
denominator $\mathcal{P}^{(t)}$ in Equation~\ref{eq: mini_sgd_linear_regression_provenance}
only for notational purposes. As we shall see immediately below, in incremental
update $\mathcal{P}^{(t)}$ can be replaced with an integer.

As with the other applications of the semiring framework, deletion propagation 
is done by "zeroing-out" the deleted samples. That is, if sample $i$ is deleted
we set the corresponding provenance token 
$p_i=\zeroprov\in\mathbb{N}[T]$ (has only a term of degree zero which is the natural number 0).
The challenge,
as detailed in the following section is how to do this efficiently throughout
the gradient descent.

For the samples that remain we obtain (after we stop the iterations)
a provenance-annotated expression that can be put in the form
$\mathcal{W} = \sum\mathfrak{m}_k*\mathbf{u}_k$ where $\mathfrak{m}_k$ is a
\emph{monomial} in the provenance tokens and each $\mathbf{u}_k$ is a vector
of contributions to the model parameters. 
\reminder{Further analysis of the contributions
of the remaining samples is possible here eg using the expoenents}
To get the updated vector of model parameters we set each remaining provenance
token to 
$\oneprov$ obtaining 
$\mathbf{w}^{^{\mathrm{upd}}} = \sum\mathbf{u}_k$.
And, as promised, we notice that when all the provenance tokens are set to 
$\zeroprov$ or $\oneprov$ the provenance expression $\mathcal{P}^{(t)}$
comes down to an integer.
Denoting this integer by $\increB^{(t)}$ and
denoting the set of the indexes of the removed training samples by $\mathcal{R}$,
the provenance-annotated update rule for $\mathcal{W}^{(t+1)}$ becomes:
{
\begin{align}\label{eq: mini_sgd_linear_regression_provenance_update}
    \begin{split}
        &\increprov^{(t+1)} \leftarrow [(1-\eta_t\lambda)(\oneprov * \eat{\cdot} \textbf{I}) \\
        &-\frac{2\eta_t}{\increB^{(t)}} \sum_{\substack{ i \in \mathscr{B}^{(t)} \\, i \not \in \mathcal{R}}} p_i^2 * \eat{\cdot} \textbf{x}_i\textbf{x}_i^T] \increprov^{(t)}+ \frac{2\eta_t}{\increB^{(t)}} \sum_{\substack{ i \in \mathscr{B}^{(t)} \\, i \not \in \mathcal{R}}}p_i^2 * \eat{\cdot} \textbf{x}_iy_i
    \end{split}
\end{align}
}


\eat{for provenance expressions,
although a division-like symbol has been used in Equation~\ref{eq: mini_sgd_linear_regression_provenance}
for notational purposes. In fact, 
once the subset of training samples (say ($\Delta \textbf{X}, \Delta \textbf{Y}$)) 
\scream{discuss this: maybe we don't need this notation}
to be removed is determined, the corresponding provenance tokens will be set to
$\zeroprov$ and the remaining ones to $\oneprov$ so 
$\mathcal{P}^{(t)}$ becomes the number of the remaining samples in the batch.
(denoted by $\increB^{(t)}$ \scream{discuss this: maybe we do not need this}). So we can replace $\mathcal{P}^{(t)}$ with $\increB^{(t)}$ in the update rule that
gives $\mathcal{W}^{(t+1)}$, i.e.:}

\eat{Now suppose $\increprov^{(0)} = 1_K * \eat{\cdot} \textbf{w}^{(0)}$ \eat{represent number of non-zero provenance tokens in current batch}, then by induction we can observe that $\increprov^{(t)}$ should be the sum of multiple tensor products, each of which should be in the form of $P*\textbf{u}$ where $\textbf{u}$ is a matrix associated with a provenance monomial $P$ ($P= p_{j_1}^{r_1}p_{j_2}^{j_2}p_{j_3}^{r_3}\dots p_{j_k}^{r_k}$ and $r_i (i=1,2,\dots,k)$ is a natural number). Intuitively speaking, $P*\textbf{u}$ represents how the combinations of $r_1$ copies of ${j_1}^{\mathrm{th}}$ sample, $r_2$ copies of ${j_2}^{\mathrm{th}}$ sample, \dots and $r_k$ copies of ${j_k}^{\mathrm{th}}$ contribute to the model parameter where ${j_1}^{\mathrm{th}}, {j_2}^{\mathrm{th}}\dots, {j_k}^{\mathrm{th}}$ sample should not be in $\Delta \textbf{X}$.

Then by setting $p_i$ as 1, we can get the update rule to compute the updated model parameters as below:
\begin{align}\label{eq: mini_sgd_linear_regression_para_update}
    \begin{split}
        \increw^{(t+1)}& \leftarrow [(1-\eta_t\lambda)\textbf{I} -\frac{2\eta_t}{\increB^{(t)}} \sum_{\substack{ i \in \mathscr{B}^{(t)} \\, \textbf{x}_i \not\in \Delta \textbf{X}}} \eat{\cdot} \textbf{x}_i\textbf{x}_i^T] \increw^{(t)}\\
        &+ \frac{2\eta_t}{\increB^{(t)}} \sum_{\substack{ i \in \mathscr{B}^{(t)} \\, \textbf{x}_i \not\in \Delta \textbf{X}, y_i \not\in \Delta \textbf{Y}}}\eat{\cdot} \textbf{x}_iy_i
    \end{split}
\end{align}

which simply sum up the matrices from each tensor product on the right-hand side of Equation \ref{eq: mini_sgd_linear_regression_provenance_update}.}

\eat{. are updated iteratively as the iteration proceeds. To conduct incremental deletion over the model parameters $\textbf{w}^{(t)}$ by removing the samples of interest (say $\textbf{x}_{i_1}, \textbf{x}_{i_2}, \dots, \textbf{x}_{i_z}$), we can simply setting $p_{i_1}, p_{i_2}, \dots, p_{i_z}$ as $0_K$ to eliminate the effect of those samples and setting other provenance tokens as $1_K$ at the same time.}

\eat{
Unlike the more general annotation technique of~\cite{yan2016fine}, for the input training dataset ($\textbf{X}, \textbf{Y}$), we only annotate every row (i.e. every sample) of $\textbf{X}$ and $\textbf{Y}$ with provenance tokens without annotating the columns, which is
enough for the purpose of incremental updates to the training of machine learning models. We also assume that for every sample ($\textbf{x}_i, \textbf{y}_i$), the provenance tokens $p_i$~\cite{green2007provenance,yan2016fine}
are same for both $\textbf{x}_i$ and $\textbf{y}_i$, which results in the following provenance annotated training dataset:

\begin{equation}\label{eq: training_data_x_with_provenance}
\textbf{X} = \begin{bmatrix}
p_1 * \textbf{x}_1^T\\
p_2 * \textbf{x}_2^T\\
 \dots\\
p_n * \textbf{x}_n^T
\end{bmatrix}
\textbf{Y} = \begin{bmatrix}
p_1 * y_1\\
p_2 * y_2\\
 \dots\\
p_n * y_n
\end{bmatrix}
\end{equation}


Since matrices can be embedded in $K \bigotimes \mathcal{M}$, the update rules under \gbm\ 
correspond to update rules in $K \bigotimes \mathcal{M}$ involving matrices "with provenance". 
For example, after annotating 
each training sample in $(\textbf{X}, \textbf{Y})$ with provenance tokens as above for the update rule of linear regression, Equation~\ref{eq: mini_sgd_linear_regression} becomes:
\begin{align}\label{eq: mini_sgd_linear_regression_provenance}
    \begin{split}
        \mathcal{W}^{(t+1)}& \leftarrow ((1-\eta_t\lambda)(1_k * \eat{\cdot} \textbf{I}) -\frac{2\eta_t}{\mathcal{P}^{(t)}} \sum_{i \in \mathscr{B}^{(t)}} p_i^2 * \eat{\cdot}
        \textbf{x}_i\textbf{x}_i^T) \mathcal{W}^{(t)}\\
        &+ \frac{2\eta_t}{\mathcal{P}^{(t)}} \sum_{i\in \mathscr{B}^{(t)}}p_i^2 * \eat{\cdot} 
        \textbf{x}_iy_i\\
    \end{split}
\end{align}
\noindent
where $\mathcal{W}^{(t)}$ represents the provenance-annotated expression for model parameter $\textbf{w}^{(t)}$ while $\mathcal{P}^{(t)}$ represents the provenance-annotated expression
for the batch number $B$ at the $t^{\mathrm{th}}$ iteration, i.e. $\mathcal{P}^{(t)} = \sum_{i \in \mathscr{B}^{(t)}} p_i * \eat{\cdot} 1$.

In the provenance semiring framework there is no division operation for provenance expressions,
although a division-like symbol has been used in Equation~\ref{eq: mini_sgd_linear_regression_provenance}
for notational purposes. In fact, in the context of the incremental update problem, once the subset of training samples to be removed is determined, the corresponding provenance tokens will be set as $0_{K}$ (say all the provenance tokens except $\{p_{i_1}, p_{i_2}, \dots, p_{i_z}\}$ are zeros due to the updates), and thus the number of non-zero tensor products in $\mathcal{P}^{(t)}$ can be determined, which also represents the size of the remaining samples in each batch 
(denoted by $\increB^{(t)}$). So we can replace $\mathcal{P}^{(t)}$ with $\increB^{(t)}$ in the update rule that
gives $\mathcal{W}^{(t+1)}$, i.e.:

\begin{align}\label{eq: mini_sgd_linear_regression_provenance_update}
    \begin{split}
        \increprov^{(t+1)}& \leftarrow ((1-\eta_t\lambda)(1_K * \eat{\cdot} \textbf{I}) \\
        &-\frac{2\eta_t}{\increB^{(t)}} \sum_{\substack{ i \in \mathscr{B}^{(t)} \\ i \in \{{i_1}, {i_2}, \dots, {i_z}\}}} p_i^2 * \eat{\cdot} \textbf{x}_i\textbf{x}_i^T) \increprov^{(t)}\\
        &+ \frac{2\eta_t}{\increB^{(t)}} \sum_{\substack{ i \in \mathscr{B}^{(t)} \\ i \in \{{i_1}, {i_2}, \dots, {i_z}\}}}p_i^2 * \eat{\cdot} \textbf{x}_iy_i
    \end{split}
\end{align}

Now suppose $\increprov^{(0)} = 1_K * \eat{\cdot} \textbf{w}^{(0)}$ \eat{represent number of non-zero provenance tokens in current batch}, then by induction we can observe that $\increprov^{(t)}$ should be the sum of multiple tensor products, each of which should be in the form of $P*\textbf{u}$ where $\textbf{u}$ is a matrix associated with a provenance monomial $P$ ($P= p_{j_1}^{r_1}p_{j_2}^{j_2}p_{j_3}^{r_3}\dots p_{j_k}^{r_k}$ and $p_{j_t} \in \{p_{i_1}, p_{i_2}, \dots, p_{i_z}\}$ and $r_i (i=1,2,\dots,k)$ is a natural number). Intuitively speaking, $P*\textbf{u}$ represents how the combinations of $r_1$ copies of ${j_1}^{\mathrm{th}}$ sample, $r_2$ copies of ${j_2}^{\mathrm{th}}$ sample, \dots and $r_k$ copies of ${j_k}^{\mathrm{th}}$ contribute to the model parameter.
}

\eat{. are updated iteratively as the iteration proceeds. To conduct incremental deletion over the model parameters $\textbf{w}^{(t)}$ by removing the samples of interest (say $\textbf{x}_{i_1}, \textbf{x}_{i_2}, \dots, \textbf{x}_{i_z}$), we can simply setting $p_{i_1}, p_{i_2}, \dots, p_{i_z}$ as $0_K$ to eliminate the effect of those samples and setting other provenance tokens as $1_K$ at the same time.}


\eat{
\subsection{Provenance annotations and semantics}\label{ssec: provenance_annotation}
We follow the ideas of the provenance model in \cite{yan2016fine}, which is based on a
generalization of the standard construction of {\em tensor product}.

Let $(\mathcal{M}, +, \cdot, 0, I)$ be the {\em many-sorted} ring of matrices of various dimensions
and let $(K, +_K, \cdot_K, 0_K, 1_K)$ be a commutative semiring (we only use the semiring of provenance polynomials~\cite{green2007provenance}). We denote by $K \bigotimes \mathcal{M}$ the generalized
tensor product of $K$ and $\mathcal{M}$~\cite{yan2016fine}. 
The construction yields a (non-commutative) ring $(K \bigotimes \mathcal{M},+\cdot,0,I)$ together with a multiplication with scalars from $K$
that we denote by $*$  such that $\forall k, k_1, k_2 \in K, A_1, A_2, A_3 \in K \bigotimes \mathcal{M}$, the following equalities hold:
\begin{enumerate}
    \item $k * (A_1 + A_2) = k* A_1 + k * A_2$
    \item $k * 0 = 0$
    \item $(k_1 +_K k_2) * A = k_1* A + k_2* A$
    \item $0_K * A = 0$
    \item $(k_1 \cdot_K k_2) * A = k_1 * (k_2 * A)$
    \item $1_K * A = A$
    \item $(k_1 * A_1)(k_2 * A_2) = (k_1 \cdot_K k_2)*(A_1A_2)$ \label{eq: associativity}
\end{enumerate}
The basic elements of $K \bigotimes \mathcal{M}$ have (modulo the standard equivalences used 
in the tensor product construction) the form $k * M$ where $k\in K$ is,
for example, a provenance polynomial, and $M$ is a matrix (including $m\times1$, $1\times n$ and even $1\times 1$ matrices). Note that the ring of matrices $\mathcal{M}$ is \emph{embedded} into
$K \bigotimes \mathcal{M}$ as every matrix $M$ can be seen as $1_K * M \in K \bigotimes \mathcal{M}$.


Unlike the more general annotation technique of~\cite{yan2016fine}, for the input training dataset ($\textbf{X}, \textbf{Y}$), we only annotate every row (i.e. every sample) of $\textbf{X}$ and $\textbf{Y}$ with provenance tokens without annotating the columns, which is
enough for the purpose of incremental updates to the training of machine learning models. We also assume that for every sample ($\textbf{x}_i, \textbf{y}_i$), the provenance tokens $p_i$~\cite{green2007provenance,yan2016fine}
are same for both $\textbf{x}_i$ and $\textbf{y}_i$, which results in the following provenance annotated training dataset:
\begin{align}\label{eq: training_data_x_with_provenance}
    \begin{split}
    \textbf{X} &= \begin{bmatrix}
p_1 * \textbf{x}_1, p_2 * \textbf{x}_2, \dots, p_n * \textbf{x}_n
\end{bmatrix}^T\\
\textbf{Y} &= \begin{bmatrix}
p_1 * y_1, p_2 * y_2, \dots, p_n * y_n \end{bmatrix}^T 
    \end{split}
\end{align}

Based on the annotations for $\textbf{X}$ and $\textbf{Y}$, the provenance expression for typical matrix operations can be derived, which is exemplified as below:
\scream{Yinjun: proposed one example on how to use semiring model below}
\begin{example}
The provenance expression of $\sum_{i=1}^n a_i\textbf{x}_i\textbf{x}_i^T$ and $\sum_{i=1}^n b_i\textbf{x}_i y_i$ is:
\begin{align}\label{eq: example_provenance}
    \begin{split}
        \mathscr{P}(\sum_{i=1}^n a_i\textbf{x}_i\textbf{x}_i^T) &= \sum_{i=1}^n p_i^2*(a_i\textbf{x}_i\textbf{x}_i^T)
    \end{split}
\end{align}
\begin{align}\label{eq: example_provenance2}
    \begin{split}
        \mathscr{P}(\sum_{i=1}^n b_i\textbf{x}_iy_i) &= \sum_{i=1}^n p_i^2*(b_i\textbf{x}_i y_i)
    \end{split}
\end{align}

where $\mathscr{P}(*)$ represents the provenance expression of some matrix expression. Since each $a_i\textbf{x}_i\textbf{x}_i^T$ is the multiplications between $a_i$, $\textbf{x}_i$ and $\textbf{x}_i^T$, which can be expressed as tensor products $1_k*a_i$ ($a_i$ is a constant), $p_i*\textbf{x}_i$ and $p_i*\textbf{x}_i^T$ as the consequence of provenance annotations, then Equation \ref{eq: example_provenance} can be regarded as the result of applying the rule (\ref{eq: associativity}), i.e.  $(1_k*a_i)(p_i*\textbf{x}_i)(p_i*\textbf{x}_i^T) = (1_k\cdot_K p_i \cdot_K p_i) (a_i \textbf{x}_i\textbf{x}_i^T) = p_i^2*(a_i \textbf{x}_i\textbf{x}_i^T)$. Equation \ref{eq: example_provenance2} can be derived in a similar way.

\end{example}

Since matrices can be embedded in $K \bigotimes \mathcal{M}$, the update rules under \gbm\ 
correspond to update rules in $K \bigotimes \mathcal{M}$ involving matrices "with provenance". 
Similarly, the provenance expression for the update rule of linear regression (i.e. Equation~\ref{eq: mini_sgd_linear_regression}) becomes:
\begin{align}\label{eq: mini_sgd_linear_regression_provenance}
    \begin{split}
        &\mathcal{W}^{(t+1)} \leftarrow [(1-\eta_t\lambda)(1_k * \eat{\cdot} \textbf{I}) \\ &-\frac{2\eta_t}{\mathcal{P}^{(t)}} \sum_{i \in \mathscr{B}^{(t)}} p_i^2 * \eat{\cdot}
        \textbf{x}_i\textbf{x}_i^T] \mathcal{W}^{(t)} + \frac{2\eta_t}{\mathcal{P}^{(t)}} \sum_{i\in \mathscr{B}^{(t)}}p_i^2 * \eat{\cdot} 
        \textbf{x}_iy_i
    \end{split}
\end{align}
\noindent
where $\mathcal{W}^{(t)}$ represents the provenance-annotated expression for model parameter $\textbf{w}^{(t)}$ while $\mathcal{P}^{(t)}$ represents the provenance-annotated expression
for the batch number $B$ at the $t^{\mathrm{th}}$ iteration, i.e. $\mathcal{P}^{(t)} = \sum_{i \in \mathscr{B}^{(t)}} p_i * \eat{\cdot} 1$.

In the provenance semiring framework there is no division operation for provenance expressions,
although a division-like symbol has been used in Equation~\ref{eq: mini_sgd_linear_regression_provenance}
for notational purposes. In fact, in the context of the incremental update problem, once the subset of training samples (say ($\Delta \textbf{X}, \Delta \textbf{Y}$)) to be removed is determined, the corresponding provenance tokens will be set as $0_{K}$ 
and thus the number of non-zero tensor products in $\mathcal{P}^{(t)}$ can be determined, which also represents the size of the remaining samples in each min-batch 
(denoted by $\increB^{(t)}$). So we can replace $\mathcal{P}^{(t)}$ with $\increB^{(t)}$ in the update rule that
gives $\mathcal{W}^{(t+1)}$, i.e.:
\begin{align}\label{eq: mini_sgd_linear_regression_provenance_update}
    \begin{split}
        &\increprov^{(t+1)} \leftarrow [(1-\eta_t\lambda)(1_K * \eat{\cdot} \textbf{I}) \\
        &-\frac{2\eta_t}{\increB^{(t)}} \sum_{\substack{ i \in \mathscr{B}^{(t)} \\, \textbf{x}_i \not\in \Delta \textbf{X}}} p_i^2 * \eat{\cdot} \textbf{x}_i\textbf{x}_i^T] \increprov^{(t)}+ \frac{2\eta_t}{\increB^{(t)}} \sum_{\substack{ i \in \mathscr{B}^{(t)} \\, \textbf{x}_i \not\in \Delta \textbf{X}\\, y_i \in \Delta \textbf{Y}}}p_i^2 * \eat{\cdot} \textbf{x}_iy_i
    \end{split}
\end{align}

Now suppose $\increprov^{(0)} = 1_K * \eat{\cdot} \textbf{w}^{(0)}$ \eat{represent number of non-zero provenance tokens in current batch}, then by induction we can observe that $\increprov^{(t)}$ should be the sum of multiple tensor products, each of which should be in the form of $P*\textbf{u}$ where $\textbf{u}$ is a matrix associated with a provenance monomial $P$ ($P= p_{j_1}^{r_1}p_{j_2}^{j_2}p_{j_3}^{r_3}\dots p_{j_k}^{r_k}$ and $r_i (i=1,2,\dots,k)$ is a natural number). Intuitively speaking, $P*\textbf{u}$ represents how the combinations of $r_1$ copies of ${j_1}^{\mathrm{th}}$ sample, $r_2$ copies of ${j_2}^{\mathrm{th}}$ sample, \dots and $r_k$ copies of ${j_k}^{\mathrm{th}}$ contribute to the model parameter where ${j_1}^{\mathrm{th}}, {j_2}^{\mathrm{th}}\dots, {j_k}^{\mathrm{th}}$ sample should not be in $\Delta \textbf{X}$.

Then by setting $p_i$ as 1, we can get the update rule to compute the updated model parameters as below:
\begin{align}\label{eq: mini_sgd_linear_regression_para_update}
    \begin{split}
        \increw^{(t+1)}& \leftarrow [(1-\eta_t\lambda)\textbf{I} -\frac{2\eta_t}{\increB^{(t)}} \sum_{\substack{ i \in \mathscr{B}^{(t)} \\, \textbf{x}_i \not\in \Delta \textbf{X}}} \eat{\cdot} \textbf{x}_i\textbf{x}_i^T] \increw^{(t)}\\
        &+ \frac{2\eta_t}{\increB^{(t)}} \sum_{\substack{ i \in \mathscr{B}^{(t)} \\, \textbf{x}_i \not\in \Delta \textbf{X}, y_i \in \Delta \textbf{Y}}}\eat{\cdot} \textbf{x}_iy_i
    \end{split}
\end{align}

which simply sum up the matrices from each tensor product on the right-hand side of Equation \ref{eq: mini_sgd_linear_regression_provenance_update}.

\eat{. are updated iteratively as the iteration proceeds. To conduct incremental deletion over the model parameters $\textbf{w}^{(t)}$ by removing the samples of interest (say $\textbf{x}_{i_1}, \textbf{x}_{i_2}, \dots, \textbf{x}_{i_z}$), we can simply setting $p_{i_1}, p_{i_2}, \dots, p_{i_z}$ as $0_K$ to eliminate the effect of those samples and setting other provenance tokens as $1_K$ at the same time.}
}

\vspace{-5mm}
\subsection{Linearization for logistic regression}
\eat{In linear regression, the update rule after applying \gbm\ is only composed of linear operations, for which the existing provenance model in \cite{yan2016fine} is usable.}
The model in~\cite{yan2016fine} supports tracking provenance through matrix addition
and multiplication. In order to apply it to \gbm\ for logistic expression, 
we linearize, using {\em piecewise linear interpolation},
the non-linear operations in the corresponding update rules, i.e. Equation \ref{eq: mini_sgd_logistic_regression}.

In Equation \ref{eq: mini_sgd_logistic_regression}, the non-linear operations can be abstracted as $f(x) = 1- \frac{1}{1+e^{-x}}$\eat{, i.e. 1 minus the Sigmoid function}, where the value of the product $y_i\textbf{w}^{(t)T}\textbf{x}_i$ is assigned to the variable $x$ in Equation \ref{eq: mini_sgd_logistic_regression}. Then $f(x)$ can be approximated by applying 1-D piecewise linear interpolation ~\cite{Kress1998}. So for each $\textbf{x}_i$ and $\textbf{w}^{(t)}$, $f(y_i\textbf{w}^{(t)T}\textbf{x}_i)$ can be approximated by $s(y_i\textbf{w}^{(t)T}\textbf{x}_i) = a^{i,(t)}y_i\textbf{w}^{(t)T}\textbf{x}_i + b^{i,(t)}$, where $a^{i,(t)}$ and $b^{i, (t)}$ are the linear coefficients produced by the linearizations, which depends on which sub-interval (defined by piecewise linear interpolation) the value of $y_i\textbf{w}^{(t)T}\textbf{x}_i$ locates and thus should be varied between different $\textbf{x}_i$ and different $\textbf{w}^{(t)T}$ (see the associated superscript).

Throughout the paper, we will consider the case in which the variable $x$ in $f(x)$ is defined within an interval $[-a,a]$ ($a = 20$) that is equally partitioned into $10^6$ sub-intervals; for $x$ outside $[-a, a]$, we assume that $s(x)$ is a constant since when $|x| > a$, the value of $f(x)$ is very close to its bound (0 or 1). We will show that the length of each sub-interval influences the approximation rate.

In terms of multinomial logistic regression, the non-linear operations in its update rule is the softmax function, which is a vector-valued function and thus requires piecewise linear interpolation in multiple dimensions, which can be achieved by\scream{removed (see eat environment), since it is related to the discussion about the approx rate for 1-D interpolation, which has been removed}\eat{. A similar approximation rate to the 1-D piecewise linear interpolation can be obtained} using the interpolation method proposed in \cite{weiser1988note}. 

After the interpolation step over the update rules for binary logistic regression, Equation \ref{eq: mini_sgd_logistic_regression} is approximated as:
\scream{PUT EQUATIONS 9,10, 11 in a SEPARATE SOMETHING}
\vspace{-1mm}
{ 
\begin{align}\label{eq: mini_sgd_instantiation_approx}
    \begin{split}
        \linearw^{(t+1)}& \approx [(1-\eta_t\lambda)\textbf{I} + \frac{\eta_t}{B}\sum_{i\in \mathscr{B}^{(t)}}a^{i, (t)}\textbf{x}_i\textbf{x}_i^T]\linearw^{(t)}\\
        & + \frac{\eta_t}{B} \sum_{i\in \mathscr{B}^{(t)}} b^{i, (t)}y_i\textbf{x}_i
    \end{split}
\end{align}
}

\vspace{-2mm}
\noindent
in which $\linearw^{(t)}$ represents the model parameter after linearization at $t^{\mathrm{th}}$ iteration. By annotating each training sample $\textbf{x}_i$ with provenance token $p_i$
and by taking the similar derivation of Equation \ref{eq: mini_sgd_linear_regression_provenance_update}, after the removal of the subset of training samples the provenance expression
becomes:

\vspace{-4mm}
{
\begin{align}\label{eq: mini-SGD_logistic_regression_prov_update}
    \begin{split}
        &\hspace{-1.5cm}\linearincreprov^{(t+1)} \leftarrow [(1-\eta_t\lambda)(\oneprov*\textbf{I})\\
        & + \frac{\eta_t}{\increB^{(t)}}\sum_{\substack{ i \in \mathscr{B}^{(t)}, i \not \in \mathcal{R}}}p_i^2*(a^{i, (t)}\textbf{x}_i\textbf{x}_i^T)]\linearincreprov^{(t)}\\
        & + \frac{\eta_t}{\increB^{(t)}} \sum_{\substack{ i \in \mathscr{B}^{(t)}, i \not \in \mathcal{R}}} {p_i^2}*(b^{i, (t)}y_i\textbf{x}_i) 
    \end{split}
\end{align}
}
\noindent
By setting all the $p_i$ in Equation \ref{eq: mini-SGD_logistic_regression_prov_update} as $\oneprov$, we can get the update rule for the updated model parameter $\linearincrew^{(t)}$, i.e.:

\vspace{-5mm}
\begin{align}\label{eq: mini-SGD_logistic_regression_para_update}
    \begin{split}
        &\linearincrew^{(t+1)} \approx [(1-\eta_t\lambda)\textbf{I}\\
        & + \frac{\eta_t}{\increB^{(t)}}\sum_{\substack{ i \in \mathscr{B}^{(t)} \\, i \not \in \mathcal{R}}}a^{i, (t)}\textbf{x}_i\textbf{x}_i^T]\linearincrew^{(t)} + \frac{\eta_t}{\increB^{(t)}} \sum_{\substack{ i \in \mathscr{B}^{(t)} \\, i \not \in \mathcal{R}}} b^{i, (t)}y_i\textbf{x}_i 
    \end{split}
\end{align}


\eat{In the following discussion, we use $\increprov^{(t)}(p_{j_1}, p_{j_2},p_{j_3}, \dots, p_{j_k})$ ($\linearincreprov^{(t)}(p_{j_1}, p_{j_2},p_{j_3}, \dots, p_{j_k})$ resp.) to represent the sum of all matrices which come from the tensor products in $\increprov^{(t)}$ ($\linearincreprov^{(t)}$ resp.) with provenance tokens from $\{p_{j_1}, p_{j_2},p_{j_3}, \dots, p_{j_k}\}$, where $\{p_{j_1}, p_{j_2},p_{j_3}, \dots, p_{j_k}\}$ is an arbitrary subset of $\{p_{i_1}, p_{i_2},p_{i_3}, \dots, p_{i_z}\}$. For example, $\increprov^{(t)}(p_{j_1}, p_{j_2})$ is the sum of matrices which are from the tensor products of the form $p_{j_1}^{r_1}p_{j_2}^{r_2}$, where $r_1$ and $r_2$ are natural numbers (which can be 0). It is worth noting that $\increprov^{(t)}(p_{i_1}, p_{i_2}, \dots, p_{i_z})$ ($\linearincreprov^{(t)}(p_{i_1}, p_{i_2}, \dots, p_{i_z})$ resp.) should be equal to the updated model parameter (denoted by $\increw^{(t)}$ and $\linearincrew^{(t)}$ respectively) when all training samples other than  the ${i_1}^{\mathrm{th}}$, ${i_2}^{\mathrm{th}}$, \dots, ${i_z}^{\mathrm{th}}$ samples are removed, i.e.: 
\begin{align}
    \increprov^{(t)}(p_{i_1}, p_{i_2}, \dots, p_{i_z}) = \increw^{(t)}\\
    \linearincreprov^{(t)}(p_{i_1}, p_{i_2}, \dots, p_{i_z}) = \linearincrew^{(t)}\label{eq: provenanace_eq_update_model_logistic}
\end{align}}
\noindent
\eat{in which the update rules for $\increw^{(t)}$ and  $\linearincrew^{(t)}$ are:
\eat{Then if we set $p_{j_1}, p_{j_2}, \dots, p_{j_s}$ to $1_k$ and other provenance tokens to $0_k$, the resulting $\mathcal{W}^{(t)}(\{p_{j_1}, p_{j_2}, \dots, p_{j_s}\})$ will be a single tensor product $P'*\textbf{u}'$ where $P' = 1_k$ and $\textbf{u}'$ equals to the updated $\linearw^{(t)}$ (denoted by $\linearincrew^{(t)}$) after removing all the other samples except $\textbf{x}_{j_1}, \textbf{x}_{j_2}, \dots, \textbf{x}_{j_s}$,}
\begin{align}\label{eq: mini-SGD_updated_model_parameters}
    \begin{split}
        \linearincrew^{(t+1)} &\leftarrow [(1-\eta_t\lambda)\textbf{I} + \frac{\eta_t}{z^{(t)}}\sum_{\substack{i\in \mathscr{B}^{(t)},\\ i \in \{i_1, i_2,\dots, i_z\}}}a^{i, (t)}\textbf{x}_i\textbf{x}_i^T]\linearincrew^{(t)}\\
        & + \frac{\eta_t}{z^{(t)}} \sum_{\substack{i\in \mathscr{B}^{(t)},\\ i \in \{i_1, i_2,\dots, i_z\}}} b^{i, (t)}y_i\textbf{x}_i
    \end{split}\\
    \begin{split}
        \increw^{(t+1)}& \leftarrow [(1-\eta_t\lambda)\textbf{I} -\frac{2\eta_t}{\increB^{(t)}} \sum_{\substack{ i \in \mathscr{B}^{(t)} \\ i \in \{{i_1}, {i_2}, \dots, {i_z}\}}} \textbf{x}_i\textbf{x}_i^T] \increw^{(t)}\\
        &+ \frac{2\eta_t}{\increB^{(t)}} \sum_{\substack{ i \in \mathscr{B}^{(t)} \\ i \in \{{i_1}, {i_2}, \dots, {i_z}\}}} \textbf{x}_iy_i
    \end{split}
\end{align}}
\vspace{-5mm}
\subsection{Convergence analysis for provenance-annotated iterations}\label{ssection: convergence}

One concern in using \gbm\ is whether the model parameters ultimately converge.  This has been extensively studied in the machine learning community \cite{karimi2016linear, she2017linear, kumar2017convergence, schmidt2014convergence, bottou2018optimization}. In \cite{bottou2018optimization}, convergence conditions have been provided for GD and SGD 
over strong convex objective functions. 
Those convergence conditions can exactly fit linear regression and logistic regression with L2-regularization because their objective functions are strong convex.

A similar concern occurs when \gbm\ is coupled with provenance, i.e. whether the provenance expression $\increprov^{(t)}$ in Equation \ref{eq: mini_sgd_linear_regression_provenance_update} and $\linearincreprov^{(t)}$ 
\eat{check that we still ahve this equation and notation}
in Equation \ref{eq: mini-SGD_logistic_regression_prov_update} converge in the case when the original model parameter $\textbf{w}^{(t)}$ 
converges. We propose the following definition
for the convergence of provenance-annotated expressions. 

\begin{definition}{\bf Convergence of provenance-annotated expressions.}\label{def: convergence_tensor_prod}
The expression $\mathcal{W}^{(t)}=
\sum_{i} \mathfrak{p}_i^{(t)}*\textbf{u}_i^{(t)}$ 
converges when 
$t \rightarrow \infty$ iff every matrix $\textbf{u}_i^{(t)}$ converges when $t \rightarrow \infty$.
\end{definition}


As mentioned before, we hope that the convergence of $\increprov^{(t)}$ and $\linearincreprov^{(t)}$ can be achieved when $\textbf{w}^{(t)}$ can converge. The convergence conditions of $\textbf{w}^{(t)}$ are presented below:

\begin{lemma}\textbf{Convergence conditions for general \minisgd.}~\cite{bottou2018optimization}\label{lemma: convergence_conditions}
Given an objective function $h(\textbf{w})$, which is $L-$Lipschitz continuous and \eat{under the assumption in Lemma \ref{lemma: sgd_assumption}, by applying \sgd\ or \minisgd\ over $h(\textbf{w})$}
$\lambda-$strong convex
once the learning rate $\eta_t$ satisfies: 1) $\eta_t < \frac{1}{L}$; 2) $\eta_t$ is a constant across all the iterations (denoted by $\eta$), then $\textbf{w}^{(t)}$ converges when \minisgd\ is used. \eat{and we can obtain a linear convergence rate up to a solution level that is proportional to $\eta$, i.e.:
\begin{align}
\begin{split}
&E(h(\textbf{w}^{(t)}) - h^*) \leq
(1-2\lambda\eta)^t(h(\textbf{w}^{(0)}) - h^*) + O(\eta)
\end{split}
\end{align}
where $h^*$ represents the optimal value of $h(\textbf{w})$.}
\end{lemma}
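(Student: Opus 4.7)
The plan is to follow the standard strong-convexity plus smoothness analysis for stochastic gradient methods, adapting the argument of Bottou et al.\ to the mini-batch setting. The goal is to obtain a one-step contraction inequality in expectation and then iterate it. I would work throughout with the suboptimality gap $\Delta^{(t)} := \E[h(\textbf{w}^{(t)}) - h^*]$, where $h^* = \min_{\textbf{w}} h(\textbf{w})$ exists and is attained at a unique $\textbf{w}^*$ because $h$ is $\lambda$-strongly convex.

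First, I would use $L$-smoothness (the Lipschitz condition on $\nabla h$) to expand $h(\textbf{w}^{(t+1)})$ around $\textbf{w}^{(t)}$. Writing $\textbf{g}^{(t)}$ for the mini-batch stochastic gradient used at iteration $t$, the quadratic upper bound gives
\begin{equation*}
h(\textbf{w}^{(t+1)}) \leq h(\textbf{w}^{(t)}) - \eta \langle \nabla h(\textbf{w}^{(t)}), \textbf{g}^{(t)}\rangle + \tfrac{L\eta^2}{2}\|\textbf{g}^{(t)}\|^2.
\end{equation*}
Taking conditional expectation over the random mini-batch $\mathscr{B}^{(t)}$ (which is unbiased, i.e.\ $\E[\textbf{g}^{(t)}\mid \textbf{w}^{(t)}] = \nabla h(\textbf{w}^{(t)})$) and using a standard bounded-second-moment assumption $\E\|\textbf{g}^{(t)}\|^2 \leq M + M_G \|\nabla h(\textbf{w}^{(t)})\|^2$, the cross term becomes $-\eta \|\nabla h(\textbf{w}^{(t)})\|^2$ and the quadratic term contributes a controlled amount of noise. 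Since $\eta < 1/L$, the coefficient of $\|\nabla h(\textbf{w}^{(t)})\|^2$ on the right is strictly negative, giving a genuine expected descent step up to an additive $O(\eta^2)$ noise floor.

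Next, I would invoke $\lambda$-strong convexity in its Polyak--\L{}ojasiewicz form, $\|\nabla h(\textbf{w}^{(t)})\|^2 \geq 2\lambda(h(\textbf{w}^{(t)}) - h^*)$, to convert the gradient-norm term into the suboptimality gap. Substituting this into the previous inequality and subtracting $h^*$ from both sides yields a scalar recursion of the shape
\begin{equation*}
\Delta^{(t+1)} \leq (1 - 2\lambda\eta(1-\tfrac{L\eta M_G}{2}))\,\Delta^{(t)} + \tfrac{L\eta^2 M}{2}.
\end{equation*}
Because $\eta < 1/L$, the multiplicative factor $\rho := 1 - 2\lambda\eta(1-\frac{L\eta M_G}{2})$ is strictly less than one. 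Unrolling the recursion gives $\Delta^{(t)} \leq \rho^t \Delta^{(0)} + \frac{L\eta M}{4\lambda(1-\frac{L\eta M_G}{2})} = \rho^t \Delta^{(0)} + O(\eta)$, which shows that $\E[h(\textbf{w}^{(t)})]$ converges linearly to a neighborhood of $h^*$ of size $O(\eta)$; strong convexity then transfers this to convergence of $\textbf{w}^{(t)}$ itself through the bound $\tfrac{\lambda}{2}\|\textbf{w}^{(t)} - \textbf{w}^*\|^2 \leq h(\textbf{w}^{(t)}) - h^*$.

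The main obstacle is bounding the second moment of $\textbf{g}^{(t)}$, which is what forces the noise floor to be proportional to $\eta$ rather than zero. For constant step size we cannot eliminate this term, so the best we can do is the ``linear convergence up to a $O(\eta)$ neighborhood'' statement; honest verification of the bounded-variance assumption for the mini-batch gradients of $h$ on a compact region would need to be done carefully, perhaps by using the fact that our iterates remain bounded (which itself follows from the descent inequality once it is established). All remaining steps are routine manipulations of the above recursion.
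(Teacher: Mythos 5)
Your argument is the standard smoothness-plus-Polyak--\L{}ojasiewicz recursion from Bottou, Curtis and Nocedal, which is precisely the result this lemma cites; the paper offers no independent proof, only a restatement of that theorem (with its correlation and variance constants $J_1,\dots,J_5$) in the appendix, so your route coincides with the paper's. The one caution is that your contraction factor $1-2\lambda\eta\bigl(1-\tfrac{L\eta M_G}{2}\bigr)$ is guaranteed to lie in $(0,1)$ only under the stronger condition $\eta\leq\tfrac{1}{L M_G}$, not merely $\eta<\tfrac{1}{L}$ (unless $M_G\leq 2$); the paper's appendix elides the same constant when it passes from $\eta\leq\tfrac{1}{LJ_5}$ to $\eta\leq\tfrac{1}{L}$, so this is a shared imprecision rather than a gap specific to your proposal.
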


\eat{We will focus on convergence analysis for $\increprov^{(t)}$ in Equation \ref{eq: mini_sgd_linear_regression_provenance_update} and $\linearincreprov^{(t)}$ in Equation \ref{eq: mini-SGD_logistic_regression_prov_update} in the following presentation while the convergence of $\linearprov^{(t)}$ can be easily derived since $\linearprov^{(t)}$ can be viewed as a special case of $\linearincreprov^{(t)}$ where the number of removed samples is 0. Similarly, $\mathcal{W}^{(t)}$ in Equation \ref{eq: mini_sgd_linear_regression_provenance} can be regarded as a special case of $\increprov^{(t)}$ in the case of linear regression. }

Unfortunately, our theoretical analysis shows that there is no convergence guarantee for $\increprov^{(t)}$ and $\linearincreprov^{(t)}$ under the convergence conditions from Lemma \ref{lemma: convergence_conditions}, i.e.:
\begin{theorem}\label{theorem: non_convergence}
$\increprov^{(t)}$ in Equation \ref{eq: mini_sgd_linear_regression_provenance_update} and $\linearincreprov^{(t)}$ in Equation \ref{eq: mini-SGD_logistic_regression_prov_update} need not
converge under the conditions in Lemma \ref{lemma: convergence_conditions}.
\footnote{Due to space limitations the proofs of the theorems
are omitted. They will appear in the full version of the paper.}
\end{theorem}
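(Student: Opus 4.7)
The plan is to exhibit an explicit counterexample for linear regression and then lift it to logistic regression. The construction hinges on a single observation: setting $p_i \mapsto \oneprov$ for every retained sample and $p_j \mapsto \zeroprov$ for every deleted one reduces, via the semiring laws of Section~\ref{ssec: provenance_annotation}, Equation~\ref{eq: mini_sgd_linear_regression_provenance_update} to the ordinary \minisgd\ recurrence $\increw^{(t+1)} = \mathbf{A}^{(t)}_{\text{red}}\,\increw^{(t)} + \mathbf{b}^{(t)}$ on the reduced training set, where $\mathbf{A}^{(t)}_{\text{red}} = (1-\eta\lambda)\mathbf{I} - \frac{2\eta}{\increB^{(t)}}\sum_{i \in \mathscr{B}^{(t)},\, i \notin \mathcal{R}}\mathbf{x}_i\mathbf{x}_i^T$. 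The crucial feature is that the denominator is the \emph{reduced} batch size $\increB^{(t)}$ rather than the original $B$, so a deletion set that wipes out most of a mini-batch amplifies the per-sample contribution of the survivors.

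The counterexample: take a dataset in which one sample $\mathbf{x}_1$ carries a large norm $r$ and the remaining $n-1$ samples have negligible norm $\epsilon$. The full-dataset Lipschitz constant $L = \lambda + \tfrac{2}{n}\sum_i \|\mathbf{x}_i\|^2$ is then moderate for large $n$, so choosing $\eta = 1/L$ satisfies the conditions of Lemma~\ref{lemma: convergence_conditions} for the original problem and $\textbf{w}^{(t)}$ converges. Now let $\mathcal{R}$ delete every filler and pick any mini-batch that contains $\mathbf{x}_1$; then $\increB^{(t)} = 1$ and $\mathbf{A}^{(t)}_{\text{red}}$ has the eigenvalue $1-\eta\lambda-2\eta r^2$ in the direction of $\mathbf{x}_1$. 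Taking $r$ large enough relative to $\sqrt{n}$ drives the magnitude of this eigenvalue above $1$, so the reduced recursion blows up geometrically along $\mathbf{x}_1$.

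To translate this into non-convergence of $\increprov^{(t)}$, I would track the recursion for the coefficients $\textbf{u}_{p_1^{2k}}^{(t)}$ of the monomials $p_1^{2k}$ that are generated as iterations proceed. Each extra power of $p_1^2$ injects a factor $\tfrac{2r^2}{\increB^{(t)}\lambda}$ from the expansive operator on top of the contractive self-loop $(1-\eta\lambda)$, so either some individual coefficient $\textbf{u}_{p_1^{2k}}^{(t)}$ fails to converge in $t$, or the collection of coefficients accumulates so that $\sum_k \textbf{u}_k^{(t)} = \increw^{(t)}$ grows unboundedly; in either reading of Definition~\ref{def: convergence_tensor_prod}, $\increprov^{(t)}$ does not converge.

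The logistic case follows from the same construction. Equation~\ref{eq: mini-SGD_logistic_regression_prov_update} has identical linear-algebraic form with coefficients $a^{i,(t)}$ in place of the constant $-2$, and $a^{i,(t)}$ stays bounded below by a positive constant on the interpolation interval $[-a,a]$ since the sigmoid's derivative is uniformly positive there. Plugging the same norm-imbalanced dataset and deletion into Equation~\ref{eq: mini-SGD_logistic_regression_prov_update} yields an $\mathbf{A}^{(t)}_{\text{red}}$ whose spectral radius again exceeds $1$, so $\linearincreprov^{(t)}$ inherits the same non-convergence. The main obstacle I anticipate is the bookkeeping that simultaneously preserves $\eta < 1/L$ for the \emph{original} dataset while making the reduced operator expansive; the gap between averaging over $n$ and over $\increB^{(t)}$ is the lever that makes this possible, and the norm-imbalanced construction above is designed to exploit it with only elementary eigenvalue estimates on rank-one perturbations of $(1-\eta\lambda)\mathbf{I}$.
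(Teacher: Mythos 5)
There is a genuine gap, and it stems from attacking the wrong object. The paper's own counterexample takes $\mathcal{R}=\emptyset$ and full gradient descent, so the token-evaluated parameters are exactly $\textbf{w}^{(t)}$ and \emph{do} converge; what diverges is an individual coefficient matrix inside the provenance expression. Expanding $\bigl((1-\eta\lambda)\oneprov*\textbf{I} - \frac{2\eta}{n}\sum_{i} p_i^2 * \textbf{x}_i\textbf{x}_i^T\bigr)^{t}$ without idempotence, the coefficient attached to the monomial $p_i^{t}$ is ${t \choose t/2}(1-\eta\lambda)^{t/2}(-\frac{2\eta}{n}\textbf{x}_i\textbf{x}_i^T)^{t/2}$; since ${t\choose t/2}$ grows like $2^{t}$ while the matrix factor only decays geometrically, its norm tends to infinity for admissible $\eta<1/L$, and Definition \ref{def: convergence_tensor_prod} (which quantifies over every coefficient) fails. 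This combinatorial blow-up of monomial multiplicities is precisely what the idempotence hypothesis of Theorem \ref{theorem: convergence_res} removes, which is why the two theorems are consistent.

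Your construction instead tries to make the token-evaluated reduced iterate $\increw^{(t)}$ diverge via a norm-imbalanced dataset and a deletion that shrinks $\increB^{(t)}$ to $1$. First, that quantity equals $\sum_k \textbf{u}_k^{(t)}$ after substituting $\oneprov$ for the surviving tokens and is completely insensitive to idempotence, so if your scenario really satisfied ``the conditions in Lemma \ref{lemma: convergence_conditions}'' it would refute Theorem \ref{theorem: convergence_res} as well. It does not satisfy them: the full version of the convergence conditions caps the step size by a factor involving the second moment of the per-sample stochastic gradients, and the paper's analysis additionally assumes each $h_i$ is $L$-smooth with bounded gradient; a single sample with $\|\textbf{x}_1\|^2 \gg n$ violates these and already destabilizes the \emph{original} \minisgd\ recursion on any mini-batch containing $\textbf{x}_1$. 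Under per-sample smoothness, $\eta\le 1/L$ keeps the reduced averaged operator non-expansive for any $\increB^{(t)}\ge 1$, so the eigenvalue you want to push past $1$ cannot get there. Second, your closing step (``either some coefficient diverges or the sum diverges; either way Definition \ref{def: convergence_tensor_prod} fails'') is not a proof: the set of monomials grows with $t$, so divergence of the sum does not localize to divergence of any fixed coefficient, and the definition speaks only of individual coefficients. The binomial-coefficient estimate on a specific monomial is the missing idea.
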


\eat{\proofsketch

Let us take linear regression as an example. In order to prove Theorem \ref{theorem: non_convergence}, we need to show that there exists a case where $\increprov^{(t)}$ cannot converge under the conditions in Lemma \ref{lemma: convergence_conditions}.  This is achieved by considering gradient descent (\gd) without excluding any original training samples, i.e. $\{p_{i_1}, p_{i_2}, \dots, p_{i_z}\} = \{1,2,\dots,n\}$, every $\increB^{(t)}=\;n$ in Equation \ref{eq: mini_sgd_linear_regression_provenance_update} and every $\mathscr{B}^{(t)}$ includes all $n$ samples in Equations \ref{eq: mini_sgd_linear_regression} and \ref{eq: mini_sgd_linear_regression_provenance_update}. \eat{$\mathcal{W}^{(t)}$ in Equation \ref{eq: mini_sgd_linear_regression_provenance} since $\mathcal{W}^{(t)}$ is a special case of $\increprov^{(t)}$. Plus, we only consider gradient descent (\gd) here.}
We can then apply the update rule in Equations \ref{eq: mini_sgd_linear_regression} and \ref{eq: mini_sgd_linear_regression_provenance_update} recursively, which ends up with:
\begin{align}
\begin{split}
&\textbf{w}^{(t+1)} \label{eq: w_expansion}
= ((1-\eta\lambda)\textbf{I} - \frac{2\eta}{n} \sum_{i=1}^{n} \textbf{x}_i\textbf{x}_i^T)^{t+1}\textbf{w}^{(0)}\\
& + (\sum_{j=1}^t ((1-\eta\lambda)\textbf{I} - \frac{2\eta}{n} \sum_{i=1}^{n} \textbf{x}_i\textbf{x}_i^T)^{j}) \frac{2\eta}{n} \sum_{i=1}^{n} \textbf{x}_i y_i
\end{split}\\
\begin{split}\label{eq: w_prov_expansion}
&\increprov^{(t)}
= ((1-\eta\lambda)1_k*\textbf{I}- \frac{2\eta}{n} \sum_{i=1}^{n} p_i^2 * \textbf{x}_i\textbf{x}_i^T )^{t}\increprov^{(0)}\\
& + (\sum_{j=1}^t ((1-\eta\lambda)1_k*\textbf{I} - \frac{2\eta}{n} \sum_{i=1}^{n} p_i^2 *\textbf{x}_i\textbf{x}_i^T)^{j}) \frac{2\eta}{n} \sum_{i=1}^{n} p_i^2*\textbf{x}_i y_i
\end{split}
\end{align}

Observe that the update rule in $\textbf{w}^{(t)}$ in Equation \ref{eq: w_expansion} becomes a linear system. In order to make sure that $\textbf{w}^{(t)}$ converges, the condition, $||((1-\eta\lambda)\textbf{I} - \frac{2\eta}{n} \sum_{i=1}^{n} \textbf{x}_i\textbf{x}_i^T)||_2 \leq 1$, should be satisfied, where $||*||_2$ represents the $L2-$norm of certain matrix. Since every $\textbf{x}_i\textbf{x}_i^T$ is a semi-positive definite matrix, under the convergence conditions, for every $i$, $||\frac{2\eta}{n} \textbf{x}_i\textbf{x}_i^T||_2 \leq 1$ (details omitted).

After expanding the first term in the right-hand side of Equation \ref{eq: w_prov_expansion}, the tensor product with provenance monomial $p_i^{t}$ should be $p_i^t*{t\choose \frac{t}{2}}(1-\eta\lambda)^{\frac{t}{2}}(-\frac{2\eta}{n} \textbf{x}_i\textbf{x}_i^T)^{\frac{t}{2}}$. According to the convergence conditions in Lemma \ref{lemma: convergence_conditions}, $\eta < \frac{1}{2\lambda}$ and thus $||{t\choose \frac{t}{2}}(1-\eta\lambda)^{\frac{t}{2}}(-\frac{2\eta}{n} \textbf{x}_i\textbf{x}_i^T)^{\frac{t}{2}}||_2 \geq {t\choose \frac{t}{2}}||(-\frac{\eta}{n} \textbf{x}_i\textbf{x}_i^T)^{\frac{t}{2}}||_2$. According to \cite{sun2001convergence, dasbrief}, when $t\rightarrow \infty$, ${t\choose \frac{t}{2}}$ should be very close to $2^t$ and thus ${t\choose \frac{t}{2}}||(-\frac{\eta}{n} \textbf{x}_i\textbf{x}_i^T)^{\frac{t}{2}}||_2 \rightarrow \infty$, which means that the tensor product with provenance monomial $p_i^{t}$ cannot converge and thus $\increprov^{(t)}$ cannot converge. \qed}

\eat{The proof is presented in Appendix \ref{sec: non_convergency_proof}. However, if we assume ``idempotence'' over $*_K$ in tensor product, 
the convergence of $\linearprov^{(t)}$ can be achieved (Detailed proof is presented in Appendix \ref{sec: theorem_convergency_proof}), i.e.:}

However, $\increprov^{(t)}$ in Equation \ref{eq: mini_sgd_linear_regression_provenance_update} and $\linearincreprov^{(t)}$ in Equation \ref{eq: mini-SGD_logistic_regression_prov_update}  
converge under the conditions in Lemma \ref{lemma: convergence_conditions} with one more assumption about the provenance expression, i.e.:

\begin{theorem}\label{theorem: convergence_res}
The expectation of $\increprov^{(t)}$ in Equation \ref{eq: mini_sgd_linear_regression_provenance_update} and of $\linearincreprov^{(t)}$ in Equation \ref{eq: mini-SGD_logistic_regression_prov_update}, 
converge when $t \rightarrow \infty$ if we also assume that provenance polynomial
multiplication is \emph{idempotent}.
\end{theorem}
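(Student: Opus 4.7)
The plan is to exploit the idempotence assumption $p_i \cdot p_i = p_i$ (equivalently $p_i^k = p_i$ for every $k\ge 1$) to collapse every provenance monomial appearing in $\increprov^{(t)}$ and $\linearincreprov^{(t)}$ into a squarefree monomial. Writing $N = \{1,\dots,n\}\setminus\mathcal{R}$ for the indices of the retained samples, one then obtains finite expansions
$$\increprov^{(t)} \;=\; \sum_{S\subseteq N}\Bigl(\prod_{i\in S}p_i\Bigr) * \textbf{u}_S^{(t)}, \qquad \linearincreprov^{(t)} \;=\; \sum_{S\subseteq N}\Bigl(\prod_{i\in S}p_i\Bigr) * \textbf{v}_S^{(t)},$$
so by Definition~\ref{def: convergence_tensor_prod} it suffices to prove that $\mathbb{E}[\textbf{u}_S^{(t)}]$ and $\mathbb{E}[\textbf{v}_S^{(t)}]$ converge for each $S$.

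The key step is an evaluation-plus-M\"obius-inversion argument. For each $T\subseteq N$, the assignment $\phi_T$ sending $p_i\mapsto 1$ for $i\in T$ and $p_i\mapsto 0$ otherwise extends to a semiring homomorphism that is compatible with the $*$-action on matrices and hence commutes with the annotated update rule in Equation~\ref{eq: mini_sgd_linear_regression_provenance_update}. Applying $\phi_T$ therefore turns the provenance-annotated recurrence into exactly the standard \minisgd\ recurrence from Equation~\ref{eq: mini_sgd_linear_regression} run on the sub-training-set indexed by $T$, with $\increB^{(t)}$ instantiated to $|\mathscr{B}^{(t)}\cap T|$; call this iterate $\textbf{w}_T^{(t)}$. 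Simultaneously, $\phi_T(\increprov^{(t)}) = \sum_{S\subseteq T}\textbf{u}_S^{(t)}$. Since the restricted linear-regression objective remains $\lambda$-strongly convex and $L$-Lipschitz, Lemma~\ref{lemma: convergence_conditions} gives convergence of $\mathbb{E}[\textbf{w}_T^{(t)}]$ for every $T$. M\"obius inversion over the Boolean lattice then yields
$$\mathbb{E}[\textbf{u}_S^{(t)}] \;=\; \sum_{T\subseteq S}(-1)^{|S|-|T|}\,\mathbb{E}[\textbf{w}_T^{(t)}],$$
a finite signed combination of convergent sequences, which therefore converges. This settles the linear-regression case.

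For the logistic-regression case the same collapse-and-invert scheme applies to Equation~\ref{eq: mini-SGD_logistic_regression_prov_update}, with $\phi_T(\linearincreprov^{(t)})$ equal to the linearized sub-dataset iterate $\linearincrew_T^{(t)}$ (Equation~\ref{eq: mini-SGD_logistic_regression_para_update}) restricted to samples in $T$. The M\"obius-inversion step is identical, so everything reduces to proving convergence of $\mathbb{E}[\linearincrew_T^{(t)}]$. To obtain this we use that $a^{i,(t)}$ is the slope of the piecewise linear interpolant of the bounded, smooth function $f(x)=1-1/(1+e^{-x})$, so $a^{i,(t)}\in[-1/4,0]$ and $|b^{i,(t)}|$ is uniformly bounded; combined with the shrinkage factor $(1-\eta\lambda)$, this pushes the spectral norm of the matrix multiplying $\linearincrew_T^{(t)}$ below $1-c\eta\lambda$ for some constant $c>0$, giving a uniform contraction.

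The main obstacle is precisely this last point: the linearized logistic recurrence is time-inhomogeneous, because $a^{i,(t)}$ and $b^{i,(t)}$ depend on where the full-dataset linearized iterate $\linearw^{(t)}$ falls inside the breakpoint partition, so Lemma~\ref{lemma: convergence_conditions} does not apply verbatim. The plan for overcoming this is to treat the linearized update as a gradient step for a time-varying strongly convex quadratic perturbed by the piecewise linear interpolation error, invoke the uniform contraction above to keep iterates bounded, and then exploit convergence of $\linearw^{(t)}$ to argue that $a^{i,(t)},b^{i,(t)}$ themselves settle to limits; the expected sub-dataset iterate $\mathbb{E}[\linearincrew_T^{(t)}]$ then converges to the fixed point of the limiting affine map. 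Making the perturbation bound and this two-timescale argument precise is the technically delicate piece.
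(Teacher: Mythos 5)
Your proposal follows essentially the same route as the paper's proof: idempotence collapses every monomial to a squarefree one; each coefficient matrix is recovered from the evaluations that set the tokens in a subset $T$ to $\oneprov$ and the rest to $\zeroprov$ (the paper writes this as subtracting the proper-subset evaluations, which is your M\"obius inversion); and each such evaluation is exactly the \minisgd\ (resp.\ linearized) recurrence run on the sub-dataset indexed by $T$, which inherits $\lambda$-strong convexity from the $L2$ regularizer. The one substantive difference is the final step for logistic regression. You flag the time-inhomogeneity of the coefficients $a^{i,(t)}, b^{i,(t)}$ as the delicate point and propose a two-timescale argument in which these coefficients must settle to limits. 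The paper does not need this: because every $a^{i,(t)}\le 0$, the matrix $-a^{i,(t)}\textbf{x}_i\textbf{x}_i^T+\lambda\textbf{I}$ is positive semi-definite with smallest eigenvalue at least $\lambda$ \emph{uniformly in $t$}, so each iteration contracts the distance to a reference point by $(1-\eta\lambda)$ no matter which interpolation cell the full-dataset iterate occupies; combining this with a uniform bound $\|\triangledown R^{(t)}(\cdot)\|\le C$ and unrolling the recursion gives $\|\linearincrew^{(t)}-\linearincrew^{*}\|_2\le C/\lambda$. In other words, the paper settles for the same constant-step-size notion of convergence (to a bounded neighborhood) used in Lemma \ref{lemma: convergence_conditions}, under which your ``technically delicate piece'' disappears: the uniform contraction you already identified suffices, and no limit of $a^{i,(t)},b^{i,(t)}$ is required.
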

Intuitively speaking, the assumption of multiplication idempotence 
for provenance polynomials means that we do not track \emph{multiple joint uses 
of the same data sample}, which is not problematic for deletion propagation.
\scream{delete the following sentence in the eat environment since I guess readers may be confused about the notion of ``trust''}
\eat{although it might complicate future work attempting to quantify trust in the training samples.}
\eat{
the tensor products with the same set of provenance tokens will be merged together using the identities in Section \ref{ssec: provenance_annotation}, which indicates that the existence (rather than every combination) of the provenance tokens is captured in each tensor product.  
\eat{\val{We must comment on the meaning of idempotence for the "joint use"
provenance operation}\yinjun{done, can you take a look}
}

To prove the convergence of $\increprov^{(t)}$ and $\linearincreprov^{(t)}$, we need to prove that each tensor product in $\increprov^{(t)}$ and $\linearincreprov^{(t)}$ converges. Due to the ``idempotence'' over $*_K$, each tensor product in $\increprov^{(t)}$ and $\linearincreprov^{(t)}$ will be in the form of $P*\textbf{u}$ where the provenance monomial $P$ is a product of provenance tokens all with exponent 1, i.e. $P = p_{j_1}p_{j_2}\dots p_{j_s}$, in which $\textbf{u}$ can be computed by using $\increprov^{(t)}(p_{j_1}, p_{j_2},\dots, p_{j_s})$ 
\\($\linearincreprov^{(t)}(p_{j_1}, p_{j_2},\dots, p_{j_s})$ resp.) as below:

\begin{align}\label{eq: provenance_equation}
\begin{split}
& \textbf{u} = \increprov^{(t)}(p_{j_1}, p_{j_2},\dots, p_{j_s})\\
&- \sum_{\substack{\{p_{k_1}, p_{k_2},\dots, p_{k_t}\}\\ \subsetneq \{p_{j_1}, p_{j_2},\dots, p_{j_s}\}}}\increprov^{(t)}(p_{k_1}, p_{k_2},\dots, p_{k_t})
\end{split}
\end{align}

Intuitively, $P*\textbf{u}$ represents the tensor product where $P$ is \\$(p_{j_1}p_{j_2}\dots p_{j_s})$ while $\increprov^{(t)}(p_{j_1}, p_{j_2},\dots, p_{j_s})$ may include matrices whose corresponding tensor products only partially include tokens from $\{p_{j_1}, p_{j_2},\dots, p_{j_s}\}$, which are subtracted from $\increprov^{(t)}(p_{j_1}, p_{j_2},\dots, p_{j_s})$ to obtain $\textbf{u}$. According to Equation \ref{eq: provenance_equation}, the convergence of each $P*\textbf{u}$ can be derived by the convergence of each $\increprov^{(t)}(p_{j_1}, p_{j_2},\dots, p_{j_s})$, which is presented below.
\\}

\eat{\proofsketch

In \minisgd, a typical assumption is used for the convergence analysis of the model parameter $\textbf{w}^{(t)}$ in Equation \ref{eq: mini-sgd}, i.e.:

\begin{lemma}\label{lemma: sgd_assumption}
For any randomly selected sample $i_j$ in some batch, the expectation of its gradient should be the same as the gradient over the all the samples, i.e.:

$E(\triangledown h_{i_j}(\textbf{w})) = \triangledown h(\textbf{w})$

which also implies that the following equality holds for \minisgd:

$E(\triangledown( \frac{1}{B} \sum_{i\in \mathscr{B}^{(t)}}h_{i}(\textbf{w}))) = \triangledown h(\textbf{w})$

where $E$ is the expectation value with respect to the sampling over the entire training samples.
\end{lemma}

The following proofs will be described using logistic regression. By denoting $\linearincreprov^{(t)}(p_{j_1}, p_{j_2},\dots, p_{j_s})$ as $\textbf{v}$ (a matrix), its update rule should be as follows:
\begin{align}\label{eq: mini-SGD_logistic_regression_prov_update_singleton}
    \begin{split}
        & E(\textbf{v}^{(t+1)}) = 
        \textbf{v}^{(t)}\\
        & - \eta_t (\lambda \textbf{v}^{(t)} - \frac{1}{n}\sum_{r \in \{j_1, j_2, \dots, j_s\}} (a^{r,(t)}\textbf{x}_r\textbf{x}_r^T \textbf{v}^{(t)} + b^{r, (t)}y_r\textbf{x}_r))\\
        &=  \textbf{v}^{(t)} - \eta_t \triangledown^{(t)} R_{j_1,j_2,\dots, j_s}(\textbf{v}^{(t)})
    \end{split}
\end{align}
\noindent
where $\triangledown^{(t)} R_{j_1,j_2,\dots, j_s}(\textbf{v}^{(t)})$ represents the term in the second from the last line in Equation \ref{eq: mini-SGD_logistic_regression_prov_update_singleton}, which should satisfy the following two inequalities.
\begin{align}
        &||\triangledown^{(t)} R_{j_1,j_2,\dots, j_s}(\textbf{v}^{(t)})||_2 < C'\label{eq: R_property1} \\
        & <\textbf{v}^{(t)} - \textbf{v}^*, \triangledown^{(t)} R_{j_1,j_2,\dots, j_s}(\textbf{v}^{(t)})> \geq \lambda ||\textbf{v}^{(t)} - \textbf{v}^*||_2^2 \label{eq: R_property2}
\end{align}
\noindent
where $C'$ is some constant and $\textbf{v}^*$ represents the matrix that satisfies $\triangledown^{(t)}R_{j_1,j_2,\dots, j_s}(\textbf{v}^*) = \textbf{0}$. By combining the equations above, we obtain the following inequality:
\begin{align}\label{eq: u_gap}
    \begin{split}
        &E(||\textbf{v}^{(t+1)} - \textbf{v}^*||_2^2)\\
        & = E(||\textbf{v}^{(t)} - \eta \triangledown^{(t)} R^{(t)}(\textbf{v}^{(t)}) - \textbf{v}^*||_2^2)\\
        & = ||\textbf{v}^{(t)} - \textbf{v}^*||_2^2 - 2\eta <\triangledown R_{j_1,j_2,\dots, j_s}(\textbf{v}^{(t)}), \textbf{v}^{(t)} - \textbf{v}^*>\\
        & + \eta^2 ||R_{j_1,j_2,\dots, j_s}^{(t)}(\textbf{v}^{(t)})||_2^2\\
        & \myleqtwo (1-2\eta \lambda )||\textbf{v}^{(t)} - \textbf{v}^*||_2^2 + C'^2\eta^2
    \end{split}
\end{align}

By deriving Equation \ref{eq: u_gap} recursively and under the condition that $\eta_t \leq \frac{1}{2\lambda}$ we can conclude that $\textbf{v}^{(t)}$ has the same convergence rate as $\textbf{w}^{(t)}$. 

Then according to Equation \ref{eq: provenanace_eq_update_model_logistic}, due to the equality between $\linearincreprov^{(t)}(p_{i_1}, p_{i_2},\dots, p_{i_z})$ and $\linearincrew^{(t)}$, we can conclude that $\linearincrew^{(t)}$ should be converged under the convergence conditions of the original model parameters $\textbf{w}^{(t)}$. \qed}

\eat{Its convergence depends on that of the matrix part of $\linearincreprov^{(t+1)}(p_r)$, i.e.:

\begin{align}\label{eq: mini-SGD_logistic_regression_prov_update_singleton_matrix}
    \begin{split}
        & E(\linearincrew^{(t+1)}(p_r)) \leftarrow ((1-\eta_t\lambda)(1_K*\textbf{I})\\
        & + \frac{\eta_t}{n}a^{r, (t)}(p_r*\textbf{x}_r\textbf{x}_r^T))\linearincreprov^{(t)}(p_r) + \frac{\eta_t}{n} ({p_r}*b^{r, (t)}y_r\textbf{x}_r)
    \end{split}
\end{align}

According to Weyl's inequality \cite{weyl1912asymptotische}, the largest eigenvalue of

We then define $\mathcal{W}^{(t)}(\{p_{j_1}, p_{j_2}, \dots, p_{j_s}\}|\{p_{i_1}, p_{i_2}, \dots, p_{i_z}\})$ as the tensor product with provenance monomial $p_{j_1}p_{j_2}\dots p_{j_s}$ where all the provenance tokens except $\{p_{i_1}, p_{i_2}, \dots, p_{i_z}\}$ are $0_k$. Obviously, $\mathcal{W}^{(t)}(\{p_{j_1}, p_{j_2}, \dots, p_{j_s}\}|\{p_{i_1}, p_{i_2}, \dots, p_{i_z}\})$  is not \textbf{0} iff $\{p_{j_1}, p_{j_2}, \dots, p_{j_s}\}$ is a subset of $\{p_{i_1}, p_{i_2}, \dots, p_{i_z}\}$. We also represent $\mathcal{W}^{(t)}(\{p_{i_1}, p_{i_2}, \dots, p_{i_z}\})$ as \\$\sum_{\substack{\{p_{j_1}, p_{j_2}, \dots, p_{j_s}\}\\ \subset \{p_{i_1}, p_{i_2}, \dots, p_{i_z}\}}} \mathcal{W}^{(t)}(\{p_{j_1}, p_{j_2}, \dots, p_{j_s}\}|\{p_{i_1}, p_{i_2}, \dots, p_{i_z}\})$. We use 
}

\eat{In the context of incremental updates, the provenance expression $\mathcal{W}^{(t)}(\{p_{j_1}, p_{j_2}, \dots, p_{j_s}\}|\{p_{i_1}, p_{i_2}, \dots, p_{i_z}\})$ should converge under the convergence conditions of the original model parameter $\textbf{w}^{(t)}$:

The proof of Theorem \ref{theorem: convergence_res} is sketched below:

\proofsketch

Let us take logistic regression as an example.

\qed

\begin{lemma}\label{lm: convergence_another_form}

$\mathcal{W}^{(t)}(\{p_{j_1}, p_{j_2}, \dots, p_{j_x}\})$ converges iff every $\mathcal{W}^{(t)}(\{p_{j_1}, p_{j_2}, \dots, p_{j_s}\}|\{p_{i_1}, p_{i_2}, \dots, p_{i_z}\})$ converges.
\end{lemma}

Besides, $\mathcal{W}^{(t)}(\{p_{i_1}, p_{i_2}, p_{i_3}, \dots, p_{i_z}\})$ 

where $z^{(t)}$ represents the number of samples that belongs to the batch in the $t_{th}$ iteration and falls in the subset of the remaining training samples after the deletion process (i.e. $\{\textbf{x}_{i_1}, \textbf{x}_{i_2}, \dots, \textbf{x}_{i_z}\}$). }

\subsection{Accuracy analysis for linearized logistic regression}\label{ssec: accuracy_proof}
The next question is whether the approximated model parameters after linearization of Equation \ref{eq: mini_sgd_logistic_regression} (i.e. $\linearw^{(t)}$ in Equation \ref{eq: mini_sgd_instantiation_approx}) is close enough to the real model parameters from Equation \ref{eq: mini_sgd_logistic_regression}. By following the approximation property of piecewise linear interpolation, we can prove that the distance between $\textbf{w}^{(t)}$ and $\linearw^{(t)}$ is very small.

\scream{Probably we can remove this theorem if we have space problem}
\begin{theorem}\label{theorem: aproximation_bound}
$||E(\textbf{w}^{(t)} - \linearw^{(t)})||_2$ is bounded by $O((\Delta x)^2)$ where $\Delta x$ is an arbitrarily small value representing the length of the longest sub-interval used in piecewise linear interpolations. 
\end{theorem}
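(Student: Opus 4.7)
The plan is to bound the iterate error $\textbf{e}^{(t)} := \textbf{w}^{(t)} - \linearw^{(t)}$ via a scalar recursion that contracts to an $O((\Delta x)^2)$ level. Assuming both iterations share the same initialization so that $\textbf{e}^{(0)}=\mathbf{0}$, subtracting Equation~\ref{eq: mini_sgd_instantiation_approx} from Equation~\ref{eq: mini_sgd_logistic_regression} yields
\begin{equation*}
\textbf{e}^{(t+1)} = (1-\eta_t\lambda)\,\textbf{e}^{(t)} + \frac{\eta_t}{B}\sum_{i\in\mathscr{B}^{(t)}} y_i\textbf{x}_i\,\bigl[f(y_i\textbf{w}^{(t)T}\textbf{x}_i)-s(y_i\linearw^{(t)T}\textbf{x}_i)\bigr],
\end{equation*}
where $f(x)=1-1/(1+e^{-x})$ is the nonlinearity in the exact update and $s$ is its piecewise-linear interpolant used in the approximated update.

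Next, I would split the nonlinear difference into the pointwise approximation error and an iterate-gap term,
\begin{equation*}
f(y_i\textbf{w}^{(t)T}\textbf{x}_i)-s(y_i\linearw^{(t)T}\textbf{x}_i) = \bigl[f(y_i\textbf{w}^{(t)T}\textbf{x}_i)-s(y_i\textbf{w}^{(t)T}\textbf{x}_i)\bigr] + \bigl[s(y_i\textbf{w}^{(t)T}\textbf{x}_i)-s(y_i\linearw^{(t)T}\textbf{x}_i)\bigr].
\end{equation*}
The first summand is uniformly bounded by $M_1(\Delta x)^2$ from the standard piecewise-linear interpolation estimate $|f(x)-s(x)|\leq \tfrac{1}{8}(\Delta x)^2\max|f''|$. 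The second summand is Lipschitz in $\textbf{e}^{(t)}$: since $|f'(x)|\leq 1/4$ for the sigmoid-type $f$ and $|s'(x)-f'(x)|\leq \tfrac{1}{2}\Delta x\,\max|f''|$ on each sub-interval, the continuous interpolant $s$ is globally Lipschitz with constant $L_s = 1/4 + O(\Delta x)$, so the second summand is bounded by $L_s\,\|\textbf{x}_i\|_2\,\|\textbf{e}^{(t)}\|_2$.

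Plugging these bounds into the recursion, taking norms and then expectations (Jensen's inequality gives $\|E[\textbf{e}^{(t)}]\|_2 \leq E[\|\textbf{e}^{(t)}\|_2]$), and using bounded features $\|\textbf{x}_i\|_2\leq X$ together with the iterate boundedness inherited from Lemma~\ref{lemma: convergence_conditions} and Theorem~\ref{theorem: convergence_res}, produces the scalar recursion
\begin{equation*}
E[\|\textbf{e}^{(t+1)}\|_2] \leq \bigl(1-\eta_t(\lambda-C_1)\bigr)\,E[\|\textbf{e}^{(t)}\|_2] + \eta_t C_2 (\Delta x)^2,
\end{equation*}
with $C_1 = L_s X^2$ and $C_2 = X M_1$. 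Unrolling from $\textbf{e}^{(0)}=\mathbf{0}$ under the learning-rate regime of Lemma~\ref{lemma: convergence_conditions} yields a geometric sum bounded by $C_2(\Delta x)^2/(\lambda - C_1) = O((\Delta x)^2)$, uniformly in $t$, from which $\|E[\textbf{e}^{(t)}]\|_2 = O((\Delta x)^2)$ follows.

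The main obstacle is the piecewise-linear structure of $s$: when $y_i\textbf{w}^{(t)T}\textbf{x}_i$ and $y_i\linearw^{(t)T}\textbf{x}_i$ fall into different sub-intervals, the iterate-gap term cannot be collapsed into a single slope $a^{i,(t)}$ times the argument difference. This is sidestepped by invoking the global Lipschitz constant $L_s$ of the continuous interpolant $s$, at the cost of a slightly looser bound. A secondary subtlety is ensuring $\lambda > C_1$ so that the recursion actually contracts; if this is not directly available, one can instead compare the stationary points of $h$ and its linearized surrogate $h_L$, using the pointwise gradient-closeness bound $\|\nabla h(\textbf{w}) - \nabla h_L(\textbf{w})\|_2 \leq X M_1 (\Delta x)^2$ combined with strong convexity of $h$ to derive $\|\textbf{w}^* - \linearw^*\|_2 \leq (X M_1/\lambda)(\Delta x)^2$, from which the theorem follows by triangle inequality after both sequences have converged.
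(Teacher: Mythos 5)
Your setup---forming the error recursion, splitting $f(y_i\textbf{w}^{(t)T}\textbf{x}_i)-s(y_i\linearw^{(t)T}\textbf{x}_i)$ into a pointwise interpolation error at $\textbf{w}^{(t)}$ plus an iterate-gap term, and unrolling a scalar recursion from $\textbf{e}^{(0)}=\mathbf{0}$---matches the paper's proof. The gap is in how you control the iterate-gap term. You bound it by a global Lipschitz constant $L_s\approx 1/4$ of the interpolant, which turns the contraction factor into $1-\eta_t(\lambda-L_sX^2)$; this contracts only if $\lambda>L_sX^2$, an assumption that is not part of the theorem and typically fails (in the experiments $\lambda$ is as small as $10^{-3}$). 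The obstacle you flag---the two arguments landing in different sub-intervals---does not actually arise: in Equation~\ref{eq: mini_sgd_instantiation_approx} the coefficients $a^{i,(t)},b^{i,(t)}$ are \emph{frozen} at the values determined by $y_i\textbf{w}^{(t)T}\textbf{x}_i$ during the original training run, and the linearized update applies that single affine piece to $\linearw^{(t)}$. Hence $s(y_i\textbf{w}^{(t)T}\textbf{x}_i)-s(y_i\linearw^{(t)T}\textbf{x}_i)=a^{i,(t)}y_i\textbf{x}_i^T(\textbf{w}^{(t)}-\linearw^{(t)})$ exactly, and the iterate-gap term folds into the matrix $\textbf{I}-\eta\bigl(\lambda\textbf{I}-\tfrac{1}{B}\sum_i a^{i,(t)}\textbf{x}_i\textbf{x}_i^T\bigr)$. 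Since $f$ is decreasing, every $a^{i,(t)}\le 0$, so $-a^{i,(t)}\textbf{x}_i\textbf{x}_i^T$ is positive semi-definite; combined with the Lipschitz bound $\eta\le 1/L$ this matrix has norm at most $1-\eta\lambda$ \emph{unconditionally}, which is what the paper uses to get the clean recursion $E\|\textbf{e}^{(t+1)}\|\le(1-\eta\lambda)\|\textbf{e}^{(t)}\|+\eta\,O((\Delta x)^2)$. By passing to $|s(u)-s(v)|\le L_s|u-v|$ you discard exactly the sign information that makes the contraction work.

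Your fallback---comparing stationary points of $h$ and a linearized surrogate $h_L$---does not repair this. There is no fixed surrogate objective: the coefficients $a^{i,(t)},b^{i,(t)}$ change at every iteration (they track $\textbf{w}^{(t)}$), so the linearized sequence is not gradient descent on any single function and has no stationary point $\linearw^*$ in the sense you need. Even granting one, bounding $\|\textbf{w}^*-\linearw^*\|_2$ controls only the limits; to conclude $\|E(\textbf{w}^{(t)}-\linearw^{(t)})\|_2=O((\Delta x)^2)$ for all $t$ you would also need $\|\textbf{w}^{(t)}-\textbf{w}^*\|$ and $\|\linearw^{(t)}-\linearw^*\|$ to be $O((\Delta x)^2)$, whereas for constant-step \minisgd\ these transients are only $O(\eta)$, which is a separate and generally larger quantity. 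To close the proof, replace the global-Lipschitz bound on the iterate-gap term with the exact affine identity and the positive semi-definiteness of $-a^{i,(t)}\textbf{x}_i\textbf{x}_i^T$, as in Equation~\ref{eq: bound_t_w} of the appendix.
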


\eat{\proofsketch

Based on the analysis in Section \ref{ssection: convergence}, $\linearw^{(t)}$ should be equal to $\linearincreprov(p_1, p_2, \dots, p_n)$ when no tokens are removed.  Then by referencing Equation \ref{eq: mini-SGD_logistic_regression_prov_update_singleton}, $\linearw^{(t)}$ can be expressed as:
\begin{align}
    \begin{split}
        \linearw^{(t+1)} = \linearw^{(t)} -\eta_t \triangledown^{(t)} R_{1,2,3,\dots, n}(\linearw^{(t)})
    \end{split}
\end{align}
\noindent
where $\triangledown^{(t)} R_{1,2,3,\dots,n}(\linearw^{(t)})$ still satisfies Equation \ref{eq: R_property1} and Equation \ref{eq: R_property2}. Then we can compare $\linearw^{(t)}$ and $\textbf{w}^{(t)}$ explicitly:

\begin{align}\label{eq: w_gap}
    \begin{split}
        &E(||\textbf{w}^{(t+1)} - \linearw^{(t+1)}||_2^2)\\
        & = E(||\textbf{w}^{(t)} -\eta_t \triangledown h^{(t)}(\textbf{w}^{(t)})\\
        &- (\linearw^{(t)} -\eta_t \triangledown R_{1,2,\dots,n}^{(t)}(\linearw^{(t)}))||_2)\\
        & = ||\textbf{w}^{(t)} - \linearw^{(t)}||_2^2\\
        & - 2\eta_t <\textbf{w}^{(t)} - \linearw^{(t)}, \triangledown^{(t)} R_{1,2,\dots,n}(\textbf{w}^{(t)})\\
        &- \triangledown^{(t)} R_{1,2,\dots,n}(\linearw^{(t)})>\\
        & - 2\eta_t <\textbf{w}^{(t)} - \linearw^{(t)}, \triangledown h(\textbf{w}^{(t)}) - \triangledown R_{1,2,\dots,n}^{(t)}(\textbf{w}^{(t)})>\\
        & + \eta_t^2 E(||\triangledown h^{(t)}(\textbf{w}^{(t)}) - \triangledown R_{1,2,\dots,n}^{(t)}(\linearw^{(t)})||_2^2)
    \end{split}
\end{align}

Since $\triangledown R_{1,2,\dots, n}^{(t)}(*)$ is the linear approximation of $\triangledown h(*)$, we can prove that the last two terms are very small, whose norms can be bounded by $O((\Delta x))$ (details omitted). We can also apply Equation \ref{eq: R_property2} over the term in the third to the last line, and thus Equation \ref{eq: w_gap} can be simplified as:
\begin{align}
    \begin{split}
        &E(||\textbf{w}^{(t+1)} - \linearw^{(t+1)}||_2^2)\\
        & \leq (1-2 \eta_t \lambda)||\textbf{w}^{(t)} - \linearw^{(t)}||_2^2 + O((\Delta x))
    \end{split}
\end{align}
\noindent
which can be derived recursively, and thus we can conclude that $E(||\textbf{w}^{(t+1)} - \linearw^{(t+1)}||_2^2$ is bounded by $O((\Delta x))$. \qed}

Furthermore, in terms of the updated model parameters for logistic regression, we also need to guarantee that the updated parameters $\linearincrew^{(t)}$ are close to the real updated model parameters without linearization (denoted by $\logistlinearincrew$), i.e.:
\vspace{-0.5mm}
\begin{align}\label{eq: mini-SGD_updated_model_parameters_expected}
\begin{split}
    &\hspace{-3mm}\logistlinearincrew^{(t+1)} \leftarrow (1-\eta_t\lambda)\logistlinearincrew^{(t)} +\frac{\eta_t}{\increB^{(t)}} \sum_{\substack{ i \in \mathscr{B}^{(t)} \\,i \not \in \mathcal{R}}} y_i\textbf{x}_i f(y_i\logistlinearincrew^{(t)}\textbf{x}_i)
\end{split}
\end{align}
\vspace{-0.5mm}
Recall that $f(x) = 1- \frac{1}{1+e^{-x}}$\eat{ represents one minus the Sigmoid function}. Note that the linear coefficients \eat{ $a^{i,(t)}$ and $b^{i, (t)}$ in Equation \ref{eq: mini-SGD_updated_model_parameters} are computed by assigning the result $y_i\linearincrew^{(t)}\textbf{x}_i$ to the variable $x$ in $f(x)$, then the distance between $\linearincrew^{(t)}$ and $\logistlinearincrew^{(t)}$ should be still bounded by $O((\Delta x))$, which is same as before. However, }$a^{i,(t)}$ and $b^{i, (t)}$ in Equation \ref{eq: mini-SGD_logistic_regression_para_update} are actually derived \eat{by replacing $x$ in $f(x)$ with $y_i\textbf{w}^{(t)T}\textbf{x}_i$ (instead of $y_i\logistlinearincrew^{(t)}\textbf{x}_i$) in Equation \ref{eq: mini_sgd_logistic_regression} }in the training phase where all samples exist (rather than in the model update phase), which implies that a larger difference between $\linearincrew^{(t)}$ and $\logistlinearincrew^{(t)}$ should be expected. Surprisingly, we can prove that the distance between $\linearincrew^{(t)}$ and $\logistlinearincrew^{(t)}$ is still small enough.
\begin{theorem}\label{theorem: aproximation_bound_change}
$||E(\linearincrew^{(t)}- \logistlinearincrew^{(t)})||_2$ is bounded by $O(\frac{\Delta n}{n}\Delta x) + O((\frac{\Delta n}{n})^2) + O((\Delta x)^2)$, where $\Delta n$ is the number of the removed samples and $\Delta x$ is defined in Theorem \ref{theorem: aproximation_bound}.
\end{theorem}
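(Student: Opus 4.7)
My plan is to establish a contractive one-step recursion for $E\|\linearincrew^{(t+1)} - \logistlinearincrew^{(t+1)}\|_2^2$, analogous to the one underlying Theorem~\ref{theorem: aproximation_bound} but with a refined noise term that separates the linearization error from the sample-deletion perturbation. Subtracting Equation~\ref{eq: mini-SGD_updated_model_parameters_expected} from Equation~\ref{eq: mini-SGD_logistic_regression_para_update} and exploiting the $(1-\eta_t\lambda)$ contraction from strong convexity, the analysis reduces to bounding, for each retained sample $i\in\mathscr{B}^{(t)}\setminus\mathcal{R}$, the per-sample gradient gap $y_i\textbf{x}_i[s_{\textbf{w}^{(t)}}(v_i) - f(w_i)]$, where $u_i = y_i\textbf{w}^{(t)T}\textbf{x}_i$, $v_i = y_i\linearincrew^{(t)T}\textbf{x}_i$, $w_i = y_i\logistlinearincrew^{(t)T}\textbf{x}_i$, and $s_{\textbf{w}^{(t)}}(x) = a^{i,(t)}x + b^{i,(t)}$ is the piecewise linear interpolant whose coefficients were frozen at $u_i$ during the original training phase.

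The main work is a double Taylor/interpolation decomposition
\begin{align*}
s_{\textbf{w}^{(t)}}(v_i) - f(w_i) &= \bigl[s_{\textbf{w}^{(t)}}(u_i) - f(u_i)\bigr] + \bigl(a^{i,(t)} - f'(u_i)\bigr)(v_i - u_i)\\
&\quad + f'(u_i)(v_i - w_i) - \tfrac{1}{2}f''(\xi)(w_i - u_i)^2,
\end{align*}
which splits the error into four qualitatively distinct pieces. The first is the linearization error at the training point, bounded by $O((\Delta x)^2)$ from the same piecewise-linear-interpolation estimate already used in Theorem~\ref{theorem: aproximation_bound}. The second is a slope-mismatch term whose first factor $|a^{i,(t)} - f'(u_i)| = O(\Delta x)$ comes from the derivative-approximation rate of piecewise linear interpolation, and whose second factor $v_i - u_i = y_i\textbf{x}_i^T(\linearincrew^{(t)} - \textbf{w}^{(t)})$ has expected norm $O(\Delta n/n)$ via an auxiliary perturbation lemma that deleting $\Delta n$ out of $n$ samples from a $\lambda$-strongly-convex objective shifts the iterate by at most $O(\Delta n/n)$ in expectation. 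The third is linear in $\linearincrew^{(t)} - \logistlinearincrew^{(t)}$ and folds into the recursion via the strong-convexity cross term. The fourth is quadratic in $w_i - u_i = O(\Delta n/n)$ and hence $O((\Delta n/n)^2)$, using that $|f''|$ is uniformly bounded.

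Collecting the four contributions and applying the squared-norm expansion used in the sketch of Theorem~\ref{theorem: aproximation_bound} yields
\begin{align*}
E\|\linearincrew^{(t+1)} &- \logistlinearincrew^{(t+1)}\|_2^2 \leq (1-2\eta_t\lambda)\,E\|\linearincrew^{(t)} - \logistlinearincrew^{(t)}\|_2^2\\
&+ O\bigl((\Delta x)^2\bigr) + O\bigl(\tfrac{\Delta n}{n}\Delta x\bigr) + O\bigl((\tfrac{\Delta n}{n})^2\bigr).
\end{align*}
Unrolling this recursion under the constant-step-size hypothesis of Lemma~\ref{lemma: convergence_conditions}, the initial-condition term decays geometrically while the steady-state residual inherits the three noise rates; a final Jensen's inequality step converts $E\|\cdot\|_2^2$ into $\|E(\cdot)\|_2$ and produces the claimed bound.

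The main obstacle is handling the slope-mismatch term when $v_i$ lies in a sub-interval different from the one used to set $a^{i,(t)}$: the clean $O(\Delta x)$ slope-error estimate is only guaranteed locally, so a counting argument is needed to show that the expected fraction of ``boundary-crossing'' samples is small enough (on the order of $\min(1, (\Delta n/n)/\Delta x)$) that their deviant contribution does not inflate the final rate. A secondary subtlety is establishing the auxiliary perturbation lemma $E\|\linearincrew^{(t)} - \textbf{w}^{(t)}\|_2 = O(\Delta n/n)$ for the linearized iteration itself, which should follow from repeating the squared-norm contraction argument of Theorem~\ref{theorem: aproximation_bound} with the $\Delta n$ missing samples playing the role of a bounded noise perturbation of the gradient.
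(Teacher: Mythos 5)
Your proposal follows essentially the same route as the paper's proof: a one-step $(1-\eta\lambda)$ contraction, a decomposition of the per-sample gradient gap around the frozen training-phase point $u_i=y_i\textbf{w}^{(t)T}\textbf{x}_i$ using the interpolation rates $|s(u_i)-f(u_i)|=O((\Delta x)^2)$ and $|a^{i,(t)}-f'(u_i)|=O(\Delta x)$, an auxiliary $O(\Delta n/n)$ perturbation bound on the deleted-sample iterate (the paper's Theorem~\ref{theorem: leave_one_out_bound}), and an unrolling of the recursion -- the paper merely works with first-order norms and the integral mean-value theorem with Lipschitz $f'$ where you use squared norms and a second-order Taylor remainder. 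Your stated ``main obstacle'' about boundary-crossing sub-intervals is actually a non-issue: the algorithm applies the frozen affine map $a^{i,(t)}x+b^{i,(t)}$ globally, so $s(v_i)=s(u_i)+a^{i,(t)}(v_i-u_i)$ holds exactly and your decomposition only ever invokes the interpolation estimates at $u_i$ itself.
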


\eat{The proof of Theorem \ref{theorem: aproximation_bound} and Theorem \ref{theorem: aproximation_bound_change} are provided in Appendix \ref{sec: aproximation_bound_proof} and \ref{sec: aproximation_bound_change_proof} respectively.}

\eat{\proofsketch

$E(||\linearincrew^{(t)}- \logistlinearincrew^{(t)}||_2)$ can be expressed as below:
\begin{align}
    \begin{split}
        &E(||\linearincrew^{(t)}- \logistlinearincrew^{(t)}||_2)\\
        &\leq E(||\linearincrew^{(t)}-  \linearw^{(t)}||_2) +  E(||\linearw^{(t)} - \textbf{w}^{(t)}||_2)\\
        & + E(||\textbf{w}^{(t)} - \logistlinearincrew^{(t)}||_2)
    \end{split}
\end{align}

According to Theorem \ref{theorem: aproximation_bound}, $E(||\linearw^{(t)} - \textbf{w}^{(t)}||_2) \leq O((\Delta x))$. We can also prove that $E(||\linearincrew^{(t)}-  \linearw^{(t)}||_2)$ and $E(||\textbf{w}^{(t)} - \logistlinearincrew^{(t)}||_2)$ are bounded by $O(\eta)$. So $E(||\linearincrew^{(t)}- \logistlinearincrew^{(t)}||_2)$ should be bounded by $O((\Delta x)) + O(\eta_t)$ (details omitted).}

\section{Implementation}\label{sec: implementation}
We now discuss how the ideas in the previous section are implemented in \pro\ and \proopt, for both linear and logistic
regression.
Along the way, time and space complexity analyses,
as well as theorems that justify our approximation strategies used in \pro\ and \proopt, 
are provided.

\subsection{\pro: Linear regression}
\eat{Since the update rules of the linear regression model use linear operations, there is no need to use interpolation. }
In Equation \ref{eq: mini_sgd_linear_regression_provenance_update}, by setting all the $p_i$ as $\oneprov$, the expression $\sum_{\substack{ i \in \mathscr{B}^{(t)}, i \not \in \mathcal{R}}} p_i^2 * \eat{\cdot} \textbf{x}_i\textbf{x}_i^T$ becomes $\sum_{\substack{ i \in \mathscr{B}^{(t)}}} \textbf{x}_i\textbf{x}_i^T - \sum_{\substack{ i \in \mathscr{B}^{(t)}, i  \in \mathcal{R}}}$\\ $\textbf{x}_i\textbf{x}_i^T$, in which the first term, $\sum_{\substack{ i \in \mathscr{B}^{(t)}}} \textbf{x}_i\textbf{x}_i^T$, can be regarded as provenance information and thus cached as an intermediate result for each mini-batch during the training phase for the original model parameter $\textbf{w}^{(t)}$. Thus we only need to compute the latter term during the incremental update phase. $\sum_{\substack{ i \in \mathscr{B}^{(t)}, i \not \in \mathcal{R}}}\eat{\cdot} \textbf{x}_iy_i$ can be computed in a similar way. In the end, Equation \ref{eq: mini_sgd_linear_regression_provenance_update} is then rewritten as follows for the purpose of incremental updates:
\eat{n Equation \ref{eq: mini_sgd_linear_regression} can be rewritten in the form of matrix operations as shown below.  Note that these operations are linear, and there is no need to use linear interpolation.
\begin{align}\label{eq: gbm_linear_regression_matrix_form}
    \begin{split}
        \textbf{w}^{(t+1)}& \leftarrow ((1-\eta_t\lambda)\textbf{I} - \frac{2\eta_t}{B}\textbf{X}^T_{\mathscr{B}^{(t)}}\textbf{X}_{\mathscr{B}^{(t)}})\textbf{w}^{(t)}\\ &+\frac{2\eta_t}{B}\textbf{X}^T_{\mathscr{B}^{(t)}}\textbf{Y}_{\mathscr{B}^{(t)}}
    \end{split}
\end{align}
\noindent
Here, $X_{\subids}$ and $Y_{\subids}$ represent the sub-matrices composed of entries from the mini-batch $\subids$.   
During the \gbm\ over the original dirty dataset, the matrix multiplication results $\textbf{X}^T_{\subids}\textbf{X}_{\subids}$ and $\textbf{X}^T_{\subids}\textbf{Y}_{\subids}$ are recorded as ``provenance information'' for each mini-batch. Now suppose that $\Delta \textbf{X}_{\subids}$ and $\Delta \textbf{Y}_{\subids}$ represent the samples to be removed from each mini-batch during the data cleaning process.  Then we can use the formula below to update the model parameters, which modifies Equation \ref{eq: gbm_linear_regression_matrix_form} by subtracting $\Delta \textbf{X}^T_{\subids}\Delta \textbf{X}_{\subids}$ and $\Delta \textbf{X}^T_{\subids}\Delta \textbf{Y}_{\subids}$ from $\textbf{X}^T_{\subids}\textbf{X}_{\subids}$ and $\textbf{X}^T_{\subids}\textbf{Y}_{\subids}$ respectively:}
\vspace{-1mm}
\begin{align}\label{eq: gbm_linear_regression_incremental_updates}
    \begin{split}
         &\increw^{(t+1)} \leftarrow [(1-\eta_t\lambda)\textbf{I} -\frac{2\eta_t}{\increB^{(t)}} \sum_{\substack{ i \in \mathscr{B}^{(t)}}} \textbf{x}_i\textbf{x}_i^T\\
         &\hspace{-4mm} - \sum_{\substack{ i \in \mathscr{B}^{(t)} \\, i  \in \mathcal{R}}} \textbf{x}_i\textbf{x}_i^T] \increw^{(t)} + \frac{2\eta_t}{\increB^{(t)}} (\sum_{\substack{ i \in \mathscr{B}^{(t)}}}\eat{\cdot} \textbf{x}_iy_i - \sum_{\substack{ i \in \mathscr{B}^{(t)}, i  \in \mathcal{R}}}\eat{\cdot} \textbf{x}_iy_i)
    \end{split}
\end{align}
\vspace{-1mm}

\eat{To reduce the overhead of computing $\increw^{(t)}$ using Equation \ref{eq: gbm_linear_regression_incremental_updates}, }
\noindent
Note that $\sum_{\substack{ i \in \mathscr{B}^{(t)}, i  \in \mathcal{R}}} \textbf{x}_i\textbf{x}_i^T$ can be rewritten into matrix form, i.e. $\Delta \textbf{X}^T_{\mathscr{B}^{(t)}} \Delta \textbf{X}_{\mathscr{B}^{(t)}}$ where $\Delta \textbf{X}_{\mathscr{B}^{(t)}}$ is a matrix consisting of the removed samples in the mini-batch $\mathscr{B}^{(t)}$. 
The associativity property of matrix multiplication can also be used to avoid expensive matrix-matrix multiplications (i.e. $\Delta \textbf{X}^T_{\mathscr{B}^{(t)}}\Delta \textbf{X}_{\mathscr{B}^{(t)}}$) by conducting more efficient matrix-vector multiplications instead (e.g. computing $\Delta \textbf{X}_{\mathscr{B}^{(t)}}\increw^{(t+1)}$ first and then multiplying the result by $\Delta \textbf{X}^T_{\mathscr{B}^{(t)}}$). \eat{i.e.:

\begin{align}\label{eq: gbm_linear_regression_incremental_updates2}
    \begin{split}
        &\increw^{(t+1)} \leftarrow ((1-\eta_t\lambda)\textbf{I} - \frac{2\eta_t}{\increB^{(t)}}(\textbf{X}^T_{\subids}\textbf{X}_{\subids})\increw^{(t)}\\
        &-\Delta \textbf{X}^T_{\subids}\Delta \textbf{X}_{\subids}\increw^{(t)})\\ &+\frac{2\eta_t}{\increB^{(t)}}(\textbf{X}^T_{\subids}\textbf{Y}_{\subids}-\Delta \textbf{X}^T_{\subids}\Delta \textbf{Y}_{\subids})\\
    \end{split}
\end{align}}

Suppose that $\Delta B$ samples are removed from each mini-batch on average, then the time complexity of updating the model parameters in each iteration using Equation \ref{eq: gbm_linear_regression_incremental_updates} will be $O(\Delta B m + m^2)$ 
(recall that the dimension of $\textbf{X}$ is $n \times m$).
In contrast, the time complexity for retraining from scratch (i.e. not caching $\sum_{\substack{ i \in \mathscr{B}^{(t)}}} \textbf{x}_i\textbf{x}_i^T$ in Equation \ref{eq: gbm_linear_regression_incremental_updates}) will be $O((B - \Delta B) m)$. Of course, performance predictions based on asymptotic complexity give only
very rough guidance, and we conduct experiments for realistic assessments.
Still, the bounds above suggest, for example, that for small
$\Delta B$ and $m\ll B$ incremental deletions with \pro\ work better than retraining
(and our experiments verify this, see Section~\ref{sec: experiment}).

\reminder{It is risky to say anything because it is not clear that
asymptotic complexity applies here. Basically the comparison is between $m+\Delta B$ and $B-\Delta B$, ignoring constants.
But can we ignore constants when $m,B,\Delta B$ are smallish numbers? Ultimately it's the experiments that provide valid comparisons. Alternatively we can do a finer analysis of the multiplications and additions involved that would be better than Big-Oh.}

\eat{\textbf{Space complexity analysis} For efficient computations in \pro, we need to cache $\textbf{X}^T_{\subids}\textbf{X}_{\subids}$ and $\textbf{X}^T_{\subids}\textbf{Y}_{\subids}$ for each step, which have dimensions $m \times m$ and $m \times 1$ respectively. So the total space complexity will be $O(\tau m^2)$, which is acceptable when the feature space is small.

\scream{Can space be reused in each epoch? So we can avoid multiplying by $\tau$.}}





Typically, however, a smaller mini-batch size $B$ is used. To deal with the case in which $m>B$,
we notice that the rank of the intermediate result, $\sum_{\substack{ i \in \mathscr{B}^{(t)}}} \textbf{x}_i\textbf{x}_i^T$ should be no more than $B$, thus smaller than $m$ when $B$ is smaller than $m$.
This motivates us to reduce the dimension of the intermediate results using SVD, i.e. $\sum_{\substack{ i \in \mathscr{B}^{(t)}}} \textbf{x}_i\textbf{x}_i^T = \textbf{U}^{(t)}\textbf{S}^{(t)}\textbf{V}^{T,(t)}$, where $\textbf{S}^{(t)}$ is a diagonal matrix whose diagonal elements represent the singular values, while $\textbf{U}^{(t)}$ and $\textbf{V}^{(t)}$ are the left and right singular vectors. Suppose after SVD, we only keep the $r$ largest singular values and the corresponding singular vectors where $r \ll B$, then $\sum_{\substack{ i \in \mathscr{B}^{(t)}}} \textbf{x}_i\textbf{x}_i^T$ is approximated by $\textbf{U}^{(t)}_{1..r}\textbf{S}^{(t)}_{1..r}\textbf{V}^{(t)T}_{1..r}$ ($\textbf{U}^{(t)}_{1..r},\textbf{V}^{(t)}_{1..r}$ represents the submatrix composed of the first $r$ columns and $\textbf{S}^{(t)}_{1..r}$ is a diagonal matrix composed of the first $r$ eigenvalues in $\textbf{S}^{(t)}$).  Thus Equation \ref{eq: gbm_linear_regression_incremental_updates} is rewritten as:


\begin{align}\label{eq: gbm_linear_regression_incremental_updates_svd}
    \begin{split}
        &\increw^{(t+1)} \leftarrow [(1-\eta_t\lambda)\textbf{I} - \frac{2\eta_t}{\increB^{(t)}}(\textbf{U}^{(t)}_{1..r}\textbf{S}^{(t)}_{1..r}\textbf{V}^{(t)T}_{1..r}\\
        &\hspace{-2mm}-\Delta \textbf{X}^T_{\subids}\Delta \textbf{X}_{\subids})]\increw^{(t)}+\frac{2\eta_t}{\increB^{(t)}}(\sum_{\substack{ i \in \mathscr{B}^{(t)}}}\textbf{x}_i y_i - \sum_{\substack{ i \in \mathscr{B}^{(t)}\\, i  \in \mathcal{R}}}\eat{\cdot} \textbf{x}_i y_i)
    \end{split}
\end{align}

Here we can cache the results of $\textbf{U}^{(t)}_{1..r}\textbf{S}^{(t)}_{1..r}$ (denoted by $\textbf{P}_{1..r}^{(t)}$) and $\textbf{V}^{(t)}_{1..r}$ for efficient updates, both of which have dimensions $m \times r$. 

\textbf{Time complexity.} The time complexity to update the model parameters using this approach is $O(rm + \Delta B m)$ for each iteration since the computation time is dominated by the matrix-vector computation, e.g. the multiplications of $\textbf{V}_{1..r}^{T, (t)}$ and $\increw^{(t)}$.  This is more efficient than retraining from scratch, which has time complexity $O((B-\Delta B) m)$. So the total complexity for \pro\ is $O(\tau rm + \tau \Delta B)$, where $\tau$ is the number of iterations in the training phase.

\textbf{Space complexity.} Using this approximation, at each iteration we only need to cache $\textbf{P}_{1..r}^{(t)}$ and $\textbf{V}_{1..r}^{(t)}$, which require space $O(rm)$. So the total space complexity will be $O(\tau rm)$ for $\tau$ iterations.


\begin{theorem}\textbf{Approximation ratio}\label{theorem: aproximation_bound_svd}
Under the convergence conditions for $\textbf{w}^{(t)}$, $||\textbf{w}^{(t)}||$ should be bounded by some constant $C$. Suppose $\frac{||\textbf{U}^{(t)}_{1..r}\textbf{S}^{(t)}_{1..r}\textbf{V}^{T,(t)}_{1..r}||_2}{||\textbf{U}^{(t)}\textbf{S}^{(t)}\textbf{V}^{T,(t)}||_2} \geq 1-\epsilon$ where $\epsilon$ is a small value, then the change of model parameters caused by the approximation will be bounded by $O(\epsilon)$.
\end{theorem}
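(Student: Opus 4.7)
The plan is to bound the per-iteration error introduced by replacing the cached matrix $\textbf{A}^{(t)} := \sum_{i \in \mathscr{B}^{(t)}} \textbf{x}_i\textbf{x}_i^T = \textbf{U}^{(t)}\textbf{S}^{(t)}\textbf{V}^{T,(t)}$ by its rank-$r$ truncation $\tilde{\textbf{A}}^{(t)} := \textbf{U}^{(t)}_{1..r}\textbf{S}^{(t)}_{1..r}\textbf{V}^{T,(t)}_{1..r}$, and then propagate this error through the recursion using the same contraction argument that underlies Lemma~\ref{lemma: convergence_conditions}. Let $\increw^{(t)}$ denote the parameters produced by Equation~\ref{eq: gbm_linear_regression_incremental_updates} (exact cached version) and $\widetilde{\increw}^{(t)}$ the parameters produced by Equation~\ref{eq: gbm_linear_regression_incremental_updates_svd} (truncated SVD version), both initialized from the same $\textbf{w}^{(0)}$.

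First, I would subtract the two update rules and denote $\delta^{(t)} := \increw^{(t)} - \widetilde{\increw}^{(t)}$ and $\textbf{E}^{(t)} := \textbf{A}^{(t)} - \tilde{\textbf{A}}^{(t)}$. A direct calculation yields a recursion of the form
\begin{align*}
\delta^{(t+1)} = \textbf{M}^{(t)} \delta^{(t)} \;-\; \tfrac{2\eta_t}{\increB^{(t)}}\,\textbf{E}^{(t)}\,\widetilde{\increw}^{(t)},
\end{align*}
where $\textbf{M}^{(t)} = (1-\eta_t\lambda)\textbf{I} - \tfrac{2\eta_t}{\increB^{(t)}}(\textbf{A}^{(t)} - \Delta\textbf{X}^T_{\subids}\Delta\textbf{X}_{\subids})$. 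The hypothesis on the singular values gives the key spectral bound
\begin{align*}
\|\textbf{E}^{(t)}\|_2 \;\leq\; \epsilon\,\|\textbf{A}^{(t)}\|_2,
\end{align*}
and the boundedness of the training samples makes $\|\textbf{A}^{(t)}\|_2$ uniformly bounded by some constant $C_A$.

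Next, I would invoke the convergence conditions of Lemma~\ref{lemma: convergence_conditions} to show that $\textbf{M}^{(t)}$ is a contraction, i.e. $\|\textbf{M}^{(t)}\|_2 \leq 1 - c\eta$ for some $c>0$ independent of $t$. This is the same eigenvalue argument used implicitly in Section~\ref{ssection: convergence}: under the strong-convexity/learning-rate condition, the Jacobian of the linear-regression update contracts the iterates toward $\textbf{w}^*$, and removing a subset only drops positive semidefinite rank-one terms, preserving the contraction. Combined with the given bound $\|\widetilde{\increw}^{(t)}\|_2 \leq C$ (which I would justify separately by showing inductively that the approximate iterates inherit the same bound as the exact ones up to an $O(\epsilon)$ slack), the perturbation term satisfies
\begin{align*}
\Bigl\|\tfrac{2\eta_t}{\increB^{(t)}}\textbf{E}^{(t)}\widetilde{\increw}^{(t)}\Bigr\|_2 \;\leq\; \tfrac{2\eta_t}{\increB^{(t)}}\,\epsilon\,C_A\,C \;=\; O(\epsilon).
\end{align*}

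Finally, I would take norms in the recursion and unroll, obtaining $\|\delta^{(t+1)}\|_2 \leq (1-c\eta)\|\delta^{(t)}\|_2 + O(\epsilon)$, and a geometric-series argument (with $\delta^{(0)} = \textbf{0}$) gives $\|\delta^{(t)}\|_2 \leq \tfrac{O(\epsilon)}{c\eta} = O(\epsilon)$ uniformly in $t$, which is exactly the claimed bound. The main obstacle I anticipate is carefully justifying the uniform boundedness of $\|\widetilde{\increw}^{(t)}\|_2$ by $C$: the statement only asserts the bound for the true iterate $\textbf{w}^{(t)}$, so I would need a short self-bootstrapping argument showing that the contraction of $\textbf{M}^{(t)}$ together with an $O(\epsilon)$ perturbation keeps the approximate iterates in a ball of comparable size, which avoids a circular use of the conclusion.
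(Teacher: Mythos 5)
Your proposal is correct and follows essentially the same route as the paper's proof: subtract the exact and truncated update rules, bound the per-iteration perturbation $\frac{2\eta_t}{\increB^{(t)}}\|\textbf{E}^{(t)}\|_2\,\|\cdot\|_2$ by $O(\epsilon)$ via the singular-value hypothesis and the boundedness of the iterates, show the iteration matrix contracts with factor $1-\eta\lambda$ using strong convexity and the learning-rate condition, and unroll the resulting geometric series. The only (immaterial) difference is that the paper groups the subtraction so the error matrix multiplies the \emph{exact} iterate $\increw^{(t)}$, whose boundedness is already available, which lets it skip the bootstrapping argument you correctly identify as needed for your grouping against $\widetilde{\increw}^{(t)}$.
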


This shows that with proper choice of $r$ in the SVD approximation\eat{ and small portions of removed samples}, the updated model parameters computed by \pro\ or \proopt\ should be still very close to the expected result. So in our implementations, $r$ is chosen based on $\epsilon$ (say 0.01) such that the inequality in Theorem \ref{theorem: aproximation_bound_svd} is satisfied.

\subsection{\proopt: Optimizations for linear regression}\label{sec: opt_linear_regression}

When the feature space is small, additional optimizations can be used for linear regression.
\eat{However, in practice the gradient-based method (\gbm) uses repetitive for-loop iterations and thus the term $O(\tau m^2)$ will become the dominant factor both for \pro\ and for retraining from scratch.  This means that \pro\ will not end up with a significant performance advantage.  }
Note that according to \cite{karimi2016linear}, the model parameters derived by both \sgd\ and \minisgd\ will end up with statistically the same results as \gd.  This means that the update rule in Equation \ref{eq: mini_sgd_linear_regression} and Equation \ref{eq: gbm_linear_regression_incremental_updates} could be approximated by its alternative using \gd, i.e.:
\vspace{-1.6mm}
\begin{align}\label{eq: gd_linear_regression_matrix_form}
    \begin{split}
        \textbf{w}^{(t+1)}& \leftarrow ((1-\eta_t\lambda)\textbf{I} - \frac{2\eta_t}{n}\textbf{X}^T\textbf{X})\textbf{w}^{(t)}+\frac{2\eta_t}{n}\textbf{X}^T\textbf{Y}
        \end{split}
\end{align}
\vspace{-4.5mm}
\begin{align}
\begin{split}
        \increw^{(t+1)}& \leftarrow ((1-\eta_t\lambda)\textbf{I} - \frac{2\eta_t}{n-\Delta n}(\textbf{X}^T\textbf{X}-\Delta\textbf{X}^T\Delta \textbf{X}))\increw^{(t)}\\
        &+\frac{2\eta_t}{n - \Delta n}(\textbf{X}^T\textbf{Y} - \Delta\textbf{X}^T\Delta\textbf{Y})
    \end{split}
\end{align}


\noindent
in which $(\Delta \textbf{X}, \Delta \textbf{Y})$ represent the removed samples while $\Delta n$ represents the number of those samples. Let $\textbf{M}$ and $\textbf{N}$ denote $\textbf{X}^T\textbf{X}$ and $\textbf{X}^T\textbf{Y}$ respectively. Then eigenvalue decomposition can be applied over $\textbf{M}$, i.e. $\textbf{M} = \textbf{Q}\; diag\; (\{c_{i}\}_{i=1}^n)\; \textbf{Q}^{-1}$ (where $c_{i}$ represents the eigenvalues of $\textbf{M}$).  This is then plugged into Equation \ref{eq: gd_linear_regression_matrix_form} and computed recursively, which results in the following formula:
{
\begin{align}\label{eq: gd_linear_regression_matrix_form_eigen}
    \begin{split}
        &\hspace{-2mm} \textbf{w}^{(t+1)} 
        =\textbf{Q} \; diag  \;(\{\Pi_{j=1}^t (1-\eta_j\lambda - \frac{2\eta_j}{n}c_i)\}_{i=1}^n) \; \textbf{Q}^{-1}\textbf{w}^{(0)}\\
        &+ \textbf{Q} diag(\sum_{l=1}^{t-1}\eta_l\{\Pi_{j=l+1}^t (1-\eta_j\lambda - \frac{2\eta_j}{n}c_i)\}_{i=1}^n) \; \textbf{Q}^{-1}\frac{2\textbf{N}}{n}\\
    \end{split}
\end{align}

}%

This indicates that once the eigenvalues and eigenvectors of each $\textbf{M}$ are given, we can derive $\textbf{w}^{(t)}$ by simply computing the product, $\Pi_{j=1}^t (1-\eta_j\lambda - \frac{2\eta_j}{n}c_i)$, and the sum of the product, $\sum_{l=1}^{t-1}\eta_l\Pi_{j=l+1}^t (1-\eta_j\lambda - \frac{2\eta_j}{n}c_i)$, on diagonal entries. The overhead of this is only $O(\tau m)$ (recall that $\tau$ represents the total iteration number), and thus we avoid the repetitive matrix multiplication operations through the for-loops. 
Also, observe that $\textbf{M}' = \textbf{X}^T\textbf{X} - \Delta \textbf{X}^T\Delta \textbf{X}$ can be regarded as a small change over $\textbf{M}$ when $\Delta n$ is small.  Thus we can use the results on incremental updates over eigenvalues in ~\cite{ning2010incremental}, i.e. when the difference between the eigenvectors of $\textbf{M}'$ and that of $\textbf{M}$ is negligible then the eigenvalues of $\textbf{M}'$ are estimated as:
\begin{align}\label{eq: eigen_value_incremental}
    \begin{split}
        \textbf{Q}^{-1}\textbf{M}'\textbf{Q} = diag(\{ c_i'\}_{i=1}^n)
    \end{split}
\end{align}
\noindent
Here, $c_i'$ represents the approximated $i^{th}$ eigenvalue of $\textbf{M}'$. 
\eat{So the eigenvalue decomposition of $\textbf{M}'$ can be approximated by:
\begin{equation}
    \textbf{M}' = \textbf{Q}\; diag(\{c_i + \Delta c_i\}_{i=1}^n)\textbf{Q}^{-1}
\end{equation}}
\noindent
It indicates that we can apply eigenvalue decomposition over $\textbf{M}$ {\em offline} before the model incremental update phase and use Equation \ref{eq: eigen_value_incremental} to get the updated eigenvalues {\em online}.  

\textbf{Time complexity.}
The time complexity for updating the model parameters is dominated by the computation of $c_i'$\eat{, i.e. $\textbf{Q}^{-1}\Delta \textbf{X}^T \textbf{X} \textbf{Q}$}, which is followed by the computation over each $c_i'$ as Equation \ref{eq: gd_linear_regression_matrix_form_eigen} does.
\eat{What is the "power computation"? Yinjun: have explained it in another way}
These have time complexities $O(\min\{\Delta n, m\} m^2)$ and $O(\tau m)$, respectively. So the total time complexity is $O(\min\{\Delta n, m\} m^2) + O(\tau m)$, which can be
more efficient than the closed-form solution
(see experiments in 
Section \ref{sec: experiment}).

\textbf{Space complexity.}
The method avoids caching the provenance information at each iteration, and only requires caching $Q$, $Q^{-1}$ and all the eigenvalues $c_i$, which takes space $O(m^2)$

\begin{theorem}(\textbf{Approximation ratio})\label{theorem: aproximation_bound_eigen}
The approximation of \proopt\ over the model parameters is bounded by $O(||\Delta \textbf{X}^T \Delta \textbf{X}||)$
\end{theorem}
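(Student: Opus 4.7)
The plan is to decompose the \proopt\ approximation error into a single perturbation term. Let $\textbf{M}' = \textbf{M} - \Delta\textbf{X}^T\Delta\textbf{X}$ and define $\textbf{E} := \textbf{Q}^{-1}\textbf{M}'\textbf{Q} - diag(\{c_i'\}_{i=1}^n)$, i.e.\ the off-diagonal part of $\textbf{Q}^{-1}\textbf{M}'\textbf{Q}$ that \proopt\ discards when it declares the matrix to be diagonal in Equation~\ref{eq: eigen_value_incremental}. Since $\textbf{M}$ is symmetric positive semidefinite, $\textbf{Q}$ is orthogonal, so $\|\textbf{Q}\textbf{E}\textbf{Q}^{-1}\| = \|\textbf{E}\|$. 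Because $\textbf{Q}^{-1}\textbf{M}\textbf{Q}$ is already exactly diagonal, all of the off-diagonal mass in $\textbf{Q}^{-1}\textbf{M}'\textbf{Q}$ is inherited from $-\textbf{Q}^{-1}\Delta\textbf{X}^T\Delta\textbf{X}\textbf{Q}$, whose spectral norm equals $\|\Delta\textbf{X}^T\Delta\textbf{X}\|$. Projecting onto the off-diagonal part changes the norm by at most a constant factor, so $\|\textbf{E}\| = O(\|\Delta\textbf{X}^T\Delta\textbf{X}\|)$.

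Next I would translate this into an iteration-matrix perturbation bound. Let $\textbf{A} = (1-\eta\lambda)\textbf{I} - \tfrac{2\eta}{n-\Delta n}\textbf{M}'$ be the true one-step update matrix obtained by applying Equation~\ref{eq: gd_linear_regression_matrix_form} to the cleaned dataset, and let $\tilde{\textbf{A}} = \textbf{Q}\,diag(1-\eta\lambda - \tfrac{2\eta}{n-\Delta n}c_i')\textbf{Q}^{-1}$ be the matrix actually used by \proopt\ in Equation~\ref{eq: gd_linear_regression_matrix_form_eigen}. Then $\textbf{A} - \tilde{\textbf{A}} = -\tfrac{2\eta}{n-\Delta n}\textbf{Q}\textbf{E}\textbf{Q}^{-1}$, hence $\|\textbf{A}-\tilde{\textbf{A}}\| = O(\|\Delta\textbf{X}^T\Delta\textbf{X}\|)$. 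A standard telescoping argument gives
\begin{equation*}
\|\textbf{A}^t - \tilde{\textbf{A}}^t\| \;\leq\; \sum_{j=0}^{t-1}\|\textbf{A}\|^{j}\,\|\tilde{\textbf{A}}\|^{t-1-j}\,\|\textbf{A}-\tilde{\textbf{A}}\|,
\end{equation*}
and an analogous bound controls the difference in the $\textbf{N}'$ term of Equation~\ref{eq: gd_linear_regression_matrix_form_eigen}. Under the convergence conditions of Lemma~\ref{lemma: convergence_conditions}, $\lambda$-strong convexity forces $\|\textbf{A}\|, \|\tilde{\textbf{A}}\| \leq 1 - \eta\lambda < 1$, so the geometric series sums to $O(1/(\eta\lambda))$ and the overall approximation error on $\tilde{\textbf{w}}^{(t)}$ is $O(\|\Delta\textbf{X}^T\Delta\textbf{X}\|)$.

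The hard part will be pinning down the ``negligible eigenvector change'' assumption borrowed from \cite{ning2010incremental}: strictly speaking, \proopt\ not only substitutes approximate eigenvalues but also uses the wrong basis $\textbf{Q}$ in place of the true eigenbasis $\textbf{Q}'$ of $\textbf{M}'$. I would sidestep this by formulating the proof entirely in the fixed basis $\textbf{Q}$, treating \proopt\ as exact GD with a perturbed matrix $\textbf{Q}\,diag(c_i')\textbf{Q}^{-1}$, so that both sources of error collapse into $\|\textbf{E}\|$ as above; a Davis--Kahan-style argument could alternatively be used to show that the eigenvector perturbation is itself $O(\|\Delta\textbf{X}^T\Delta\textbf{X}\|)$ provided the eigen-gaps of $\textbf{M}$ are nonvanishing. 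A secondary subtlety is controlling the amplification by $\textbf{w}^{(0)}$ and $\textbf{N}'$; these are handled by the boundedness of $\|\textbf{w}^{(t)}\|$ noted in Theorem~\ref{theorem: aproximation_bound_svd}.
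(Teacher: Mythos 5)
Your proposal is correct and follows essentially the same route as the paper's proof: both compare the exact update recursion on the cleaned data with the perturbed recursion that uses $\textbf{Q}\,diag(\{c_i'\})\textbf{Q}^{-1}$ in place of $\textbf{M}'$, bound the per-iteration perturbation by $O(\|\Delta\textbf{X}^T\Delta\textbf{X}\|)$, and sum the resulting geometric series using the contraction factor $1-\eta\lambda$ guaranteed by strong convexity and the step-size condition of Lemma~\ref{lemma: convergence_conditions}. The one point where you go beyond the paper is that you justify the key perturbation bound directly (the discarded off-diagonal mass of $\textbf{Q}^{-1}\textbf{M}'\textbf{Q}$ is inherited entirely from $-\textbf{Q}^{-1}\Delta\textbf{X}^T\Delta\textbf{X}\,\textbf{Q}$, so working in the fixed basis $\textbf{Q}$ collapses both the eigenvalue and eigenvector errors into a single term), whereas the paper simply cites \cite{ning2010incremental} for this estimate.
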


This shows that with small number of removed samples, the approximation ratio should be very small.


\subsection{\pro: Logistic regression}\label{sec: priu_logistic_regression}

As the first step of the implementation of \pro\ for logistic regression,\eat{ is more complicated than that for linear regression due to the non-linear operations.} non-linear operations are linearized using  piecewise linear interpolation. Then, based on the analysis in Section \ref{sec: model}, given the ids of the samples to be removed in dense datasets, $\mathcal{R}$, Equation \ref{eq: mini-SGD_logistic_regression_para_update} is rewritten as follows:

\begin{align}\label{eq: provenance_update_formular}
    \begin{split}
         \linearincrew^{(t+1)}& \leftarrow [(1-\eta_t\lambda)\textbf{I} + \frac{\eta_t}{\increB^{(t)}} (\textbf{C}^{(t)}-\Delta \textbf{C}^{(t)})]\linearincrew^{(t)} \\
         &+\frac{\eta_t}{\increB^{(t)}} (\textbf{D}^{(t)}-\Delta \textbf{D}^{(t)})\\
    \end{split}
\end{align}

where $\textbf{C}^{(t)}, \textbf{D}^{(t)}, \Delta \textbf{C}^{(t)}, \Delta \textbf{D}^{(t)}$ are:

{
\begin{align*}
        &\textbf{C}^{(t)}  = \sum_{i \in \mathscr{B}^{(t)}}a^{i, (t)}\textbf{x}_i\textbf{x}_i^T, \Delta \textbf{C}^{(t)} = \sum_{\substack{i  \in \mathcal{R}, i \in  \mathscr{B}^{(t)}}}a^{i, (t)}\textbf{x}_i\textbf{x}_i^T
    \end{align*}
    \begin{align*}
        &\textbf{D}^{(t)} = \sum_{i \in \mathscr{B}^{(t)}} b^{i, (t)}y_i\textbf{x}_i, \Delta\textbf{D}^{(t)}  = \sum_{\substack{i  \in \mathcal{R}, i \in  \mathscr{B}^{(t)}}} b^{i, (t)}y_i\textbf{x}_i
\end{align*}
}

Similar to linear regression, the intermediate results $\textbf{C}^{(t)}$ and $\textbf{D}^{(t)}$ are cached and the dimension of $\textbf{C}^{(t)}$ can be reduced by using SVD before the model update phase, which can happen offline. \eat{Notice that the dimension of $\textbf{C}^{(t)}$ is $m \times m$: To further improve performance in the model update phase and also reduce the memory footprint, we can apply SVD over $\textbf{C}^{(t)}$ to reduce its dimension, i.e.}Suppose after SVD, $\textbf{C}^{(t)} \approx \textbf{P}_{1..r}^{(t)}\textbf{V}_{1..r}^{T, (t)}$, in which $\textbf{P}_{1..r}^{(t)}$ and $\textbf{V}_{1..r}^{(t)}$ are two matrices with dimension $m \times r$.
In the end, Equation \ref{eq: provenance_update_formular} is modified as below for incremental model updates:
\vspace{-1mm}
\begin{align}\label{eq: provenance_update_formular_svd}
    \begin{split}
         \linearincrew^{(t+1)}& \leftarrow [(1-\eta_t\lambda)\textbf{I} + \frac{\eta_t}{\increB^{(t)}} (\textbf{P}_{1..r}^{(t)}\textbf{V}_{1..r}^{T, (t)}-\Delta \textbf{C}^{(t)})]\linearincrew^{(t)} \\
         &+\frac{\eta_t}{\increB^{(t)}} (\textbf{D}^{(t)}-\Delta \textbf{D}^{(t)})\\
    \end{split}
\end{align}

\textbf{Time complexity.} To apply Equation \ref{eq: provenance_update_formular_svd} in the model update phase, the computation of $\textbf{P}_{1..r}^{(t)}\textbf{V}_{1..r}^{T, (t)}\linearincrew^{(t)}$ and $\Delta \textbf{C}^{(t)}\linearincrew^{(t)}$ become the major overhead\eat{.  These consist of a series of matrix-vector multiplications, e.g. $\textbf{V}_{1..r}^{T, (t)}\linearincrew^{(t)}$, and}, which have time complexity $O(rm)$ and $O(\Delta B m)$, respectively. Suppose there are $\tau$ iterations in total, then the total time complexity is $O(\tau (rm + \Delta B m))$.  In comparison, the time complexity of retraining from scratch is $O(\tau ((B-\Delta B) m + C_{non} m))$, where $C_{non}$ represents the overhead of the non-linear operations. When $r \ll B$ and $\Delta B \ll B$, we can therefore expect  \pro\ to be more efficient than retraining from scratch.

\eat{The implementation therefore consists of two phases, an offline phase and an online phase. The former is used to capture and compute the provenance information of the \gbm\ process, while the latter is used to load the provenance information and derive the updated model parameters. In what follows, we will only discuss details of the implementations of \pro\ and \proopt\ for binary logistic regression and note that the implementation of multinomial logistic regression can be designed in a similar way.}

\eat{\textbf{Offline phase}
When \gbm\ is applied over the original training dataset in the training phase, the model parameters derived at every iteration, $\textbf{w}^{(t)}$, are recorded.  The computation of provenance information is delayed until the end of the \gbm, which minimizes the effect of capturing provenance during the training phase.


Piecewise linear interpolation is then applied over the non-linear operation in the update rules of binary logistic regression, i.e., $f(x) = 1-\frac{1}{1+e^{-x}}$, where $x = y_i\textbf{w}^{(t)T}\textbf{x}_i (i=1,2,\dots, n; t = 1,2,\dots, T)$ to obtain the linear coefficients $a^{i,(t)}$ and $b^{i,(t)}$ presented in Equation \ref{eq: mini_sgd_instantiation_approx}.  This is followed by pre-computing the following two terms from Equation \ref{eq: mini_sgd_instantiation_approx}:
\begin{align}
    \begin{split}
        \textbf{C}^{(t)} & = \sum_{i \in \mathscr{B}^{(t)}}a^{i, (t)}\textbf{x}_i\textbf{x}_i^T, \textbf{D}^{(t)}  = \sum_{i \in \mathscr{B}^{(t)}} b^{i, (t)}y_i\textbf{x}_i
    \end{split}
\end{align}

As we discussed for the solutions of linear regression, when $m$ is larger than $B$, the time overhead of the solution above will be dominated by the overhead of the matrix-vector multiplications between $\textbf{C}^{(t)}$ and $\linearincrew^{(t)}$, i.e. $O(m^2)$, thus failing to beat the method to retrain from the scratch, which can be even worse for multi-nomial logistic regression where the total number of model parameters is typically very large, equal to the multiplication of the feature number and the number of all possible classes. To tackle this issue, we still consider reducing the dimensionality of $\textbf{C}^{(t)}$ with SVD. So instead of caching $\textbf{C}^{(t)}$ in the offline phase, we determined to cache a pair of its lower-rank matrices (denoted by $\textbf{P}_{1..r}^{(t)}$ and $\textbf{V}_{1..r}^{(t)}$ such that $\textbf{P}_{1..r}^{(t)}\textbf{V}_{1..r}^{T, (t)} \approx \textbf{C}^{(t)}$)


At the end of each iteration, the linear coefficients $a^{i,(t)}$ and $b^{i,(t)}$ along with the intermediate results $\textbf{C}^{(t)}$ and $\textbf{D}^{(t)}$ are cached for use during the online phase.

\textbf{Online phase.}
At the beginning of this phase, the subset of training data to be removed (denoted by $(\Delta \textbf{X}, \Delta \textbf{Y})$) is specified. 
\eat{ and can be provided by the experts who points out which piece of training samples are errors in the context of data cleaning application or by the users who expect to see the influence of the training samples of interest}
Using this as input, the next step is to derive the update rules in the same form as Equation \ref{eq: mini_sgd_instantiation_approx} but without $(\Delta \textbf{X}, \Delta \textbf{Y})$, which is achieved by subtracting the terms from $\textbf{C}^{(t)}$ and $\textbf{D}^{(t)}$, i.e.:

\begin{align}
    \begin{split}
        \Delta \textbf{C}^{(t)} & 
        = [\textbf{x}_{i_1}, \textbf{x}_{i_2}, \dots]_{\substack{\textbf{x}_{i_j} \in \Delta \textbf{X},\\ i_j \in  \mathscr{B}^{(t)}}} \times  diag(a^{i_1, (t)}, a^{i_2, (t)}, \dots)_{i_j \in  \mathscr{B}^{(t)}} \\ 
        & \times [\textbf{x}_{i_1}, \textbf{x}_{i_2}, \dots]^T_{\substack{\textbf{x}_{i_j} \in \Delta \textbf{X},\\ i_j \in  \mathscr{B}^{(t)}}}\\ 
        \Delta\textbf{D}^{(t)} &
        = [\textbf{x}_{i_1}, \textbf{x}_{i_2}, \dots]_{\substack{\textbf{x}_{i_j} \in \Delta \textbf{X}, \\ i_j \in  \mathscr{B}^{(t)}}} \times diag(a^{i_1, (t)}, a^{i_2, (t)}, \dots)_{\substack{i_j \in  \mathscr{B}^{(t)}}}\\
        & \times [y_{i_1}, y_{i_2}, \dots]^T_{\substack{y_{i_j} \in \Delta \textbf{Y},\\ i_j \in  \mathscr{B}^{(t)}}}
    \end{split}
\end{align}

\eat{which, however, can be effectively computed by matrix operations for all $t$ simultaneously, i.e.:
\begin{align}\label{eq: implement_matrix_form}
    \begin{split}
        &\begin{bmatrix}\textbf{C}^{(t_1)'} & \textbf{C}^{(t_2)'} & \dots & \textbf{C}^{(t_s)'} &\dots \end{bmatrix}\\
        &= \begin{bmatrix}\textbf{x}_{i_1}\textbf{x}_{i_1}^T  & \dots & \textbf{x}_{i_j}\textbf{x}_{i_j}^T & \dots \end{bmatrix}\begin{bmatrix}a^{i_1, (t_1)}& a^{i_1, (t_2)}&\dots & a^{i_1, (t_s)} & \dots \\ a^{i_2, (t_1)}&a^{i_2, (t_2)}& \dots & a^{i_2, (t_s)} & \dots \\\dots \\ a^{i_j, (t_1)}&a^{i_j, (t_2)}& \dots & a^{i_j, (t_s)} & \dots \\\end{bmatrix}\\
        &\begin{bmatrix}\textbf{D}^{(t_1)'} & \textbf{D}^{(t_2)'} & \dots & \textbf{D}^{(t_s)'} &\dots \end{bmatrix}\\
        &= \begin{bmatrix}\textbf{x}_{i_1} & \dots & \textbf{x}_{i_j} & \dots \end{bmatrix} \begin{bmatrix}b^{i_1, (t_1)}y_{i_1}&b^{i_1, (t_2)}y_{i_1}& \dots & b^{i_1, (t_s)}y_{i_1}& \dots \\b^{i_2, (t_1)}y_{i_2} & b^{i_2, (t_2)}y_{i_2}& \dots & b^{i_2, (t_s)}y_{i_2} & \dots \\ \dots \\ b^{i_j, (t_1)}y_{i_j} & b^{i_j, (t_2)}y_{i_j} & \dots & b^{i_j, (t_s)}y_{i_j} & \dots \\ \dots \end{bmatrix}
    \end{split}
\end{align}

where $t_1, t_2, \dots, t_s, \dots$ represents the iterations where some batch $(\textbf{X}_{r..r+B}, \textbf{Y}_{r..r+B})$ is used ($r = r^{(t_1)} = r^{(t_2)} = \dots = r^{(t_s)}$). The formula in Equation \ref{eq: implement_matrix_form} is far more efficient than the naive approach by iterating every remaining sample in $(\textbf{X}_{r..r+B}, \textbf{Y}_{r..r+B})$ for every iteration $t$, whose time complexity is bounded by $O(tB m^2)$}

\eat{we  to obtain $\Delta \textbf{C}^{(t)}$ and $\Delta \textbf{D}^{(t)}$, the value of each $a^{i_j, (t)}\textbf{x}_{i_j}\textbf{x}_{i_j}^T$ and $b^{i_j, (t)}y_{i_j}\textbf{x}_{i_j}$ is necessary, which can be computed 
\eat{ Since within each super-iteration (recall that in \minisgd, there are multiple super-iterations, in each of which each mini-batch is traversed) every $(\textbf{x}_{i}, y_{i})$ is traversed, all the $a^{i_j, (t)}\textbf{x}_{i_j}\textbf{x}_{i_j}^T$ in a super-iteration $\tau$ can be computed}in parallel using matrix multiplication:
\eat{ \footnote{\url{ https://docs.scipy.org/doc/numpy/user/basics.broadcasting.html#module-numpy.doc.broadcasting}}}
\begin{align}\label{eq: implement_matrix_form}
    \begin{split}
        &\begin{bmatrix}vec(\textbf{x}_{i_1}\textbf{x}_{i_1}^T) & \dots & vec(\textbf{x}_{i_j}\textbf{x}_{i_j}^T) & \dots \end{bmatrix}^T \times \begin{bmatrix}a^{i_1, (t)}\\ a^{i_2, (t)}\\\dots \\ a^{i_j, (t)}\end{bmatrix}\\
    \end{split}
\end{align}
\noindent
Here $vec(\textbf{x}_{i_1}\textbf{x}_{i_1}^T)$ is the vectorized version of $\textbf{x}_{i_1}\textbf{x}_{i_1}^T$, which is thus an $m^2 \times 1$ vector. If we assume that there are $\Delta B$ samples removed from each iteration, then the two matrices used in Equation \ref{eq: implement_matrix_form} have dimension $m^2 \times \Delta B$ and $\Delta B \times 1$ respectively, which thus incurs time complexity $O(\Delta B m^2)$. In the end, by reshaping the result of Equation \ref{eq: implement_matrix_form}, we can get the result for $\Delta \textbf{C}^{(t)}$.\eat{will be:
\begin{align}\label{eq: implement_matrix_form_res}
    \begin{split}
        &\begin{bmatrix}\sum_{}vec(a^{i_1, (t)}\textbf{x}_{i_1}\textbf{x}_{i_1}^T) & \dots & vec(a^{i_j, (t)}\textbf{x}_{i_j}\textbf{x}_{i_j}^T) & \dots \end{bmatrix}\\
    \end{split}
\end{align}}

$\Delta \textbf{D}^{(t)}$ can be computed in a similar way. 
To obtain further performance gains, we can  modify Equation \ref{eq: implement_matrix_form} to compute $\Delta \textbf{C}^{(t)}$ and $\Delta \textbf{D}^{(t)}$ across all the super-iterations simultaneously.}

The update rule of the \gbm\ for the training data without $(\Delta \textbf{X}, \Delta \textbf{Y})$ can then be computed as follows using the result of $\textbf{C}^{(t)}$, $\Delta \textbf{C}^{(t)}$, $\textbf{D}^{(t)}$ and $\Delta \textbf{D}^{(t)}$:

where $z^{(t)}$  represents the number of remaining training samples in the mini-batch at the $t^{th}$ iteration. 

Note that instead of computing $\Delta \textbf{C}^{(t)}$ explicitly, similar to \pro\ for linear regression, we can multiply $[\textbf{x}_{i_1}, \textbf{x}_{i_2}, \dots]^T_{\substack{\textbf{x}_{i_j} \in \Delta \textbf{X},\\ i_j \in  \mathscr{B}^{(t)}}}$ and $\linearincrew^{(t)}$ first to avoid less efficient matrix-matrix multiplications.

\textbf{Time complexity}
Suppose there are $\tau$ iterations in total, then the total time complexity of the online phase is $O(\tau\Delta B m + \tau m^2)$, where the major overhead comes from two matrix-vector multiplications, i.e. $\textbf{C}^{(t)}$ times $\linearincrew^{(t)}$ and $\Delta \textbf{C}^{(t)}$ times $\linearincrew^{(t)}$. In comparison, the time complexity of the method by retraining from the scratch is $O(\tau ((B-\Delta B) m + C_{non} m))$ where $C_{non}$ represents the overhead on the non-linear operations. As we can see, with small $m$ and large $B$, \pro\ can be more efficient than retraining from the scratch.


\textbf{Space complexity}
Similar to \pro\ for linear regression, at each iteration, we need to cache the two intermediate results $\textbf{C}^{(t)}$ and $\textbf{D}^{(t)}$, which requires $O(\tau m^2)$ space in total. Plus, the space overhead for the linear coefficients will be $O(n \lceil \frac{\tau B}{n} \rceil)$. With small feature space, the memory footprint of \pro\ should be still acceptable. 

\subsubsection{For dense dataset with large feature space}

\textbf{Offline phase}
Similar to the offline phase for dense dataset with small feature space, the intermediate results $\textbf{C}^{(t)}$ and $\textbf{D}^{(t)}$ are computed. Plus, 

\textbf{Online phase} In the online phase, the update rule for $\linearincrew^{(t)}$ will be:

\begin{align}\label{eq: provenance_update_formular}
    \begin{split}
         \linearincrew^{(t+1)}& \leftarrow ((1-\eta_t\lambda)\textbf{I} + \frac{\eta_t}{z^{(t)}} (\textbf{P}_{1..r}^{(t)}\textbf{V}_{1..r}^{(t), T}-\Delta \textbf{C}^{(t)}))\linearincrew^{(t)} \\
         &+\frac{\eta_t}{z^{(t)}} (\textbf{D}^{(t)}-\Delta \textbf{D}^{(t)})\\
    \end{split}
\end{align}

\textbf{Time complexity analysis} By utilizing the same techniques as before to avoid the expensive matrix-matrix computations in Equation \ref{eq: provenance_update_formular}, the total time complexity will be $O(\tau (rm + \Delta B m))$, which will be more efficient compared to the way to retrain from the scratch. For multi-nomial logistic regression, if there are $k$ possible classes in total, the total number of model parameters will be $km$ and thus the time complexity for \pro\ and the method to retrain from the scratch will be $O(\tau (rkm + \Delta B km))$ and $O(\tau ((B-\Delta B)km + C_{non}km))$ respectively.}

\textbf{Space complexity analysis} Through this approximation, we need to cache $\textbf{P}_{1..r}^{(t)}$ and $\textbf{V}_{1..r}^{(t)}$ at each iteration, which requires $O(\tau rm)$ space in total. Plus, $O(n \lceil \frac{\tau B}{n} \rceil)$ extra space is necessary to cache the linear coefficients. So the total space complexity will be $O(\tau rm) + O(n \lceil \frac{\tau B}{n} \rceil)$.

\begin{theorem} \textbf{(Approximation ratio)}\label{theorem: aproximation_bound_svd_logistic}
Similar to Theorem \ref{theorem: aproximation_bound_svd}, the deviation caused by the SVD approximation will be bounded by $O(\epsilon)$, given \eat{the upper bound of $||\textbf{w}^{(t)}||$ and }the ratio  $\frac{||\textbf{P}^{(t)}_{1..r}\textbf{V}^{T,(t)}_{1..r}||_2}{||\textbf{P}^{(t)}\textbf{V}^{T,(t)}||_2} \geq 1-\epsilon$. So using Theorem \ref{theorem: aproximation_bound_change}, $||E(\linearincrew^{(t)}- \logistlinearincrew^{(t)})||_2$ is bounded by $O(\frac{\Delta n}{n}\Delta x) + O((\frac{\Delta n}{n})^2) + O((\Delta x)^2) + O(\epsilon)$.
\end{theorem}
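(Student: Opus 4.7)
The plan is to decompose the total deviation into two parts via the triangle inequality: the error introduced by SVD truncation and the error already bounded in Theorem \ref{theorem: aproximation_bound_change}. Let $\linearincrew^{(t),\mathrm{svd}}$ denote the updated model parameters computed using the SVD-approximated update (Equation \ref{eq: provenance_update_formular_svd}), while $\linearincrew^{(t)}$ denotes those computed using the un-truncated rule (Equation \ref{eq: provenance_update_formular}). Applying the triangle inequality gives
\begin{align*}
\|E(\linearincrew^{(t),\mathrm{svd}} - \logistlinearincrew^{(t)})\|_2 \leq \|E(\linearincrew^{(t),\mathrm{svd}} - \linearincrew^{(t)})\|_2 + \|E(\linearincrew^{(t)} - \logistlinearincrew^{(t)})\|_2.
\end{align*}
The second summand is already bounded by $O(\frac{\Delta n}{n}\Delta x) + O((\frac{\Delta n}{n})^2) + O((\Delta x)^2)$ by Theorem \ref{theorem: aproximation_bound_change}, so it remains to show that the first summand is $O(\epsilon)$.

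The argument mirrors the proof of Theorem \ref{theorem: aproximation_bound_svd}. Define the per-iteration SVD residual $\textbf{E}^{(t)} := \textbf{C}^{(t)} - \textbf{P}_{1..r}^{(t)}\textbf{V}_{1..r}^{T,(t)}$. Using the hypothesis $\frac{\|\textbf{P}^{(t)}_{1..r}\textbf{V}^{T,(t)}_{1..r}\|_2}{\|\textbf{P}^{(t)}\textbf{V}^{T,(t)}\|_2} \geq 1-\epsilon$ together with the fact that $\textbf{C}^{(t)}$ is formed from a bounded mini-batch, I would show $\|\textbf{E}^{(t)}\|_2 = O(\epsilon)$. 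Subtracting Equation \ref{eq: provenance_update_formular} from Equation \ref{eq: provenance_update_formular_svd}, the discrepancy $\textbf{e}^{(t)} := \linearincrew^{(t),\mathrm{svd}} - \linearincrew^{(t)}$ satisfies a recurrence of the form
\begin{align*}
\textbf{e}^{(t+1)} = \textbf{A}^{(t)} \textbf{e}^{(t)} + \tfrac{\eta_t}{\increB^{(t)}}\, \textbf{E}^{(t)}\, \linearincrew^{(t)},
\end{align*}
where $\textbf{A}^{(t)} = (1-\eta_t\lambda)\textbf{I} + \tfrac{\eta_t}{\increB^{(t)}}(\textbf{P}_{1..r}^{(t)}\textbf{V}_{1..r}^{T,(t)} - \Delta \textbf{C}^{(t)})$.

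Under the learning-rate conditions of Lemma \ref{lemma: convergence_conditions} (constant $\eta < 1/L$), the strong-convexity argument underlying Section \ref{ssection: convergence} yields a contraction of the form $\|\textbf{A}^{(t)}\|_2 \leq 1 - c\eta\lambda$ for some $c>0$ in expectation; together with the convergence-derived boundedness $\|\linearincrew^{(t)}\|_2 \leq C$ (analogous to the bound used in Theorem \ref{theorem: aproximation_bound_svd}), unrolling the recurrence gives
\begin{align*}
\|E(\textbf{e}^{(t)})\|_2 \leq \sum_{l=0}^{t-1}(1-c\eta\lambda)^{t-1-l} \cdot \tfrac{\eta}{\increB^{(l)}} \cdot O(\epsilon) \cdot C = O(\epsilon),
\end{align*}
because the geometric sum is bounded uniformly in $t$. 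Combining this with the Theorem \ref{theorem: aproximation_bound_change} bound via triangle inequality yields the claimed $O(\frac{\Delta n}{n}\Delta x) + O((\frac{\Delta n}{n})^2) + O((\Delta x)^2) + O(\epsilon)$.

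The main obstacle is verifying that the per-iteration SVD-induced perturbations do not accumulate super-linearly across the $\tau$ training iterations. This reduces to establishing the contraction property of $\textbf{A}^{(t)}$ in expectation, which is the same technical device employed in the proofs of Theorems \ref{theorem: convergence_res} and \ref{theorem: aproximation_bound_svd}, and relies crucially on the $\lambda$-strong convexity of the $L2$-regularized logistic objective together with the learning-rate condition $\eta_t < 1/L$. Once that contraction is in place, the remainder of the argument is a straightforward geometric-series bound and a final triangle-inequality combination with Theorem \ref{theorem: aproximation_bound_change}.
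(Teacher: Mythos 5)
Your proposal is correct and follows essentially the same route as the paper: a triangle-inequality split into the SVD-truncation error and the linearization-plus-deletion error, with the latter handled by Theorem \ref{theorem: aproximation_bound_change} and the former bounded by $O(\epsilon)$ via the same contraction-plus-geometric-series recurrence used for Theorem \ref{theorem: aproximation_bound_svd}. The paper's own proof merely states that ``similar analysis to Theorem \ref{theorem: aproximation_bound_svd} can be done''; your explicit recurrence for $\textbf{e}^{(t)}$ with the $1-c\eta\lambda$ contraction is precisely the content of that omitted step.
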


This indicates that $\linearincrew^{(t)}$ should be very close to $\logistlinearincrew^{(t)}$ (similar to the discussion after Theorem \ref{theorem: aproximation_bound_svd}).

\textbf{Discussion} Notice that for sparse datasets with large feature space, we can utilize the efficient sparse matrix operations by retraining from scratch. Also note that the intermediate result $\textbf{C}^{(t)}$ will be a sparse matrix for such datasets. However, after SVD, there is no guarantee that $\textbf{P}^{(t)}$ and $\textbf{V}^{(t)}$ are sparse matrices.
Therefore, for sparse training datasets, we will simply use the linearized update rule, i.e. Equation \ref{eq: mini-SGD_logistic_regression_para_update} directly, without considering the strategies above. 

\vspace{-2mm}
\subsection{\proopt: Optimizations for logistic regression}\label{ssec: proopt_logistic_regression}

\eat{Although Equation \ref{eq: implement_matrix_form}-opt provides an efficient way to compute $\Delta \textbf{C}^{(t)}$ and $\Delta \textbf{D}^{(t)}$, which, however, is very memory-inefficient when the total number of super-iteration $T$ is very large. Plus, the size of the matrix $\textbf{C}^{(t)}$ and $\textbf{D}^{(t)}$ for each iteration $t$ depends on the total number of parameters in $\textbf{w}$, which equals to the feature number in binary logistic regression and equals to the product of the feature number and the number of classes in multinomial logistic regression. That means that the space complexity to maintain all the $\textbf{C}^{(t)}$ and $\textbf{D}^{(t)}$ will be up to $O(T k^2)$ and $O(T k)$ where $k$ represents the number of model parameters, which can blow up the memories in the case of large $k$ (i.e. large number of model parameters) and large $T$ (i.e. long training process). }

Again, when the feature space is small additional optimizations are possible. 
\eat{however, when the total number of iterations $\tau$ is large, the performance gains of \pro\ will not be significant. In order to alleviate this effect to make sure that computation of Equation \ref{eq: implement_matrix_form}-opt can fit in memory} 
In particular, we observe that for each sample $i$ the change in the coefficients $a^{i,(t)}$ and $b^{i, (t)}$ from one iteration to the next becomes smaller and smaller as $\textbf{w}^{(t)}$ converges. This suggests that we can stop capturing new provenance information at some earlier iteration, call it $t_s$, and continue with the same provenance until convergence.
Suppose that for each sample $i$ we approximate $a^{i,(t)},b^{i,(t)}$ by $a^{i,*}$ and $b^{i,*}$ after the iteration $t_s$.
Therefore the matrices $\textbf{C}^{(t)}, \textbf{D}^{(t)}, \Delta \textbf{C}^{(t)}$ and $\Delta \textbf{D}^{(t)}$  will be approximated using the coefficients $a^{i,*}$ and $b^{i,*}$ 
and will remain the same for all iterations 
$t \geq t_s$ allowing us to avoid their recomputation. In the experiments, we found
that a rule of thumb that takes $t_s$ to be
70\% of the total number of iterations works well.

This has the same form as for linear regression, motivating us to use the same techniques from \proopt\ for linear regression, i.e. conducting eigenvalue decomposition over $\textbf{C}^{(t)}$, followed by incrementally updating the eigenvalues given the changes $\Delta \textbf{C}^{(t)}$, thus avoiding recomputations after the iteration $t_s$.

\textbf{Time complexity.}
Before and after the iteration $t_s$, the total time complexity is $O(t_s (rm + \Delta B m))$ and $O(\min\{\Delta n, m\} m^2) + O((\tau - t_s) m)$ (see the time complexity analysis in Section \ref{sec: opt_linear_regression}) respectively. Thus the total time complexity is $O(t_s (rm + \Delta B m)) + O(\min\{\Delta n, m\} m^2) + O((\tau - t_s) m)$.

\textbf{Space complexity.}
After the iteration $t_s$, we only need to keep the eigenvectors of $\textbf{C}^{(t)}$, which requires $O(m^2)$ space. Including the space overhead for the first $t_s$ iterations, the total space complexity is $O(m^2) + O(t_s rm) + O(n \lceil \frac{t_s B}{n} \rceil)$.
\begin{theorem}\textbf{(Approximation ratio)}\label{theorem: aproximation_bound_eigen_logistic}
Suppose that after the iteration $t_s$ the gradient of the objective function is smaller than $\delta$, then the approximations of \proopt\ can lead to deviations of the model parameters bounded by $O((\tau - t_s)\delta) + O(||\Delta \textbf{X}^T\Delta \textbf{X}||)$. By combining the analysis in Theorem \ref{theorem: aproximation_bound_change}, $||E(\linearincrew^{(t)}- \logistlinearincrew^{(t)})||_2$ is bounded by $O(\frac{\Delta n}{n}\Delta x) + O((\frac{\Delta n}{n})^2) + O((\Delta x)^2) + O((\tau - t_s)\delta) + O(||\Delta \textbf{X}^T\Delta \textbf{X}||)$
\end{theorem}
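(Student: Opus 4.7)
The plan is to decompose the total deviation into three independent sources of error, bound each separately, and combine them by the triangle inequality. Concretely, let $\linearincrew^{(t)}$ denote the model parameters produced by the fully linearized, fully provenance-tracked procedure (no approximation), and let $\widetilde{\mathbf{w}}^{(t)}$ denote the parameters actually produced by \proopt. Then
\[
\| E(\widetilde{\mathbf{w}}^{(t)} - \logistlinearincrew^{(t)}) \|_2 \;\le\; \| E(\widetilde{\mathbf{w}}^{(t)} - \linearincrew^{(t)}) \|_2 + \| E(\linearincrew^{(t)} - \logistlinearincrew^{(t)}) \|_2,
\]
and Theorem~\ref{theorem: aproximation_bound_change} already controls the second summand by $O(\tfrac{\Delta n}{n}\Delta x)+O((\tfrac{\Delta n}{n})^2)+O((\Delta x)^2)$. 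It therefore suffices to bound the first summand by $O((\tau-t_s)\delta)+O(\|\Delta \textbf{X}^T\Delta \textbf{X}\|)$.

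First I would isolate the error due to freezing the interpolation coefficients at iteration $t_s$. For $t\le t_s$ the two iterates coincide, so the discrepancy is born entirely from replacing the true coefficients $a^{i,(t)},b^{i,(t)}$ by $a^{i,*},b^{i,*}$ for $t>t_s$. Writing the one-step residual of the update rule in Equation~\ref{eq: provenance_update_formular} with these two coefficient choices and subtracting, the difference is exactly $\eta_t$ times the gap between the linearized gradients evaluated with the two coefficient sets at $\textbf{w}^{(t)}$. Because $\textbf{w}^{(t)}$ has entered the regime $\|\nabla h(\textbf{w}^{(t)})\|\le\delta$, and since the piecewise-linear surrogate closely tracks the true gradient by Equation~\ref{eq: piecewise_approx_rate}, the one-step residual is bounded by $O(\eta_t\delta)$. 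Under the strong-convexity-driven contraction used in Section~\ref{ssection: convergence} (the factor $(1-2\eta\lambda)$ that appeared in the analyses of Theorem~\ref{theorem: convergence_res} and Theorem~\ref{theorem: aproximation_bound}), recursively summing a geometrically-contracted error with a per-step driving term $O(\eta_t\delta)$ yields at most $O((\tau-t_s)\delta)$ in the worst case (and in fact an $O(\delta/\lambda)$ steady-state bound, but the looser linear-in-$(\tau-t_s)$ bound suffices for the statement).

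Next I would isolate the error arising from the incremental eigenvalue update applied in Section~\ref{ssec: proopt_logistic_regression}. Once coefficients are frozen, the update rule has the same linear shape as the linear-regression update treated in Section~\ref{sec: opt_linear_regression}, so Theorem~\ref{theorem: aproximation_bound_eigen} applies verbatim: the deviation caused by replacing the eigenvalues of $\textbf{C}^{(t)}$ by those obtained from the perturbation formula~\ref{eq: eigen_value_incremental} is bounded by $O(\|\Delta\textbf{X}^T\Delta\textbf{X}\|)$. Because this approximation only affects iterations $t>t_s$ (and is absent before), its contribution is independent of the coefficient-freezing error and can simply be added. Combining the two pieces gives $\|E(\widetilde{\mathbf{w}}^{(t)}-\linearincrew^{(t)})\|_2 = O((\tau-t_s)\delta)+O(\|\Delta\textbf{X}^T\Delta\textbf{X}\|)$, and then adding the bound from Theorem~\ref{theorem: aproximation_bound_change} yields the claimed total.

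The main obstacle is the first step: tracking the frozen-coefficient error through the recursion cleanly. A naive bound gives an exponential blow-up of the form $\sum_{j=t_s}^{t}\prod_{k=j+1}^{t}\|M_k\|\cdot O(\eta_j\delta)$, and one must argue (using the strong-convexity assumption already invoked by Lemma~\ref{lemma: convergence_conditions}, which forces $\|I-\eta_t\lambda I+\tfrac{\eta_t}{\increB^{(t)}}\textbf{C}^{(t)}\|\le 1-c\eta_t\lambda$ in expectation) that the geometric factor is bounded by $1$ so that the sum collapses to the linear bound $O((\tau-t_s)\delta)$. A secondary subtlety is that the eigenvalue-perturbation argument of Theorem~\ref{theorem: aproximation_bound_eigen} implicitly assumed the matrix to be decomposed is constant in $t$; here this holds only after coefficients have been frozen, which is precisely why the two error sources cleanly decouple at $t_s$.
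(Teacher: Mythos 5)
Your proposal is correct and follows essentially the same route as the paper: a triangle-inequality decomposition into the linearization error (handled by Theorem~\ref{theorem: aproximation_bound_change}), the coefficient-freezing error accumulated over the $\tau - t_s$ post-freeze iterations at $O(\delta)$ per step, and the incremental-eigenvalue error imported from Theorem~\ref{theorem: aproximation_bound_eigen}. The only (immaterial) difference is that the paper bounds the per-step frozen-coefficient discrepancy crudely by $2\delta$ via a plain triangle inequality on the two gradient terms, whereas you bound the residual by $O(\eta_t\delta)$ and invoke the contraction; both yield the stated $O((\tau - t_s)\delta)$ term.
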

\vspace{-2mm}
This thus indicates that $\linearincrew^{(t)}$ should be very close to $\logistlinearincrew^{(t)}$.

\textbf{Discussion.} Our current framework handles linear and logistic models with L2 regularization.
Our solutions cannot handle L1 regularization since in this case the gradient of the objective function is not continuous, thus invalidating some of the error bound analysis above. How to handle L1 regularization will be our future work.



\eat{Note that $(\textbf{C}^{(t_s)}-\Delta\textbf{C}^{(t_s)})$ and $(\textbf{D}^{(t_s)}-\Delta\textbf{D}^{(t_s)})$ are constant after iteration $t_s$, for which the eigenvalue decomposition optimization strategy in Section \ref{sec: opt_linear_regression} can be applied. Suppose the eigenvalue decomposition over $\textbf{C}^{(t_s)}$ is $\textbf{Q} \; diag(\{c_i\}_{i=1}^n)\textbf{Q}^{-1}$, and the updates of the eigenvalues after the change over $\textbf{C}^{(t_s)}$ are computed using Equation \ref{eq: eigen_value_incremental}, i.e. $\textbf{Q}^{-1}\Delta \textbf{C}^{(t_s)}\textbf{Q} = diag \; (\{\delta c_i\}_{i=1}^n)$, then the update rule in Equation \ref{eq: gbm_cut_off} after iteration $t_s$ becomes:
\begin{align}\label{eq: gbm_cut_off}
    \begin{split}
        &\linearincrew^{(t+1)}=\\
       &(\textbf{Q}(diag(\{\Pi_{j=t_s}^t(1-\eta_j\lambda) + \frac{\eta_j}{z^{(t)}}(c_i-\delta c_i)\}_{i=1}^n))\textbf{Q}^{-1})\linearincrew^{(t)} \\
        & + \frac{\eta_t}{z^{(t)}}\textbf{Q}(diag(\{\sum_{l=t_s + 1}^{t-1}\Pi_{j=l}^t(1-\eta_j\lambda) + \frac{\eta_j}{z^{(t)}}(c_i-\delta c_i)\}_{i=1}^n))\\
        &\textbf{Q}^{-1}(\textbf{D}^{(t_s)}-\Delta\textbf{D}^{(t_s)})
    \end{split}
\end{align}
Finally, the total time complexity will be $O(t_s\Delta B m^2) + O(\tau-t_s)$ when there are $\tau$ iterations in total, which is much more efficient when $m$ is small compared to \pro.}
\section{Experiments}\label{sec: experiment}

\subsection{Experimental setup}

\textbf{Platform.}~
We conduct extensive experiments in Python 3.6 and use PyTorch 1.3.0 \cite{paszke2017automatic} for the experiments for dense datasets and scipy 1.3.1 \cite{scipylib} for the experiments for sparse datasets. All experiments were conducted on a Linux server with an Intel(R) Xeon(R) CPU E5-2630 v4 @ 2.20GHz and 64GB of main memory. 

\textbf{Datasets.}~ 
Six datasets were used in our experiments: (1) the UCI SGEMM GPU dataset\footnote{\url{https://archive.ics.uci.edu/ml/datasets/SGEMM+GPU+kernel+performance}}; 
(2) the UCI Covtype dataset \footnote{\url{https://archive.ics.uci.edu/ml/datasets/covertype}}; (3) the UCI HIGGS dataset \footnote{\url{https://archive.ics.uci.edu/ml/datasets/HIGGS}}; (4) the RCV1 dataset \footnote{simplified version from https://scikit-learn.org/0.18/datasets/rcv1.html} (5) the Kaggle ECG Heartbeat Categorization Dataset\footnote{\url{https://www.kaggle.com/shayanfazeli/heartbeat}}; (6) the CIFAR-10 dataset \footnote{\url{https://www.cs.toronto.edu/~kriz/cifar.html}}, which are referenced as \SGEMMdataset, \covmultidataset, \higgsdataset,  \rcvdataset, \heartbeatdataset\ and \cifar\ hereafter. 

\SGEMMdataset\ has continuous label values, therefore we use it in experiments with \emph{linear
regression} while the rest of them have values that are appropriate for classification.
\eat{
classification, hence we use them for 
\emph{binary} (\skindataset\ falls into only 2 classes), respectively \emph{multinomial logistic regression}.}
Each dataset is partitioned into \emph{training} (90\% of the samples)
and \emph{validation} (10\% of the samples)
datasets, the latter used for measuring the accuracy of 
models trained from the former.

The characteristics of these datasets are listed in Table \ref{Table: datasets_summary}, which indicates that \rcvdataset\ and \cifar\ have extremely large feature space (over 30k model parameters) while other datasets have much fewer parameters (\heartbeatdataset\ has around 1000 while others have less than 500).

\subsection{Experiment design}\label{ssec: exp_design}


We conduct two sets of experiments, the first of which aims to evaluate the performance of \pro\ and \proopt\ with
respect to the deletion of \emph{one subset} of the training samples. We do this over different types of datasets (dense VS sparse, large feature space VS small) with varied configurations (how many samples to be removed, mini-batch size, iteration numbers etc.), and compare against retraining from scratch.  
The second set of experiments simulate the scenario where users \emph{repetitively} remove different subsets of training samples. 

In the first set of experiments we simulate the cleaning scenario. To specify the samples to be removed from the training datasets, we introduce dirty samples, which are a selected subset of samples from the original dataset $\mathcal{T}$ that are modified to incorrect values by rescaling. The resulting dataset is denoted  $\mathcal{T}_{_\mathrm{dirty}}$, over which the initial model $\mathcal{M}_{_\mathrm{init}}$ is constructed.  The dirty samples are then removed in the model update phase. The goal is to compare the robustness of \pro, \proopt\ and the \emph{influence function} \cite{koh2017understanding} method \eat{(denoted by \infl) }when dirty data exists. \eat{\scream{Do we still do this? Can we deemphasize this?}
to construct an augmented \scream{How is it augmented?}
training datasets $\mathcal{T}_{_\mathrm{dirty}}$ from the original training dataset $\mathcal{T}$, which were removed later in the model update phase.  (\scream{only use earlier feature errors} \scream{So we need to edit this paragraph?}) The dirty data were designed such that the feature values of some existing samples were changed to unreasonable values by rescaling. We also designed other types of errors, which ends up with similar results \scream{This will not be noticed by any reader}
and thus omitted here.}
In the experiment we vary
the number of erroneous samples generated. The ratio between the erroneous samples and the original training dataset is called 
the \textbf{deletion rate},
and we give it values ranging from $0.0001$ (i.e. $0.01\%$) to $0.2$ (i.e. $20\%$).

In the second set of the experiments, we simulate the scenario in which users debug or interpret models by removing different subsets of samples, necessitating repeated incremental model update operations.  We assume that the datasets are very large; to simulate this,
we create three synthetic datasets $\mathcal{T}_{_\mathrm{cat}}$ by concatenating 4 copies of \higgsdataset, 20 copies of \covmultidataset\ and 130 copies of \heartbeatdataset\ such that the total number of training samples is around 40 million, 11 million and 11 million, respectively, which are denoted \higgsdataset\ (extended), \covmultidataset\ (extended) and \heartbeatdataset\ (extended), respectively.
In the experiments, ten different subsets are removed and for each of them the deletion rate is about 0.1\% of randomly picked samples out of the full training set. The hyperparameters for this set of experiments are listed in Table \ref{Table: exp_summary}.

\textbf{Baseline.}~ For both $\mathcal{T}_{_\mathrm{dirty}}$ and $\mathcal{T}_{_\mathrm{cat}}$, we simulate what users (presumably
unaware of errors) would do, and train an initial model $\mathcal{M}_{_\mathrm{init}}$ using the following standard method:
Manually derive the formula for the gradient of the objective function 
and then program explicitly the \gbm\ iterations. The erroneous or chosen samples  are then removed from $\mathcal{T}_{_\mathrm{dirty}}$ or $\mathcal{T}_{_\mathrm{cat}}$. 
\eat{I have added closed-form solutions for comparison in linear regression experiments} 
For linear regression (except for \gbm), we also compare \pro\ and \proopt\ against close-form formula solutions for incremental updates~\cite{deshpande2006mauvedb, gupta2015processing, nikolic2014linview, hasani2018efficient}, denoted by \closeform.
\eat{
, for linear regression, except the \gbm, 
\cite{deshpande2006mauvedb, gupta2015processing, nikolic2014linview, hasani2018efficient} also uses closed-form formula for incremental updates (denoted by \closeform), which is also compared against \pro\ and \proopt\ in this section.}

\textbf{Incrementality.}~ 
To update the model $\mathcal{M}_{_\mathrm{init}}$, the straightforward solution is to retrain from the scratch by using the same standard method as before but exclude the removed samples from each mini-batch. We denote this solution by \iter. 
In contrast, our approach uses \pro\ or \proopt\ to incrementally update the model. 
The time taken by \iter, \pro\ or \proopt\ to produce the updated model \eat{$\mathcal{M}_{_\mathrm{upd}}$ }
is reported in the experiments as the \emph{update time}, and is compared over the 
two solutions: retraining with \iter\ vs.~incremental update with \pro\ or \proopt.

\eat{\scream{delete the following paragraph}
In contrast, our approach, \pro\ (and \proopt\ for dataset with small feature space), obtains from $\mathcal{T}_{_\mathrm{dirty}}$ or $\mathcal{T}_{_\mathrm{cat}}$ an initial construct $\mathcal{P}_{_\mathrm{init}}$ that encompasses the provenance collected during \gbm\ iterations.
Afterwards, in the \emph{incremental update} phase, it propagates the deletion of the selected samples from the training dataset to
\eat{to the model. Therefore, it \emph{incrementally updates}  (a \emph{provenance-instrumented} version of)}
$\mathcal{P}_{_\mathrm{init}}$ producing its own updated model $\mathcal{M}_{_\mathrm{upd}}$. 
The time to do this incremental update is
the \emph{update time} for our approach. In the experiments we compare the update time
for our approach to the update time of the baseline approach. }

Note that our \pro/ \proopt\ approach uses
provenance information collected from the 
whole training dataset. This phase is \emph{offline} for the \pro/ \proopt\ algorithms
and is \emph{not} included in their reported running times. In practice, for the first set
of experiments (cleaning of erroneous samples)
provenance collection is done during the training of  $\mathcal{M}_{_\mathrm{init}}$ from $\mathcal{T}_{_\mathrm{dirty}}$. For the second set of experiments (repeated deletions of subsets for debugging or interpretability)
provenance collection is done
during an initial training of $\mathcal{M}_{_\mathrm{init}}$ from the entire dataset $\mathcal{T}_{_\mathrm{cat}}$, which only
needs to be done \emph{once} even if many deletions of subsets are performed subsequently.

\eat{
construct $\mathcal{P}_{_\mathrm{init}}$ that encompasses the provenance during training $\mathcal{M}_{_\mathrm{init}}$ over $\mathcal{T}_{_\mathrm{dirty}}$ or $\mathcal{T}_{_\mathrm{cat}}$
is \emph{offline}.
}

\reminder{Yet it is relevant to compare them in order to understand the \emph{overhead} of provenance collection. sometimes it will take longer to finish the provenance preparation step... We do not present graphs due to space limitations, but report that the overhead of collecting provenance for \pro\ is
less than 90\% while, remarkably, the overhead for \proopt\ is only between 10\% and 30\%.}

Since \proopt\ is the optimized version for datasets with small feature space we only record the update time of \pro\ over \rcvdataset\ and \cifar, which have very large feature spaces.

\textbf{Accuracy.}~ We compare the quality of the updated model obtained by \iter\ and \pro/\proopt. \eat{retraining (the
baseline approach) with the quality of the updated model obtained by incremental update in our approach.} The goal is to show that the improvement in update time is not achieved at the expense of accuracy. For experiments with linear regression, we 
use the \emph{mean squared error (MSE)} over the validation datasets as a measure for accuracy. A lower MSE corresponds to higher accuracy over the validation set.
For experiments with binary or multinomial logistic regression, we use the updated model to
classify the samples in the validation datasets and report their \emph{validation accuracy}.

\eat{\scream{delete this paragraph?}\textbf{Algorithmic parameters.}~ As described above, for both linear regression and logistic regression, we use iterative convergence of formulas obtained
with gradient-based methods in: 1) \iter, the standard training of the initial model $\mathcal{M}_{_\mathrm{init}}$ from the training set
$\mathcal{T}_{_\mathrm{dirty}}$ or $\mathcal{T}_{_\mathrm{cat}}$, 2) \iter, the standard retraining of the updated model
$\mathcal{M}_{_\mathrm{upd}}$ from the training set $\mathcal{T}_{_\mathrm{dirty}}$ or $\mathcal{T}_{_\mathrm{cat}}$ by removing gradient formula of samples in $\mathcal{T}_{_\mathrm{del}}$ in the iterative computations,
and 3) the computation in \pro\ and \proopt\ of the provenance-instrumented 
$\mathcal{P}_{_\mathrm{init}}$ from $\mathcal{T}_{_\mathrm{dirty}}$
which is later subjected to deletion propagation.
\eat{In {all} of these, the learning rate $\eta_t$, the regularization parameter $\lambda$ 
in Equation \ref{eq: mini_sgd_linear_regression}, and the iteration convergence criterion are the \emph{same}.}
\val{it occurs to me that although we do not measure the provenance intrumentation it's offline for us the reviewer might want to know that it is not MUCH slower than the usual muil of M-init}}

\textbf{Model comparison.}~ 
\eat{In addition to comparing the accuracy of the updated models}
We also compare the updated models \emph{structurally} by comparing the vector of updated model parameters obtained via \pro/\proopt\ against the ones by using \iter.
This is done in two different ways: 1) Using \emph{distance}, that is, the \emph{L2-norm} of the difference between the two vectors, for both linear and logistic regression, and 2) Using \emph{similarity}, that is, the \emph{cosine}
of the angle between the two vectors.  The latter is only done for logistic regression since the angle is only relevant for classification techniques. For both linear regression and logistic regression, we also record the changes of the signs and magnitude of individual coordinate of the updated model parameters by \pro\ and \proopt\ compared to the ones obtained by \iter.
\eat{The definitions are
 \begin{align}
    \begin{split}
        &\mbox{L2-dist}(\textbf{v}_1, \textbf{v}_2) = ||\textbf{v}_1-\textbf{v}_2||_2\\
        &\mbox{cosine}(\textbf{v}_1, \textbf{v}_2) = \frac{||\textbf{v}_1 \cdot \textbf{v}_2||_2}{||\textbf{v}_1||_2||\textbf{v}_2||_2}
    \end{split}
\end{align}
where $\textbf{v}_1 \cdot \textbf{v}_2$ represents the inner product of $\textbf{v}_1$ and $\textbf{v}_2$. }

\eat{
Specifically, we record the difference the \proopt\ and \infl\ do not compute exact updated model parameters, the distance (or similarity) of their results to the expected results (computed by \std\ or \iter) are also recorded. \val{edit} Since the machine learning model is composed of a vector of parameters, we measure the distances between any pair of them ($\textbf{v}_1$ and $\textbf{v}_2$) using two different measures, i.e. the L2-norm of the absolute errors $L2-error$ (for logistic regression and linear regression) 
\val{why is this called "error". This *very* overloaded word in our paper...}
and cosine similarity $cosine$ (for logistic regression only), which can be expressed as:
}

\eat{
against the methods to update the model by retraining from the scratch over the training dataset after samples of interest are removed, for which there can be two different implementations in Pytorch. The first one is to use its autograd library to automatically compute the gradient of the objective function iteratively in \gbm\ (called {\em \std} method hereafter) while the other one is to manually derive the formula of the gradient of the objective function of linear regression and logistic regression and program it explicitly (called {\em \iter} method hereafter). 
}

\textbf{Comparison with influence function.}~
As indicated in Section \ref{sec: related_work}, the \emph{influence function} method in \cite{koh2017understanding} \eat{provides a formula, derived and approximated through Taylor expansion, for estimating the change of the model parameters after removing {\em one} training sample, which is}can be extended to handle the removal of multiple training samples by us
(details omitted).
We denote the resulting method {\infl} and compare it against \pro/\proopt\ in the experiments. \eat{Since 
\infl\ can estimate the change of model parameters by using Taylor expansion without going through the gradient-based iteration process, this should be very efficient. However, }We predicted and verified experimentally that this approach produces models with poor validation accuracy since the derivation of \infl\ relies on the approximation of the Taylor expansion, which can be inaccurate. We also notice that the Taylor expansion used in \infl\ involves the computation of the Hessian matrix, which is very expensive for datasets with extremely large feature space. So we did not run \infl\ over \rcvdataset\ and \cifar\ in the experiments; the comparison between \pro/\proopt\ and \infl\ over other datasets is enough to show the benefits of our approaches.

\textbf{Effect of the hyperparameters and feature space size}
As discussed in Section \ref{sec: implementation}, the performance of \pro\ and \proopt\ is influenced by the mini-batch size, the number of iterations and the size of the feature space. To explore the effect of the first two parameters for logistic regression, three different combinations of mini-batch size and number of iterations are used  over \covmultidataset, denoted \covmultidataset\ (small), \covmultidataset\  (large 1) and \covmultidataset\ (large 2) (see Table \ref{Table: exp_summary}). Since the datasets used for logistic regression have different  feature space sizes,  the performance difference with respect to feature space size is also compared. Since there is only one dataset for linear regression, \SGEMMdataset, we extend this dataset by adding 1500 random features for each sample to determine the effect of  feature space size. The extended version of \SGEMMdataset\ is denoted \SGEMMdataset\ (extended) (see Table \ref{Table: exp_summary}). Other hyperparameters used in the experiments are shown in Table \ref{Table: exp_summary}. Note that since erroneous samples exist in the training datasets for the first set of experiments, some values of the learning rate need to be very small to make sure that the convergence can be reached.
\eat{discuss this: where is the explanation for HIGGS (extended)}

\eat{\textbf{Benefit of optimizations in \proopt.}~ 
We also compare the two implementations of our approach, \pro\ and \proopt, in terms of update time and accuracy as a way of confirming the benefit of the optimizations described in Section \ref{sec: implementation}. Due to space limitations we only report
experiments with linear regression using \SGEMMdataset\ and \featureerr.}




\eat{
\val{This was changed from HERE}
In the experiments, we want to solve the following problems before and after training samples of interest are removed from the training dataset: 
\begin{enumerate}
    \item Will the number of iterations in \gbm\ influence the relative time performance between the five approaches (\std, \iter, \pro, \proopt, \infl)?
    \item How far is the approximated model parameters computed by \pro, \proopt\ and \infl\ from the expected model parameters by baseline (i.e. \std\ and \iter)?
    \item Will the approximated model parameters in \pro, \proopt\ and \infl\ influence the prediction performance?
    \item What's the relative time performance of \pro\ and \proopt\ compared to other approaches?
\end{enumerate}
}


In the experiments, we answer the following questions:
\begin{enumerate}
    \item[\textbf{(Q1)}]
\eat{    The optimizations that led to \proopt\ from \pro\ 
    rely on ideas that might be
    applicable elsewhere.} Do the optimizations used in \proopt\ compared to \pro\ lead to a significant improvement in update time
    without sacrificing accuracy when the number of features in the training set is small?
    \item[\textbf{(Q2)}] Do \pro\ and \proopt\ afford significant gains in efficiency compared to \iter?
    \item[\textbf{(Q3)}]
    Are the efficiency gains provided by \pro\ and \proopt\ achieved without
    sacrificing the accuracy of the updated model?
    \item[\textbf{(Q4)}]
    Can we experimentally validate the theoretical analysis in Sections \ref{ssec: accuracy_proof} and  \ref{sec: implementation}, i.e. that the updated model derived through the approximations in \pro\ and \proopt\ is very close to the one obtained by \iter?
    \eat{How different are the updated model parameters computed by \iter\ \eat{    Are there important structural differences between \iter\ }
    and \proopt\ if the two approaches lead to very close validation accuracy? }
    \item[\textbf{(Q5)}]
    Does the influence function approach, \infl, provide a competitive alternative to \pro\ and \proopt?
\eat{    \item[\textbf{(Q6)}] Do the comparisons between \pro, \proopt\  and \iter\ vary among the different targets
    (linear regression, binary or multinomial logistic regression)?
    \scream{Below are new added}}
    \item[\textbf{(Q6)}] Can we experimentally show the effect of the hyperparameters, such as mini-batch size and iteration numbers over the performance gains of \pro\ and \proopt?
    \item[\textbf{(Q7)}] Can we experimentally show the effect of the feature space size (i.e. the number of model parameters, which equals to the feature number times the number of classes for multi-nomial logistic regression)?
    \item[\textbf{(Q8)}] What is the memory overhead of \pro\ and \proopt\ for caching the provenance information?
\end{enumerate}

\begin{figure}[b]
\centering
\begin{subfigure}{0.5\textwidth}
\centering
        \includegraphics[width=0.65\textwidth, height=0.5\textwidth]{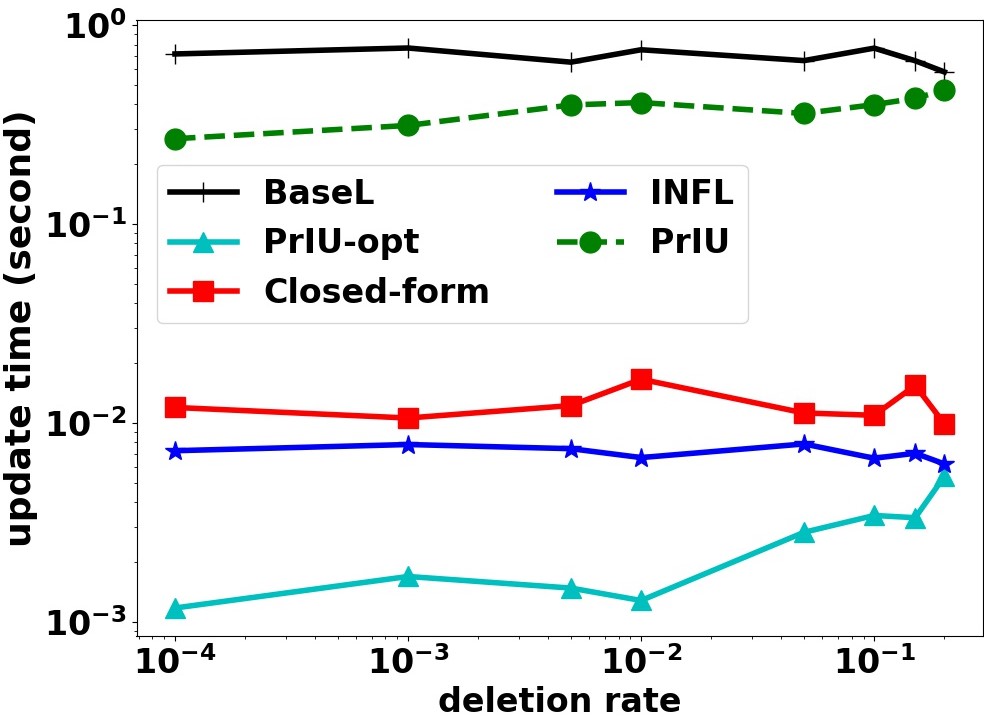}
        \vspace{-1mm}
        \caption{\SGEMMdataset\ (original)}
    \label{fig: pro_VS_proopt_tradeoff_less}
    \vspace{2mm}
    \end{subfigure}
    \begin{subfigure}{0.5\textwidth}
    \centering
        \includegraphics[width=0.7\textwidth, height=0.5\textwidth]{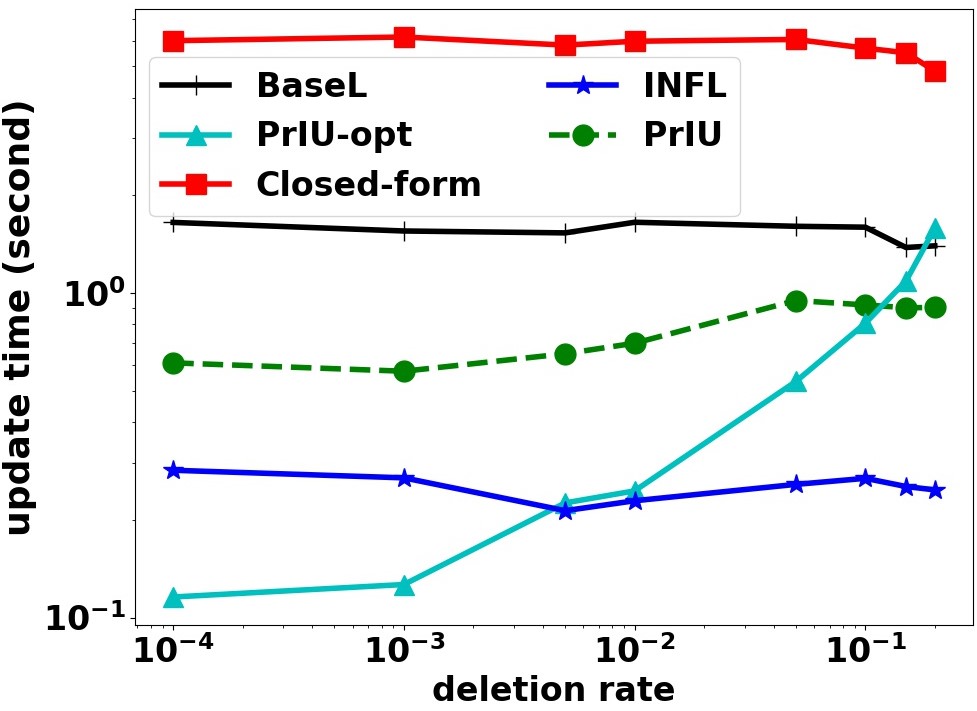}
        \vspace{-1mm}
        \caption{\SGEMMdataset\ (extended)}
    \label{fig: pro_VS_proopt_tradeoff_more}
    \end{subfigure}
    \vspace{-3mm}
    \caption{Update time using linear regression}
    \label{fig: pro_VS_proopt_linear_regression}
\end{figure}


\eat{
 After the standard cleaning process, i.e. the removal of those errors (we know where the errors are and it's pretty easy for us to remove errors), the users can use one of the five approaches (\std, \iter, \pro, \proopt\ and \infl), which are compared against each other in terms of the update time and improvement of predication performance, }

 \eat{
 We summarized the notations that will be used in Table \ref{Table: notation_summary}.
\val{I don't think we need this anymore}
}

\eat{
\begin{figure*}[!htb]
\captionsetup[subfigure]{width=0.9\textwidth, height=0.25\textwidth}
     \centering
        \begin{subfigure}{0.30\textwidth}
    \hspace*{-0.4cm}
        \raisebox{-\height}{\includegraphics[height = 0.7\textwidth, width=1.2\textwidth]{Figures/sgemm_change_data_value_time.jpg}}
        \caption{Update time comparison}
        \label{fig:sge_time}
    \end{subfigure}
    \hfill
    \begin{subfigure}{0.30\textwidth}
    \hspace*{-0.8cm}
        \raisebox{-\height}{\includegraphics[height = 0.7\textwidth, width=1.2\textwidth]{Figures/sgemm_change_data_value_accuracy.jpg}}
        \caption{Accuracy comparison}     \label{fig:sge_accuracy}
    \end{subfigure}
    \hfill
    \begin{subfigure}{0.30\textwidth}
    \hspace*{-0.8cm}
        \raisebox{-\height}{\includegraphics[height = 0.7\textwidth, width=1.2\textwidth]{Figures/sgemm_change_data_value_distance.jpg}}
        \caption{Distance to model parameters obtained by \iter}
        \label{fig:sge_distance}
    \end{subfigure}
    \caption{Experimental results using \SGEMMdataset\ in the \featureerr\ scenario with linear regression}\label{fig:sge_flipping_errors}
\end{figure*}
\begin{figure*}
\captionsetup[subfigure]{width=0.9\textwidth, height=0.25\textwidth}
     \centering
         \begin{subfigure}{0.30\textwidth}
    \hspace*{-0.8cm}
        \raisebox{-\height}{\includegraphics[height = 0.7\textwidth, width=1.2\textwidth]{Figures/skin_add_noise_time.jpg}}
        \caption{Update time comparison}
        \label{fig:skin_time}
    \end{subfigure}
    \hfill
    \begin{subfigure}{0.30\textwidth}
    \hspace*{-0.4cm}
        \raisebox{-\height}{\includegraphics[height = 0.7\textwidth, width=1.2\textwidth]{Figures/skin_add_noise_accuracy.jpg}}
        \caption{\eat{Validation } Accuracy comparison}
        \label{fig:skin_accuracy}
    \end{subfigure}
    \hfill
\begin{subfigure}{0.30\textwidth}
    \hspace*{-0.8cm}
        \raisebox{-\height}{\includegraphics[height = 0.7\textwidth, width=1.2\textwidth]{Figures/skin_add_noise_distance.jpg}}
        \caption{Distance and similarity to the model parameters obtained by \iter}
        \label{fig:skin_distance}
    \end{subfigure}
    \caption{Experimental results using \skindataset\ in the \newerr\ scenario with binary logistic regression}\label{fig:skin_flipping_errors}
\end{figure*}}

\begin{figure*}[!htb]
\captionsetup[subfigure]{width=1\textwidth}
     \begin{subfigure}{0.33\textwidth}
        \raisebox{-\height}{\includegraphics[height = 0.8\textwidth, width=1\textwidth]{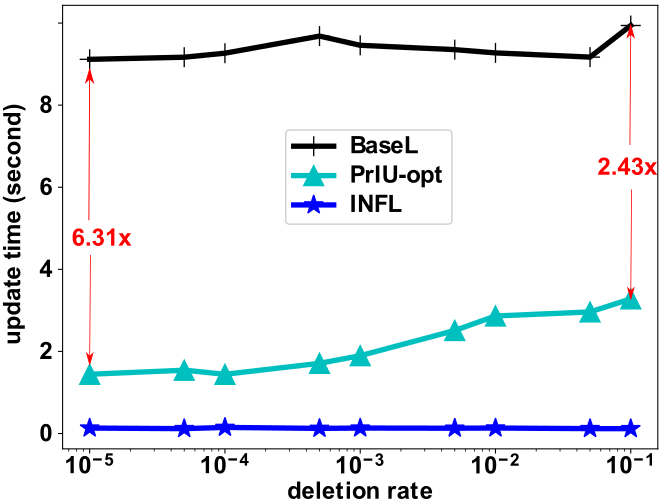}}
        \caption{\covmultidataset\ (small)}
        \label{fig:cov_small_time}
    \end{subfigure}
    \hfill
    \begin{subfigure}{0.33\textwidth}
        \raisebox{-\height}{\includegraphics[height = 0.8\textwidth, width=1\textwidth]{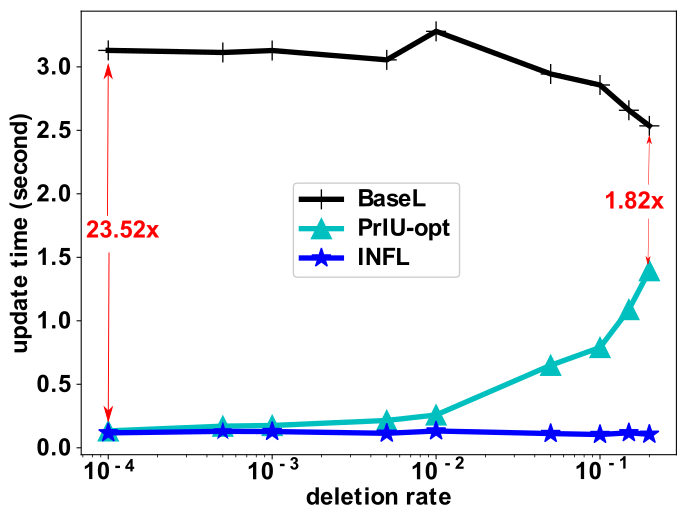}}
        \caption{\covmultidataset\ (large 1)}
        \label{fig:cov_large_time}
    \end{subfigure}
    \hfill
    \begin{subfigure}{0.33\textwidth}
        \raisebox{-\height}{\includegraphics[height = 0.8\textwidth, width=1\textwidth]{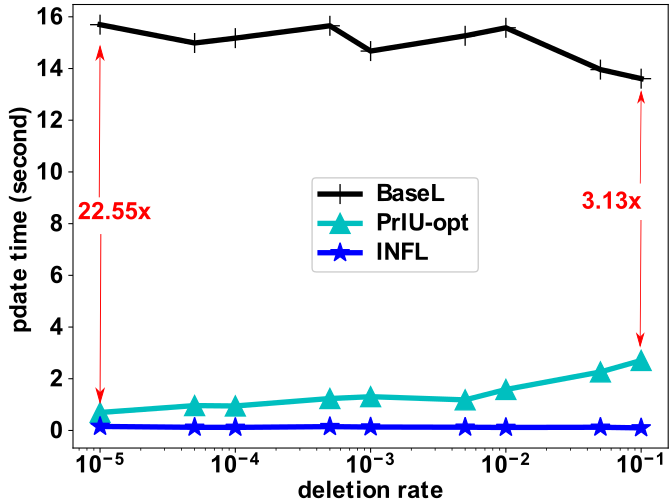}}
        \caption{\covmultidataset\ (large 2)}     \label{fig:cov_large_time_2}
    \end{subfigure}
    \vspace{-4mm}
    \caption{Update time using logistic regression over \covmultidataset\ and the hyperparameters from Table \ref{Table: exp_summary}}\label{fig:Update_time_vary_cov}
    \vspace{-4mm}
\end{figure*}

\begin{figure*}[!htb]
\captionsetup[subfigure]{width=1\textwidth}
     \centering
     \begin{subfigure}{0.33\textwidth}
        \raisebox{-\height}{\includegraphics[height = 0.8\textwidth, width=1\textwidth]{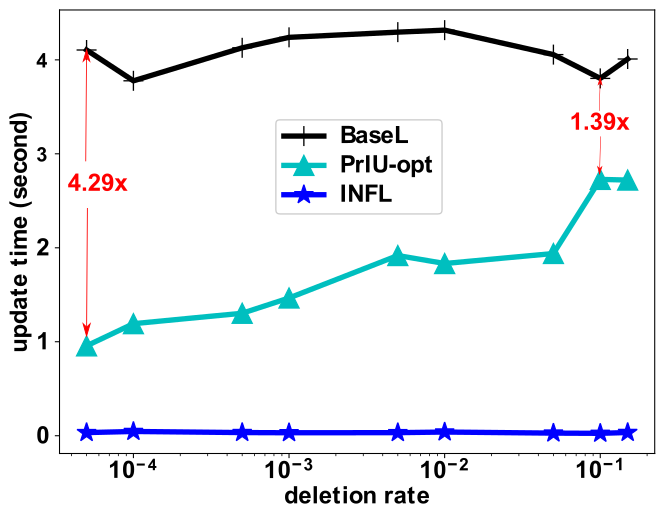}}
        \caption{\heartbeatdataset}
        \label{fig:heart_time}
    \end{subfigure}
    \hfill
    \begin{subfigure}{0.33\textwidth}
        \raisebox{-\height}{\includegraphics[height = 0.8\textwidth, width=1\textwidth]{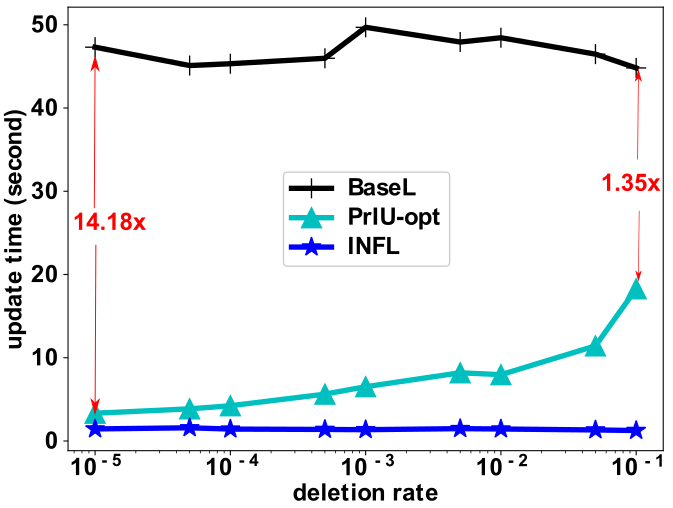}}
        \caption{\higgsdataset}     \label{fig:higgs_time}
    \end{subfigure}
    \hfill
    \centering
     \begin{subfigure}{0.33\textwidth}
        \raisebox{-\height}{\includegraphics[height = 0.8\textwidth, width=1\textwidth]{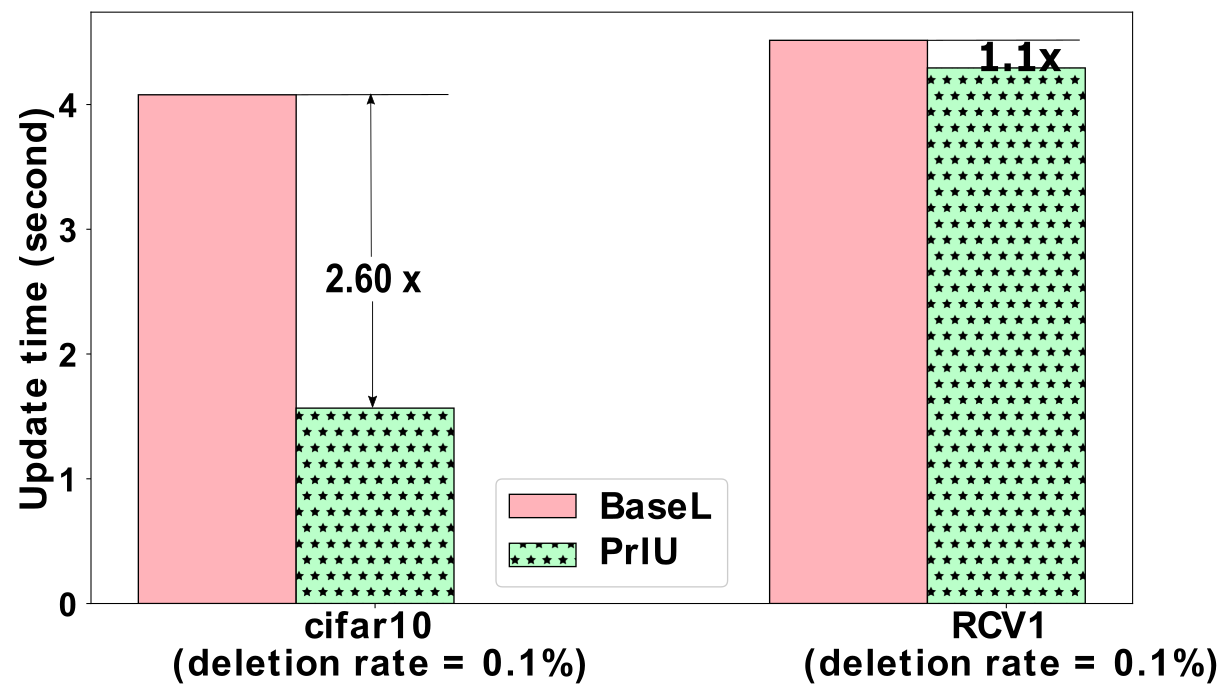}}
        \caption{\rcvdataset\ and \cifar}
        \label{fig:rcv_time}
    \end{subfigure}
    \vspace{-4mm}
    \caption{Update time using logistic regression}\label{fig:Update_time_small_feature}
\end{figure*}

\eat{
\val{I will get rid of this because has changed and it is repeated anyway above}

\textbf{Exp1} This experiment is designed for linear regression to compare \pro\ and \proopt\ against the baseline, in which the model updated time, distance to the results computed by baseline and the test accuracy are recorded under the varied deletion rate (0-0.2). The widely used accuracy metrics for linear regression is Mean Square Errors (MSE), which is measured for all the five approaches in all the three error scenarios. The experiments are conducted over \SGEMMdataset. \eat{how to talk about batch size} We also varied the size of each mini-batch in \gbm\ to see its effect over the performance for the five approaches. Proper hyper-parameters used in \gbm\ (i.e. learning rate $\eta_t$ and regularization parameter $\lambda$ in Equation \ref{eq: mini_sgd_linear_regression}) are chosen such that \gbm\ reaches convergence ultimately.

\textbf{Exp2} In \gbm, the time to produce or update the model parameters is proportional to the number of iterations (except \infl) and thus its effect over the five approaches is explored in this experiment, which is varied from 0 to 10000. We report the time to update the logistic regression model over \skindataset\ with the five approaches after the data cleaning process, which should be independent from the choice of three error scenarios theoretically. So the experiments were conducted in \fliperr\ with fixed deletion rate.

\textbf{Exp3} The goal of this experiment is to experimentally compare the five approaches in binary logistic regression and multinomial logistic regression over \skindataset\ and \heartbeatdataset\ under varied deletion rate (0-0.25) in both the \fliperr\ and \featureerr. The hyper-parameters are fixed for the experiments over the same datasets, in which the maximal number of iterations is set as 10000 to guarantee the termination of the \gbm.
}

\eat{
\begin{table}
\centering
\small
\caption{Summary of notations}
\begin{tabular}[!h]{|>{\centering\arraybackslash}p{2cm}|>{\centering\arraybackslash}p{5cm}|} \hline
Notation & Meaning\\ \hline
$acc\%$ & accuracy over validation data \\ \hline
$MSE$ & Mean Square Errors\\ \hline
$L2-error$& L2-norm of the absolute errors between two vectors \\ \hline
$cosine$& cosine similarity between two vectors \\ \hline
\end{tabular}
\label{Table: notation_summary}
\end{table}
}

\subsection{Experimental results}

\begin{table}
\centering
\small
\caption{Summary of datasets}
\begin{tabular}[!h]{|>{\centering\arraybackslash}p{1.5cm}|>{\centering\arraybackslash}p{1.5cm}|>{\centering\arraybackslash}p{1.2cm}|>{\centering\arraybackslash}p{2cm}|} \hline
name & \# features  & \# classes & \# samples\\ \hline
\SGEMMdataset & 18 &  &241,600\\ \hline
\makecell{\covmultidataset}& 54 & 7&581,012\\ \hline
\higgsdataset & 28 & 2 & 11,000,000\\\hline
\rcvdataset &47,236 & 2 & 23,149 \\\hline
\heartbeatdataset& 188 &7 &87,553 \\ \hline
\cifar& 3072 & 10 & 50,000 \\ \hline
\end{tabular}
\label{Table: datasets_summary}
\end{table}

\begin{table}
\centering
\small
\caption{Summary of hyperparameters used in the experiments}
\begin{tabular}[!h]{|>{\centering\arraybackslash}p{2.5cm}|>{\centering\arraybackslash}p{0.8cm}|>{\centering\arraybackslash}p{1.3cm}|>{\centering\arraybackslash}p{2.3cm}|} \hline
name & mini-batch size & \# of iterations & other hyper-parameters ($\eta, \lambda$) \\ \hline
\SGEMMdataset\ (original) & 200 & 2000 & ($5 \times 10^{-3}, 0.1$) \\ \hline
\SGEMMdataset\ (extended) & 200 & 2000 & ($5 \times 10^{-3}, 0.1$) \\ \hline
\makecell{\covmultidataset\ (small)}& 200 & 10000 & ($1\times 10^{-4}, 0.001$) \\ \hline
\makecell {\covmultidataset\ (large 1)}& 10000 & 500 & ($1\times 10^{-4}, 0.001$) \\ \hline
\makecell {\covmultidataset\ (large 2)}& 10000 & 3000 & ($1 \times 10^{-4}, 0.001$) \\ \hline
\higgsdataset & 2000 & 20000 & ($1\times 10^{-5}, 0.01$) \\\hline
\makecell {\covmultidataset\ (extended)}& 1000 & 40000 & ($1 \times 10^{-4}, 0.001$) \\ \hline
\higgsdataset & 2000 & 20000 & ($1\times 10^{-5}, 0.01$) \\\hline
\makecell{\higgsdataset\ (extended)} & 2000 & 60000 & ($1 \times 10^{-5}, 0.01$) \\\hline
\heartbeatdataset & 500 & 5000 & ($1 \times 10^{-5}, 0.1$) \\ \hline
\makecell{\heartbeatdataset\\ (extended)} & 500 & 40000 & ($1 \times 10^{-5}, 0.1$) \\ \hline
\rcvdataset & 500 & 3000 & ($1 \times 10^{-6}, 0.5$) \\\hline
\cifar & 500 & 1000 & ($0.001, 0.1$) \\\hline
\end{tabular}
\label{Table: exp_summary}
\end{table}

We report the results of our experiments in this subsection. 
\eat{First, the experimental results for linear regression are shown in Figure 
comparison between \pro\ and \proopt\ (Figure \ref{fig: pro_VS_proopt_tradeoff}),  followed by comparisons between \proopt\ and other approaches (Figures \ref{fig:sge_flipping_errors}-\ref{fig:heart_feature_errors} for the \SGEMMdataset, \skindataset\ and \heartbeatdataset\ datasets respectively). Based on these results, we answer the questions 
asked above.}  

\begin{table}
\centering
\small
\caption{Memory consumption summary (GB)}
\begin{tabular}[!h]{|>{\centering\arraybackslash}p{3cm}|>{\centering\arraybackslash}p{1.2cm}|>{\centering\arraybackslash}p{1cm}|>{\centering\arraybackslash}p{1.5cm}|} \hline
Dataset & \iter\ & \pro\ & \proopt\ \\ \hline
\covmultidataset\ (small) &0.71  & 4.30 & 4.34\\ \hline
\covmultidataset\ (large 1) & 0.87 & 4.02 & 3.49 \\ \hline
\covmultidataset\ (large 2) & 1.34 &  21.0 & 17.4 \\ \hline
\higgsdataset & 5.09 & 8.40 & 8.40\\ \hline
\SGEMMdataset\ (original) & 2.43 & 2.45 & 2.48 \\\hline
\SGEMMdataset\ (extended) & 4.94 & 6.66 & 5.74\\\hline
\heartbeatdataset\ & 0.46 & 6.01 & 5.69 \\ \hline
\rcvdataset\ & 0.28 & 0.3 & - \\ \hline
\cifar\ & 0.79 & 26.59& - \\ \hline
\end{tabular}
\label{Table: mem_res}
\end{table}

\textbf{(Q1)}~~\eat{Q1 not yet modified}
\eat{We should discuss this}
\eat{The optimizations that led to \proopt\ from \pro\ rely on some ideas that might be
    applicable also elsewhere. Do these optimizations lead to a significant improvement in update time
    without sacrificing accuracy? }
    \eat{
    We present the trade-off between \pro\ and \proopt\ in Figure \ref{fig: pro_VS_proopt_tradeoff_more}, which shows the update time for linear regression models using \SGEMMdataset\ (origin) and \SGEMMdataset\ (extended) respectively. }
    We compare the update time of \pro\ and \proopt\ for linear regression using \SGEMMdataset\ (extended)  in Figure \ref{fig: pro_VS_proopt_tradeoff_more}.
    The results show that
    the update time of \proopt\ is significantly better than that of \pro\ except when the deletion rate is approaching 20\%. We also see from 
    Table \ref{Table: accuracy_comparison} that
    \proopt\ and \iter\ yield models that 
    have exactly the same validation accuracy.
    Therefore, although \proopt\ uses additional approximations for optimization, they do not hurt the predictive power of the updated models.
    This shows that the optimization strategies in Sections
    \ref{sec: opt_linear_regression} and \ref{ssec: proopt_logistic_regression}
    are 
    worth the design and implementation effort. 
    Consequently, we will only compare \proopt\ 
    against other approaches except for
    \cifar\ and \rcvdataset\ which
    have extremely large feature spaces.

\textbf{(Q2)}
\eat{Does the \proopt\ incremental approach afford significant gains in efficiency compared to the retraining \iter\ approach?}
         Figures \ref{fig: pro_VS_proopt_tradeoff_less}-\ref{fig: pro_VS_proopt_tradeoff_more} compare the update time in \iter\ and \proopt\ using linear regression (ignore the \infl\ lines for the moment), while Figures \ref{fig:Update_time_vary_cov}-\ref{fig:Update_time_small_feature} 
         \eat{The captions of Figure 4 are unclear}
         show the same results for logistic regression for single model update operation. Observe that for both linear 
         and logistic regression, when the deletion rate is small (<0.01), \proopt\ can achieve significant speed-up compared to \iter:
         up to two orders of magnitude for linear regression and up to around 23x for logistic regression (for \covmultidataset\ (large 1) and \covmultidataset\ (large 2) with low deletion rate). Even when the feature spaces are extremely large, with deletion rate 0.1\%, there is around a 2.6x speed-up for dense datasets (\cifar\ in Figure \ref{fig:rcv_time}) and only 10\% for sparse datasets (\rcvdataset\ in Figure \ref{fig:rcv_time}), respectively (similar speed-ups were observed for other small deletion rates). The former shows the effectiveness of the optimization strategies in \pro\ over dense datasets with a large feature space while the latter is due to the fact that the optimization strategies for dense datasets were not applied over the sparse ones. Notice that for linear regression, \proopt\ is always faster than \closeform. Figure \ref{fig: data_infl_exp} shows the results of repetitive model updates; \proopt\ achieves an order of magnitude speed-up for  \higgsdataset\ (extended). 
     
\eat{     Not surprisingly, the update time of \proopt\ slowly increases while that of \iter\ keeps stable as the deletion rate increases (more dirty samples are deleted). }
     

\textbf{(Q3)}
\eat{Are the efficiency gains provided by the \proopt\  incremental approach achieved without
    sacrificing the accuracy of the updated model?}
    Table \ref{Table: accuracy_comparison} (validation accuracy for \pro\ and \proopt\ column) compares the quality of the models obtained by \pro/\proopt\ 
    with that of the models obtained by \iter. For these results we chose
    the highest deletion rate in the experiments, i.e. 20\%.
    For all the experiments, the validation accuracy (MSE in the case of linear regression)
    of the updated models obtained by \pro\ and \proopt\ \emph{match exactly} the accuracy of the ones obtained by \iter. Combined with the answer to \textbf{Q2}, we can conclude that \emph{\proopt\ speeds up the model update time by up to two orders of magnitude without sacrificing any validation accuracy}.

\textbf{(Q4)}
\eat{Are there important structural differences between the baseline updated model and the incrementally updated model?}         We investigate why \proopt\ has the same validation accuracy as \iter\ by measuring the distance and similarity between the updated models computed by \proopt\ and \iter.  The results are presented in Table \ref{Table: accuracy_comparison} 
        (again, ignore the columns for \infl). The results indicate that the updated model parameters computed by \proopt\ are very close to the ones obtained by \iter\ since the cosine similarity is almost 1 (see the ``similarity'' column) while the $\mbox{L2-dist}$ is very small (see the ``distance'' column).
        An even finer-grained analysis, comparing the signs and magnitude of each coordinate in the model parameters updated by  \proopt\ and \iter\
        shows that there is no sign flipping and only negligible magnitude changes for \proopt\ compared to \iter\ when the deletion rate is small. Even with a large deletion rate of 20\% in \higgsdataset, only 2 out of 58 coordinates flip their signs with small magnitude change.

\textbf{(Q5)}
\eat{Does the influence function approach provide a competitive alternative to our
    \proopt\ approach based on provenance?}
    The model update time of \infl\ is also included in Figures \ref{fig:Update_time_vary_cov} and \ref{fig:Update_time_small_feature}. 
     Note that it can be up to one order of magnitude better than \proopt, which is expected since using \infl\ to update the model parameters does not require an iterative computation. However, there is a significant drop in validation accuracy of the updated model derived by \infl\ compared to \iter\ and \proopt\ (see Table \ref{Table: accuracy_comparison}), which is due to the significantly higher $\mbox{L2-dist}$ (see the ``distance'' column) and lower cosine similarity (see the ``similarity'' column) of its updated model compared to the model derived by \iter. We conclude that \pro\ and \proopt\ produce much better models than \infl\ yet can still achieve comparable speed-ups.

\eat{    In contrast, \infl\ incurs much higher $L2-error$ than the other approaches according to Figure \ref{fig:sge_distance} although its response time is pretty short, which thus demonstrates that \pro\ and \proopt\ are more tolerant to the errors than \infl\ without incurring significantly more time overhead.
    
    In terms of \infl,  For \pro\ and \proopt, the performance difference demonstrates that the repetitive matrix multiplications in \gbm\ in \pro\ dominate the overhead in the iterations after the \cut\ threshold, which thus shows the feasibility of the approximation in \proopt\ for speed-ups. 
    
    On contrast, \infl\ fails to provide accurate updated model parameters, which leads to the negative influence over the validation accuracy (i.e. $acc\%$, see Figure \ref{fig:skin_accuracy}). In Figure \ref{fig:skin_accuracy}, as more and more samples are incorrectly labeled in the original training datasets, the validation accuracy ($acc\%$) drops drastically from 80\% to almost 20\% (see the pink line in Figure \ref{fig:skin_accuracy}). After the cleaning process, \std, \iter, \pro\ and \proopt\ can boost $acc\%$ back to about 80\% while \infl\ still mislabeled 10\% more samples in the validation dataset. 

 \proopt\ and is surprisingly on a par with the update time of \infl.}
 
 
\eat{\textbf{(Q6)}\scream{possible to be deleted}
\scream{I do not understand it let's talk about it}
\eat{Do the comparisons between \proopt\  and \iter\ vary among the different targets
    (linear regression, binary or multinomial logistic regression)?
    }
    By comparing the results across different datasets, we observe that although \pro\ and \proopt\ can always achieve orders of magnitude improvement for model update time, it can cause significant delay compared to \infl\ in \heartbeatdataset\ (see Figure \ref{fig:heart_time}), which is due to the much higher number of model parameters used in multinomial logistic regression for \heartbeatdataset\ (i.e. 188 features $\times$ 7 classes = 1316) compared to the other two datasets. This also agrees with the time complexity analysis in Section \ref{ssec: proopt_logistic_regression}, i.e. more model parameters can result in worse time performance in \proopt.
}

\textbf{(Q6)} \textbf{Effect of mini-batch size}. The effect of mini-batch size is seen by comparing \covmultidataset\ (large 1) and \covmultidataset\ (small). One observation is that with larger mini-batch size, the maximal speed-up of \proopt\ is around 23x, while with the smaller mini-batch size it is only about 6x, see Figures \ref{fig:cov_small_time} and \ref{fig:cov_large_time}
This confirms the analysis in Section \ref{sec: implementation}.
In the second set of experiments, we used a small mini-batch size for \covmultidataset\ (1000) and \heartbeatdataset\ (500), resulting in only 4.62x and 3.2x speed-ups by \proopt, respectively (see Figure \ref{fig: data_infl_exp}). 
\begin{figure}[h]
    \centering
    \includegraphics[width=0.5\textwidth, height=0.15\textwidth]{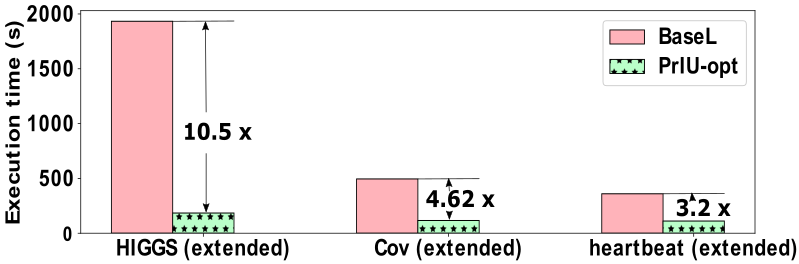}
    \vspace{-6mm}
    \caption{The execution time of repetitively removing 10 different subsets}
    \label{fig: data_infl_exp}
\end{figure}

\textbf{Effect of number of iterations.}
A comparison of \covmultidataset\ (large 1) and \covmultidataset\ (large 2), which have the same mini-batch size but a different number of iterations, can be found in Figures \ref{fig:cov_large_time} and  \ref{fig:cov_large_time_2}. 
We observe that no matter how many iterations the program runs for, at the same deletion rate
\proopt\ achieves a similar speed-up against \iter.
 For example, we have up to around 23x speed-up for small deletion rates and smaller speed-up for higher deletion rates (note the difference in y-axis scale between Figures \ref{fig:cov_large_time} and  \ref{fig:cov_large_time_2}). However, increasing the number of iterations  increases the amount of provenance information cached for \proopt, thus requiring more memory. As Table \ref{Table: mem_res} indicates, since there are 6x iterations for \covmultidataset\ (large 2) compared to \covmultidataset\ (large 1), roughly 6x memory is needed,  confirming the analysis in Section \ref{sec: implementation}. 
 However for \covmultidataset, with a large mini-batch size and 500 iterations, convergence is achieved and we do not observe a difference in validation accuracy between \covmultidataset\ (large 1) and \covmultidataset\ (large 2).
 Note that according to \cite{darzentas1984problem, polyak1992acceleration, lecun2012efficient}, the theoretical optimal number of passes for logistic regression using \minisgd\ (one pass equals to the total number of iterations divided by the number of iterations used for going through the full training set) is quite small. 
 \eat{one and one rule of thumb for the number of passes \cite{lecun2012efficient} is $\left\lceil 10^6/n \right\rceil$ ($n$ is the size of training dataset). So the number of passes used in practice is typically small, }
 However, for \covmultidataset\ (large 2) the number of passes over the full training set is quite large ($3000 /(581012/10000) \approx 60$). Such a high memory usage should therefore not arise in practice.

\textbf{(Q7)}
\eat{In terms of the update time for datasets with extremely large feature space, Figure \ref{fig:rcv_time} shows the update times of \rcvdataset\ and \cifar\ with a deletion rate of 0.1\%; similar speed-ups were observed for other small deletion rates. Note that \pro\ only achieves a speed-up of around 10\% for \rcvdataset, due to the fact that the optimization strategies for dense datasets were not applied.  In contrast, for \cifar\ the optimization strategies in \pro\ still work, resulting in about a 2.6x speed-up despite the significant memory usage (see Table \ref{Table: mem_res}).}
\eat{such that the idea of reducing dimensions for the intermediate result fails to provide speed-ups (the decomposed matrices for the sparse matrix will become dense matrix) and thus the only optimizations that we can do it to linearize the non-linear operations, which ends up with the marginal speed-ups.
\scream{down from 3X :)}}
In terms of the update time for experiments over datasets with a comparable mini-batch size but with different feature space sizes (\heartbeatdataset\ VS \higgsdataset),
we notice that a larger number of model parameters leads to poorer performance by \proopt\ (compare Figures \ref{fig:heart_time} and \ref{fig:higgs_time}). This is also validated through a second set of experiments in which \higgsdataset\ (extended) achieves significant speed-up compared to \heartbeatdataset\ (extended) (see Figure \ref{fig: data_infl_exp}). This confirms the analysis in Section \ref{sec: implementation}, where we show how the asymptotic execution time of \pro\ and \proopt\ depends on the number of the model parameters.

\textbf{(Q8)}
Table \ref{Table: mem_res} shows that in most cases, both \pro\ and \proopt\ only consume no more than 5x memory compared to \iter\ (ignore the number for \covmultidataset\ (large 2) since, as discussed earlier, it is a rare case in practice). However, with a large number of model parameters (like \cifar\ and \heartbeatdataset) there is over 10x memory consumption for \pro\ and \proopt. How to decrease the memory usage for dense datasets with large feature space is left for future work.

    
\begin{table}
\centering
\small
\caption{Accuracy and similarity comparison between \proopt\ and \infl\ with deletion rate 0.2}
\begin{tabular}[!h]{|>{\centering\arraybackslash}p{1.4cm}|>{\centering\arraybackslash}p{1.3cm}|>{\centering\arraybackslash}p{0.8cm}|>{\centering\arraybackslash}p{0.7cm}|>{\centering\arraybackslash}p{0.6cm}|>{\centering\arraybackslash}p{0.7cm}|>{\centering\arraybackslash}p{0.8cm}|} \hline
\multirow{2}{*}{Dataset} & \multicolumn{2}{c|}{\makecell{Validation \\ accuracy}} & \multicolumn{2}{c|}{distance}& \multicolumn{2}{c|}{similarity} \\ \hhline{~------}
 &\iter\ = \proopt & \infl & \proopt &\infl & \proopt &\infl\\ \hline
\makecell{\covmultidataset\ \\ (small)}& 48.76\% & 36.93\% & 0.184& 1.287 & 0.992& 0.624\\ \hline %
\makecell{\covmultidataset\ \\ (large 1)} & 48.76\% &37.99\%  & 0.0016 &1.047 & 1.0 & 0.738 \\ \hline
\makecell{\covmultidataset\ \\ (large 2)} & 48.76\% &46.38\%  & 0.0003 &1.430 & 1.0 & 0.471 \\ \hline
\higgsdataset & 52.99\% & 47.99\% & 0.0004 & 0.006 & 0.979& -0.040\\ \hline
\heartbeatdataset & 82.78\% &74.34\% & 0.0016& 0.583& 1.00 & 0.143\\ \hline
\SGEMMdataset\ (origin) &0.001 & 0.002 & 0.027 & 0.140 & - & - \\\hline
\SGEMMdataset\ (extended) &0.001 & 0.002 & 0.029 & 0.141 & - & - \\\hline
\end{tabular}
\label{Table: accuracy_comparison}
\end{table}

\eat{
\val{In any case we have to change the list below}

\textbf{Exp1} The experimental results over \SGEMMdataset\ in \fliperr\ are presented in Figure \ref{fig:sge_distance} and Figure \ref{fig:sge_time}. Similar results happen in the other two error scenarios and thus are not included here due to the space limit. Figure \ref{fig:sge_distance} indicates that the absolute distance to the updated model parameters by \iter\ (i.e. $L2-error$) in \proopt\ is much larger than that in \pro\ due to the approximation in \proopt, which, however, does not harm the prediction performance (i.e. Mean Square Errors over validation dataset) and can achieve at least one order of magnitude speed-ups compared to \pro\ and \iter\ and at least three orders of magnitude speed-ups compared to \std\ as Figure \ref{fig:sge_time} shows. In contrast, \infl\ incurs much higher $L2-error$ than the other approaches according to Figure \ref{fig:sge_distance} although its response time is pretty short, which thus demonstrates that \pro\ and \proopt\ are more tolerant to the errors than \infl\ without incurring significantly more time overhead. Plus, it is worth noting that the use of \iter\ can lead to significant speed-ups compared to \std, which indicates that the repetitive gradient derivation in \std\ can slow down the computation process in \gbm.

\textbf{Exp2} The experimental results over \skindataset\ in \fliperr\ using binary logistic regression with varied number of iterations are shown in Figure \ref{fig:skin_time_varied_epoch}, which indicates that the time to update the model parameters in \std, \iter\ and \pro\ increases rapidly as the increasing of the number of iterations. In contrast, the update time in \proopt\ and \infl\ keeps steady (less than 0.01 seconds) even with large number of iterations. In terms of \infl, the result is expected since its computation to update the model parameters is free of the iterative computation. For \pro\ and \proopt, the performance difference demonstrates that the repetitive matrix multiplications in \gbm\ in \pro\ dominate the overhead in the iterations after the \cut\ threshold, which thus shows the feasibility of the approximation in \proopt\ for speed-ups.

\textbf{Exp3} Due to the space limit, we only report one set of experimental results from logistic regression and multinomial logistic regression respectively. Figure \ref{fig:skin_flipping_errors} shows the experimental results over \skindataset\ in \fliperr\ using binary logistic regression (similar results happen in \featureerr\ and thus not shown here). Despite of the approximation in \pro\ and \proopt, the resulting updated model parameters are still very close to the ones derived from \std\ and \iter\ (the cosine similarity is almost 1 while the $L2-error$ is very small, see Figure \ref{fig:skin_distance}) even when the deletion rate is very high. On contrast, \infl\ fails to provide accurate updated model parameters, which leads to the negative influence over the validation accuracy (i.e. $acc\%$, see Figure \ref{fig:skin_accuracy}). In Figure \ref{fig:skin_accuracy}, as more and more samples are incorrectly labeled in the original training datasets, the validation accuracy ($acc\%$) drops drastically from 80\% to almost 20\% (see the pink line in Figure \ref{fig:skin_accuracy}). After the cleaning process, \std, \iter, \pro\ and \proopt\ can boost $acc\%$ back to about 80\% while \infl\ still mislabeled 10\% more samples in the validation dataset. 

In terms of the time to update the model parameters, similar to \textbf{Exp1}, \iter\ is still faster than \std\ but the performance gap between them is much smaller than \textbf{Exp1} due to the existence of the non-linear operations in logistic regressions, which, however, is still about one order of magnitude slower than \pro\. The effect of the optimization from \proopt\ is more significant, which leads to one order of magnitude speed-ups compared to \proopt\ and is surprisingly on a par with the update time of \infl.

We also present the experimental results over \heartbeatdataset\ in Figure \ref{fig:heart_feature_errors} under \featureerr, which shows similar trends between the five approaches in terms of similarity measure, $acc\%$ and update time compared to the experimental results over \skindataset. Notice that according to Figure \ref{fig:heart_time}, the gap of update time between \pro\ and \proopt\ is smaller than that over \skindataset. The reason is that due to the significantly higher number of model parameters in \heartbeatdataset\ (i.e. 188 features $\times$ 7 classes = 1316) than that in \skindataset\ (i.e. 3), the overhead of computing the term $C^{(t)'}$ and $D^{(t)'}$ in Equation \ref{eq: implement_matrix_form} becomes the major overhead, which is a necessary step for both \pro\ and \rpoopt\ and is proportional to the size of the erroneous data to be removed, leading to the slow grow-up of update time in \proopt. 
}

\textbf{Discussion.} Extensive experiments using linear regression and logistic regression over the datasets above show the feasibility of our approach. \pro\ and \proopt\
can achieve up to two orders of magnitude speed-up for incrementally updating model parameters compared to the baseline, especially for large datasets with a small feature space.  This is done without sacrificing the correctness of the results 
(measured by similarity to the updated model parameters by \iter) 
and the prediction performance.
The experiments also show that the optimizations used in \proopt\ give significant performance gains compared to \pro\ with only a small loss of accuracy. 
We observe that 
\infl\ is not a good solution because of the poor quality of models produced when more than one sample is removed. 

\textbf{Limitations.} Our experiments also show the limitations of our solutions. They concern the memory footprint when the feature space or the number of iterations is large (anticipated by several analyses in Section \ref{sec: implementation}) and the marginal speed-up for
large sparse datasets 
(See Section \ref{sec: priu_logistic_regression}).
We shall endeavor to approach these limitations in future work. 





\section{Conclusions}\label{sec: conclusion}
In this paper, we build a connection between data provenance and incremental machine learning model updates, which is useful in many machine learning and data science applications.
Building on an extension of the provenance semiring framework~\cite{GreenT17} to include basic linear algebra operations~\cite{yan2016fine},
we capture provenance in the training phase of linear regression and (binary and multinomial) logistic regression 
and address non-linear operations in logistic regression using piecewise linear interpolation. We prove that linearization does not harm convergence of the updated parameters and similarity to the expected results. 
Based on these theoretical results, we construct solutions, \pro\ and \proopt, which are optimized to reduce the time and space overhead. 
The benefits of our solutions are experimentally verified through extensive evaluations over various datasets.  
Looking forward, we believe that these solutions for simpler machine learning models are likely to extend to \emph{generalized additive models}~\cite{hastie1986gam} and they also pave the 
way toward solutions for more complicated machine learning models such as deep neural networks.

\begin{acks}
This material is based upon work that is in part supported by the Defense Advanced Research Projects Agency (DARPA) under Contract No. HR001117C0047.
Partial support was provided by NSF Awards 1547360 and 1733794. Tannen's work at the National University of Singapore was supported in part by the Kwan Im Thong Hood Cho Temple/Avalokite\'{s}vara.
\end{acks}

\newpage
\bibliographystyle{ACM-Reference-Format}
\bibliography{vldb_sample}

\balance
\clearpage
\begin{appendix}
\onecolumn

\section{Appendix}

\subsection{Notations}
\subsubsection{Notations for objective functions, gradients and update rule}

The objective functions for linear regression, binary logistic regression and multinomial logistic regression are shown as below (They are Equation \eqref{eq: objective_function_linear_regression}-\eqref{eq: objective_function_multi_logistic_regression} in the paper):

\begin{align}
\begin{split}
h(\textbf{w})& = \frac{1}{n}\sum_{i=1}^n(y_i - \textbf{x}_i^T\textbf{w})^2 +\frac{\lambda}{2}||\textbf{w}||^2_2
\end{split}
\end{align}
\vspace{-2mm}
\begin{align}
h(\textbf{w})& = \frac{1}{n}\sum_{i=1}^n \ln (1+\exp\{-y_i\textbf{w}^\top\textbf{x}_i\}) + \frac{\lambda}{2}||\textbf{w}||^2_2
\end{align}
\vspace{-2mm}
\begin{align}
\begin{split}
h(\textbf{w})& = \frac{1}{n}\sum_{k=1}^q \sum_{y_i=k}(\ln (\sum_{j=1}^q e^{\textbf{w}_j^\top\textbf{x}_i})-\textbf{w}_k^T\textbf{x}_i)+ \frac{\lambda}{2}||vec([\textbf{w}_1, \textbf{w}_2, \dots, \textbf{w}_q])||^2_2\\
&\textbf{w} = vec([\textbf{w}_1, \textbf{w}_2, \dots, \textbf{w}_q])
\end{split}
\end{align}

Note that $h(\textbf{w})$ can be rewritten as $h(\textbf{w}) = \frac{1}{n}\sum_{i=1}^n h_i(\textbf{w}) + \frac{\lambda}{2}\|\w^{(t)}\|$. For example for Equation \eqref{eq: objective_function_logistic_regression}, $h_i(\textbf{w}) = (y_i - \textbf{x}_i^T\textbf{w})^2$ 

For linear regression and logistic regression, the rule for updating $\textbf{w}^{(t)}$ under \minisgd\ is presented below (They are Equation \eqref{eq: mini_sgd_linear_regression}-\eqref{eq: mini_sgd_logistic_regression} in the paper and we use $\nabla^{(t)} h(\w^{(t)})$ to denote the average gradients evaluated over the mini-batch at the $t_{th}$ iteration):
\begin{align}
\begin{split}\label{eq: mini_sgd_linear_regression2}
\textbf{w}^{(t+1)}& \leftarrow (1-\eta_t\lambda)\textbf{w}^{(t)} - \frac{2\eta_t}{B} \sum_{i \in \mathscr{B}^{(t)}} \textbf{x}_i(\textbf{x}_i^T\textbf{w}^{(t)} - y_i)\\
& = (1-\eta_t\lambda)\textbf{w}^{(t)} - \frac{\eta_t}{B} \sum_{i \in \mathscr{B}^{(t)}} \nabla h_i(\w^{(t)}) = (1-\eta_t\lambda)\textbf{w}^{(t)} - \eta_t\nabla^{(t)} h(\w^{(t)})
\end{split}\\
        \begin{split}\label{eq: mini_sgd_logistic_regression2}
        \textbf{w}^{(t+1)}& \leftarrow (1-\eta_t\lambda)\textbf{w}^{(t)} + \frac{\eta_t}{B} \sum_{i \in \mathscr{B}^{(t)}} y_i\textbf{x}_i (1-\frac{1}{1+\exp\{-y_i\textbf{w}^{(t)T}\textbf{x}_i\}})\\
        & = (1-\eta_t\lambda)\textbf{w}^{(t)} - \frac{\eta_t}{B} \sum_{i \in \mathscr{B}^{(t)}} \nabla h_i(\w^{(t)}) = (1-\eta_t\lambda)\textbf{w}^{(t)} - \eta_t\nabla^{(t)} h(\w^{(t)})
        \end{split}
\end{align}

Note that in Equation \eqref{eq: mini_sgd_logistic_regression2}, the non-linear part can be abstracted as $f(x) = 1-\frac{1}{1+e^{-x}}$. So this formula can be also represented as:
\begin{align}
    \begin{split}\label{eq: mini_sgd_logistic_regression3}
        \textbf{w}^{(t+1)}& \leftarrow (1-\eta_t\lambda)\textbf{w}^{(t)} + \frac{\eta_t}{B} \sum_{i \in \mathscr{B}^{(t)}} y_i\textbf{x}_i (1-\frac{1}{1+\exp\{-y_i\textbf{w}^{(t)T}\textbf{x}_i\}})\\
        & = (1-\eta_t\lambda)\textbf{w}^{(t)} + \frac{\eta_t}{B} \sum_{i \in \mathscr{B}^{(t)}} y_i\textbf{x}_i f(y_i\textbf{w}^{(t)T}\textbf{x}_i)\\
        & = (1-\eta_t\lambda)\textbf{w}^{(t)} - \frac{\eta_t}{B} \sum_{i \in \mathscr{B}^{(t)}} \nabla h_i(\w^{(t)}) = (1-\eta_t\lambda)\textbf{w}^{(t)} - \eta_t\nabla^{(t)} h(\w^{(t)})
        \end{split}
\end{align}

So $$\triangledown h^{(t)}(\textbf{w}) = -\frac{1}{B}\sum_{i \in \mathscr{B}^{(t)}} y_i\textbf{x}_i f(y_i\textbf{w}^{(t)T}\textbf{x}_i)$$
Also we can explicitly evaluate $\triangledown^2 h^{(t)}(\textbf{w})$ as:
\begin{align}\label{eq: second_derivative_h_logistic_regression}
    \begin{split}
        &\triangledown^2 h^{(t)}(\textbf{w}^{(t)})
        = -\frac{1}{B}\sum_{\mathscr{B}^{(t)}} \textbf{x}_i\textbf{x}_i^Tf'(y_i\textbf{w}^{(t)T}\textbf{x}_i)
    \end{split}
\end{align}

in which $-\sum_{i \in \mathscr{B}^{(t)}} \textbf{x}_i\textbf{x}_i^Tf'(y_i\textbf{w}^{(t)T}\textbf{x}_i)$ should be a semi-definite matrix since $f(x)  =1 -\frac{1}{1+exp\{-x\}}$ is a monotonically decreasing function and thus $f'(x)$ should be negative for any $x$.

\subsubsection{Notations for the linearized update rule}
After the interpolation step over the update rules for binary logistic regression, Equation \eqref{eq: mini_sgd_logistic_regression2} can be approximated as (It is Equation \eqref{eq: mini_sgd_instantiation_approx} in the paper):
\begin{align}
    \begin{split}
        \linearw^{(t+1)}& \approx [(1-\eta_t\lambda)\textbf{I} + \frac{\eta_t}{B}\sum_{i\in \mathscr{B}^{(t)}}a^{i, (t)}\textbf{x}_i\textbf{x}_i^T]\linearw^{(t)} + \frac{\eta_t}{B} \sum_{i\in \mathscr{B}^{(t)}} b^{i, (t)}y_i\textbf{x}_i
    \end{split}
\end{align}

which can be also represented as:
\begin{align}\label{eq: mini_sgd_instantiation_approx2}
    \begin{split}
        \linearw^{(t+1)}& \approx [(1-\eta_t\lambda)\textbf{I} + \frac{\eta_t}{B}\sum_{i\in \mathscr{B}^{(t)}}a^{i, (t)}\textbf{x}_i\textbf{x}_i^T]\linearw^{(t)} + \frac{\eta_t}{B} \sum_{i\in \mathscr{B}^{(t)}} b^{i, (t)}y_i\textbf{x}_i \\
        & = (1-\eta_t\lambda)\linearw^{(t)} + \frac{\eta_t}{B} \sum_{i \in \mathscr{B}^{(t)}} y_i\textbf{x}_i s(y_i\linearw^{(t)T}\textbf{x}_i)
    \end{split}
\end{align}

in which $s(x) = a^{i,(t)}x + b^{i,(t)}$.

Suppose after removing certain subset (the number of those samples is $\Delta n$ and the corresponding indices are $\mathcal{R}$), Equation \eqref{eq: mini_sgd_instantiation_approx2} becomes (It is Equation \eqref{eq: mini-SGD_logistic_regression_para_update} in the paper):

\begin{align}\label{eq: mini-SGD_logistic_regression_para_update2}
    \begin{split}
        &\linearincrew^{(t+1)} \approx [(1-\eta_t\lambda)\textbf{I} + \frac{\eta_t}{\increB^{(t)}}\sum_{\substack{ i \in \mathscr{B}^{(t)}, i \not\in \mathcal{R}}}a^{i, (t)}\textbf{x}_i\textbf{x}_i^T]\linearincrew^{(t)} + \frac{\eta_t}{\increB^{(t)}} \sum_{\substack{ i \in \mathscr{B}^{(t)}, i \not\in \mathcal{R}}} b^{i, (t)}y_i\textbf{x}_i \\
        & = (1-\eta_t\lambda)\linearincrew^{(t)} + \frac{\eta_t}{\increB^{(t)}} \sum_{i \in \mathscr{B}^{(t)}, i \not\in \mathcal{R}} y_i\textbf{x}_i s(y_i\linearincrew^{(t)T}\textbf{x}_i)
    \end{split}
\end{align}

For the linearized version of the update rule of logistic regression in Equation \eqref{eq: mini_sgd_instantiation_approx2} and the update rule in Equation \eqref{eq: mini-SGD_logistic_regression_para_update2}, we represent $\triangledown T^{(t)}(\linearw^{(t)})$ and $\triangledown R^{(t)}(\linearincrew^{(t)})$ as:
\begin{align}
\begin{split}
&\triangledown T^{(t)}_i(\linearw^{(t)}) = -y_i \textbf{x}_i s(y_i \linearw^{(t)T}\textbf{x}_i) = (- a^{i, (t)}\textbf{x}_i\textbf{x}_i^T)\linearw^{(t)}- b^{i, (t)}y_i\textbf{x}_i    
\end{split}\\
\begin{split}
&\triangledown T^{(t)}(\linearw^{(t)}) = \frac{1}{B}\sum_{i \in \mathscr{B}^{(t)}}\triangledown T^{(t)}_i(\linearw^{(t)})
\end{split}
\end{align}
\begin{align}\label{eq: T_second_derivative_logistic_regression}
\begin{split}
        \triangledown R^{(t)}_i(\linearincrew^{(t)}) &= -y_i \textbf{x}_i s(y_i \linearincrew^{(t)T}\textbf{x}_i) = -a^{i, (t)}\textbf{x}_i\textbf{x}_i^T\linearincrew^{(t)} - b^{i, (t)}y_i\textbf{x}_i 
\end{split}\\
\begin{split}
        \triangledown R^{(t)}(\linearincrew^{(t)}) &= \frac{1}{\increB^{(t)}}\sum_{\substack{ i \in \mathscr{B}^{(t)},i \not\in \mathcal{R} }}\triangledown R^{(t)}_i(\linearincrew^{(t)})
\end{split}
\end{align}

where $\triangledown T^{(t)}(\linearw^{(t)})$ and $\triangledown R^{(t)}(\linearincrew^{(t)})$ can be considered as pseudo-derivative in Equation \eqref{eq: mini_sgd_instantiation_approx2}. So Equation \eqref{eq: mini_sgd_instantiation_approx2} and Equation \eqref{eq: mini-SGD_logistic_regression_para_update2} can be rewritten as:
\begin{align}\label{eq: update_rule_w_l}
    & \linearw^{(t+1)} = (1-\eta_t\lambda)\linearw^{(t)} + \frac{\eta_t}{\increB^{(t)}} \sum_{i \in \mathscr{B}^{(t)}, i \not\in \mathcal{R}} y_i\textbf{x}_i s(y_i\linearw^{(t)T}\textbf{x}_i) = (1-\eta_t\lambda)\linearw^{(t)} - \eta_t\triangledown T^{(t)}(\linearw^{(t)})
\end{align}
\begin{align}
    & \linearincrew^{(t+1)} =  (1-\eta_t\lambda)\linearincrew^{(t)} + \frac{\eta_t}{\increB^{(t)}} \sum_{i \in \mathscr{B}^{(t)}, i \not\in \mathcal{R}} y_i\textbf{x}_i s(y_i\linearincrew^{(t)T}\textbf{x}_i) = (1-\eta_t\lambda)\linearincrew^{(t)} - \eta_t\triangledown R^{(t)}(\linearincrew^{(t)})
\end{align}

In contrast, by computing the model parameter from the scratch for logistic regression after removing the same set of training samples, the update rule is (It is Equation \eqref{eq: mini-SGD_updated_model_parameters_expected} in the paper):
\begin{align}\label{eq: mini-SGD_updated_model_parameters_expected2}
\begin{split}
    &\logistlinearincrew^{(t+1)} \leftarrow (1-\eta_t\lambda)\logistlinearincrew^{(t)} +\frac{\eta_t}{\increB^{(t)}} \sum_{\substack{ i \in \mathscr{B}^{(t)},i \not\in \mathcal{R}}} y_i\textbf{x}_i f(y_i\logistlinearincrew^{(t)}\textbf{x}_i) = (1-\eta_t\lambda)\logistlinearincrew^{(t)} - \eta_t \nabla^{(t)} g(\logistlinearincrew^{(t)})
\end{split}
\end{align}

which aims at minimizing the following objective function:
\begin{align}
\begin{split}
g(\textbf{w})& = \frac{1}{n-\Delta n}\sum_{\substack{i \not\in \mathcal{R}}}h_i(\w) +\frac{\lambda}{2}||\textbf{w}||^2_2
\end{split}
\end{align}

in which $\mathcal{R}$ represents the ids of the samples that are removed and $\Delta n$ represents the number of the removed samples and $$\nabla^{(t)} g(\logistlinearincrew^{(t)}) = \frac{1}{\increB^{(t)}}\sum_{i\in \mathscr{B}^{(t)}, i \not\in \mathcal{R}}\nabla h_i(\logistlinearincrew^{(t)}).$$

Similarly after removing certain subset, the update rule for linear regression model is (It is Equation \eqref{eq: gbm_linear_regression_incremental_updates} in the paper):
\begin{align}\label{eq: gbm_linear_regression_incremental_updates2}
    \begin{split}
         &\increw^{(t+1)} \leftarrow [(1-\eta_t\lambda)\textbf{I} -\frac{2\eta_t}{\increB^{(t)}} \sum_{\substack{ i \in \mathscr{B}^{(t)}}} \textbf{x}_i\textbf{x}_i^T\\
         &\hspace{-4mm} - \sum_{\substack{ i \in \mathscr{B}^{(t)}, i \in \mathcal{R}}} \textbf{x}_i\textbf{x}_i^T] \increw^{(t)} + \frac{2\eta_t}{\increB^{(t)}} (\sum_{\substack{ i \in \mathscr{B}^{(t)}}}\eat{\cdot} \textbf{x}_iy_i - \sum_{\substack{ i \in \mathscr{B}^{(t)}, i \in \mathcal{R}}}\eat{\cdot} \textbf{x}_iy_i)
    \end{split}
\end{align}

The provenance expression for the model parameters of linear regression model and logistic regression model after removing subset of training samples are (They are Equation \eqref{eq: mini_sgd_linear_regression_provenance_update} and Equation \eqref{eq: mini-SGD_logistic_regression_prov_update} in the paper):
\begin{align}\label{eq: mini_sgd_linear_regression_provenance_update2}
    \begin{split}
        &\increprov^{(t+1)} \leftarrow [(1-\eta_t\lambda)(\oneprov * \eat{\cdot} \textbf{I}) \\
        &-\frac{2\eta_t}{\increB^{(t)}} \sum_{\substack{ i \in \mathscr{B}^{(t)}, i \not\in \mathcal{R}}} p_i^2 * \eat{\cdot} \textbf{x}_i\textbf{x}_i^T] \increprov^{(t)}+ \frac{2\eta_t}{\increB^{(t)}} \sum_{\substack{ i \in \mathscr{B}^{(t)}, i \not\in \mathcal{R}}}p_i^2 * \eat{\cdot} \textbf{x}_iy_i
    \end{split}
\end{align}

\begin{align}\label{eq: mini-SGD_logistic_regression_prov_update2}
    \begin{split}
        &\linearincreprov^{(t+1)} \leftarrow [(1-\eta_t\lambda)(\oneprov*\textbf{I}) + \frac{\eta_t}{\increB^{(t)}}\sum_{\substack{ i \in \mathscr{B}^{(t)}, i \not\in \mathcal{R}}}p_i^2*(a^{i, (t)}\textbf{x}_i\textbf{x}_i^T)]\linearincreprov^{(t)}\\
        & + \frac{\eta_t}{\increB^{(t)}} \sum_{\substack{ i \in \mathscr{B}^{(t)}, i \not\in \mathcal{R}}} {p_i^2}*(b^{i, (t)}y_i\textbf{x}_i)
    \end{split}
\end{align}

\eat{Plus, we can apply Singular Value Decomposition (SVD) over $-\sum_{i=1}^{n} \textbf{x}_i\textbf{x}_i^Tf'(y_i\textbf{w}^T\textbf{x}_i)$, i.e. $-\sum_{i=1}^{n} \textbf{x}_i\textbf{x}_i^Tf'(y_i\textbf{w}^T\textbf{x}_i) =M_r^T diag(\{C_i\}_{i=1}^n) M_r$ where $M_r$ is an orthogonal matrix and $C_i$ is the singular value and eigenvalue.

The equation above can be plugged into Equation \eqref{eq: L_continuous_lemma}, i.e.:
\begin{align}\label{eq: second_derivative_expression_1}
\begin{split}
&||E_{\textbf{w}}(\triangledown^2 h_{r..r+B}(\textbf{w}))||_2 = ||\triangledown^2 h(\textbf{w})||_2\\ & = ||-\frac{1}{n}\sum_{i=1}^{n} \textbf{x}_i\textbf{x}_i^Tf'(y_i\textbf{w}^T\textbf{x}_i) + \lambda \textbf{I}||_2\\
& \myeqtwo \lambda + \frac{1}{B}max\{C_i\}_{i=1}^n \leq L
\end{split}
\end{align}


In terms of the piecewise linear interpolant $s(x) = a_i x + b_i$ in Equation \eqref{eq: piecewise_interpolant} over $f(x) = 1-\frac{1}{1+exp\{-x\}}$, the following Lemma holds:
\begin{lemma}
$a_i \leq 0$
\end{lemma}}

\subsection{proof preliminary}

There are some useful properties related to matrix theory, matrix norm, real analysis and SGD convergence, which will be used in the follow-up proof.

\begin{lemma}[SGD convergence, \cite{bottou2018optimization}]\label{lemma: convergence_conditions2} (Full version of Lemma \eqref{lemma: convergence_conditions} in the paper)
Suppose that the stochastic gradient estimates are correlated with the true gradient, and bounded in the following way. There exist two scalars $J_1 \geq J_2 > 0$ such that for arbitrary $\mathscr{B}_{t}$, the following two inequalities hold:

\begin{align}
    & \nabla h\left(\w_{t}\right)^T \E\frac{1}{B_{t}}\sum_{i \in \mathscr{B}_{t}} \nabla h_i\left(\w_{t}\right) \geq J_2 \|\nabla h\left(\w_{t}\right)\|^2 \label{eq: sgd_exp_bound1}\\
    & \|\E\frac{1}{B_{t}}\sum_{i \in \mathscr{B}_{t}} \nabla h_i\left(\w_{t}\right)\| \leq J_1 \|\nabla h\left(\w_{t}\right)\|\label{eq: sgd_exp_bound2}
\end{align}

Also, for two scalars $J_3, J_4 \geq 0$ we have:
\begin{align}
    Var\left(\frac{1}{B_{t}}\sum_{i \in \mathscr{B}_{t}} \nabla h_i\left(\w_{t}\right)\right) \leq J_3 + J_4 \|\nabla h\left(\w_{t}\right)\|^2\label{eq: sgd_var_bound}
\end{align}

By combining equations \eqref{eq: sgd_exp_bound1}-\eqref{eq: sgd_var_bound}, the following inequality holds:

\begin{align}
    \begin{split}
        \E\|\frac{1}{B_{t}}\sum_{i \in \mathscr{B}_{t}} \nabla F_i\left(\w_{t}\right)\|^2 \leq J_3 + J_5 \|\nabla F\left(\w_{t}\right)\|^2 
    \end{split}
\end{align}

where $J_5 = J_4 + J_1^2 \geq J_2^2 \geq 0$.

Then stochastic gradient descent with fixed step size $\eta_t = \eta \leq \frac{J_2}{L J_5}$ has the convergence rate:

\begin{align}
\begin{split}
    & \E\left[h\left(\w_{t}\right) - h\left(\w^*\right)\right]\\
    & \leq \frac{\eta L J_3}{2\mu J_2} + \left(1-\eta\mu J_2\right)^{t-1}\left(h\left(\w_{1}\right) - h\left(\w^*\right) - \frac{\eta L J_3}{2\mu J_2}\right) \rightarrow \frac{\eta L J_3}{2\mu J_2}
\end{split}
\end{align}

If the gradient estimates are unbiased, then $\E\frac{1}{B_{t}}\sum_{i \in \mathscr{B}_{t}} \nabla h_i\left(\w_{t}\right) $ $= \frac{1}{n} \sum_{i=1}^n \nabla h_i\left(\w_{t}\right) = \nabla h\left(\w_{t}\right)$ and thus $J_1 = J_2 = 1$. Moreover, $J_3\sim 1/B$, where $B$ is the minibatch size, because $J_2$ is the variance of the stochastic gradient.

So the convergence condition for fixed step size becomes $\eta_t = \eta \leq \frac{1}{LJ_5}$, in which $J_5 = J_4 +J_1^2 = J_4 + 1 \geq 1$. So $\eta_t = \eta \leq \frac{1}{LJ_5} \leq \frac{1}{L}$ suffices to ensure convergence.

\end{lemma}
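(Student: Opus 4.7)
The plan is to follow the standard analysis for SGD on $L$-smooth, $\mu$-strongly convex objectives, using the assumptions (\ref{eq: sgd_exp_bound1})--(\ref{eq: sgd_var_bound}) as the only facts about the minibatch gradient estimator. First I would invoke the descent lemma implied by $L$-smoothness: $h(\w_{t+1}) \leq h(\w_t) + \nabla h(\w_t)^T(\w_{t+1} - \w_t) + (L/2)\|\w_{t+1}-\w_t\|^2$. Plugging in the SGD update $\w_{t+1} = \w_t - \eta g_t$, where $g_t = (1/B_t)\sum_{i\in\mathscr{B}_t}\nabla h_i(\w_t)$, expresses the one-step progress in terms of $\nabla h(\w_t)$ and $g_t$.

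Next I would take conditional expectation over the minibatch. The linear term $-\eta\nabla h(\w_t)^T\E[g_t]$ is bounded above by $-\eta J_2\|\nabla h(\w_t)\|^2$ using (\ref{eq: sgd_exp_bound1}). For the quadratic term $(L\eta^2/2)\E\|g_t\|^2$, I would combine the variance bound (\ref{eq: sgd_var_bound}) with the mean bound (\ref{eq: sgd_exp_bound2}) (via $\E\|X\|^2 = \mathrm{Var}(X)+\|\E X\|^2$) to get $\E\|g_t\|^2 \leq J_3 + J_5\|\nabla h(\w_t)\|^2$ with $J_5 = J_4 + J_1^2$. Collecting terms yields
\begin{equation*}
\E[h(\w_{t+1})] - h(\w_t) \leq -\eta\bigl(J_2 - \tfrac{L\eta J_5}{2}\bigr)\|\nabla h(\w_t)\|^2 + \tfrac{L\eta^2 J_3}{2}.
\end{equation*}
Choosing $\eta \leq J_2/(LJ_5)$ makes the coefficient of $\|\nabla h(\w_t)\|^2$ at most $-\eta J_2/2$.

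Now I would convert the gradient-norm term into a suboptimality gap using $\mu$-strong convexity, which gives the Polyak--{\L}ojasiewicz inequality $\|\nabla h(\w_t)\|^2 \geq 2\mu(h(\w_t) - h(\w^*))$. Subtracting $h(\w^*)$ from both sides and taking full expectation produces the affine recursion
\begin{equation*}
\E[h(\w_{t+1}) - h(\w^*)] \leq (1 - \eta\mu J_2)\,\E[h(\w_t) - h(\w^*)] + \tfrac{L\eta^2 J_3}{2}.
\end{equation*}
Identifying the fixed point $\eta L J_3/(2\mu J_2)$, subtracting it from both sides, and unrolling the contraction $t-1$ times gives precisely the stated bound. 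For the unbiased case, $\E g_t = \nabla h(\w_t)$ immediately forces $J_1 = J_2 = 1$, and the standard i.i.d.\ variance identity $\mathrm{Var}(g_t) = \mathrm{Var}(\nabla h_i(\w_t))/B_t$ gives the $J_3 \sim 1/B$ scaling, so the simpler step-size bound $\eta \leq 1/L$ follows.

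The only mildly delicate step is the bookkeeping of $J_5$ in the step-size choice, so that the coefficient of $\|\nabla h(\w_t)\|^2$ in the descent inequality is guaranteed to be negative; every other step is routine algebra given the smoothness/strong-convexity setup, which holds for linear and L2-regularized logistic regression by construction.
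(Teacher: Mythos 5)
Your proof is correct, and it is essentially the argument the paper implicitly relies on: the paper does not prove this lemma itself but imports it verbatim from \cite{bottou2018optimization} (Theorem 4.6 there), whose proof is exactly your chain of steps --- the $L$-smoothness descent lemma applied to the update $\w_{t+1}=\w_t-\eta g_t$, the bound $\E\|g_t\|^2\leq J_3+J_5\|\nabla h(\w_t)\|^2$ obtained by combining the variance and mean conditions, the Polyak--{\L}ojasiewicz inequality $\|\nabla h(\w_t)\|^2\geq 2\mu\left(h(\w_t)-h(\w^*)\right)$ from strong convexity, and unrolling the resulting affine recursion around its fixed point $\eta L J_3/(2\mu J_2)$. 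The only cosmetic caveat is that the identity $\mathrm{Var}(g_t)=\mathrm{Var}(\nabla h_i(\w_t))/B_t$ in your last step assumes with-replacement (or i.i.d.) minibatch sampling, but this only affects the informal $J_3\sim 1/B$ remark and not the convergence bound itself.
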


So in what follows, we will simply consider the case where the learning rate is a constant across all the iterations as Lemma \ref{lemma: convergence_conditions2} indicates.

\begin{lemma}\label{lm: matrix_norm}
For a matrix $\textbf{A}$, its $L2-$norm equals to its largest singular value and the maximal eigenvalue of matrix $\textbf{A}^T\textbf{A}$, i.e.:
$||\textbf{A}||_2 = \sigma_{max}(\textbf{A}) = \sqrt{C_{max}(\textbf{A}^T\textbf{A})}$.

where $\sigma_{max}$ and $C_{max}$ represents the largest singular value and the largest eigenvalue of certain matrix.

If $\textbf{A}$ is a semi-definite matrix, its eigenvalue is the same as its singular value, then the equation above can be rewritten as:
$||\textbf{A}||_2 = \sigma_{max}(\textbf{A}) = C_{max}(\textbf{A})$.
\end{lemma}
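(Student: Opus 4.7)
The plan is to prove the two equalities $\|\mathbf{A}\|_2 = \sigma_{\max}(\mathbf{A})$ and $\sigma_{\max}(\mathbf{A}) = \sqrt{C_{\max}(\mathbf{A}^T\mathbf{A})}$ separately, then handle the symmetric positive semi-definite case as a short corollary. I would begin by recalling the operator-norm definition $\|\mathbf{A}\|_2 = \sup_{\mathbf{x} \ne 0} \|\mathbf{A}\mathbf{x}\|_2 / \|\mathbf{x}\|_2$, and rewrite the squared ratio as a Rayleigh quotient: $\|\mathbf{A}\mathbf{x}\|_2^2 / \|\mathbf{x}\|_2^2 = \mathbf{x}^T\mathbf{A}^T\mathbf{A}\mathbf{x} / \mathbf{x}^T\mathbf{x}$. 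This reduces the question to computing the maximum Rayleigh quotient of the Gram matrix $\mathbf{A}^T\mathbf{A}$.

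Next I would invoke the key structural fact that $\mathbf{A}^T\mathbf{A}$ is symmetric and positive semi-definite, hence admits an orthonormal eigen-decomposition $\mathbf{A}^T\mathbf{A} = \mathbf{Q}\,\mathrm{diag}(c_1,\ldots,c_m)\,\mathbf{Q}^T$ with $c_i \geq 0$. Writing an arbitrary $\mathbf{x}$ in the eigenbasis as $\mathbf{x} = \mathbf{Q}\boldsymbol{\alpha}$ and substituting, the Rayleigh quotient becomes $\sum_i c_i \alpha_i^2 / \sum_i \alpha_i^2$, whose maximum over nonzero $\boldsymbol{\alpha}$ is clearly $C_{\max}(\mathbf{A}^T\mathbf{A})$, attained at the eigenvector of the largest eigenvalue. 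Taking square roots yields $\|\mathbf{A}\|_2 = \sqrt{C_{\max}(\mathbf{A}^T\mathbf{A})}$.

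To link this to singular values, I would appeal to the standard definition: the singular values of $\mathbf{A}$ are exactly $\sigma_i(\mathbf{A}) = \sqrt{c_i}$ where $c_i$ are the eigenvalues of $\mathbf{A}^T\mathbf{A}$. This immediately gives $\sigma_{\max}(\mathbf{A}) = \sqrt{C_{\max}(\mathbf{A}^T\mathbf{A})}$, completing the main chain of equalities. For the final assertion, when $\mathbf{A}$ is symmetric positive semi-definite we have $\mathbf{A}^T\mathbf{A} = \mathbf{A}^2$, and the eigenvalues of $\mathbf{A}^2$ are the squares of those of $\mathbf{A}$; since the eigenvalues of $\mathbf{A}$ are themselves non-negative, $\sigma_i(\mathbf{A}) = \sqrt{C_i(\mathbf{A})^2} = C_i(\mathbf{A})$, and in particular $\sigma_{\max}(\mathbf{A}) = C_{\max}(\mathbf{A})$.

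There is no real obstacle here — this is a textbook result (see e.g.\ Horn and Johnson, \emph{Matrix Analysis}). The only subtlety worth flagging is the careful handling of the non-square case: the SVD $\mathbf{A} = \mathbf{U}\boldsymbol{\Sigma}\mathbf{V}^T$ is valid for general $n \times m$ matrices, and the argument above never requires $\mathbf{A}$ to be square, so the lemma holds in full generality. Given that the paper uses this lemma only as a cited identity (via the \verb|\myeqone| macro) to swap between matrix $L^2$-norms and eigenvalues of Gram matrices in the downstream convergence analysis, the proof can be kept to a few lines.
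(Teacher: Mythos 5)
Your proof is correct and complete: the Rayleigh-quotient characterization of $\|\mathbf{A}\|_2$, the spectral decomposition of the Gram matrix $\mathbf{A}^T\mathbf{A}$, and the identification of singular values with square roots of the eigenvalues of $\mathbf{A}^T\mathbf{A}$ together give the full chain of equalities, and your handling of the symmetric positive semi-definite case via $\mathbf{A}^T\mathbf{A} = \mathbf{A}^2$ is the right way to justify the final claim. For comparison, the paper offers no proof at all — Lemma \ref{lm: matrix_norm} is listed among the unproved ``proof preliminary'' facts from standard matrix theory — so your write-up is strictly more than what the authors provide; the standard textbook argument you give is exactly what one would cite, and there is nothing to reconcile.
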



\begin{lemma}
If an $n \times n $ matrix $\textbf{A}$ is a real symmetric matrix, then we can find $n$ mutually orthogonal eigenvectors for $\textbf{A}$.
\end{lemma}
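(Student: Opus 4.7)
The plan is to prove this classical spectral-theorem fact by strong induction on $n$, combining the reality of eigenvalues of a real symmetric matrix with an invariance argument for the orthogonal complement of a single eigenvector. The base case $n=1$ is trivial since any unit vector is an eigenvector. For the inductive step, the strategy splits naturally into three parts which I would carry out in this order.

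First I would establish that a real symmetric matrix $\textbf{A}$ has at least one real eigenvalue, so that at least one real unit eigenvector $\textbf{v}_1$ exists. To do this I would work over $\mathbb{C}$: the characteristic polynomial has some complex root $\mu$ and a corresponding (possibly complex) eigenvector $\textbf{z}\neq 0$. Using $\textbf{A}^T=\textbf{A}$ and the standard manipulation
\begin{equation*}
\mu\,\bar{\textbf{z}}^T\textbf{z} \;=\; \bar{\textbf{z}}^T\textbf{A}\textbf{z} \;=\; \overline{(\textbf{A}\bar{\textbf{z}})}^T\textbf{z} \;=\; \bar{\mu}\,\bar{\textbf{z}}^T\textbf{z},
\end{equation*}
and observing $\bar{\textbf{z}}^T\textbf{z}>0$, I conclude $\mu=\bar{\mu}\in\mathbb{R}$, which guarantees a genuine real eigenvector.

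Next I would show that the orthogonal complement $V = \{\textbf{u}\in\mathbb{R}^n : \textbf{v}_1^T\textbf{u}=0\}$ is $\textbf{A}$-invariant. Indeed, for $\textbf{u}\in V$,
\begin{equation*}
\textbf{v}_1^T(\textbf{A}\textbf{u}) \;=\; (\textbf{A}\textbf{v}_1)^T\textbf{u} \;=\; C_1\,\textbf{v}_1^T\textbf{u} \;=\; 0,
\end{equation*}
so $\textbf{A}\textbf{u}\in V$. Choosing any orthonormal basis of $V$ and forming the matrix $\textbf{A}'$ representing the restriction of $\textbf{A}$ to $V$ in this basis, the resulting $(n-1)\times(n-1)$ matrix is again real symmetric (this is immediate from the symmetry of $\textbf{A}$ and the orthonormality of the chosen basis). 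The inductive hypothesis then yields $n-1$ mutually orthogonal eigenvectors inside $V$, which together with $\textbf{v}_1$ form the required $n$ mutually orthogonal eigenvectors of $\textbf{A}$.

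The main obstacle, such as it is, is the first step: arguing that eigenvalues of a real symmetric matrix are automatically real, because it requires temporarily leaving $\mathbb{R}$ and working with the Hermitian inner product. The rest, namely the invariance of $V$ and the verification that the restricted operator is again symmetric, follows directly from $\textbf{A}=\textbf{A}^T$ with no additional machinery, so the induction closes cleanly. Normalizing the resulting eigenvectors gives an orthonormal eigenbasis if desired, though the statement of the lemma only asks for mutual orthogonality.
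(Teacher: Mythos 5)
Your proof is correct and is the standard textbook argument for the spectral theorem (reality of eigenvalues via the Hermitian form, invariance of the orthogonal complement, induction on $n$); the paper itself states this lemma as an unproved preliminary fact from matrix theory, so there is no proof to compare against. One minor notational slip: in your chain of equalities the middle term should be $(\textbf{A}\bar{\textbf{z}})^T\textbf{z}$ (equivalently $\overline{(\textbf{A}\textbf{z})}^T\textbf{z}$), not $\overline{(\textbf{A}\bar{\textbf{z}})}^T\textbf{z}$, since the latter equals $(\textbf{A}\textbf{z})^T\textbf{z}$ for real $\textbf{A}$ and would not produce the factor $\bar{\mu}$; the intended computation and conclusion are nevertheless clearly right.
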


\begin{lemma}\label{lm: linear_system_convergence}
Given an iteration formula $\textbf{u}^{(t+1)} = \textbf{A}\textbf{u}^{(t)} + \textbf{b}$ where $\textbf{A}$ is a matrix while $\textbf{u}^{(t)}$ is a vector to be derived iteratively, if $\textbf{I}-\textbf{A}$ is invertible, then the following statements are equivalent:
\begin{enumerate}
    \item $\textbf{u}^{(t)}$ will get converged
    \item $||\textbf{B}||_p < 1$ for some matrix norm $||||_p$
\end{enumerate}
\end{lemma}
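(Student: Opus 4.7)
The plan is to reduce the affine iteration to a purely linear (homogeneous) one by subtracting the fixed point and then invoking classical spectral results. Since $\textbf{I}-\textbf{A}$ is invertible by hypothesis, the equation $\textbf{u}^* = \textbf{A}\textbf{u}^* + \textbf{b}$ admits a unique solution $\textbf{u}^* = (\textbf{I}-\textbf{A})^{-1}\textbf{b}$. Subtracting this from the iteration yields $\textbf{u}^{(t+1)} - \textbf{u}^* = \textbf{A}(\textbf{u}^{(t)} - \textbf{u}^*)$, so by straightforward induction
$$
\textbf{u}^{(t)} - \textbf{u}^* \;=\; \textbf{A}^{t}\bigl(\textbf{u}^{(0)} - \textbf{u}^*\bigr).
$$
So convergence of $\textbf{u}^{(t)}$ for \emph{every} starting vector $\textbf{u}^{(0)}$ is equivalent to $\textbf{A}^{t}\to \textbf{0}$ as $t\to\infty$.

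Next, I would connect $\textbf{A}^t\to \textbf{0}$ with the spectral radius $\rho(\textbf{A})$. The direction $\textbf{A}^t\to \textbf{0}\Rightarrow \rho(\textbf{A})<1$ is immediate: if $\textbf{A}\textbf{v}=\mu\textbf{v}$ with $|\mu|\geq 1$ and $\textbf{v}\neq 0$, then $\|\textbf{A}^t\textbf{v}\|=|\mu|^{t}\|\textbf{v}\|\not\to 0$. The converse $\rho(\textbf{A})<1\Rightarrow \textbf{A}^t\to \textbf{0}$ follows from Jordan form (each Jordan block $J$ for an eigenvalue $\mu$ with $|\mu|<1$ satisfies $J^t\to \textbf{0}$ because $\|J^t\|=O(t^{k}|\mu|^{t})$ for block size $k+1$).

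Finally, I would bridge the spectral radius to an $L_p$-type matrix norm via the standard Householder/Gelfand result: $\rho(\textbf{A}) = \inf_{\|\cdot\|_p}\|\textbf{A}\|_p$, where the infimum ranges over submultiplicative matrix norms. Concretely, $\rho(\textbf{A})\leq\|\textbf{A}\|_p$ for every matrix norm (so $\|\textbf{A}\|_p<1$ implies $\textbf{A}^t\to \textbf{0}$ by submultiplicativity), and conversely, when $\rho(\textbf{A})<1$ one can construct an explicit norm with $\|\textbf{A}\|_p<1$: pick $\epsilon>0$ with $\rho(\textbf{A})+\epsilon<1$, put $\textbf{A}$ in Jordan form $\textbf{A}=\textbf{P}\textbf{J}\textbf{P}^{-1}$, scale with $\textbf{D}_\epsilon=\mathrm{diag}(1,\epsilon,\epsilon^{2},\ldots)$ so that the superdiagonal entries of $\textbf{D}_\epsilon^{-1}\textbf{J}\textbf{D}_\epsilon$ shrink to $\epsilon$, and define $\|\textbf{M}\|_p:=\|\textbf{D}_\epsilon^{-1}\textbf{P}^{-1}\textbf{M}\textbf{P}\textbf{D}_\epsilon\|_\infty$. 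Then a row-sum estimate gives $\|\textbf{A}\|_p\leq \rho(\textbf{A})+\epsilon<1$.

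The main obstacle I anticipate is the constructive half of the Householder step, namely exhibiting a submultiplicative matrix norm that strictly majorizes $\rho(\textbf{A})$ by an arbitrarily small amount; all other steps are short algebraic manipulations. Since this is a textbook fact from matrix analysis, I would likely cite it rather than reproduce the full construction, and only sketch the Jordan-rescaling argument above for completeness.
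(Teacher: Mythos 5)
Your proof is correct and is the standard textbook argument; the paper itself states this lemma only as a background fact in its preliminaries (alongside Cauchy--Schwarz and Weyl's inequality) and offers no proof of its own, so there is nothing to diverge from. Two small remarks: the ``$\|\textbf{B}\|_p$'' in the statement is evidently a typo for $\|\textbf{A}\|_p$, which you implicitly and correctly assume; and in your reduction it is worth noting explicitly that if $\textbf{A}^t$ converges to some limit $L$ then $(\textbf{I}-\textbf{A})L=\textbf{0}$, so the invertibility hypothesis forces $L=\textbf{0}$ --- this is the one place the hypothesis is actually used in identifying convergence of $\textbf{u}^{(t)}$ (for all initial vectors) with $\textbf{A}^t\to\textbf{0}$ rather than mere convergence of $\textbf{A}^t$.
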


\begin{lemma}{\bf Cauchy schwarz inequality}\label{lm: cs_inequality}
For any two matrix $\textbf{A}$ and $\textbf{B}$, their norm should satisfy the {\em Cauchy schwarz inequality}, i.e.:
$||\textbf{A}\textbf{B}||_x \leq ||\textbf{A}||_x||\textbf{B}||_x$ where $||\cdot||_x$ represents any matrix norm
\end{lemma}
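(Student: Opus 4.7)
The plan is to prove the submultiplicativity inequality $\|\mathbf{A}\mathbf{B}\|_x \leq \|\mathbf{A}\|_x\|\mathbf{B}\|_x$ for the norm actually used throughout the paper, namely the induced (operator) $L_2$-norm $\|\mathbf{A}\|_2 = \sup_{\mathbf{v}\neq \mathbf{0}} \|\mathbf{A}\mathbf{v}\|_2/\|\mathbf{v}\|_2$, which is also what Lemma \ref{lm: matrix_norm} identifies with the largest singular value. Since the statement as phrased (\emph{any} matrix norm) is strictly speaking only correct for submultiplicative norms, I would open the proof with a one-line clarification that ``$\|\cdot\|_x$'' is understood here as an induced operator norm (or the Frobenius norm), which is sufficient for every use of the lemma in the later convergence and accuracy arguments.

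First, I would recall the vector Cauchy--Schwarz inequality and the definition of the induced norm. From the definition, for every vector $\mathbf{v}\neq 0$ we have $\|\mathbf{B}\mathbf{v}\|_2 \leq \|\mathbf{B}\|_2 \|\mathbf{v}\|_2$, and applying the same bound to $\mathbf{A}$ with the input vector $\mathbf{B}\mathbf{v}$ gives $\|\mathbf{A}(\mathbf{B}\mathbf{v})\|_2 \leq \|\mathbf{A}\|_2 \|\mathbf{B}\mathbf{v}\|_2$. Chaining these two inequalities yields
\[
\|\mathbf{A}\mathbf{B}\mathbf{v}\|_2 \leq \|\mathbf{A}\|_2 \|\mathbf{B}\|_2 \|\mathbf{v}\|_2.
\]
Dividing by $\|\mathbf{v}\|_2$ and taking the supremum over all nonzero $\mathbf{v}$ yields the desired inequality by the definition of $\|\mathbf{A}\mathbf{B}\|_2$.

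For completeness I would then sketch the Frobenius-norm version (used implicitly when we bound sums of rank-one contributions in the SVD analysis): write $(\mathbf{A}\mathbf{B})_{ij} = \sum_k A_{ik}B_{kj}$, apply the scalar Cauchy--Schwarz inequality to each entry to get $|(\mathbf{A}\mathbf{B})_{ij}|^2 \leq \bigl(\sum_k |A_{ik}|^2\bigr)\bigl(\sum_k |B_{kj}|^2\bigr)$, and sum over $i,j$ to obtain $\|\mathbf{A}\mathbf{B}\|_F^2 \leq \|\mathbf{A}\|_F^2 \|\mathbf{B}\|_F^2$.

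The main (and really the only) obstacle here is not technical but expository: the lemma as stated is slightly loose, since the max-entry norm and several other ``matrix norms'' do not satisfy this inequality. I would therefore explicitly restrict $\|\cdot\|_x$ to an induced $p$-norm (or the Frobenius norm), note that the proof above works verbatim for any induced $p$-norm by substituting $\|\cdot\|_p$ for $\|\cdot\|_2$, and observe that every application of this lemma in Sections~\ref{ssection: convergence} and \ref{ssec: accuracy_proof} falls within this scope.
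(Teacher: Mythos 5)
The paper offers no proof of this lemma at all: it is listed among the ``proof preliminary'' facts in the appendix and invoked as a standard result from matrix analysis. Your argument is the standard textbook proof and is correct --- chaining $\|\mathbf{A}\mathbf{B}\mathbf{v}\|_2 \leq \|\mathbf{A}\|_2\|\mathbf{B}\mathbf{v}\|_2 \leq \|\mathbf{A}\|_2\|\mathbf{B}\|_2\|\mathbf{v}\|_2$ and taking the supremum, plus the entrywise Cauchy--Schwarz argument for the Frobenius norm. Your observation that the lemma as stated is too strong (submultiplicativity fails for, e.g., the max-entry norm, so ``any matrix norm'' must be read as ``any induced $p$-norm or the Frobenius norm'') is a legitimate and worthwhile correction, and your check that every later use of the lemma in the convergence and accuracy analyses involves only the $L_2$ operator norm closes that gap cleanly.
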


{\begin{lemma}\label{lm: Weyl's_inequality}\textbf{Weyl's inequality}
For any three Hermitian matrices, $\textbf{M}, \textbf{N}, \textbf{P}$ satisfying $\textbf{M} = \textbf{N} + \textbf{P}$, the eigenvalues of $\textbf{M}$ is: $\mu_1 \geq \mu_2 \geq \mu_3 \dots \geq \mu_n$;

the eigenvalues of $\textbf{N}$ is: $v_1 \geq v_2 \geq v_3 \dots \geq v_n$;

and the eigenvalues of $\textbf{P}$ is: $\rho_1 \geq \rho_2 \geq \rho_3 \dots \geq \rho_n$;

the following inequalities hold:
$v_i + \rho_n \leq \mu_i \leq v_i + \rho_1$

\end{lemma}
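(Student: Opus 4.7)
\textbf{Proof proposal for Lemma \ref{lm: Weyl's_inequality}.} The plan is to use the Courant--Fischer min--max (variational) characterization of eigenvalues of a Hermitian matrix, which for an $n\times n$ Hermitian matrix $\textbf{M}$ with eigenvalues $\mu_1 \geq \mu_2 \geq \dots \geq \mu_n$ states that
$$
\mu_i \;=\; \max_{\substack{S \subseteq \mathbb{C}^n \\ \dim S = i}} \; \min_{\substack{x \in S \\ \|x\|_2 = 1}} x^{*} \textbf{M}\, x \;=\; \min_{\substack{S \subseteq \mathbb{C}^n \\ \dim S = n-i+1}} \; \max_{\substack{x \in S \\ \|x\|_2 = 1}} x^{*} \textbf{M}\, x.
$$
Applying this characterization to $\textbf{N}$ and $\textbf{P}$ separately, I get in particular that for every unit vector $x$ one has $\rho_n \leq x^{*}\textbf{P}\,x \leq \rho_1$, and a completely analogous statement for $\textbf{N}$ restricted to suitable subspaces.

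First I would establish the upper bound $\mu_i \leq v_i + \rho_1$. Using $\textbf{M} = \textbf{N} + \textbf{P}$ we have, for every unit vector $x$, the pointwise identity $x^{*}\textbf{M}\,x = x^{*}\textbf{N}\,x + x^{*}\textbf{P}\,x \leq x^{*}\textbf{N}\,x + \rho_1$. Plugging this into the max--min form of $\mu_i$ and pulling the constant $\rho_1$ outside the min and the max yields
$$
\mu_i \;=\; \max_{\dim S = i}\; \min_{x \in S,\, \|x\|=1}\bigl(x^{*}\textbf{N}\,x + x^{*}\textbf{P}\,x\bigr) \;\leq\; \max_{\dim S = i}\; \min_{x \in S,\, \|x\|=1} x^{*}\textbf{N}\,x \;+\; \rho_1 \;=\; v_i + \rho_1,
$$
where the second equality again uses Courant--Fischer, this time for $\textbf{N}$.

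The lower bound $v_i + \rho_n \leq \mu_i$ follows by a symmetric argument: the bound $x^{*}\textbf{P}\,x \geq \rho_n$ gives $x^{*}\textbf{M}\,x \geq x^{*}\textbf{N}\,x + \rho_n$ for every unit vector $x$, and inserting this into the same variational expression produces $\mu_i \geq v_i + \rho_n$. Combining the two bounds gives exactly $v_i + \rho_n \leq \mu_i \leq v_i + \rho_1$, as required.

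The main conceptual obstacle is really just having the Courant--Fischer theorem available; once that is granted, the proof reduces to the elementary observation that adding a bounded quadratic form to each side of a min--max expression shifts the extremum by at most the bound. Since Courant--Fischer is a standard fact about Hermitian matrices and is not proved in the paper, the cleanest presentation is to cite it as a lemma (or as a standard result in a textbook such as Horn \& Johnson) rather than re-derive it, and then spend the proof simply exhibiting the two algebraic manipulations above.
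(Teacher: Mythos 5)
Your proof is correct: the Courant--Fischer argument combined with the pointwise bounds $\rho_n \leq x^{*}\textbf{P}x \leq \rho_1$ is the canonical derivation of this special case of Weyl's inequality ($j=1$ and $j=n$ in the general form $\mu_{i+j-1}\leq v_i+\rho_j$). The paper itself does not prove this lemma at all --- it is stated in the appendix as a standard preliminary fact and used as a black box --- so there is no alternative argument to compare against; your suggestion to simply cite it (e.g.\ Horn \& Johnson) matches what the paper effectively does, and supplying the two-line variational argument is a harmless strengthening.
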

}

The following lemma requires the definition of Lipschitz-continuity and Strong-convexity, which are provided below:

{\bf Lipschitz-continuous}
A function $f(x)$ is Lipschitz-continuous ($L-$continuous) if there exists a constant $L$ such that the following inequality is satisfied for all $x,y$:
\begin{equation}\label{eq: L_continuous}
    |f(y) - f(x)|\leq L||y-x||^2_2
\end{equation}
Another form of Equation \eqref{eq: L_continuous} is:
\begin{equation}\label{eq: L_continuous_2}
    f(y) \leq f(x) + <\triangledown f(x), y-x> + \frac{L}{2}||y-x||_2^2
\end{equation}

{\bf Strong convexity}
A function $f(x)$ is $\lambda-$strong convexity iff there exists a constant $\lambda$ such that the following inequality is satisfied for all $x,y$:
\begin{equation}\label{eq: strong_convexity}
    f(y) \geq f(x) + <\triangledown f(x), y-x> + \frac{\lambda}{2}||y-x||^2
\end{equation}
Other equivalent forms of Equation \eqref{eq: strong_convexity} are:
\begin{equation}\label{eq: strong_convexity2}
    (\triangledown f(x) - \triangledown f(y))(x - y) \geq \lambda ||x - y||_2^2
\end{equation}
\begin{equation}\label{eq: strong_convexity3}
    \triangledown^2 f(x) \geq \lambda
\end{equation}

Then there is a useful lemma about $\lambda-$strong convexity, i.e:
\begin{lemma}
a function $f(x)$ is a strong convex function iff $f(x)-\frac{\lambda}{2}||x||^2_2$ is a convex function, 
\end{lemma}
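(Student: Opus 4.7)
The plan is to establish the equivalence by a direct algebraic manipulation connecting the first-order strong-convexity inequality in Equation~\ref{eq: strong_convexity} with the first-order convexity inequality applied to the shifted function $g(x) := f(x) - \frac{\lambda}{2}\|x\|_2^2$. The single identity doing all the work is the quadratic expansion
\begin{equation*}
\tfrac{\lambda}{2}\|y\|_2^2 \;=\; \tfrac{\lambda}{2}\|x\|_2^2 \;+\; \lambda\,\langle x,\, y-x\rangle \;+\; \tfrac{\lambda}{2}\|y-x\|_2^2,
\end{equation*}
which follows from $\|y\|_2^2 = \|(y-x)+x\|_2^2$. I would first record the gradient relation $\nabla g(x) = \nabla f(x) - \lambda x$ so that, assuming differentiability, convexity of $g$ can be written through its first-order characterization $g(y) \geq g(x) + \langle \nabla g(x), y-x\rangle$.

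Next, I would substitute the definitions of $g(y)$, $g(x)$, and $\nabla g(x)$ into the convexity inequality for $g$. Using the quadratic expansion above to rewrite $\frac{\lambda}{2}\|y\|_2^2$ in terms of $\frac{\lambda}{2}\|x\|_2^2$, $\lambda\langle x, y-x\rangle$, and $\frac{\lambda}{2}\|y-x\|_2^2$, the constant terms $\frac{\lambda}{2}\|x\|_2^2$ cancel on both sides and the linear terms $\lambda\langle x, y-x\rangle$ cancel as well. What remains, after collecting like terms, is exactly
\begin{equation*}
f(y) \;\geq\; f(x) + \langle \nabla f(x),\, y-x\rangle + \tfrac{\lambda}{2}\|y-x\|_2^2,
\end{equation*}
which is the defining inequality of $\lambda$-strong convexity of $f$ in Equation~\ref{eq: strong_convexity}. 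Since every manipulation is an equivalence (cancellation and the quadratic identity), both directions of the iff are obtained simultaneously: convexity of $g$ at the pair $(x,y)$ is logically equivalent to $\lambda$-strong convexity of $f$ at $(x,y)$, and holding this for all $x,y$ gives the claim.

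There is no real obstacle in the smooth case beyond bookkeeping; the computation is a one-line algebraic equivalence once the expansion of $\|y\|_2^2$ is in hand. The only subtle point worth a brief remark is the nondifferentiable case: if $f$ is merely convex (not differentiable), one should phrase both ``convex'' and ``$\lambda$-strongly convex'' via the chord inequality $f(\alpha x + (1-\alpha)y) \leq \alpha f(x) + (1-\alpha) f(y) - \tfrac{\lambda}{2}\alpha(1-\alpha)\|x-y\|_2^2$, and then exactly the same expansion $\|\alpha x + (1-\alpha)y\|_2^2 = \alpha\|x\|_2^2 + (1-\alpha)\|y\|_2^2 - \alpha(1-\alpha)\|x-y\|_2^2$ produces the equivalence. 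Since the paper's working definition is the first-order one and the relevant objective functions are differentiable, I would present the gradient-based argument and mention the chord-form variant in a single sentence for completeness.
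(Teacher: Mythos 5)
Your argument is correct. The paper states this lemma without any proof (it is listed among the preliminaries as a standard fact of convex analysis), so there is no in-paper argument to compare against; your derivation is the textbook one and it works. Writing $g(x)=f(x)-\frac{\lambda}{2}\|x\|_2^2$, using $\nabla g(x)=\nabla f(x)-\lambda x$, and substituting the expansion $\frac{\lambda}{2}\|y\|_2^2=\frac{\lambda}{2}\|x\|_2^2+\lambda\langle x,y-x\rangle+\frac{\lambda}{2}\|y-x\|_2^2$ into the first-order convexity inequality for $g$ does reduce, after exact cancellation, to the paper's Equation~\ref{eq: strong_convexity}; since every step is reversible, both directions of the equivalence follow at once. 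Your remark on the chord-form characterization for the nondifferentiable case is a sensible aside, though unnecessary here because the paper's objective functions (Equations~\ref{eq: objective_function_linear_regression}--\ref{eq: objective_function_multi_logistic_regression}) are smooth and its working definition of strong convexity is the gradient-based one.
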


\begin{lemma} \textbf{Piecewise linear interpolation} In Piecewise linear interpolation~\cite{Kress1998}\label{lemma: piecewise_interpolation_bound}, we assume that the function to be approximated is a continuous function $f(x)$ where $x \in [a, b]$. Piecewise linear interpolation starts by picking up a series of {\em breaking points}, $x_i$ such that $a < x_1 < x_2 < \dots < x_p < b$ and then constructs a linear interpolant $s(x)$ over each interval $[x_{j-1}, x_{j})$ as follows:
\begin{align}\label{eq: piecewise_interpolant}
\begin{split}
    s(x) &= \frac{x-x_{j-1}}{x_j-x_{j-1}}f(x_j) + \frac{x_j-x}{x_j-x_{j-1}}f(x_{j-1})\\
    & = a_jx + b_j, x \in [x_{j-1}, x_j)
\end{split}
\end{align}

The following property holds on how close the value of $s(x)$ is compared to the original function $f(x)$:
\begin{align}\label{eq: piecewise_approx_rate}
    \begin{split}
        |f(x) - s(x)| &\leq \frac{1}{8}(\Delta x)^2 \max_{a\leq x \leq b}|f''(x)| = O((\Delta x)^2)\\
        |f'(x) - s'(x)| &\leq \frac{1}{2}(\Delta x) \max_{a\leq x \leq b}|f''(x)| = O((\Delta x))
    \end{split}
\end{align}

\end{lemma}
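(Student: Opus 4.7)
The statement is the classical quantitative error bound for piecewise linear interpolation, so the plan is to reduce the problem to a single generic subinterval $[x_{j-1}, x_j]$ and then take a maximum over $j=1,\ldots,p-1$. On any such subinterval, $s$ is the unique linear function interpolating $f$ at the two endpoints, so both bounds will follow from classical one-variable calculus estimates applied to the residual $e(x) := f(x) - s(x)$.

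For the first bound, I would invoke the standard polynomial interpolation remainder formula: since $s$ is the Lagrange interpolant of $f$ at the nodes $x_{j-1}, x_j$, there exists $\eta(x) \in (x_{j-1}, x_j)$ with
\begin{equation*}
f(x) - s(x) \;=\; \frac{f''(\eta(x))}{2}\,(x - x_{j-1})(x - x_j).
\end{equation*}
This can be derived by fixing $x$ in the open interval, forming the auxiliary function $\varphi(t) = f(t) - s(t) - \frac{f(x)-s(x)}{(x-x_{j-1})(x-x_j)}(t-x_{j-1})(t-x_j)$, noting that $\varphi$ has three zeros in $[x_{j-1}, x_j]$, and applying Rolle's theorem twice to obtain a zero of $\varphi''$. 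Taking absolute values and using the elementary maximization $|(x-x_{j-1})(x-x_j)| \le (x_j - x_{j-1})^2/4$ (attained at the midpoint), one gets $|f(x)-s(x)| \le \tfrac{(x_j-x_{j-1})^2}{8}\max|f''|$. Replacing $x_j - x_{j-1}$ by the mesh width $\Delta x$ finishes this part.

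For the derivative bound, the interpolation remainder formula above is not directly convenient because $\eta(x)$ depends on $x$. Instead I would expand $f$ around $x$ to second order at both endpoints:
\begin{align*}
f(x_{j-1}) &= f(x) + f'(x)(x_{j-1}-x) + \tfrac{1}{2}f''(\zeta_1)(x_{j-1}-x)^2, \\
f(x_j) &= f(x) + f'(x)(x_j - x) + \tfrac{1}{2}f''(\zeta_2)(x_j - x)^2.
\end{align*}
Subtracting and dividing by $x_j - x_{j-1}$ gives a clean formula for $s'(x) - f'(x)$ in terms of $f''$ evaluated at two points, whose absolute value is at most $\frac{\max|f''|\,[(x_j-x)^2 + (x-x_{j-1})^2]}{2(x_j - x_{j-1})}$. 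A short calculation shows the numerator is bounded by $(x_j - x_{j-1})^2$, yielding $|f'(x) - s'(x)| \le \tfrac{1}{2}(x_j - x_{j-1}) \max|f''| \le \tfrac{1}{2}(\Delta x)\max|f''|$.

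The main technical care is in the derivative bound: the naive approach using the mean value theorem (namely, $s'(x) = f'(\eta_j)$ for some $\eta_j$, so $|f'(x) - s'(x)| \le \max|f''| \cdot |x - \eta_j|$) only yields a bound with factor $1$ rather than the sharper $\tfrac{1}{2}$, so the Taylor-expansion argument from both endpoints is needed to squeeze out the constant. Otherwise, both estimates are local, hold independently on each subinterval, and pass to the global bound simply by taking the maximum mesh width $\Delta x = \max_j (x_j - x_{j-1})$, giving the claimed $O((\Delta x)^2)$ and $O(\Delta x)$ rates.
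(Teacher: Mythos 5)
Your proof is correct, but there is nothing in the paper to compare it against: the paper states this lemma as a classical fact imported from Kress's numerical analysis text and never proves it (it appears in the appendix only as a ``proof preliminary''). What you have written is the standard textbook derivation, and it is sound. The value bound via the Lagrange remainder $f(x)-s(x)=\tfrac{1}{2}f''(\eta)(x-x_{j-1})(x-x_j)$, obtained from the auxiliary function and two applications of Rolle's theorem, combined with $|(x-x_{j-1})(x-x_j)|\le (x_j-x_{j-1})^2/4$, gives exactly the constant $1/8$. Your derivative argument is also right, and your side remark is a genuinely useful observation: the lazy route $s'(x)=f'(\eta_j)$ via the mean value theorem only yields the constant $1$, whereas expanding $f$ to second order about $x$ at both endpoints and using $(x_j-x)^2+(x-x_{j-1})^2\le (x_j-x_{j-1})^2$ recovers the sharp constant $1/2$ (sharp, e.g., for $f(x)=x^2$ at an endpoint). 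The only nit is hypotheses: the lemma says ``continuous $f$'' but both bounds require $f\in C^2$ on $[a,b]$ (Rolle needs $f''$ to exist on the open subintervals and $f,f'$ continuous up to the endpoints); you should state this explicitly, though it is harmless here since the paper only ever applies the lemma to the smooth sigmoid-type function $f(x)=1-\frac{1}{1+e^{-x}}$.
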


\begin{lemma}\textbf{Expectation of the number of the removed samples}
Because of the randomness from \minisgd, the $\Delta n$ removed samples can be viewed as uniformly distributed within all $n$ training samples, which can be considered as a $0-1$ Bernoulli distribution with probability $\frac{\Delta n}{n}$. In other words, we can define a random variable $\textbf{S}_i$ for each sample, which is 1 with probability $\frac{\Delta n}{n}$ and 0 with probability $1-\frac{r}{n}$. So within a single mini-batch $\mathscr{B}^{t}$, we can have \begin{align*}
    \E(\sum_{i\in \mathscr{B}^{t}} \textbf{S}_i) = \E(\Delta B_t) = B\frac{r}{n}
\end{align*} and 
\begin{align*}
Var(\sum_{i\in \mathscr{B}^{t}} \textbf{S}_i) = B \frac{r}{n}(1-\frac{r}{n})    
\end{align*}

So in terms of the random variable $\frac{\Delta B_t}{B}$, its expectation and variance will be 
\begin{align}\label{eq: removed_num_exp}
    \E(\frac{\Delta B_t}{B}) = \frac{r}{n}
\end{align} and 
\begin{align}\label{eq: removed_num_var}
Var(\frac{\Delta B_t}{B}) = \frac{r}{Bn}(1-\frac{r}{n})    
\end{align}
\end{lemma}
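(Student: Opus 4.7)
The plan is to treat the construction of the mini-batch at iteration $t$ as $B$ independent uniform draws from $\{1,\dots,n\}$. This matches the standard modeling assumption implicit in the convergence analysis of \minisgd\ (as used, e.g., in Lemma \ref{lemma: convergence_conditions2}), where the per-iteration gradient is an unbiased estimator of the full gradient; that unbiasedness corresponds exactly to viewing each slot of $\mathscr{B}^{(t)}$ as a uniform draw from the $n$ training samples. Under this model, the indicator that a given slot picks one of the $r=\Delta n$ designated ``to be removed'' samples is a Bernoulli trial with success probability $r/n$.

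First I would introduce, for each slot $i\in \mathscr{B}^{(t)}$, the indicator $\textbf{S}_i\in\{0,1\}$ with $\Pr[\textbf{S}_i=1]=r/n$, and note that by construction $\Delta B_t=\sum_{i\in\mathscr{B}^{(t)}}\textbf{S}_i$. By linearity of expectation, $\E[\Delta B_t]=\sum_{i\in\mathscr{B}^{(t)}}\E[\textbf{S}_i]=B\cdot r/n$, which gives the claimed expression for $\E(\Delta B_t/B)=r/n$. Second, because the $B$ slots are drawn independently under the modeling assumption, the $\textbf{S}_i$ are i.i.d.\ Bernoulli$(r/n)$ variables, so
\begin{equation*}
\mathrm{Var}(\Delta B_t)=\sum_{i\in\mathscr{B}^{(t)}}\mathrm{Var}(\textbf{S}_i)=B\cdot\tfrac{r}{n}\bigl(1-\tfrac{r}{n}\bigr),
\end{equation*}
and dividing by $B^2$ yields $\mathrm{Var}(\Delta B_t/B)=\tfrac{r}{Bn}(1-\tfrac{r}{n})$, matching Equation~\ref{eq: removed_num_var}.

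The main conceptual obstacle, rather than a computational one, is justifying the independence assumption: in a practical \minisgd\ implementation each epoch typically samples without replacement, which would actually yield a hypergeometric distribution with a finite-population correction $(n-B)/(n-1)$ on the variance. I would address this either by explicitly stating the sampling-with-replacement convention (which is the model used in the cited convergence theory) or by observing that when $B\ll n$ the correction factor tends to $1$, so the two models agree to leading order. With that modeling choice stated up front, the proof itself is a one-line application of linearity of expectation and independence of Bernoulli trials, and the final scaling by $1/B$ (for the expectation) and $1/B^2$ (for the variance) completes the argument.
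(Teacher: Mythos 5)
Your proposal is correct and follows essentially the same route as the paper: the lemma's own statement already encodes the argument by introducing the Bernoulli indicators $\textbf{S}_i$ with success probability $\frac{\Delta n}{n}$ and applying linearity of expectation and independence to obtain $\E(\Delta B_t)=B\frac{r}{n}$ and $Var(\Delta B_t)=B\frac{r}{n}(1-\frac{r}{n})$, then rescaling by $B$ and $B^2$. Your additional remark that sampling without replacement would yield a hypergeometric variance with a finite-population correction, and that the with-replacement model is the one consistent with the unbiased-gradient assumption in Lemma~\ref{lemma: convergence_conditions2}, is a useful clarification the paper leaves implicit.
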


In \minisgd, a typical assumption is used for the convergence analysis of the model parameter $\textbf{w}^{(t)}$ in Equation \eqref{eq: mini_sgd_linear_regression2}-\eqref{eq: mini_sgd_logistic_regression2} and the update rules for other general models, i.e.:

\begin{lemma}\label{lemma: sgd_assumption}
For any randomly selected sample $i_j$ in some batch, the expectation of its gradient should be the same as the gradient over the all the samples, i.e.:

$E(\triangledown h_{i_j}(\textbf{w})) = \triangledown h(\textbf{w})$

which also implies that the following equality holds for \minisgd:

$E(\triangledown( \frac{1}{B} \sum_{i\in \mathscr{B}^{(t)}}h_{i}(\textbf{w}))) = \triangledown h(\textbf{w})$

where $E$ is the expectation value with respect to the sampling over the entire training samples.
\end{lemma}

In what follows, our analysis is based on the following assumptions:
\begin{assumption}\label{assp: hessian_property}
every $h_i(\textbf{w})$ ($i=1,2,\dots,n$) is $L-$Lipschitz continuous. Since $h_i(\textbf{w})$ has $L2-$norm regularization term, then we also know that $h_i(\textbf{w})$ is $\lambda-$strong convex.
\end{assumption}

\begin{assumption}\label{assp: bounded_grad}
each $\nabla h_i (\w^{(t)})$ is bounded by some constant $c_1$ for each $\w^{(t)}$.
\end{assumption}

\begin{assumption}\label{assp: continuous}
The function $f'(*)$ is $c_2-$Lipschitz continuous, which means that the following inequality holds:
\begin{align*}
    |f'(x) - f'(y)| \leq c_2 \|x-y\|
\end{align*}
\end{assumption}

\subsection{Main results and proofs}

\begin{theorem}\label{theorem: non_convergence2} (It is Theorem \ref{theorem: non_convergence} in the paper)
$\increprov^{(t)}$ in Equation \eqref{eq: mini_sgd_linear_regression_provenance_update2} and $\linearincreprov^{(t)}$ in Equation \eqref{eq: mini-SGD_logistic_regression_prov_update2} need not
converge under the conditions in Lemma \ref{lemma: convergence_conditions2}.
\end{theorem}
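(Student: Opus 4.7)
The plan is to produce a concrete counterexample: a linear-regression instance trained by full gradient descent with no deletions, in which $\textbf{w}^{(t)}$ converges under Lemma~\ref{lemma: convergence_conditions} but an individual monomial coefficient in the provenance-annotated expression $\increprov^{(t)}$ from Equation~\ref{eq: mini_sgd_linear_regression_provenance_update} is unbounded as $t\to\infty$. Because the linear-regression provenance iteration is a special case of the logistic one once the coefficients $a^{i,(t)},b^{i,(t)}$ are held at the constants corresponding to an identity linearization, the same example also witnesses non-convergence of $\linearincreprov^{(t)}$ in Equation~\ref{eq: mini-SGD_logistic_regression_prov_update}.

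To set up the unfolding, I would fix $\mathcal{R}=\emptyset$, $\mathscr{B}^{(t)}=\{1,\ldots,n\}$, $\increB^{(t)}=n$, and a constant learning rate $\eta_t=\eta$ chosen small enough that the conditions of Lemma~\ref{lemma: convergence_conditions} are met (so the unannotated $\textbf{w}^{(t)}$ converges). Let
\[
\mathcal{M} \;=\; (1-\eta\lambda)(\oneprov*\textbf{I}) \;-\; \tfrac{2\eta}{n}\sum_{i=1}^{n} p_i^{2}*\textbf{x}_i\textbf{x}_i^{T}.
\]
Iterating Equation~\ref{eq: mini_sgd_linear_regression_provenance_update} from $\increprov^{(0)}=\oneprov*\textbf{w}^{(0)}$ gives $\increprov^{(t)} = \mathcal{M}^{t}(\oneprov*\textbf{w}^{(0)}) + \bigl(\sum_{j=0}^{t-1}\mathcal{M}^{j}\bigr)\cdot\tfrac{2\eta}{n}\sum_{i=1}^{n}p_i^{2}*\textbf{x}_iy_i$, so by Definition~\ref{def: convergence_tensor_prod} convergence of $\increprov^{(t)}$ reduces to boundedness of every monomial coefficient in the expansion of $\mathcal{M}^{t}$.

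Next, I would expand $\mathcal{M}^{t}$ by repeatedly applying the mixing law $(\mathfrak{p}_1*A_1)(\mathfrak{p}_2*A_2)=(\mathfrak{p}_1\cdot\mathfrak{p}_2)*(A_1A_2)$ from Section~\ref{ssec: provenance_annotation}. Without an idempotence assumption on polynomial multiplication, the $t$-fold product produces formally distinct monomials. Focusing on a single index $i$, the pure monomial $p_i^{t}$ (for even $t$) arises precisely by choosing the $p_i^{2}$-summand in $t/2$ of the $t$ factors and the $\oneprov*\textbf{I}$-summand in the remaining $t/2$ factors; its coefficient is
\[
\binom{t}{t/2}(1-\eta\lambda)^{t/2}\bigl(-\tfrac{2\eta}{n}\textbf{x}_i\textbf{x}_i^{T}\bigr)^{t/2}.
\]
Using Stirling's estimate $\binom{t}{t/2}\sim 2^{t}/\sqrt{\pi t/2}$, the $L_2$-norm of this coefficient is $\Theta\bigl(2^{t} t^{-1/2}(1-\eta\lambda)^{t/2}\bigl(\tfrac{2\eta}{n}\|\textbf{x}_i\textbf{x}_i^{T}\|_2\bigr)^{t/2}\bigr)$. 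The key observation is that Lemma~\ref{lemma: convergence_conditions} only controls the Lipschitz constant $L$ of $\nabla h$, which bounds the aggregate $\tfrac{2\eta}{n}\|\sum_j \textbf{x}_j\textbf{x}_j^{T}\|_2$; it places no upper bound on the per-sample quantity $\tfrac{2\eta}{n}\|\textbf{x}_i\textbf{x}_i^{T}\|_2$. Hence I can design a training set in which one sample has sufficiently large norm (while the average remains controlled) so that $4(1-\eta\lambda)\cdot\tfrac{2\eta}{n}\|\textbf{x}_i\textbf{x}_i^{T}\|_2>1$, forcing the above coefficient to diverge exponentially. By Definition~\ref{def: convergence_tensor_prod}, $\increprov^{(t)}$ then fails to converge even though $\textbf{w}^{(t)}$ does.

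The main obstacle will be verifying that the $\binom{t}{t/2}$-factor coefficient really persists in the expansion and is not erased by cancellations. The tokens $p_1,\ldots,p_n$ are algebraically independent indeterminates in $\mathbb{N}[T]$, so monomials of the form $p_i^{2k}$ cannot be cancelled by mixed monomials $p_i^{a}p_j^{b}$ with $j\neq i$; and among the $p_i$-only contributions, all $\binom{t}{t/2}$ orderings of the chosen factors produce the same matrix $(1-\eta\lambda)^{t/2}(-\tfrac{2\eta}{n}\textbf{x}_i\textbf{x}_i^{T})^{t/2}$ because, for a fixed $i$, the matrix factors commute with one another and with $\textbf{I}$. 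The same expansion applied to $\mathcal{M}' = (1-\eta\lambda)(\oneprov*\textbf{I}) + \tfrac{\eta}{n}\sum_{i=1}^{n}p_i^{2}*a^{i}\textbf{x}_i\textbf{x}_i^{T}$ (with the coefficients $a^{i,(t)}$ frozen at values from a single sub-interval of the piecewise linearization, so the iteration is autonomous) transfers the argument verbatim to $\linearincreprov^{(t)}$, completing the theorem.
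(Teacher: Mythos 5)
Your proposal is correct and follows essentially the same route as the paper's own proof: both unfold the full-gradient-descent iteration with no deletions, isolate the monomial $p_i^{t}$ whose coefficient is $\binom{t}{t/2}(1-\eta\lambda)^{t/2}(-\tfrac{2\eta}{n}\textbf{x}_i\textbf{x}_i^{T})^{t/2}$, and use $\binom{t}{t/2}\approx 2^{t}$ against a per-sample norm that the convergence conditions only bound by $1$ to force divergence. Your explicit check that the $\binom{t}{t/2}$ orderings contribute identical commuting matrix factors and that distinct monomials cannot cancel is a welcome bit of extra care the paper leaves implicit, but it does not change the argument.
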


\begin{proof}
Let us take linear regression as an example. Note that we can explicitly evaluate the second order derivative of $h(\textbf{w})$ for linear regression, i.e. $\triangledown^2 h(\textbf{w})$, then according to Assumption \ref{assp: hessian_property}, $\triangledown^2 h(\textbf{w})$ should satisfy the following inequality:
\begin{align}\label{eq: hessian_linear_regression_bound}
    \begin{split}
        \lambda \leq ||\triangledown^2 h(\textbf{w})||_2 = ||\frac{2}{n} \sum_{i=1}^{n}\textbf{x}_i\textbf{x}_i^T + \lambda\textbf{I}||_2 \leq L 
    \end{split}
\end{align}

In order to prove Theorem \ref{theorem: non_convergence2}, we need to show that there exists a case where $\increprov^{(t)}$ cannot converge under the conditions in Lemma \ref{lemma: convergence_conditions2}.  This is achieved by considering gradient descent (\gd) without excluding any original training samples, i.e. $\{p_{i_1}, p_{i_2}, \dots, p_{i_z}\} = \{1,2,\dots,n\}$, every $\increB^{(t)}=\;n$ in Equation \eqref{eq: mini_sgd_linear_regression_provenance_update2} and every $\mathscr{B}^{(t)}$ includes all $n$ samples in Equations \eqref{eq: mini_sgd_linear_regression2} and \ref{eq: mini_sgd_linear_regression_provenance_update2}. \eat{$\mathcal{W}^{(t)}$ in Equation \eqref{eq: mini_sgd_linear_regression_provenance} since $\mathcal{W}^{(t)}$ is a special case of $\increprov^{(t)}$. Plus, we only consider gradient descent (\gd) here.}
We can then apply the update rule in Equations \eqref{eq: mini_sgd_linear_regression2} and \eqref{eq: mini_sgd_linear_regression_provenance_update2} recursively, which ends up with:
\begin{align}
\begin{split}
&\textbf{w}^{(t+1)} \label{eq: w_expansion}
= ((1-\eta\lambda)\textbf{I} - \frac{2\eta}{n} \sum_{i=1}^{n} \textbf{x}_i\textbf{x}_i^T)^{t+1}\textbf{w}^{(0)}\\
& + (\sum_{j=1}^t ((1-\eta\lambda)\textbf{I} - \frac{2\eta}{n} \sum_{i=1}^{n} \textbf{x}_i\textbf{x}_i^T)^{j}) \frac{2\eta}{n} \sum_{i=1}^{n} \textbf{x}_i y_i
\end{split}\\
\begin{split}\label{eq: w_prov_expansion}
&\increprov^{(t)}
= ((1-\eta\lambda)\oneprov*\textbf{I}- \frac{2\eta}{n} \sum_{i=1}^{n} p_i^2 * \textbf{x}_i\textbf{x}_i^T )^{t}\increprov^{(0)}\\
& + (\sum_{j=1}^t ((1-\eta\lambda)\oneprov*\textbf{I} - \frac{2\eta}{n} \sum_{i=1}^{n} p_i^2 *\textbf{x}_i\textbf{x}_i^T)^{j}) \frac{2\eta}{n} \sum_{i=1}^{n} p_i^2*\textbf{x}_i y_i
\end{split}
\end{align}

According to Assumption \ref{assp: hessian_property}, the following inequality should be satisfied:
\begin{align}\label{eq: hessian_linear_regression_bound2}
    \begin{split}
        ||\eta\triangledown^2 h(\textbf{w})||_2 = ||\frac{2\eta}{n} \sum_{i=1}^{n}\textbf{x}_i\textbf{x}_i^T + \lambda\eta\textbf{I}||_2 \leq \eta L \leq 1 
    \end{split}
\end{align}

which implies that 
\begin{align}\label{eq: hessian_linear_regression_bound3}
    \begin{split}
     ||(1-\eta\lambda)\textbf{I} - \frac{2\eta}{n} \sum_{i=1}^{n}\textbf{x}_i\textbf{x}_i^T||_2 \leq 1 
    \end{split}
\end{align}

Also since every $\textbf{x}_i\textbf{x}_i^T$ is a semi-positive definite matrix, by using Lemma \ref{lm: matrix_norm}, Equation \eqref{eq: hessian_linear_regression_bound2} also implies that:
\begin{align}\label{eq: hessian_linear_regression_bound4}
    \begin{split}
     & 1 \geq ||\frac{2\eta}{n} \sum_{i=1}^{n}\textbf{x}_i\textbf{x}_i^T + \lambda\eta\textbf{I}||_2 = C_{max}(\frac{2\eta}{n} \sum_{i=1}^{n}\textbf{x}_i\textbf{x}_i^T + \lambda\eta\textbf{I})\\
     & \geq C_{max}(\frac{2\eta}{n} \sum_{i=1}^{n}\textbf{x}_i\textbf{x}_i^T) = ||\frac{2\eta}{n} \sum_{i=1}^{n}\textbf{x}_i\textbf{x}_i^T||_2
    \end{split}
\end{align}

Then by applying Equation \eqref{lm: Weyl's_inequality}, Equation \eqref{eq: hessian_linear_regression_bound4} also leads to:
\begin{equation}\label{eq: x_i_bound}
    ||\frac{2\eta}{n}\textbf{x}_i\textbf{x}_i^T||_2 = C_{max}(\frac{2\eta}{n}\textbf{x}_i\textbf{x}_i^T) < C_{max}(\frac{2\eta}{n} \sum_{i=1}^{n}\textbf{x}_i\textbf{x}_i^T) \leq 1
\end{equation}

\eat{Observe that the update rule in $\textbf{w}^{(t)}$ in Equation \eqref{eq: w_expansion} is actually a linear system. According to the convergence conditions for iterative linear system \scream{give citations}, since Equation \eqref{eq: hessian_linear_regression_bound3} holds, then Equation \eqref{eq: w_expansion} should be converged. In order to make sure that $\textbf{w}^{(t)}$ converges, the condition, $||((1-\eta\lambda)\textbf{I} - \frac{2\eta}{n} \sum_{i=1}^{n} \textbf{x}_i\textbf{x}_i^T)||_2 \leq 1$, should be satisfied, where $||*||_2$ represents the $L2-$norm of certain matrix. Since every $\textbf{x}_i\textbf{x}_i^T$ is a semi-positive definite matrix, under the convergence conditions, for every $i$, $||\frac{2\eta}{n} \textbf{x}_i\textbf{x}_i^T||_2 \leq 1$ (details omitted).
}

Then we can expand the first term in the right-hand side of Equation \eqref{eq: w_prov_expansion}, the tensor product with provenance monomial $p_i^{t}$ should be $p_i^t*{t\choose \frac{t}{2}}(1-\eta\lambda)^{\frac{t}{2}}(-\frac{2\eta}{n} \textbf{x}_i\textbf{x}_i^T)^{\frac{t}{2}}$. According to the convergence conditions in Lemma \ref{lemma: convergence_conditions2}, $\eta < \frac{1}{L}$ and thus $||{t\choose \frac{t}{2}}(1-\eta\lambda)^{\frac{t}{2}}(-\frac{2\eta}{n} \textbf{x}_i\textbf{x}_i^T)^{\frac{t}{2}}||_2 \geq {t\choose \frac{t}{2}}||(-\frac{\eta}{n} \textbf{x}_i\textbf{x}_i^T)^{\frac{t}{2}}(\frac{2(L-\lambda)}{L})^{\frac{t}{2}}||_2$. According to \cite{sun2001convergence, dasbrief}, when $t\rightarrow \infty$, ${t\choose \frac{t}{2}}$ should be very close to $2^t$ and thus when $||-\frac{\eta}{n} \textbf{x}_i\textbf{x}_i^T||_2 \geq \frac{L}{2(L-\lambda)}$ (note that $||-\frac{2\eta}{n} \textbf{x}_i\textbf{x}_i^T||_2$ can be any value between 0 and 1 according to Equation \eqref{eq: x_i_bound}), ${t\choose \frac{t}{2}}||(-\frac{\eta}{n} \textbf{x}_i\textbf{x}_i^T)^{\frac{t}{2}}||_2 \rightarrow \infty$, which means that the tensor product with provenance monomial $p_i^{t}$ cannot converge and thus $\increprov^{(t)}$ cannot converge.

\eat{If we apply gradient descent for linear regression, the model parameter and provenance expression without idempotence over $*_k$ is as below, which can be derived recursively (Suppose we use constant learning rate, i.e. $\eta_t = \eta$ for all the iterations): 
\begin{align}
\begin{split}
&\textbf{w}^{(t+1)} 
= ((1-\eta\lambda)\textbf{I} - \frac{2\eta}{n} \sum_{i=1}^{n} \textbf{x}_i\textbf{x}_i^T)^{t+1}\textbf{w}^{(0)}\\
& + (\sum_{j=1}^t ((1-\eta\lambda)\textbf{I} - \frac{2\eta}{n} \sum_{i=1}^{n} \textbf{x}_i\textbf{x}_i^T)^{j}) \frac{2\eta}{n} \sum_{i=1}^{n} \textbf{x}_i y_i
\end{split}\\
\begin{split}
&\mathcal{W}^{(t+1)} 
= ((1-\eta\lambda)\textbf{I}\cdot 1_k - \frac{2\eta}{n} \sum_{i=1}^{n} \textbf{x}_i\textbf{x}_i^T \cdot p_i^2)^{t+1}\mathcal{W}^{(0)}\\
& + (\sum_{j=1}^t ((1-\eta\lambda)\textbf{I}\cdot 1_k - \frac{2\eta}{n} \sum_{i=1}^{n} \textbf{x}_i\textbf{x}_i^T \cdot p_i^2)^{j}) \frac{2\eta}{n} \sum_{i=1}^{n} \textbf{x}_i y_i \cdot p_i^2
\end{split}
\end{align}

Then according to the convergence conditions for iterative linear system, in order for $\textbf{w}^{(t)}$ to be converged, $||((1-\eta\lambda)\textbf{I} - \frac{2\eta}{n} \sum_{i=1}^{n} \textbf{x}_i\textbf{x}_i^T)||_2 \leq 1$, which matches the convergence conditions for gradient descent in \cite{karimi2016linear}. Since $(\textbf{x}_i\textbf{x}_i^T)$ is a semi-definite positive matrix, then for every $i$, $||\frac{2\eta}{n} \textbf{x}_i\textbf{x}_i^T||_2 \leq 1$

Note that $\mathcal{W}^{(0)} = \textbf{w}^{(0)} \cdot 1_k$, then after applying binomial expansion over the term in the last but one line of the formula above, the tensor product with provenance monomial $p_i^{2k}$ will be: $\frac{t!}{(t-k)!k!}(1-\eta \lambda)^{t-k}(-\frac{2\eta}{n}{\textbf{x}_i\textbf{x}_i^T})^k\cdot p_i^{2k}$. Note that when $k = \frac{t}{2}$, $\frac{t!}{(t-k)!k!} \geq (\frac{t}{k})^{k} = 2^{\frac{t}{2}}$ and thus $\frac{t!}{(t-k)!k!}(1-\eta \lambda)^{t-k}||(-\frac{2\eta}{n}{\textbf{x}_i\textbf{x}_i^T})^k||_2 \geq  2^{\frac{t}{2}}(1-\eta \lambda)^{t-k}||(-\frac{2\eta}{n}{\textbf{x}_i\textbf{x}_i^T})^k||_2$. Due to the existence of $2^{\frac{t}{2}}$, there will be no convergence guarantee when $t \rightarrow \infty$.} \end{proof}

\begin{theorem}\label{theorem: convergence_res2} (It is Theorem \ref{theorem: convergence_res} in the paper)
The expectation of $\increprov^{(t)}$ in Equation \eqref{eq: mini_sgd_linear_regression_provenance_update2} and of $\linearincreprov^{(t)}$ in Equation \eqref{eq: mini-SGD_logistic_regression_prov_update2}, 
converge when $t \rightarrow \infty$ if we also assume that provenance polynomial
multiplication is \emph{idempotent}.
\end{theorem}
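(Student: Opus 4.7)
The plan is to exploit the idempotence of provenance polynomial multiplication to reduce convergence of the whole provenance expression to convergence of finitely many matrix sequences, each of which satisfies an SGD-style recursion on a restricted training subproblem. Under the idempotence assumption, every monomial appearing in $\increprov^{(t)}$ or $\linearincreprov^{(t)}$ is squarefree, so (by induction on $t$, using the update rules in Equations \eqref{eq: mini_sgd_linear_regression_provenance_update2} and \eqref{eq: mini-SGD_logistic_regression_prov_update2}) we may write
\begin{equation*}
\linearincreprov^{(t)} \;=\; \sum_{S \subseteq \{j_1,\ldots,j_z\}} \Bigl(\prod_{i\in S} p_i\Bigr) * \textbf{u}_S^{(t)},
\end{equation*}
where $\{j_1,\ldots,j_z\}$ indexes the retained samples. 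By Definition \ref{def: convergence_tensor_prod}, it suffices to show that each $\E(\textbf{u}_S^{(t)})$ converges. Using Equation \eqref{eq: provenance_equation} (Möbius inversion on the subset lattice), each $\textbf{u}_S^{(t)}$ is a finite signed sum of matrices $\linearincreprov^{(t)}(p_{k_1},\ldots,p_{k_r})$ obtained by setting the tokens in some $T \subseteq S$ to $\oneprov$ and the rest to $\zeroprov$, so the problem reduces further to convergence of each of these ``evaluated'' expressions.

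Next, I would observe that for any fixed subset $T = \{k_1,\ldots,k_r\}$, the matrix $\textbf{v}^{(t)} := \linearincreprov^{(t)}(p_{k_1},\ldots,p_{k_r})$ satisfies a recursion of exactly the same shape as Equation \eqref{eq: mini-SGD_logistic_regression_para_update2}, but restricted to the samples indexed by $T$, namely
\begin{equation*}
\textbf{v}^{(t+1)} \;=\; \textbf{v}^{(t)} - \eta\, \nabla^{(t)} R_T(\textbf{v}^{(t)}),
\end{equation*}
where $\nabla^{(t)} R_T(\textbf{v}) = \lambda \textbf{v} + \frac{1}{|T \cap \mathscr{B}^{(t)}|}\sum_{r \in T \cap \mathscr{B}^{(t)}} \bigl(a^{r,(t)}\textbf{x}_r\textbf{x}_r^T \textbf{v} + b^{r,(t)} y_r \textbf{x}_r\bigr)$. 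The case of linear regression in Equation \eqref{eq: mini_sgd_linear_regression_provenance_update2} is analogous with appropriate constants.

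The main analytical step is then to prove that this pseudo-gradient satisfies the two hypotheses needed for the SGD-style contraction argument: (i) a uniform norm bound $\|\nabla^{(t)} R_T(\textbf{v})\| \leq C'$ for $\textbf{v}$ in a bounded region, which follows from the boundedness of the linearization coefficients $a^{i,(t)}, b^{i,(t)}$ guaranteed by Lemma \ref{lemma: piecewise_interpolation_bound} together with Assumption \ref{assp: bounded_grad}; and (ii) the strong-monotonicity-like estimate
\begin{equation*}
\langle \textbf{v}^{(t)} - \textbf{v}^*_T,\; \nabla^{(t)} R_T(\textbf{v}^{(t)})\rangle \;\geq\; \lambda \,\|\textbf{v}^{(t)} - \textbf{v}^*_T\|_2^{2},
\end{equation*}
where $\textbf{v}^*_T$ is the unique fixed point of $\nabla R_T$. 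Property (ii) holds because the $L2$ regularizer contributes $\lambda \textbf{I}$ to the effective Jacobian regardless of $T$ (even when $T \cap \mathscr{B}^{(t)} = \varnothing$), while the PSD-ness of $\textbf{x}_r\textbf{x}_r^T$ and sign of $a^{r,(t)}$ (which, because $f' \leq 0$, matches $-f'$ up to the interpolation error handled in Theorem \ref{theorem: aproximation_bound}) contribute a non-negative term. This step is the main obstacle, since one must verify the monotonicity inequality uniformly in $T$ and across iterations despite the iteration-dependent, sample-dependent interpolation coefficients.

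Finally, given (i) and (ii), standard manipulation gives
\begin{equation*}
\E\|\textbf{v}^{(t+1)} - \textbf{v}^*_T\|_2^{2} \;\leq\; (1 - 2\eta\lambda)\,\E\|\textbf{v}^{(t)} - \textbf{v}^*_T\|_2^{2} \;+\; \eta^{2}(C')^{2},
\end{equation*}
exactly as in Equation \eqref{eq: u_gap} of the sketch. Under the learning-rate condition $\eta < 1/L$ from Lemma \ref{lemma: convergence_conditions2}, iterating this recursion yields $\E\|\textbf{v}^{(t)} - \textbf{v}^*_T\|_2^{2} \to O(\eta)$, hence convergence of $\E(\textbf{v}^{(t)})$. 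Combined with the Möbius decomposition above, this establishes convergence of $\E(\textbf{u}_S^{(t)})$ for every $S$, and therefore of $\E(\linearincreprov^{(t)})$. The argument for $\increprov^{(t)}$ in the linear-regression case is strictly simpler since no linearization is required and the pseudo-gradient is exactly the restricted gradient, so properties (i)--(ii) follow from the $\lambda$-strong convexity of the restricted objective directly.
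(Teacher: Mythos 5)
Your proposal is correct and follows essentially the same route as the paper's proof: idempotence makes every monomial squarefree, the tensor-product coefficients are recovered by inclusion--exclusion from the subset evaluations, and each subset evaluation obeys an SGD-style recursion whose pseudo-gradient is bounded and $\lambda$-strongly monotone (thanks to the $L2$ regularizer and the fact that $a^{i,(t)}\leq 0$ makes $-a^{i,(t)}\textbf{x}_i\textbf{x}_i^T$ positive semidefinite), yielding the contraction estimate and convergence to an $O(\eta)$ neighborhood. The only difference is one of emphasis: the paper's appendix carries out the contraction argument explicitly only for the full evaluation $\linearincrew^{(t)}$ and treats the remaining coefficients as analogous, whereas you make the reduction over the subset lattice explicit, which is a clarification rather than a departure.
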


\eat{The following proofs will be described using logistic regression. By denoting $\linearincreprov^{(t)}(p_{j_1}, p_{j_2},\dots, p_{j_s})$ as $\textbf{v}$ (a matrix), its update rule should be as follows:
\begin{align}\label{eq: mini-SGD_logistic_regression_prov_update_singleton}
    \begin{split}
        & E(\textbf{v}^{(t+1)}) = 
        \textbf{v}^{(t)}\\
        & - \eta_t (\lambda \textbf{v}^{(t)} - \frac{1}{n}\sum_{r \in \{j_1, j_2, \dots, j_s\}} (a^{r,(t)}\textbf{x}_r\textbf{x}_r^T \textbf{v}^{(t)} + b^{r, (t)}y_r\textbf{x}_r))\\
        &=  \textbf{v}^{(t)} - \eta_t \triangledown^{(t)} R_{j_1,j_2,\dots, j_s}(\textbf{v}^{(t)})
    \end{split}
\end{align}
\noindent
where $\triangledown^{(t)} R_{j_1,j_2,\dots, j_s}(\textbf{v}^{(t)})$ represents the term in the second from the last line in Equation \eqref{eq: mini-SGD_logistic_regression_prov_update_singleton}, which should satisfy the following two inequalities.
\begin{align}
        &||\triangledown^{(t)} R_{j_1,j_2,\dots, j_s}(\textbf{v}^{(t)})||_2 < C'\label{eq: R_property1} \\
        & <\textbf{v}^{(t)} - \textbf{v}^*, \triangledown^{(t)} R_{j_1,j_2,\dots, j_s}(\textbf{v}^{(t)})> \geq \lambda ||\textbf{v}^{(t)} - \textbf{v}^*||_2^2 \label{eq: R_property2}
\end{align}
\noindent
where $C'$ is some constant and $\textbf{v}^*$ represents the matrix that satisfies $\triangledown^{(t)}R_{j_1,j_2,\dots, j_s}(\textbf{v}^*) = \textbf{0}$. By combining the equations above, we obtain the following inequality:
\begin{align}\label{eq: u_gap}
    \begin{split}
        &E(||\textbf{v}^{(t+1)} - \textbf{v}^*||_2^2)\\
        & = E(||\textbf{v}^{(t)} - \eta \triangledown^{(t)} R^{(t)}(\textbf{v}^{(t)}) - \textbf{v}^*||_2^2)\\
        & = ||\textbf{v}^{(t)} - \textbf{v}^*||_2^2 - 2\eta <\triangledown R_{j_1,j_2,\dots, j_s}(\textbf{v}^{(t)}), \textbf{v}^{(t)} - \textbf{v}^*>\\
        & + \eta^2 ||R_{j_1,j_2,\dots, j_s}^{(t)}(\textbf{v}^{(t)})||_2^2\\
        & \myleqtwo (1-2\eta \lambda )||\textbf{v}^{(t)} - \textbf{v}^*||_2^2 + C'^2\eta^2
    \end{split}
\end{align}

By deriving Equation \eqref{eq: u_gap} recursively and under the condition that $\eta_t \leq \frac{1}{2\lambda}$ we can conclude that $\textbf{v}^{(t)}$ has the same convergence rate as $\textbf{w}^{(t)}$. 

Then according to Equation \eqref{eq: provenanace_eq_update_model_logistic}, due to the equality between $\linearincreprov^{(t)}(p_{i_1}, p_{i_2},\dots, p_{i_z})$ and $\linearincrew^{(t)}$, we can conclude that $\linearincrew^{(t)}$ should be converged under the convergence conditions of the original model parameters $\textbf{w}^{(t)}$. \qed}



\eat{which is proved as below:
\begin{proof}
By applying SVD, $\textbf{A} = \textbf{M}^T diag(\{c_i\}_{i=1}^n) \textbf{M}$ where $c_i$ is the singular value of $\textbf{A}$. Since $\textbf{A}$ is a semi-definte matrix, $\textbf{M}$ should be a orthogonal matrix and each $c_i$ should be also eigenvalue of $\textbf{A}$ according to Lemma \ref{lm: matrix_norm}.

$\textbf{A} + \lambda \textbf{I} = \textbf{M}^T diag(\{c_i\}_{i=1}^n) \textbf{M} + \lambda \textbf{I} = \textbf{M}^T diag(\{c_i\}_{i=1}^n) \textbf{M} + \lambda \textbf{M}^T \textbf{M} = \textbf{M}^T diag(\{c_i + \lambda\}_{i=1}^n)\textbf{M}$

which is thus the eigendecomposition for $\textbf{A} + \lambda \textbf{I}$ whose eigenvalues are $\{c_i + \lambda\}_{i=1}^n$. So $||\textbf{A} + \lambda \textbf{I}||_2 = ||\textbf{M}^T diag(\{c_i + \lambda\}_{i=1}^n)\textbf{M}||_2 = \max(c_i + \lambda) = \max(c_i) + \lambda = C_{max}(A) + \lambda$
\end{proof}
}

\eat{The proof of the convergence of $\mathcal{W}^{(t)}$ in Equation \eqref{eq: mini_sgd_linear_regression_provenance} and $\linearprov^{(t)}$ in Equation \eqref{eq: mini_sgd_instantiation_approx2} also relies on convergence analysis from \cite{karimi2016linear}, which is briefly presented below.

\eat{\begin{lemma}\label{lemma: convergence_conditions2}
Given an objective function $h(\textbf{w})$, under the assumption in Lemma \ref{lemma: sgd_assumption}, by applying \sgd\ and \minisgd\ over $h(\textbf{w})$ with learning rate $\eta_t$ at $t_{th}$ iteration and $L2-$regularization term $\lambda ||\textbf{w}||_2^2$,  once the condition $\eta_t < \frac{1}{2\lambda}$ hold and $\eta_t$ is a constant across all the iterations, $\textbf{w}^{(t)}$ converges, we can obtain a linear convergence rate up to a solution level that is proportional to $\eta$, i.e.:
\begin{align}
\begin{split}
&E(h(\textbf{w}^{(t)}) - h^*) \leq (1-2\lambda\eta)^k(h(\textbf{w}^{(0)}) - h^*) + \frac{LC^2\eta}{4\lambda}\\
& = (1-2\lambda\eta)^k(h(\textbf{w}^{(0)}) - h^*) + O(\eta)
\end{split}\\
\begin{split}
&E(\textbf{w}^{(t)} - \textbf{w}^*) \leq (1-2\lambda\eta)^k(\textbf{w}^{(0)} - \textbf{w}^*) + \frac{C^2\eta}{2\lambda}\\
& = (1-2\lambda\eta)^k(\textbf{w}^{(0)} - \textbf{w}^*) + O(\eta)
\end{split}
\end{align}
where $C^2$ is the upper bound of $E[||\triangledown h_i(\textbf{w}^{(t)})||_2^2]$ for all $\textbf{w}^{(t)}$ and any randomly selected sample $\textbf{x}_i$ at $t_{th}$ iteration.
\end{lemma}
}
\eat{In terms of the assumption from Lemma \ref{lemma: sgd_assumption}, by applying linearization over the update rule of logistic regression, suppose the objective function becomes $T(\textbf{\textbf{w}})$, then at each iteration, the following equality should hold:
\begin{lemma}\label{lemma: expectation_linearization}
\begin{equation}
E(\triangledown T^{(t)}(\linearw^{(t)})) = \triangledown  T(\linearw^{(t)})    
\end{equation}
\begin{equation}
E(\triangledown^2 T^{(t)}(\linearw^{(t)})) = \triangledown T(\linearw^{(t)})    
\end{equation}
\end{lemma}

Also after the removal of subset of training samples, the property above should also hold, i.e.:
\begin{lemma}\label{lemma: expectation_incremental}
\begin{align}
    \begin{split}
    E(\triangledown \frac{1}{n-\Delta n}\sum_{i=1}^{n-\Delta n - 1}h_i(\textbf{w})) = \triangledown  h(\textbf{w}^{(t)})    
    \end{split}\\
    \begin{split}
    E(\triangledown^2 \frac{1}{n-\Delta n}\sum_{i=1}^{n-\Delta n - 1}h_i(\textbf{w})) = \triangledown^2 h(\textbf{w}^{(t)})
    \end{split}
\end{align}
\end{lemma}

This is because $E$ can be regarded as sampling over the entire training set and each sample should satisfy $E(\triangledown h_i(\textbf{w})) = \triangledown h(\textbf{w})$ and $E(\triangledown^2 h_i(\textbf{w})) = \triangledown^2 h(\textbf{w})$}

After introducing those lemmas above, we start the formal proof for Theorem \ref{theorem: convergence_res2}, which starts by proving the convergence property of the provenance expression for linear regression, i.e. $\mathcal{W}^{(t)}$ in Equation \eqref{eq: mini_sgd_linear_regression_provenance}. }

\textbf{Convergence proof for linear regression}
We simply need to consider whether $\mathcal{W}^{(t)}(\{p_{i_1}, p_{i_2}, \dots, p_{i_z}\})$ converges (suppose there are $\Delta n$ provenance tokens in total that are set as 0, which corresponds to the deletion of $\Delta n$ samples), which equals to the update rule in Equation \eqref{eq: gbm_linear_regression_incremental_updates2} and leads to a new objective function without the $\Delta n$ removed samples (denoted by $(\Delta \textbf{X}, \Delta \textbf{Y})$), i.e:

\eat{
Suppose the optimal model parameter of $g(\textbf{w})$ is $\increw^*$, then:
\begin{align}\label{eq: linear_regression_convergence}
    \begin{split}
        & E(||\increw^{(t+1)} - \increw^*||_2^2) \\
        & = E(||\increw^{(t)} - \eta_t \triangledown g^{(t)}(\increw^{(t)}) - \increw^*||_2)\\
        & = ||\increw^{(t)} - \increw^*||_2^2 \\
        & -2\eta_t <\increw^{(t)} - \increw^*, E(\triangledown g^{(t)}(\increw^{(t)}))>\\
        & + \eta_t^2 E||\triangledown g^{(t)}(\increw^{(t)})||_2^2
    \end{split}
\end{align}

Since $\triangledown g^{(t)}(\increw^*) = 0$, then the term in the last but one line of Equation \eqref{eq: linear_regression_convergence} can be bounded as below:
\begin{align}\label{eq: linear_regression_derivation}
    \begin{split}
        & <\increw^{(t)} - \increw^*, E(\triangledown g^{(t)}(\increw^{(t)}))> \\
        & = <\increw^{(t)} - \increw^*, E(\triangledown g^{(t)}(\increw^{(t)})) - E(\triangledown g^{(t)}(\increw^*))> \\
        & \myeqthree <\increw^{(t)} - \increw^*, \triangledown h(\increw^{(t)}) - \triangledown h(\increw^*)>\\
        & \mygeqone \lambda ||\increw^{(t)} - \increw^*||_2^2
    \end{split}
\end{align}

\eat{since $E$ can be regarded as sampling over the entire training set. 
\eat{Based on this result, then we can compute expectation and apply $L2-norm$ on both sides of Equation \eqref{eq: gbm_linear_regression_matrix_form} and Equation \eqref{eq: gbm_linear_regression_incremental_updates2} (see Equation \eqref{eq: gbm_linear_regression_matrix_form_expectation} and Equation \eqref{eq: gbm_linear_regression_incremental_updates_expectation} respectively):
\begin{align}\label{eq: gbm_linear_regression_matrix_form_expectation}
    \begin{split}
        ||E(\textbf{w}^{(t+1)})||_2 & \leq  ||(1-\eta_t\lambda)\textbf{I} - \frac{2\eta_t}{n}\textbf{X}^T\textbf{X}||_2||\textbf{w}^{(t)}||_2\\
        & +\frac{2\eta_t}{n}||\textbf{X}^T\textbf{Y}||_2
    \end{split}
\end{align}
\begin{align}\label{eq: gbm_linear_regression_incremental_updates_expectation}
    \begin{split}
        ||E(\increw^{(t)})||_2 &\leq ||(1-\eta_t\lambda)\textbf{I} - \frac{2\eta_t}{n}\textbf{X}^T\textbf{X}||_2||\increw^{(t)}||_2\\
        & +||\frac{2\eta_t}{n}(\textbf{X}^T\textbf{Y})||_2\\
    \end{split}
\end{align}}

Also we can give the explicit expression of $\triangledown^2 h(\textbf{w})$ and $\triangledown^2 g(\textbf{w})$, i.e.:
\begin{align}
\begin{split}
    &\triangledown^2 h(\textbf{w}) = \frac{2}{n}\textbf{X}^T\textbf{X} + \lambda \textbf{I}
\end{split}\\
\begin{split}
&\triangledown^2 g(\textbf{w}) = \frac{2}{n -\Delta n}(\textbf{X}^T\textbf{X} - \Delta \textbf{X}^T \Delta \textbf{X}) + \lambda \textbf{I}\\
& = (\frac{2}{n} + \frac{2\Delta n}{n(n-\Delta n)})(\textbf{X}^T\textbf{X} - \Delta \textbf{X}^T \Delta \textbf{X}) + \lambda \textbf{I}
\end{split}
\end{align}

So the difference between $\triangledown^2 h(\textbf{w})$ and $\triangledown^2 g(\textbf{w})$ is bounded by:
\begin{align}
\begin{split}
    & ||\frac{2}{n}\Delta \textbf{X}^T \Delta \textbf{X} - \frac{2\Delta n}{n(n-\Delta n)}(\textbf{X}^T \textbf{X} - \Delta \textbf{X}^T \Delta \textbf{X})||_2\\
    & \leq ||\frac{2\Delta n}{n(n-\Delta n)}(\textbf{X}^T \textbf{X} - \Delta \textbf{X}^T \Delta \textbf{X})||_2\\
    & + ||\frac{2}{n}\Delta \textbf{X}^T \Delta \textbf{X}||_2 \leq \frac{4\Delta n}{n}\sqrt{m}U
\end{split}
\end{align}

where $U$ is the upper bound of each entry value in $\textbf{X}$, which is usually 1 after normalization or standardization over $\textbf{X}$. Considering the fact that $\frac{\Delta n}{n}$ is small due to the small changes over the input, $||\frac{2}{n}\Delta \textbf{X}^T \Delta \textbf{X} - \frac{2\Delta n}{n(n-\Delta n)}(\textbf{X}^T \textbf{X} - \Delta \textbf{X}^T \Delta \textbf{X})||_2 \leq \frac{4\Delta n}{n}U^2$.
Thus the $L2-$norm of $\triangledown^2g(\textbf{w})$ can be bounded by:
\begin{align}
\begin{split}
& ||\triangledown^2 g(\textbf{w})||_2 = ||\triangledown^2 g(\textbf{w}) - \triangledown^2 h(\textbf{w}) + \triangledown^2 h(\textbf{w})||_2\\
& \leq ||\triangledown^2 g(\textbf{w}) - \triangledown^2 h(\textbf{w})||_2 + ||\triangledown^2 h(\textbf{w})||_2\\
& \leq L +\frac{4\Delta n}{n}U^2 = L'
\end{split}
\end{align}

in which $L'$ is a constant.}
Besides, the following equation should hold for $E||\triangledown h_i(\textbf{w}^{(t)})||^2_2$ and $E||\triangledown g_i(\textbf{w}^{(t)})||^2_2$ for any $\textbf{w}^{(t)}$, i.e.:
\begin{equation}\label{eq: linear_regression_second_derivative_bound}
    E[||\triangledown h_i(\textbf{w}^{(t)})||^2_2] = E[||\triangledown g_i(\textbf{w}^{(t)})||^2_2] \leq C^2
\end{equation}

Then by combining Equation \eqref{eq: linear_regression_derivation} and Equation \eqref{eq: linear_regression_second_derivative_bound}, Equation \eqref{eq: linear_regression_convergence} can be derived recursively as below:
\begin{align}
    \begin{split}
        & E(||\increw^{(t+1)} - \increw^*||_2^2) \\
        & \leq (1-2\lambda \eta)||\increw^{(t)} - \increw^*||_2^2 + C^2\eta^2 \\
        & \leq (1-2\lambda \eta)^{(t+1)}||\increw^{(0)} - \increw^*||_2^2 + O(\eta)
    \end{split}
\end{align}

So in the case of constant $\eta_t$ across different iterations, under the convergence conditions in Lemma \ref{lemma: convergence_conditions2}, i.e. $\eta_t = \eta < \frac{1}{2\lambda}$, $E(g(\increw^{(t)}))$ should be converged and thus $\increw^{(t)}$ should get converged.
}

In what follows, we will only prove the convergence of binary logistic regression, i.e. the convergence of $\linearincreprov^{(t)}$, which is the same as proving the convergence of $\linearincrew^{(t)}$. The convergence of linear regression and multi-nomial logistic regression can be proven in the similar way. 

According to Lemma \ref{lemma: piecewise_interpolation_bound}, the following equation holds:
\begin{align}
    &\|\triangledown T^{(t)}_i(\textbf{w}^{(t)}) - \triangledown h^{(t)}_i(\textbf{w}^{(t)}) \| = \|y_i \textbf{x}_i(s(y_i\textbf{w}^{(t)T}\textbf{x}_i) - f(y_i\textbf{w}^{(t)T}\textbf{x}_i))\| = O((\Delta x)^2)\label{eq: approx_bound_1}\\
    & \|\triangledown^2 T^{(t)}_i(\textbf{w}^{(t)}) - \triangledown^2 h^{(t)}_i(\textbf{w}^{(t)})\| = \|-a^{i, (t)}\textbf{x}_i\textbf{x}_i^T + f'(y_i\textbf{w}^{(t)T}\textbf{x}_i)\textbf{x}_i\textbf{x}_i^T\| = O((\Delta x))\label{eq: approx_bound_2}
\end{align}
for all $\w^{(t)}$ (rather than $\logistlinearincrew^{(t)}$ or $\linearincrew^{(t)}$ since $a^{i,(t)}$ and $b^{i, (t)}$ are evaluated when $\w$ is $\w^{(t)}$). Since $\lambda \leq \|\triangledown^2 h^{(t)}(\textbf{w}^{(t)})\| \leq L$, then $\lambda - O(\Delta x) \leq \|\triangledown^2 T^{(t)}(\textbf{w}^{(t)})\| \leq L + O(\Delta x)$. Then by bringing in the definition of $\triangledown^2 T^{(t)}(\textbf{w}^{(t)})$, we know that:
\begin{align}
    & \lambda - O(\Delta x) \leq \|\triangledown^2 T^{(t)}(\textbf{w}^{(t)})\| = \|-a^{i, (t)}\textbf{x}_i\textbf{x}_i^T + \lambda\textbf{I} \| \leq L +O(\Delta x)
\end{align}

By using the fact that $\|*\|_2 = C_{max}(*)$, the following formula also holds:
\begin{align}
    & \lambda - O(\Delta x) \leq C_{max}(-a^{i, (t)}\textbf{x}_i\textbf{x}_i^T + \lambda\textbf{I}) \leq L +O(\Delta x)
\end{align}

But note that since every $a^{i,(t)}$ is a negative value, then $-a^{i, (t)}\textbf{x}_i\textbf{x}_i^T$ is a semi-positive definite matrix and thus:
\begin{align}
    \|-a^{i, (t)}\textbf{x}_i\textbf{x}_i^T + \lambda\textbf{I}\| \geq \lambda
\end{align}

Then we bound $\textbf{I} - \eta (-a^{i, (t)}\textbf{x}_i\textbf{x}_i^T + \lambda\textbf{I})$ as:
\begin{align}\label{eq: bound_t_w}
\begin{split}
& \|\textbf{I} - (-\eta a^{i, (t)}\textbf{x}_i\textbf{x}_i^T + \eta\lambda\textbf{I})\| \leq 1 - \eta \lambda
\end{split}
\end{align}

Similarly we know that the following inequality holds for any $\textbf{w}$:
\begin{align}\label{eq: h_hessian_bound}
    \|\textbf{I} - \eta\triangledown^2 h^{(t)}(\textbf{w})\| = \|(1-\eta \lambda)\textbf{I} - \eta [-\textbf{x}_i\textbf{x}_i^Tf'(y_i\textbf{w}^{T}\textbf{x}_i)]\| \leq 1-\eta \lambda
\end{align}

Then the convergence of $\linearincrew$ can be derived below (in the case of constant $\eta_t$ according to Lemma \ref{lemma: convergence_conditions2}):
\begin{align}\label{eq: w_LU_gap}
    \begin{split}
        &||\linearincrew^{(t+1)} - \linearincrew^*||_2\\
        & = ||\linearincrew^{(t)} - \eta \triangledown^{(t)} R^{(t)}(\linearincrew^{(t)}) - \linearincrew^* + \eta \triangledown^{(t)} R^{(t)}(\linearincrew^{*}) - \eta \triangledown^{(t)} R^{(t)}(\linearincrew^{*})||_2
    \end{split}
\end{align}

Then by using the fact that if $\linearincrew^{(t)}$ converges, $\|\triangledown^{(t)} R^{(t)}(\linearincrew^{*})\| \leq C$ for some constant value $C$ for all $t$. By using the triangle inequality and the definition of $\triangledown R^{(t)}(*)$, the formula above is bounded as:
\begin{align*}
    & \leq \|\{\textbf{I} - \frac{1}{\increB^{(t)}}\sum_{i \in \mathscr{B}^{(t)}, i \not\in \mathcal{R}}\eta[-a^{i, (t)}\textbf{x}_i\textbf{x}_i^T + \lambda\textbf{I} ]\}(\linearincrew^{(t)} - \linearincrew^{*})\| + \eta C\\
    & \leq \|\textbf{I} - \frac{1}{\increB^{(t)}}\sum_{i \in \mathscr{B}^{(t)}, i \not\in \mathcal{R}}\eta[-a^{i, (t)}\textbf{x}_i\textbf{x}_i^T + \lambda\textbf{I} ]\| \|(\linearincrew^{(t)} - \linearincrew^{*})\| + \eta C
\end{align*}

By using the result from Equation \eqref{eq: bound_t_w}, we get:
\begin{align*}
    & \leq (1-\eta \lambda)\| \|(\linearincrew^{(t)} - \linearincrew^{*})\| + \eta C 
\end{align*}

By applying the formula above recursively, we get:
\begin{align*}
    & ||\linearincrew^{(t+1)} - \linearincrew^*||_2 \leq \frac{C}{\lambda}
\end{align*}

This finishes the proof.




\eat{and the difference between $\triangledown^2 h^{(t)}(\textbf{w}^{(t)})$ and $\triangledown^2 T^{(t)}(\linearw^{(t)})$ is quantified by computing the $L2-$norm of the difference between Equation \eqref{eq: second_derivative_h_logistic_regression} and Equation \eqref{eq: T_second_derivative_logistic_regression}, i.e.:
\begin{align}\label{eq: gap_h_T_w_second_deri}
\begin{split}
    & ||\triangledown^2 h^{(t)}(\textbf{w}^{(t)}) - \triangledown^2 T^{(t)}(\linearw^{(t)})||_2\\
    & = ||\frac{1}{B}\sum_{i=r^{(t)}}^{r^{(t)}+B-1}(a^{i,(t)} - f'(y_i\textbf{w}^{(t)T}\textbf{x}_i))\textbf{x}_i\textbf{x}_i^T||_2\\
    & = ||\frac{1}{B}\sum_{i=r^{(t)}}^{r^{(t)}+B-1}(a^{i,(t)} - f'(y_i\textbf{w}^{(t)T}\textbf{x}_i))\textbf{x}_i\textbf{x}_i^T||_2 \\
    & \leq \frac{1}{B}\sum_{i=r^{(t)}}^{r^{(t)}+B-1}|a^{i,(t)} - f'(y_i\textbf{w}^{(t)T}\textbf{x}_i)|||\textbf{x}_i\textbf{x}_i^T||_2\\
    & \leq O(\Delta x)mD^2
\end{split}
\end{align}

By tuning the size of $\Delta x$, the difference above can be small enough, i.e. $||\triangledown^2 h^{(t)}(\textbf{w}^{(t)}) - \triangledown^2 T^{(t)}(\linearw^{(t)})||_2 \leq o(1)$.

Similarly, we can then quantify the difference between $\triangledown^2 R^{(t)}(\linearincrew^{(t)})$ (from Equation \eqref{eq: mini-SGD_updated_model_parameters}) and $\triangledown^2 T^{(t)}(\linearw^{(t)})$, i.e.:
\begin{align}\label{eq: gap_Q_T_second_deri}
    \begin{split}
        &||\triangledown^2 R^{(t)}(\linearincrew^{(t)}) - \triangledown^2 T^{(t)}(\linearw^{(t)})||_2\\
        & = ||\frac{1}{z^{(t)}}\sum_{i=r^{(t)}, i \in \{i_1, i_2,\dots, i_z\}}^{r^{(t)}+B-1}a^{i, (t)}(\textbf{x}_i\textbf{x}_i^T)\\
        & - \frac{1}{B}\sum_{i=r^{(t)}}^{r^{(t)}+B-1}a^{i, (t)}(\textbf{x}_i\textbf{x}_i^T)||_2\\
        & \leq \frac{4(B - z^{(t)})}{B} U^2 
    \end{split}
\end{align}

In the end, we can bound the $L2-$norm of $\triangledown^2 R^{(t)}(\linearincrew^{(t)})$ by using the result of Equation \eqref{eq: gap_h_T_w_second_deri} and Equation \eqref{eq: gap_Q_T_second_deri}, i.e.:
\begin{align}
\begin{split}
& ||\triangledown^2 R^{(t)}(\linearincrew^{(t)})||_2 = ||\triangledown^2 R^{(t)}(\linearincrew^{(t)})\\
& - \triangledown^2 T^{(t)}(\linearw^{(t)}) + \triangledown^2 T^{(t)}(\linearw^{(t)})\\
& - \triangledown^2 h^{(t)}(\textbf{w}^{(t)}) + \triangledown^2 h^{(t)}(\textbf{w}^{(t)})||_2\\
& \leq \frac{4(B - z^{(t)})}{B} U^2  + o(1) + L \leq L'
\end{split}
\end{align}

in which $L'$ is a constant. Besides, we can bound the distance between $\triangledown T_i(\textbf{w}^{(t)})$ and $\triangledown h_i(\textbf{w}^{(t)})$ which represent the gradient of $T(\textbf{w})$ and $h(\textbf{w})$ over each sample $i$, i.e.:
\begin{align}\label{eq: T_h_derivative_difference}
\begin{split}
&||\triangledown T_i(\textbf{w}^{(t)}) - \triangledown h_i(\textbf{w}^{(t)})||_2\\
    & = ||y_i\textbf{x}_i (f(y_i\textbf{w}^{(t)T}\textbf{x}_i) -s(y_i\textbf{w}^{(t)T}\textbf{x}_i))||_2 \\
    & \leq \sqrt{m}DO((\Delta x)^2) \approx o(1)    
\end{split}
\end{align}

Besides, according to Lemma \ref{lemma: expectation_incremental}, $E[||\triangledown R_i(\textbf{w}^{(t)})||_2^2] =  E[||\triangledown T_i(\textbf{w}^{(t)})||_2^2]$. So by considering the result of Equation \eqref{eq: T_h_derivative_difference}, $E[||\triangledown R_i(\textbf{w}^{(t)})||_2^2] = E[||\triangledown h_i(\textbf{w}^{(t)})||_2^2] = C^2 + o(1)$. So in the end, by following the same derivation procedure in Equation \eqref{eq: prove_convergence}, we can prove that $\E(\linearincrew^{(t)})$, i.e. the expectation of $\linearincrew^{(t)}$ in Equation \eqref{eq: mini_sgd_instantiation_approx2}, converges.}

\eat{the following equality holds:
\begin{align}
    \begin{split}
        ||E_{\textbf{w}} (\triangledown^2 T_{r..r+B}(\textbf{w})) - \triangledown^2 T(\textbf{w})||_2 \leq o(1)
    \end{split}
\end{align}

in which $\triangledown^2 T(\textbf{w})$ and $(\triangledown^2 T_{r..r+B}(\textbf{w}))$ are both semi-positive definite matrices. 
$\triangledown^2 T(\textbf{w})$ against $\triangledown^2 h(\textbf{w})$, i.e.:
\begin{align*}
        &||\triangledown^2 h(\textbf{w}) - \triangledown^2 T(\textbf{w})||_2 = ||\frac{1}{n}\sum_{i=1}^{n} (f'(y_i\textbf{w}^T\textbf{x}_i) - a^{i,(t)})\textbf{x}_i\textbf{x}_i^T||_2\\
        &\myleqone \frac{1}{n}||\textbf{X}||_2^2 |O(\Delta x)| \leq \frac{1}{n}(\sum_i\sum_j x_{ij}^2)|O(\Delta x)|\numberthis\label{eq: second_derivative_gap}
\end{align*}

and compare $\triangledown^2 T_{r..r+B}(\textbf{w})$ against $\triangledown^2 h_{r..r+B}(\textbf{w})$, i.e.:
\begin{align*}
        &||\triangledown^2 h_{r..r+B}(\textbf{w}) - \triangledown^2 T_{r..r+B}(\textbf{w}))||_2\\
        &= ||(\frac{1}{B}\sum_{i=r}^{r+B} (f'(y_i\textbf{w}^T\textbf{x}_i) - a^{i,(t)})\textbf{x}_i\textbf{x}_i^T||_2\\
        &\myleqone \frac{1}{B}||\textbf{X}_{r..r+B}||_2^2 |O(\Delta x)| \leq \frac{1}{B}(\sum_{i=r}^{r+B}\sum_j x_{ij}^2)|O(\Delta x)|\numberthis\label{eq: second_derivative_gap}
\end{align*}

In mose cases, there should be a upper bound for each feature. Suppose $X_{ij}$ is bounded by a value $D$, then $(\sum_i\sum_j x_{ij}^2)$ can be bounded by $mn D^2$ and thus:
\begin{equation}
    ||\triangledown^2 h(\textbf{w}) - \triangledown^2 T(\textbf{w})||_2 \leq mD^2|O(\Delta x)|
\end{equation}
\begin{equation}
    ||\triangledown^2 h_{r..r+B}(\textbf{w}) - \triangledown^2 T_{r..r+B}(\textbf{w})||_2 \leq mD^2|O(\Delta x)|
\end{equation}

which also implies that:
\begin{align}
    & ||E(\triangledown^2 h_{r..r+B}(\textbf{w}) - \triangledown^2 T_{r..r+B}(\textbf{w}))||_2\\
    & ||\triangledown^2 h(\textbf{w}) - E(\triangledown^2 T_{r..r+B}(\textbf{w}))||_2
    \leq mD^2|O(\Delta x)|
\end{align}

for which we can tune the size of $\Delta x$ to make $mD^2|O(\Delta x)|$ as small as possible, which means that:
\begin{equation}
    ||\triangledown^2 h(\textbf{w}) - \triangledown^2 T(\textbf{w})||_2 \leq o(1)
\end{equation}
\begin{equation}
    ||\triangledown^2 h_{r..r+B}(\textbf{w}) - \triangledown^2 T_{r..r+B}(\textbf{w})||_2 \leq o(1)
\end{equation}
\begin{equation}
    ||\triangledown^2 h(\textbf{w}) - E(\triangledown^2 T_{r..r+B}(\textbf{w}))||_2 \leq o(1)
\end{equation}

Then we can bound the $L2-$norm of $E(\triangledown^2 T_{r..r+B}(\textbf{w}))$, i.e.:
\begin{align}\label{eq: T_w_upper_bound}
\begin{split}
&||E(\triangledown^2 T_{r..r+B}(\textbf{w}))||_2 = ||E(\triangledown^2 T_{r..r+B}(\textbf{w})) - \triangledown^2 h(\textbf{w}) + \triangledown^2 h(\textbf{w})||_2\\
& \csineq ||E(\triangledown^2 T_{r..r+B}(\textbf{w})) - \triangledown^2 h(\textbf{w})||_2\\
& + ||\triangledown^2 h(\textbf{w})||_2 \leq L + o(1)
\end{split}
\end{align}

Since $-\frac{1}{B}\sum_{i=r}^{r+B-1} \textbf{x}_i\textbf{x}_i^Tf'(y_i\textbf{w}^T\textbf{x}_i)$ is a semi-positive definite matrix and $\lambda > 0$, then $E(\triangledown^2 T_{r..r+B}(\textbf{w}))$ should be a positive definite matrix. Suppose the eigenvalues of $\E(\triangledown^2 T_{r..r+B}(\textbf{w}))$ is $\{c_i'\}_{i=1}^n$, by referencing Equation \eqref{eq: T_w_upper_bound}, we can also bound every $c_i'$:
\begin{align}\label{eq: T_w_eigen_bound}
    0<c_i' \leq max\{c_i'\}_{i=1}^n = ||E(\triangledown^2 T_{r..r+B}(\textbf{w}))||_2 \leq L + o(1)
\end{align}

So the bound of the $L2-$norm of $(\textbf{I} - \eta_t E(\triangledown^2 T_{r..r+B}(\textbf{w})))$, i.e.:
\begin{align}
    \begin{split}
        ||\textbf{I} - \eta_t E(\triangledown^2 T_{r..r+B}(\textbf{w}))||_2 = max\{|1-\eta_t c_i'|\}
    \end{split}
\end{align}

According to Equation \eqref{eq: T_w_eigen_bound}, since $c_i' \leq L+o(1)$, we can further bound the formula above, i.e. :
\begin{align}\label{eq: T_w_bound}
    \begin{split}
        &(\forall i) -\eta_t o(1)\leq 1-\eta_tL-\eta_t o(1) \leq 1-\eta_t c_i' \leq 1\\
        & \Rightarrow (\forall i) |1-\eta_t c_i'| < 1 \Rightarrow max\{|1-\eta_t c_i'|\} < 1\\
        & \Rightarrow ||\textbf{I} - \eta_t E(\triangledown^2 T_{r..r+B}(\textbf{w}))||_2 = max\{|1-\eta_t c_i'|\} \leq 1
    \end{split}
\end{align}

We also observe that Equation \eqref{eq: mini_sgd_instantiation_approx2} can be rewritten as:
\begin{align}\label{eq: iterative_terms}
    \begin{split}
         \linearw^{(t+1)}& = ((1-\eta_t\lambda)\textbf{I} + \frac{\eta_t}{B}\sum_{i=r}^{r+B-1}a^{i, (t)}\textbf{x}_i\textbf{x}_i^T) \linearw^{(t)}\\
        & + \frac{\eta_t}{B} \sum_{i=r}^{r+B-1} b^{i, (t)}y_i\textbf{x}_i\\
        & = (\textbf{I} - \eta_t \triangledown^2 T_{r..r+B}(\textbf{w})) \linearw^{(t)} + \frac{\eta_t}{B} \sum_{i=r}^{r+B-1} b^{i, (t)}y_i\textbf{x}_i\\
    \end{split}
\end{align}

for which we can apply expectation over both sides, i.e.:
\begin{align}\label{eq: iterative_terms_expect}
    \begin{split}
        E( \linearw^{(t+1)}) &= (E(\textbf{I} - \eta_t \triangledown^2 T_{r..r+B}(\textbf{w}))) \linearw^{(t)}\\
        & + E(\frac{\eta_t}{B} \sum_{i=r}^{r+B-1} b^{i, (t)}y_i\textbf{x}_i)\\
        & = (\textbf{I} - \eta_t E(\triangledown^2 T_{r..r+B}(\textbf{w})))\textbf{w}^{(t)} + \frac{\eta_t}{n}\sum_{i=1}^n b^{i, (t)}y_i\textbf{x}_i
    \end{split}
\end{align}

which can be derived recursively. Since $\triangledown^2 T(\textbf{w})$ is a positive definite matrix, it should be a invertible matrix. If $\textbf{I} - \eta_t E(\triangledown^2 T_{r..r+B}(\textbf{w}))$ in Equation \eqref{eq: iterative_terms} is regarded as $\textbf{B}$ in Lemma \ref{lm: linear_system_convergence}, then $\textbf{w}^{(t)'}$ should be converged, which is satisfied for any batch of any size $B$.

Similarly, if we replace $\textbf{X}_{r..r+B}$ with any subset of it,i .e. $\Delta \textbf{X}_{r..r+B}$, i.e.:
\begin{align}\label{eq: iterative_terms_subset}
    \begin{split}
        \textbf{w}^{(t+1)''}& = ((1-\eta_t\lambda)\textbf{I} + \frac{\eta_t}{B}\sum_{i \in \mathcal{R}_{r..r+B}}a^{i, (t)}\textbf{x}_i\textbf{x}_i^T) \linearw^{(t)}\\
        & + \frac{\eta_t}{B} \sum_{i \in \mathcal{R}_{r..r+B}, y_i \in \Delta \textbf{y}_{r..r+B}} b^{i, (t)}y_i\textbf{x}_i\\
    \end{split}
\end{align}

which should be still converged.

Then we need to build connections between the provenance expression $\mathcal{W}^{(t)'}$ and $\textbf{w}^{(t+1)''}$.

First of all,  . So $\mathcal{W}^{(t)'}$ in Equation \eqref{eq: mini-SGD_prov} will be written as:

\begin{align}\label{eq: mini-SGD_prov_idempotence}
    \begin{split}
        \mathcal{W}^{(t+1)'} & \leftarrow ((1-\eta_t\lambda)(1_K*\textbf{I}) + \frac{\eta_t}{\mathcal{B}}\sum_{i=r}^{r+B-1}a^{i, (t)}(p_i*\textbf{x}_i\textbf{x}_i^T))\mathcal{W}^{(t)'}\\
        & + \frac{\eta_t}{\mathcal{B}} \sum_{i=r}^{r+B-1} ({p_i}*b^{i, (t)}y_i\textbf{x}_i)\\        
    \end{split}
\end{align}

The proof of the convergence of every tensor product in $\mathcal{W}^{(t+1)'}$ relies on the following theorems, i.e.:

\begin{theorem}
For any subset of provenance tokens, i.e. $\{p_{i_1}, p_{i_2}, \dots, p_{i_k}\} (\subset \{p_{1}, p_{2}, \dots, p_{n}\})$, the sum of the matrices associated with those tokens in $\mathcal{W}^{(t)}$ will be the same as the $\textbf{w}^{(t)'}$ where $\bigcup \Delta \textbf{X}_{r..r+B}$ is composed of $\textbf{x}_{i_1}, \textbf{x}_{i_2},\dots, \textbf{x}_{i_k}$.
\end{theorem}

By denoting the tensor product with provenance monomial $p_{i_1}p_{i_2}\dots p_{i_k}$ in $\mathcal{W}^{(t)}$ as $\mathcal{W}^{(t)}_{p_{i_1}p_{i_2}\dots p_{i_k}}$ and $\textbf{w}^{(t)'}$ where $\bigcup \Delta \textbf{X}_{r..r+B}$ is composed of $\textbf{x}_{i_1}, \textbf{x}_{i_2},\dots, \textbf{x}_{i_k}$ as $\textbf{w}^{(t)}_{\{\textbf{x}_{i_1}, \textbf{x}_{i_2},\dots, \textbf{x}_{i_k}\}}$, then the following equality hold:

\begin{align}
\begin{split}
    &\mathcal{W}^{(t)}_{p_{i_1}p_{i_2}\dots p_{i_k}} = \textbf{w}^{(t)}_{\textbf{x}_{i_1}, \textbf{x}_{i_2},\dots, \textbf{x}_{i_k}}\\
    & - \sum_{\{\textbf{x}_{j_1}, \textbf{x}_{j_2},\dots, \textbf{x}_{j_r}\} \subsetneq  \{\textbf{x}_{i_1}, \textbf{x}_{i_2},\dots, \textbf{x}_{i_k}\}}\textbf{w}^{(t)}_{\textbf{x}_{j_1}, \textbf{x}_{j_2},\dots, \textbf{x}_{j_r}}    
\end{split}
\end{align}

Since every $\textbf{w}^{(t)}_{\textbf{x}_{j_1}, \textbf{x}_{j_2},\dots, \textbf{x}_{j_r}}$ and $\textbf{w}^{(t)}_{\textbf{x}_{i_1}, \textbf{x}_{i_2},\dots, \textbf{x}_{i_k}}$ are converged, then $\mathcal{W}^{(t)}_{p_{i_1}p_{i_2}\dots p_{i_k}}$ should be converged.
}
\eat{In the beginning, $\mathcal{W}^{(0)}' = 1_K*\textbf{w}^{(0)}$, which satisfies the form above. Then suppose in the $t_{th}$ iteration, $\mathcal{W}^{(t)}' = \sum_{\substack{\{p_{i_1}, p_{i_2}, \dots, p_{i_r}\}\\ \in 2^{\{p_1, p_2, \dots, p_n\}}}} (p_{i_1}p_{i_2}\dots, p_{i_r})*\textbf{u}_{i_1i_2\dots i_r}^{(t)}$. By plugging it into Equation \eqref{eq: mini-SGD_prov_idem} and using the properties in Definition \ref{def: tensor_prod}, every tensor product in $\mathcal{W}^{(t + 1)}'$ should still be in the form of $(p_{i_1}p_{i_2}\dots p_{i_r})*\textbf{u}_{i_1i_2\dots i_r}^{(t + 1)}$
\eat{\begin{align}
    \begin{split}
        &\mathcal{W}^{(t+1)'} = ((1-\eta_t\lambda)(1_K*\textbf{I}) + \frac{\eta_t}{B}\sum_{i=r}^{r+B-1}a^{i, (t)}(p_i*\textbf{x}_i\textbf{x}_i^T))\mathcal{W}^{(t)'}\\
        & + \frac{\eta_t}{B} \sum_{i=r}^{r+B-1} ({p_i}*b^{i, (t)}y_i\textbf{x}_i)\\
        & = ((1-\eta_t\lambda)(1_K*\textbf{I}) + \frac{\eta_t}{B}\sum_{i=r}^{r+B-1}a^{i, (t)}(p_i*\textbf{x}_i\textbf{x}_i^T))\\
        &\sum_{\substack{\{p_{i_1}, p_{i_2}, \dots, p_{i_r}\}\\ \in 2^{\{p_1, p_2, \dots, p_n\}}}} (p_{i_1}p_{i_2}\dots, p_{i_r})*\textbf{u}_{i_1i_2\dots i_r}^{(t)}\\
        &+ \frac{\eta_t}{B} \sum_{i=r}^{r+B-1} ({p_i}*b^{i, (t)}y_i\textbf{x}_i)\\
        & = (1-\eta_t\lambda)\sum_{\substack{\{p_{i_1}, p_{i_2}, \dots, p_{i_r}\}\\ \in 2^{\{p_1, p_2, \dots, p_n\}}}} (p_{i_1}p_{i_2}\dots, p_{i_r})*\textbf{u}_{i_1i_2\dots i_r}^{(t)}\\
        &+ \frac{\eta_t}{B}\sum_{i=r}^{r+B-1}a^{i, (t)}(p_i*\textbf{x}_i\textbf{x}_i^T)\sum_{\substack{\{p_{i_1}, p_{i_2}, \dots, p_{i_r}\}\\ \in 2^{\{p_1, p_2, \dots, p_n\}}}} (p_{i_1}p_{i_2}\dots, p_{i_r})*\textbf{u}_{i_1i_2\dots i_r}^{(t)}\\
        &\\
        &+ \frac{\eta_t}{B} \sum_{i=r}^{r+B-1} ({p_i}*b^{i, (t)}y_i\textbf{x}_i) = \sum_{\substack{\{p_{i_1}, p_{i_2}, \dots, p_{i_r}\}\\ \in 2^{\{p_1, p_2, \dots, p_n\}}}} (p_{i_1}p_{i_2}\dots, p_{i_r})*\textbf{u}_{i_1i_2\dots i_r}^{(t+1)}\\
    \end{split}
\end{align}}

Then according to the definition of convergence of tensor product (See Definition \ref{def: convergence_tensor_prod}), we need to prove that every $\textbf{u}^{(t)}_{i_1i_2\dots i_r}$ should be converged when $t \rightarrow \infty$, which can be still proven by induction.

In the simplest case where the tensor product has the form of $p_j*\textbf{u}_{j}^{(t)}$, $\textbf{u}_{j}^{(t)}$ can be derived by setting $p_j$ as $1_K$ while setting all the other tokens as $0_K$, which will be:
\begin{align}
\begin{split}
    \textbf{u}_j^{(t+1)} & = ((1-\eta_t\lambda)(1_K*\textbf{I}) + \frac{\eta_t}{B}\mathbb{I}_{r\leq j < r+B}a^{j, (t)}(p_j*\textbf{x}_j\textbf{x}_j^T))\textbf{u}_j^{(t)}\\
        & + \frac{\eta_t}{B} \sum_{i=r}^{r+B-1} ({p_i}*b^{i, (t)}y_i\textbf{x}_i)\\    
\end{split}
\end{align}
}
\qed

\begin{theorem}\label{theorem: aproximation_bound2}(This is Theorem \ref{theorem: aproximation_bound} in the paper)
$||E(\textbf{w}^{(t)} - \linearw^{(t)})||_2$ is bounded by $O((\Delta x)^2)$ where $\Delta x$ is an arbitrarily small value representing the length of the longest sub-interval used in piecewise linear interpolations. 
\end{theorem}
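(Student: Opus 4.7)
\textbf{Proof proposal for Theorem \ref{theorem: aproximation_bound}.}
The plan is to derive a one-step recursion for the vector $\textbf{e}^{(t)} := E(\textbf{w}^{(t)} - \linearw^{(t)})$ and then unroll it using a contraction factor produced by strong convexity. First I would subtract the two update rules \eqref{eq: mini_sgd_logistic_regression3} and \eqref{eq: update_rule_w_l} to obtain
\[
\textbf{w}^{(t+1)} - \linearw^{(t+1)} = (1-\eta_t\lambda)(\textbf{w}^{(t)} - \linearw^{(t)}) - \eta_t\bigl(\nabla^{(t)} h(\textbf{w}^{(t)}) - \nabla T^{(t)}(\linearw^{(t)})\bigr),
\]
and then split the gradient difference as
\[
\nabla^{(t)} h(\textbf{w}^{(t)}) - \nabla T^{(t)}(\linearw^{(t)}) = \bigl[\nabla^{(t)} h(\textbf{w}^{(t)}) - \nabla T^{(t)}(\textbf{w}^{(t)})\bigr] + \bigl[\nabla T^{(t)}(\textbf{w}^{(t)}) - \nabla T^{(t)}(\linearw^{(t)})\bigr].
\]

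The first bracket is the interpolation-error term. Using the explicit forms of $\nabla h_i$ and $\nabla T_i$, each summand equals $y_i\textbf{x}_i\bigl(f(y_i\textbf{w}^{(t)T}\textbf{x}_i) - s(y_i\textbf{w}^{(t)T}\textbf{x}_i)\bigr)$, so Lemma \ref{lemma: piecewise_interpolation_bound} together with Assumption \ref{assp: bounded_grad} bounds its norm by $O((\Delta x)^2)$. The second bracket, by the mean-value form and Assumption \ref{assp: continuous}, can be written as $\textbf{H}^{(t)}(\textbf{w}^{(t)} - \linearw^{(t)})$ for some symmetric matrix $\textbf{H}^{(t)}$ whose spectrum is within $O(\Delta x)$ of the Hessian $\nabla^2 h^{(t)}$ (by Equation \eqref{eq: approx_bound_2}), and hence whose eigenvalues still lie in $[\lambda - O(\Delta x),\, L + O(\Delta x)]$ by Weyl's inequality (Lemma \ref{lm: Weyl's_inequality}).

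Taking the expectation of the recursion and using Lemma \ref{lemma: sgd_assumption} (so that $E(\nabla^{(t)} h) = \nabla h$, and the analogous identity for $T$), I obtain
\[
\textbf{e}^{(t+1)} = \bigl[(1-\eta_t\lambda)\textbf{I} - \eta_t \tilde{\textbf{H}}^{(t)}\bigr]\,\textbf{e}^{(t)} - \eta_t\,\boldsymbol{\delta}^{(t)},
\]
where $\|\boldsymbol{\delta}^{(t)}\|_2 = O((\Delta x)^2)$ and the operator norm of $(1-\eta_t\lambda)\textbf{I} - \eta_t \tilde{\textbf{H}}^{(t)}$ is at most $1 - \eta_t\lambda + O(\eta_t\Delta x)$ under the step-size assumption $\eta_t \le 1/L$ of Lemma \ref{lemma: convergence_conditions2}. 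I would then take norms, use Lemma \ref{lm: cs_inequality}, and pick $\Delta x$ small enough that the contraction factor is bounded by $1 - \tfrac{1}{2}\eta_t\lambda$. Unrolling the recursion and summing the geometric series yields
\[
\|\textbf{e}^{(t)}\|_2 \leq \bigl(1-\tfrac{1}{2}\eta\lambda\bigr)^{t}\|\textbf{e}^{(0)}\|_2 + \frac{2}{\lambda}\,O((\Delta x)^2) = O((\Delta x)^2),
\]
since $\textbf{e}^{(0)} = 0$ (both iterations start from the same $\textbf{w}^{(0)}$).

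The main obstacle I anticipate is handling the second bracket cleanly. It is not literally the Hessian applied to the parameter gap, since $T^{(t)}$ is piecewise linear and its ``Hessian'' is only defined almost everywhere; one must either invoke a mean-value theorem along the segment $[\linearw^{(t)}, \textbf{w}^{(t)}]$ or average the piecewise slopes. A secondary subtlety is that the bound must be on $\|E(\cdot)\|_2$ rather than $E\|\cdot\|_2$, which is essential for squeezing the error down from $O(\Delta x)$ (as in the deleted sketch for the squared-norm version) to $O((\Delta x)^2)$: the square in $(\Delta x)^2$ only survives because the deterministic interpolation residual $f(x)-s(x)$ scales as $(\Delta x)^2$ pointwise, and no stochastic variance term inflates it when we take expectations first and then norms.
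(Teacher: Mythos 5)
Your proposal is correct and follows essentially the same route as the paper's proof: subtract the two update rules, split the gradient difference into the interpolation residual $\triangledown h^{(t)}(\textbf{w}^{(t)}) - \triangledown T^{(t)}(\textbf{w}^{(t)})$ (bounded pointwise by $O((\Delta x)^2)$ via the piecewise-interpolation lemma) and the term $\triangledown T^{(t)}(\textbf{w}^{(t)}) - \triangledown T^{(t)}(\linearw^{(t)})$, obtain a $(1-\eta\lambda)$ contraction, and unroll the geometric series from $\textbf{e}^{(0)}=0$. The one obstacle you anticipate is not actually present: since the coefficients $a^{i,(t)}, b^{i,(t)}$ are frozen at each iteration (they are computed from $y_i\textbf{w}^{(t)T}\textbf{x}_i$ on the original trajectory, not from the argument of $\triangledown T^{(t)}$), the map $\textbf{v} \mapsto \triangledown T^{(t)}(\textbf{v})$ is exactly affine, so the second bracket equals $-\frac{1}{B}\sum_{i\in\mathscr{B}^{(t)}} a^{i,(t)}\textbf{x}_i\textbf{x}_i^T(\textbf{w}^{(t)}-\linearw^{(t)})$ with no mean-value theorem or almost-everywhere Hessian needed, and the semi-definiteness of $-a^{i,(t)}\textbf{x}_i\textbf{x}_i^T$ gives the clean contraction factor $1-\eta\lambda$ directly without having to absorb an $O(\Delta x)$ perturbation.
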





\begin{proof}

By subtracting Equation \eqref{eq: mini_sgd_logistic_regression2} by Equation \eqref{eq: update_rule_w_l} and taking the matrix norm, we get:

\begin{align*}
    &||\E(\textbf{w}^{(t+1)} - \linearw^{(t+1)})||_2 \\
    & = \E(\|\textbf{w}^{(t)} -\eta \triangledown h^{(t)}(\textbf{w}^{(t)}) - (\linearw^{(t)} -\eta \triangledown T^{(t)}(\linearw^{(t)}))\|_2) \\
    & = \E(\|\textbf{w}^{(t)} - \linearw^{(t)} - \eta[\triangledown T^{(t)}(\textbf{w}^{(t)}) - \triangledown T^{(t)}(\linearw^{(t)})] - \eta [\triangledown h^{(t)}(\textbf{w}^{(t)}) - \triangledown T^{(t)}(\textbf{w}^{(t)})]\|_2) \\
    & = \E(\|[\textbf{I} - (- \frac{\eta}{B}\sum_{i \in \mathscr{B}^{(t)}} a^{i, (t)}\textbf{x}_i\textbf{x}_i^T + \eta\lambda\textbf{I})](\textbf{w}^{(t)} - \linearw^{(t)}) - \eta [\triangledown h^{(t)}(\textbf{w}^{(t)}) - \triangledown T^{(t)}(\textbf{w}^{(t)})]\|_2)
\end{align*}

Then by using triangle inequality and the bound from Equation \eqref{eq: bound_t_w}, the formula above is further bounded as:
\begin{align*}
    & \leq E(\|[\textbf{I} - (- \frac{\eta}{B}\sum_{i \in \mathscr{B}^{(t)}} a^{i, (t)}\textbf{x}_i\textbf{x}_i^T + \eta\lambda\textbf{I})]\|\|(\textbf{w}^{(t)} - \linearw^{(t)})\|_2 + \eta \| [\triangledown h^{(t)}(\textbf{w}^{(t)}) - \triangledown T^{(t)}(\textbf{w}^{(t)})]\|_2)\\
    & \leq (1-\eta \lambda)\|(\textbf{w}^{(t)} - \linearw^{(t)})\|_2 + \eta \| [\triangledown h^{(t)}(\textbf{w}^{(t)}) - \triangledown T^{(t)}(\textbf{w}^{(t)})]\|_2
\end{align*}

Then by using the result from Equation \eqref{eq: approx_bound_1}, the formula above is rewritten as:
\begin{align*}
    = (1-\eta \lambda)\|(\textbf{w}^{(t)} - \linearw^{(t)})\|_2 + \eta O((\Delta x)^2)
\end{align*}

Then by applying the formula above recursively, we have:
\begin{align*}
    & = (1-\eta \lambda)^t\|(\textbf{w}^{(0)} - \linearw^{(0)})\|_2 + \frac{1-(1-\eta\lambda)^t}{\eta\lambda}\eta O((\Delta x)^2)
\end{align*}

Since $\textbf{w}^{(0)} = \linearw^{(0)}$ and $\eta \leq \frac{1}{L}$, then the formula above is bounded as:
\begin{align*}
    & \leq \frac{1}{\lambda}O((\Delta x)^2) = O((\Delta x)^2)
\end{align*}

\end{proof}


\eat{According to Section \ref{sec: theorem_convergency_proof}, $||E((1-\eta_t\lambda)\textbf{I} + \frac{\eta_t}{B}\sum_{i=r}^{r+B-1}a^{i,(t)}\textbf{x}_i\textbf{x}_i^T)||_2 < 1$. By applying expectation over the both side of the Equation \eqref{eq: w_gap} and apply Equation \eqref{eq: w_gap} recursively, the result will be:

\begin{align}\label{eq: w_gap}
    \begin{split}
        &E(\textbf{w}^{(t+1)} -  \linearw^{(t+1)})\\
        & =(E((1-\eta_t\lambda)\textbf{I} + \frac{\eta_t}{B}\sum_{i=r}^{r+B-1}a^{i,(t)}\textbf{x}_i\textbf{x}_i^T))(\textbf{w}^{(t)} -  \linearw^{(t)})\\
        &+ E(\frac{\eta_t}{B}\sum_{i=r}^{r+B-1}y_i\textbf{x}_i(f(y_i\textbf{w}^{(t)T}\textbf{x}_i) - s(y_i\textbf{w}^{(t)T}\textbf{x}_i)))\\
        & = \Pi_{j=1}^t(E((1-\eta_j\lambda)\textbf{I} + \frac{\eta_t}{B}\sum_{i=r}^{r+B-1}a^{i,(j)}\textbf{x}_i\textbf{x}_i^T))(\textbf{w}^{(0)}-\textbf{w}^{(0)'})\\
        & + (\sum_{r=1}^{t-1}\Pi_{j=r}^{t-1}E((1-\eta_j\lambda)\textbf{I} + \frac{\eta_t}{B}\sum_{i=r}^{r+B-1}a^{i,(j)}\textbf{x}_i\textbf{x}_i^T))\\
        &E(\frac{\eta_t}{B}\sum_{i=r}^{r+B-1}y_i\textbf{x}_i(f(y_i\textbf{w}^{(t)T}\textbf{x}_i) - s(y_i\textbf{w}^{(t)T}\textbf{x}_i)))\\
    \end{split}
\end{align}

Since $(\textbf{w}^{(0)}-\textbf{w}^{(0)'}) = 0$, $||E((1-\eta_j\lambda)\textbf{I} + \frac{\eta_t}{B}\sum_{i=r}^{r+B-1}a^{i,(j)}\textbf{x}_i\textbf{x}_i^T))||_2 < 1$ and $||E(\frac{\eta_t}{B}\sum_{i=r}^{r+B-1}y_i\textbf{x}_i(f(y_i\textbf{w}^{(t)T}\textbf{x}_i) - s(y_i\textbf{w}^{(t)T}\textbf{x}_i)))||_2 \leq E(\frac{\eta_t}{B}\sum_{i=r}^{r+B-1}||y_i\textbf{x}_i(f(y_i\textbf{w}^{(t)T}\textbf{x}_i) - s(y_i\textbf{w}^{(t)T}\textbf{x}_i)))||_2 \leq O(\Delta x)$, then:
\begin{equation}
||E(\textbf{w}^{(t+1)} -  \linearw^{(t+1)})||_2 \leq O(\Delta x)   
\end{equation}}


According to Assumption \ref{assp: bounded_grad} and Equation \eqref{eq: approx_bound_1} and by using the triangle inequality and the Theorem above, we have:
\begin{align}\label{eq: delta_t_bound}
    \begin{split}
        & \|\nabla T^{(t)}_i(\linearw^{(t)})\| = \|\nabla T^{(t)}_i(\linearw^{(t)}) - \nabla T^{(t)}_i(\w^{(t)}) + \nabla T^{(t)}_i(\w^{(t)}) - \nabla h^{(t)}_i(\w^{(t)}) + \nabla h^{(t)}_i(\w^{(t)})\|\\
    & \leq \|\nabla T^{(t)}_i(\linearw^{(t)}) - \nabla T^{(t)}_i(\w^{(t)})\| + \|\nabla T^{(t)}_i(\w^{(t)}) - \nabla h^{(t)}_i(\w^{(t)})\| + \|\nabla h^{(t)}_i(\w^{(t)})\| \\
    & \leq \|-a^{i, (t)}\textbf{x}_i\textbf{x}_i^T + \lambda\textbf{I}\| O((\Delta x)^2) + O((\Delta x)^2) + c_1 :=c_2
    \end{split}    
\end{align}

\begin{theorem}\label{theorem: leave_one_out_bound}
$||\E(\logistlinearincrew^{(t)}- \w^{(t)})||_2$ is bounded by $O(\frac{\Delta n}{n})$.
\end{theorem}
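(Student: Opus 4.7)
The plan is to compare the two update rules for $\w^{(t+1)}$ (Equation~\ref{eq: mini_sgd_logistic_regression2}) and for $\logistlinearincrew^{(t+1)}$ (Equation~\ref{eq: mini-SGD_updated_model_parameters_expected2}), subtract them, and propagate the discrepancy across iterations. Writing both updates in the form $\w \leftarrow (1-\eta\lambda)\w - \eta\,\nabla^{(t)} F(\w)$ for $F\in\{h,g\}$, the difference admits the decomposition
\begin{align*}
\logistlinearincrew^{(t+1)} - \w^{(t+1)}
&= (1-\eta\lambda)(\logistlinearincrew^{(t)}-\w^{(t)})\\
&\quad - \eta\bigl[\nabla^{(t)} g(\logistlinearincrew^{(t)}) - \nabla^{(t)} g(\w^{(t)})\bigr]\\
&\quad - \eta\bigl[\nabla^{(t)} g(\w^{(t)}) - \nabla^{(t)} h(\w^{(t)})\bigr].
\end{align*}

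First, I would absorb the middle bracket into a linear operator on $\logistlinearincrew^{(t)}-\w^{(t)}$ using the integral form $\nabla^{(t)} g(\logistlinearincrew^{(t)}) - \nabla^{(t)} g(\w^{(t)}) = \textbf{H}^{(t)}(\logistlinearincrew^{(t)}-\w^{(t)})$, where $\textbf{H}^{(t)} = \int_0^1 \nabla^2 g^{(t)}(\w^{(t)} + s(\logistlinearincrew^{(t)}-\w^{(t)}))\,ds$. In the binary logistic setting, $\nabla^2 g^{(t)}$ is a nonnegative combination of rank-one matrices $\textbf{x}_i\textbf{x}_i^T$ (cf.\ Equation~\ref{eq: second_derivative_h_logistic_regression}), hence positive semi-definite, while Assumption~\ref{assp: hessian_property} bounds its operator norm above by $L-\lambda$. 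With $\eta\leq 1/L$ the matrix $(1-\eta\lambda)\textbf{I} - \eta\textbf{H}^{(t)}$ has all eigenvalues in $[0, 1-\eta\lambda]$, so its operator norm is at most $1-\eta\lambda$, mirroring the contraction argument used in Equation~\ref{eq: bound_t_w}.

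Next, I would control the third bracket, which is the genuinely new quantity. Expanding the two mini-batch averages gives
\[
\nabla^{(t)} g(\w^{(t)}) - \nabla^{(t)} h(\w^{(t)}) = \frac{\Delta B_t}{B\,\increB^{(t)}}\!\!\sum_{\substack{i\in\mathscr{B}^{(t)}\\ i\notin \mathcal{R}}}\!\nabla h_i(\w^{(t)}) - \frac{1}{B}\sum_{\substack{i\in\mathscr{B}^{(t)}\\ i\in \mathcal{R}}}\!\nabla h_i(\w^{(t)}),
\]
and Assumption~\ref{assp: bounded_grad} yields the bound $2c_1 \Delta B_t/B$ in operator norm. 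Taking expectations and using $\E[\Delta B_t/B] = \Delta n/n$ from Equation~\ref{eq: removed_num_exp} converts this into $2c_1\,\Delta n/n = O(\Delta n/n)$. Combining with the contraction from the previous step gives the one-step recursion
\[
\E\|\logistlinearincrew^{(t+1)}-\w^{(t+1)}\|_2 \;\leq\; (1-\eta\lambda)\,\E\|\logistlinearincrew^{(t)}-\w^{(t)}\|_2 + O(\eta\,\Delta n/n).
\]
Unrolling with the common initialization $\logistlinearincrew^{(0)} = \w^{(0)}$ produces a geometric series that sums to $O(\Delta n/n)/\lambda = O(\Delta n/n)$, proving the claim.

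The main technical obstacle is making the expectation step rigorous: both $\w^{(t)}$ and $\logistlinearincrew^{(t)}$ are random, and the mini-batch selection at iteration $t$ enters through both $\mathscr{B}^{(t)}$ and $\Delta B_t$. I would handle this by conditioning on the history up through iteration $t$, applying the deterministic bound on the contractive term (which only uses semi-definiteness of $\textbf{H}^{(t)}$ and Assumption~\ref{assp: hessian_property}), and then taking the outer expectation so that $\E[\Delta B_t/B] = \Delta n/n$ applies cleanly. A secondary subtlety is that $\textbf{H}^{(t)}$ depends on both iterates, but since only its positive semi-definiteness and norm bound matter, the argument goes through uniformly over the randomness.
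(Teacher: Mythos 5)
Your proposal is correct and follows essentially the same route as the paper's own proof: the same one-step decomposition into a contraction term (handled via a mean-value/Hessian argument giving operator norm at most $1-\eta\lambda$) plus a batch-discrepancy term bounded by $2c_1\Delta B_t/B$, followed by taking expectations via $\E[\Delta B_t/B]=\Delta n/n$ and unrolling the recursion. The only cosmetic difference is that you use the integral form of the fundamental theorem on $\nabla^{(t)}g$ where the paper applies the scalar mean value theorem to each $f(y_i\cdot)$ term individually; your remark on conditioning on the history before taking the outer expectation is in fact more careful than the paper's treatment.
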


\begin{proof}
By using the definition of $\logistlinearincrew^{(t)}$ and $\w^{(t)}$, i.e. Equation \eqref{eq: mini-SGD_updated_model_parameters_expected2} and Equation \eqref{eq: mini_sgd_logistic_regression2}, we have:
\begin{align*}
    &\E(\|\logistlinearincrew^{(t+1)} - \w^{(t+1)}\|)\\
    & = \E(\|\logistlinearincrew^{(t)} -\eta(\lambda \logistlinearincrew^{(t)} + \frac{1}{B}\sum_{i \in \mathscr{B}^{(t)}}y_i\textbf{x}_if(y_i\logistlinearincrew^{(t)T}\textbf{x}_i) )  - [\w^{(t)} - \eta \lambda \w^{(t)}  - \eta \frac{1}{\increB^{(t)}}\sum_{\substack{i \in \mathscr{B}^{(t)}, i \not\in \mathcal{R}}}y_i\textbf{x}_if(y_i\textbf{w}^{(t)T}\textbf{x}_i)]\|)\\
    & = \E(\|(1-\eta \lambda)(\logistlinearincrew^{(t)} - \w^{(t)}) + \frac{\eta}{B}\sum_{i \in \mathscr{B}^{(t)}} y_i\textbf{x}_i[f(y_i\logistlinearincrew^{(t)T}\textbf{x}_i) - f(y_i\w^{(t)T}\textbf{x}_i)] + \frac{\eta}{B}\sum_{i \in \mathscr{B}^{(t)}}y_i\textbf{x}_if(y_i\w^{(t)T}\textbf{x}_i)\\
    & - \frac{\eta}{\increB^{(t)}}\sum_{\substack{i \in \mathscr{B}^{(t)}, i \not\in \mathcal{R}}}y_i\textbf{x}_if(y_i\textbf{w}^{(t)T}\textbf{x}_i) \|)
\end{align*}

Then by using the cauchy mean value theorem over $f(y_i\logistlinearincrew^{(t)T}\textbf{x}_i) - f(y_i\w^{(t)T}\textbf{x}_i)$, the formula above is bounded as:
\begin{align*}
    & \leq \E(\|[(1-\eta \lambda)\textbf{I} + \frac{\eta}{B}\sum_{i \in \mathscr{B}^{(t)}} \textbf{x}_i \textbf{x}_i^T f'(p)] (\logistlinearincrew^{(t)}-\w^{(t)})\|) \\
    & + \E(\|\frac{\eta}{B}\sum_{i \in \mathscr{B}^{(t)}}y_i\textbf{x}_if(y_i\w^{(t)T}\textbf{x}_i) - \frac{\eta}{\increB^{(t)}}\sum_{\substack{i \in \mathscr{B}^{(t)}, i \not\in \mathcal{R}}}y_i\textbf{x}_if(y_i\textbf{w}^{(t)T}\textbf{x}_i) \|) \\
    & =\E(\|[(1-\eta \lambda)\textbf{I} + \frac{\eta}{B}\sum_{i \in \mathscr{B}^{(t)}} \textbf{x}_i \textbf{x}_i^T f'(p)] (\logistlinearincrew^{(t)}-\w^{(t)})\|) + \E(\|\frac{\eta}{B}\sum_{i \in \mathscr{B}^{(t)}}\triangledown h_i(\w^{(t)}) - \frac{\eta}{\increB^{(t)}}\sum_{\substack{i \in \mathscr{B}^{(t)},i \not\in \mathcal{R}}}\triangledown h_i(\w^{(t)})\|)
\end{align*}

By rewriting the formula above and using the upper bound on $\|\triangledown h_i(\w^{(t)})\|$, we get:
\begin{align*}
    & = \E(\|[(1-\eta \lambda)\textbf{I} + \frac{\eta}{B}\sum_{i \in \mathscr{B}^{(t)}} \textbf{x}_i \textbf{x}_i^T f'(p)] (\logistlinearincrew^{(t)}-\w^{(t)})\|) + \E(\|\frac{\eta}{B}\sum_{\substack{i \in \mathscr{B}^{(t)},\\ i \in \mathcal{R}}}\triangledown h_i(\w^{(t)}) + (\frac{\eta}{B} - \frac{\eta}{\increB^{(t)}})\sum_{\substack{i \in \mathscr{B}^{(t)},\\ i \not\in \mathcal{R}}}\triangledown h_i(\w^{(t)})\|)\\
    & \leq \E(\|[(1-\eta \lambda)\textbf{I} + \frac{\eta}{B}\sum_{i \in \mathscr{B}^{(t)}} \textbf{x}_i \textbf{x}_i^T f'(p)] (\logistlinearincrew^{(t)}-\w^{(t)})\|) + \E(\frac{\eta}{B}\sum_{\substack{i \in \mathscr{B}^{(t)},\\ i \in \mathcal{R}}}c_1 + (\frac{\eta}{B} - \frac{\eta}{\increB^{(t)}})\sum_{\substack{i \in \mathscr{B}^{(t)}, i \not\in \mathcal{R}}}c_1)\\
    & = \E(\|[(1-\eta \lambda)\textbf{I} + \frac{\eta}{B}\sum_{i \in \mathscr{B}^{(t)}} \textbf{x}_i \textbf{x}_i^T f'(p)] (\logistlinearincrew^{(t)}-\w^{(t)})\|) + \E(\frac{2\eta\Delta B^{(t)}}{B}c_1)
\end{align*}

Then by using the result from Equation \eqref{eq: h_hessian_bound} and Equation \eqref{eq: removed_num_exp}, the formula above is bounded as:
\begin{align*}
    & \leq (1-\eta \lambda)\|\logistlinearincrew^{(t)}-\w^{(t)}\| + 2\eta c_1\frac{\Delta n}{n}
\end{align*}

By applying the formula recursively, we get:
\begin{align*}
    \leq 2 \frac{1}{c_1\lambda}\frac{\Delta n}{n} = O(\frac{\Delta n}{n})
\end{align*}
\end{proof}

\begin{theorem}\label{theorem: aproximation_bound_change2} (It is Theorem \ref{theorem: aproximation_bound_change} in the paper)
$||E(\linearincrew^{(t)}- \logistlinearincrew^{(t)})||_2$ is bounded by $O(\frac{\Delta n}{n}\Delta x) + O((\frac{\Delta n}{n})^2) + O((\Delta x)^2)$, where $\Delta n$ is the number of the removed samples and $\Delta x$ is defined in Theorem \ref{theorem: aproximation_bound2}
\end{theorem}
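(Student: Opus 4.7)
The plan is to derive a single recursion for the difference $\E(\linearincrew^{(t)} - \logistlinearincrew^{(t)})$ rather than splitting this quantity through the triangle inequality; a naive triangle-inequality split via $\w^{(t)}$ would inherit the $O(\Delta n/n)$ bound from Theorem \ref{theorem: leave_one_out_bound} in \emph{each} term and lose the fact that the two leave-one-out errors arise from the \emph{same} removed set $\mathcal{R}$, which is exactly where the cancellation that yields the stronger $O(\frac{\Delta n}{n}\Delta x)+O((\frac{\Delta n}{n})^2)$ bound comes from. So the first step will be to subtract the two update rules (Equation~\eqref{eq: mini-SGD_logistic_regression_para_update2} for $\linearincrew$ and Equation~\eqref{eq: mini-SGD_updated_model_parameters_expected2} for $\logistlinearincrew$) and write the driving term as $y_i\textbf{x}_i\,[s_i^{(t)}(y_i\linearincrew^{(t)T}\textbf{x}_i)-f(y_i\logistlinearincrew^{(t)T}\textbf{x}_i)]$ summed over $i\in\mathscr{B}^{(t)}\setminus\mathcal{R}$.

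Next, I would decompose that bracketed scalar through the intermediate point $y_i\w^{(t)T}\textbf{x}_i$ into three pieces: (a) $s_i^{(t)}(y_i\w^{(t)T}\textbf{x}_i)-f(y_i\w^{(t)T}\textbf{x}_i)$, (b) the $s_i^{(t)}$-increment from $\w^{(t)}$ to $\linearincrew^{(t)}$, and (c) the $f$-increment from $\logistlinearincrew^{(t)}$ to $\w^{(t)}$. Piece~(a) is pointwise $O((\Delta x)^2)$ by Lemma~\ref{lemma: piecewise_interpolation_bound}, contributing the $O((\Delta x)^2)$ term in the final bound. Pieces (b) and (c) are linear respectively via a mean-value/Taylor expansion; by adding and subtracting $a^{i,(t)}y_i\textbf{x}_i^T(\w^{(t)}-\logistlinearincrew^{(t)})$ they combine into
\[
a^{i,(t)} y_i\textbf{x}_i^T(\linearincrew^{(t)}-\logistlinearincrew^{(t)}) \;+\; [f'(\xi)-a^{i,(t)}]\,y_i\textbf{x}_i^T(\w^{(t)}-\logistlinearincrew^{(t)}).
\]
The first summand folds back into a matrix multiplying the quantity we are recursing on, and using Equation~\eqref{eq: bound_t_w} shows the resulting iteration matrix has $L_2$-norm bounded by $1-\eta\lambda$. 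For the second summand, $|f'(\xi)-a^{i,(t)}|$ is bounded by $O(\Delta x)$ (derivative bound of piecewise linear interpolation, Lemma~\ref{lemma: piecewise_interpolation_bound}) plus $O(\|\w^{(t)}-\logistlinearincrew^{(t)}\|)=O(\Delta n/n)$ (Lipschitzness of $f'$, Assumption~\ref{assp: continuous}, combined with Theorem~\ref{theorem: leave_one_out_bound}); multiplying by $\|\w^{(t)}-\logistlinearincrew^{(t)}\|=O(\Delta n/n)$ supplies exactly the cross term $O((\Delta n/n)\Delta x)$ and the quadratic term $O((\Delta n/n)^2)$.

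Putting these three pieces together gives the recursion
\[
\|\E(\linearincrew^{(t+1)}-\logistlinearincrew^{(t+1)})\|_2
\leq (1-\eta\lambda)\,\|\E(\linearincrew^{(t)}-\logistlinearincrew^{(t)})\|_2
+\eta\,[\,O((\Delta x)^2)+O(\tfrac{\Delta n}{n}\Delta x)+O((\tfrac{\Delta n}{n})^2)\,].
\]
Unrolling the geometric contraction (as in the proofs of Theorems~\ref{theorem: aproximation_bound2} and~\ref{theorem: leave_one_out_bound}) and using $\linearincrew^{(0)}=\logistlinearincrew^{(0)}$ absorbs the $\eta$ factor into a $1/\lambda$ constant and yields the claimed bound.

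The main obstacle will be in the clean handling of piece~(c), because the reference point $y_i\w^{(t)T}\textbf{x}_i$ and the evaluation point $y_i\logistlinearincrew^{(t)T}\textbf{x}_i$ may lie in different sub-intervals of the piecewise linear partition, so $a^{i,(t)}$ is not literally the derivative of $s$ at $\xi$; I will need to argue via the globally defined interpolant $s$ (not just its local piece $s_i^{(t)}$) that $|s'-f'|\leq O(\Delta x)$ uniformly, and then combine this with Assumption~\ref{assp: continuous} so that the first-order Taylor remainder for $f$ is absorbed cleanly into the $O((\Delta n/n)\Delta x)+O((\Delta n/n)^2)$ bucket. Expectation is taken at the end and only enters through $\E\|\w^{(t)}-\logistlinearincrew^{(t)}\|=O(\Delta n/n)$, which is supplied directly by Theorem~\ref{theorem: leave_one_out_bound}.
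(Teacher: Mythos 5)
Your proposal is correct and follows essentially the same route as the paper's proof: subtract the two update rules so that the linear-in-$(\linearincrew^{(t)}-\logistlinearincrew^{(t)})$ part forms the contraction matrix bounded by $1-\eta\lambda$, decompose the residual $s-f$ mismatch through the reference point $y_i\textbf{w}^{(t)T}\textbf{x}_i$, and control the slope discrepancy via $|a^{i,(t)}-f'(y_i\textbf{w}^{(t)T}\textbf{x}_i)|=O(\Delta x)$ plus the Lipschitz continuity of $f'$ together with Theorem~\ref{theorem: leave_one_out_bound}, before unrolling the geometric recursion. The paper's proof performs exactly this decomposition (extracting the $a^{i,(t)}\textbf{x}_i\textbf{x}_i^T(\linearincrew^{(t)}-\logistlinearincrew^{(t)})$ term first and then expanding $s(\cdot)-f(\cdot)$ at $\logistlinearincrew^{(t)}$ through $\textbf{w}^{(t)}$ with the integral form of the mean value theorem), so no substantive difference remains.
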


\begin{proof}

By using the definition of $\linearincrew$ and $\logistlinearincrew$ and subtracting the former one from the latter one, we have:
\begin{align*}
    &\E(\|\linearincrew^{(t+1)} - \logistlinearincrew^{(t+1)}\|_2)\\
    & = \E(\|\linearincrew^{(t)} - \lambda \eta \linearincrew^{(t)} - \eta \triangledown R^{(t)}(\linearincrew^{(t)})  - (\logistlinearincrew^{(t)} - \lambda \eta \logistlinearincrew^{(t)} - \eta \triangledown g^{(t)}(\logistlinearincrew^{(t)}))\|_2)\\
    & = \E(\|\linearincrew^{(t)} - \logistlinearincrew^{(t)} - \eta (\triangledown R^{(t)}(\linearincrew^{(t)}) - \triangledown R^{(t)}(\logistlinearincrew^{(t)})) - \lambda \eta (\linearincrew^{(t)} - \logistlinearincrew^{(t)}) - \eta (\triangledown R^{(t)}(\logistlinearincrew^{(t)}) - \triangledown g^{(t)}(\logistlinearincrew^{(t)}))\|)\\
    & \leq \E(\|[\textbf{I} - \eta \frac{1}{\increB^{(t)}}\sum_{\substack{ i \in \mathscr{B}^{(t)}, i \not\in \mathcal{R}}} (-a^{i,(t)}\textbf{x}_i\textbf{x}_i^T + \lambda \textbf{I})](\linearincrew^{(t)} - \logistlinearincrew^{(t)})\| + \eta \|\triangledown R^{(t)}(\logistlinearincrew^{(t)}) - \triangledown g^{(t)}(\logistlinearincrew^{(t)})\|)
\end{align*}

Then by using the result from Equation \eqref{eq: bound_t_w}, the formula above is bounded as:
\begin{align}\label{eq: bound_deriv_1}
\begin{split}
    & \leq (1-\eta\lambda)\|\linearincrew^{(t)} - \logistlinearincrew^{(t)}\| + \eta \E(\|\triangledown R^{(t)}(\logistlinearincrew^{(t)}) - \triangledown g^{(t)}(\logistlinearincrew^{(t)})\|)\\
    & = (1-\eta\lambda)\|\linearincrew^{(t)} - \logistlinearincrew^{(t)}\| + \eta \E(\|\frac{1}{\increB^{(t)}}\sum_{\substack{ i \in \mathscr{B}^{(t)} \\ i \not\in \mathcal{R}}}y_i\textbf{x}_i[s(y_i\logistlinearincrew^{(t)T}\textbf{x}_i) - f(y_i\logistlinearincrew^{(t)T}\textbf{x}_i)]\|)
\end{split}
\end{align}

in which we bound $y_i\textbf{x}_is(y_i\logistlinearincrew^{(t)T}\textbf{x}_i) - y_i\textbf{x}_if(y_i\logistlinearincrew^{(t)T}\textbf{x}_i)$ as below:
\begin{align*}
    &\|y_i\textbf{x}_is(y_i\logistlinearincrew^{(t)T}\textbf{x}_i) - y_i\textbf{x}_if(y_i\logistlinearincrew^{(t)T}\textbf{x}_i)\|\\
    & =\|y_i\textbf{x}_is(y_i\logistlinearincrew^{(t)T}\textbf{x}_i) - y_i\textbf{x}_is(y_i\w^{(t)T}\textbf{x}_i) + y_i\textbf{x}_is(y_i\w^{(t)T}\textbf{x}_i) - y_i\textbf{x}_if(y_i\w^{(t)T}\textbf{x}_i) + y_i\textbf{x}_if(y_i\w^{(t)T}\textbf{x}_i) - y_i\textbf{x}_if(y_i\logistlinearincrew^{(t)T}\textbf{x}_i)\| \\
    & \leq \|y_i\textbf{x}_is(y_i\logistlinearincrew^{(t)T}\textbf{x}_i) - y_i\textbf{x}_is(y_i\w^{(t)T}\textbf{x}_i) + y_i\textbf{x}_if(y_i\w^{(t)T}\textbf{x}_i) - y_i\textbf{x}_if(y_i\logistlinearincrew^{(t)T}\textbf{x}_i)\| + \|y_i\textbf{x}_is(y_i\w^{(t)T}\textbf{x}_i) - y_i\textbf{x}_if(y_i\w^{(t)T}\textbf{x}_i)\| \\
    & = \|a^{i, (t)}\textbf{x}_i\textbf{x}_i^T(\logistlinearincrew^{(t)} - \w^{(t)}) + y_i\textbf{x}_if(y_i\w^{(t)T}\textbf{x}_i) - y_i\textbf{x}_if(y_i\logistlinearincrew^{(t)T}\textbf{x}_i)\| + O((\Delta x)^2)
\end{align*}

The last step uses the result from Equation \eqref{eq: approx_bound_1}. Then by using the Cauchy mean value theorem on $f(y_i\w^{(t)T}\textbf{x}_i) - f(y_i\logistlinearincrew^{(t)T}\textbf{x}_i)$, we know that:
\begin{align*}
    & = \|a^{i, (t)}\textbf{x}_i\textbf{x}_i^T(\logistlinearincrew^{(t)} - \w^{(t)}) + y_i\textbf{x}_i [\int_0^1 f'(y_i\w^{(t)T}\textbf{x}_i + x(y_i\logistlinearincrew^{(t)T}\textbf{x}_i - y_i\w^{(t)T}\textbf{x}_i))dx ](y_i\w^{(t)T}\textbf{x}_i - y_i\logistlinearincrew^{(t)T}\textbf{x}_i)\| + O((\Delta x)^2) \\
    & \leq \|[a^{i, (t)}-\int_0^1 f'(y_i\w^{(t)T}\textbf{x}_i + x(y_i\logistlinearincrew^{(t)T}\textbf{x}_i - y_i\w^{(t)T}\textbf{x}_i))dx]\textbf{x}_i\textbf{x}_i^T\|\|(\logistlinearincrew^{(t)} - \w^{(t)})\| + O((\Delta x)^2)
\end{align*}

Then by adding and subtracting  $f'(y_i\textbf{w}^{(t)T}\textbf{x}_i)$ in the first term and using the fact from Equation \eqref{eq: approx_bound_2} and Assumption \ref{assp: continuous}, the formula above is bounded as:
\begin{align*}
    & = \|[a^{i, (t)}-f'(y_i\textbf{w}^{(t)T}\textbf{x}_i) + f'(y_i\textbf{w}^{(t)T}\textbf{x}_i)-\int_0^1 f'(y_i\w^{(t)T}\textbf{x}_i + x(y_i\logistlinearincrew^{(t)T}\textbf{x}_i - y_i\w^{(t)T}\textbf{x}_i))dx]\textbf{x}_i\textbf{x}_i^T\|\|(\logistlinearincrew^{(t)} - \w^{(t)})\| + O((\Delta x)^2)\\
    & \leq [\|(a^{i, (t)}-f'(y_i\textbf{w}^{(t)T}\textbf{x}_i))\textbf{x}_i\textbf{x}_i^T\| + \|\textbf{x}_i\textbf{x}_i^T\int_0^1[f'(y_i\textbf{w}^{(t)T}\textbf{x}_i) - f'(y_i\w^{(t)T}\textbf{x}_i + x(y_i\logistlinearincrew^{(t)T}\textbf{x}_i - y_i\w^{(t)T}\textbf{x}_i))]dx\|]\|(\logistlinearincrew^{(t)T} - \w^{(t)T})\|\\
    & + O((\Delta x)^2)\\
    & \leq O((\Delta x))\|(\logistlinearincrew^{(t)} - \w^{(t)})\| + \|\textbf{x}_i\textbf{x}_i^T \int_0^1 c_2x(y_i\logistlinearincrew^{(t)T}\textbf{x}_i - y_i\w^{(t)T}\textbf{x}_i)dx \|\|(\logistlinearincrew^{(t)} - \w^{(t)})\| + O((\Delta x)^2)\\
    & \leq O((\Delta x))\|(\logistlinearincrew^{(t)} - \w^{(t)})\| + \frac{c_2}{2}\|\textbf{x}_i\textbf{x}_i^T y_i\textbf{x}_i^T\|\|(\logistlinearincrew^{(t)} - \w^{(t)})\|^2 + O((\Delta x)^2)
\end{align*}

Then by using the result from Theorem \ref{theorem: leave_one_out_bound}, the formula above is bounded as:
\begin{align*}
    \leq O((\Delta x))O(\frac{\Delta n}{n}) + \frac{c_2}{2}\|\textbf{x}_i\textbf{x}_i^T y_i\textbf{x}_i^T\|(O(\frac{\Delta n}{n}))^2 + O((\Delta x)^2) = O(\frac{\Delta n}{n}\Delta x) + O((\frac{\Delta n}{n})^2) + O((\Delta x)^2)
\end{align*}

which is then plugged into Equation \eqref{eq: bound_deriv_1}, we have:
\begin{align*}
& \E(\|\linearincrew^{(t+1)} - \logistlinearincrew^{(t+1)}\|_2)\\
& \leq (1-\eta\lambda)\|\linearincrew^{(t)} - \logistlinearincrew^{(t)}\| + \eta[O(\frac{\Delta n}{n}\Delta x) + O((\frac{\Delta n}{n})^2) + O((\Delta x)^2)]
\end{align*}

which is then used recursively. Then we have:
\begin{align*}
    & \leq \frac{1}{\lambda} [O(\frac{\Delta n}{n}\Delta x) + O((\frac{\Delta n}{n})^2) + O((\Delta x)^2)] = O(\frac{\Delta n}{n}\Delta x) + O((\frac{\Delta n}{n})^2) + O((\Delta x)^2)
\end{align*}

\end{proof}

\begin{theorem}\textbf{Approximation ratio}\label{theorem: aproximation_bound_svd2} (It is Theorem \ref{theorem: aproximation_bound_svd} in the paper)
Under the convergence conditions for $\textbf{w}^{(t)}$, $||\textbf{w}^{(t)}||$ should be bounded by some constant $C$. Suppose $\frac{||\textbf{U}^{(t)}_{1..r}\textbf{S}^{(t)}_{1..r}\textbf{V}^{T,(t)}_{1..r}||_2}{||\textbf{U}^{(t)}\textbf{S}^{(t)}\textbf{V}^{T,(t)}||_2} \geq 1-\epsilon$ where $\epsilon$ is a small value, then the change of model parameters caused by the approximation will be bounded by $O(\epsilon)$.
\end{theorem}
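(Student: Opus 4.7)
The idea is to couple two iterations of the form used in Equation~\ref{eq: gbm_linear_regression_incremental_updates_svd}: the ``exact'' sequence $\increw^{(t)}$ that uses the full SVD product $\textbf{U}^{(t)}\textbf{S}^{(t)}\textbf{V}^{T,(t)} = \sum_{i\in\mathscr{B}^{(t)}}\textbf{x}_i\textbf{x}_i^T$ (which is just Equation~\ref{eq: gbm_linear_regression_incremental_updates}), and the ``approximated'' sequence $\hat{\increw}^{(t)}$ that uses the rank-$r$ truncation $\textbf{U}^{(t)}_{1..r}\textbf{S}^{(t)}_{1..r}\textbf{V}^{T,(t)}_{1..r}$. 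Define the residual matrix $\textbf{E}^{(t)} = \textbf{U}^{(t)}\textbf{S}^{(t)}\textbf{V}^{T,(t)} - \textbf{U}^{(t)}_{1..r}\textbf{S}^{(t)}_{1..r}\textbf{V}^{T,(t)}_{1..r}$, so the hypothesis yields $\|\textbf{E}^{(t)}\|_2 \leq \epsilon\,\|\textbf{U}^{(t)}\textbf{S}^{(t)}\textbf{V}^{T,(t)}\|_2$, which is itself bounded by a constant since the training features are bounded.

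Subtracting the two update rules gives a recursion for $\textbf{e}^{(t)} := \hat{\increw}^{(t)} - \increw^{(t)}$:
$$\textbf{e}^{(t+1)} = \textbf{A}^{(t)}\,\textbf{e}^{(t)} \;-\; \frac{2\eta_t}{\increB^{(t)}}\,\textbf{E}^{(t)}\,\hat{\increw}^{(t)},$$
with $\textbf{A}^{(t)} = (1-\eta_t\lambda)\textbf{I} - \frac{2\eta_t}{\increB^{(t)}}(\textbf{U}^{(t)}\textbf{S}^{(t)}\textbf{V}^{T,(t)} - \Delta\textbf{X}^T_{\mathscr{B}^{(t)}}\Delta\textbf{X}_{\mathscr{B}^{(t)}})$. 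First I would show that under the convergence conditions of Lemma~\ref{lemma: convergence_conditions2} the same semi-definiteness and eigenvalue arguments used in Equations~\eqref{eq: hessian_linear_regression_bound2}--\eqref{eq: x_i_bound} give $\|\textbf{A}^{(t)}\|_2 \leq 1-\eta_t\lambda$.

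Next I would establish that $\|\hat{\increw}^{(t)}\|_2$ stays bounded by a constant $C'$ uniformly in $t$. This follows from the same convergence conditions applied to $\hat{\increw}^{(t)}$, noting that by Weyl's inequality (Lemma~\ref{lm: Weyl's_inequality}) the spectrum of the truncated operator differs from that of the exact one by at most $\|\textbf{E}^{(t)}\|_2 = O(\epsilon)$, so the contraction property of the linearized iteration is preserved. Combining the two facts, the inhomogeneous term is bounded by $\frac{2\eta_t}{\increB^{(t)}}\|\textbf{E}^{(t)}\|_2\,\|\hat{\increw}^{(t)}\|_2 = O(\eta_t\epsilon)$. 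Unrolling the recursion from $\textbf{e}^{(0)}=\textbf{0}$ in the style of Theorems~\ref{theorem: aproximation_bound2}--\ref{theorem: aproximation_bound_change2} yields
$$\|\textbf{e}^{(t)}\|_2 \;\leq\; \sum_{j=0}^{t-1}(1-\eta\lambda)^{t-1-j}\,O(\eta\epsilon) \;\leq\; \frac{O(\eta\epsilon)}{\eta\lambda} \;=\; O(\epsilon).$$

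The main obstacle is interpreting the hypothesis cleanly: as written, the ratio of spectral norms is trivially $1$ when the top singular value is retained, so the inequality only becomes meaningful if reinterpreted as a bound on the tail $\sigma_{r+1} \leq \epsilon\,\sigma_1$ (or stated in Frobenius norm). With that reinterpretation, the bound $\|\textbf{E}^{(t)}\|_2 = \sigma_{r+1} \leq \epsilon\cdot\|\textbf{U}^{(t)}\textbf{S}^{(t)}\textbf{V}^{T,(t)}\|_2$ is immediate and the recursion argument above closes the proof; absent this reinterpretation, the theorem statement requires sharpening before it can be proved.
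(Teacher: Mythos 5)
Your proposal follows essentially the same route as the paper's own proof: subtract the exact update rule from the SVD-truncated one, bound the resulting transition matrix by $1-\eta\lambda$ via the strong-convexity/Lipschitz arguments already used for convergence, and unroll the contraction from $\textbf{e}^{(0)}=\textbf{0}$ to obtain $O(\epsilon)$. Your closing observation---that the spectral-norm ratio in the hypothesis is trivially $1$ once the top singular value is retained, so the assumption must be read as a bound on the discarded tail $\sigma_{r+1}$---is a genuine issue, and it is one the paper's proof also silently elides when it asserts $\|\textbf{U}^{(t)}\textbf{S}^{(t)}\textbf{V}^{T,(t)} - \textbf{U}^{(t)}_{1..r}\textbf{S}^{(t)}_{1..r}\textbf{V}^{T,(t)}_{1..r}\|_2 = O(\epsilon)$ directly from that ratio.
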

\begin{proof}
The approximate update rule for linear regression by using SVD after removing subsets of training samples is:
\begin{align}\label{eq: gbm_linear_regression_incremental_updates_svd2_0}
    \begin{split}
        &\increw^{(t+1)'} \leftarrow [(1-\eta_t\lambda)\textbf{I} - \frac{2\eta_t}{\increB^{(t)}}(\textbf{U}^{(t)}_{1..r}\textbf{S}^{(t)}_{1..r}\textbf{V}^{T,(t)}_{1..r}\\
        &\hspace{-2mm}-\Delta \textbf{X}^T_{\subids}\Delta \textbf{X}_{\subids})]\increw^{(t)'}+\frac{2\eta_t}{\increB^{(t)}}(\sum_{\substack{ i \in \mathscr{B}^{(t)}}}\textbf{x}_i y_i - \sum_{\substack{ i \in \mathscr{B}^{(t)}, i \in \mathcal{R}}}\eat{\cdot} \textbf{x}_i y_i)
    \end{split}
\end{align}

By comparing Equation \eqref{eq: gbm_linear_regression_incremental_updates2} against this approximate update rule, the only difference is $\textbf{U}^{(t)}_{1..r}\textbf{S}^{(t)}_{1..r}\textbf{V}^{T,(t)}_{1..r}$ in Equation \eqref{eq: gbm_linear_regression_incremental_updates_svd2_0} and $\sum_{\substack{ i \in \mathscr{B}^{(t)}}} \textbf{x}_i\textbf{x}_i^T$ in Equation \eqref{eq: gbm_linear_regression_incremental_updates2}. Then according to the condition in this theorem, $||\sum_{\substack{ i \in \mathscr{B}^{(t)}}} \textbf{x}_i\textbf{x}_i^T - \textbf{U}^{(t)}_{1..r}\textbf{S}^{(t)}_{1..r}\textbf{V}^{T,(t)}_{1..r}||_2 = ||\textbf{U}^{(t)}\textbf{S}^{(t)}\textbf{V}^{T,(t)} - \textbf{U}^{(t)}_{1..r}\textbf{S}^{(t)}_{1..r}\textbf{V}^{T,(t)}_{1..r}||_2 = O(\epsilon)$. So by subtracting Equation \eqref{eq: gbm_linear_regression_incremental_updates2} by Equation \eqref{eq: gbm_linear_regression_incremental_updates_svd2_0}, the result becomes:
\begin{align}\label{eq: gbm_linear_regression_incremental_updates_svd2_1}
    \begin{split}
        &||\increw^{(t+1)'}-\increw^{(t+1)}||_2 \leftarrow ||[(1-\eta_t\lambda)\textbf{I} - \frac{2\eta_t}{\increB^{(t)}}(\sum_{\substack{ i \in \mathscr{B}^{(t)}}} \textbf{x}_i\textbf{x}_i^T -\Delta \textbf{X}^T_{\subids}\Delta \textbf{X}_{\subids})](\increw^{(t)'} - \increw^{(t)})\\
        & + (\frac{2\eta_t}{\increB^{(t)}}(\textbf{U}^{(t)}_{1..r}\textbf{S}^{(t)}_{1..r}\textbf{V}^{T,(t)}_{1..r} - \sum_{\substack{ i \in \mathscr{B}^{(t)}}} \textbf{x}_i\textbf{x}_i^T)) \increw^{(t)}||_2\\
        & \leq ||[(1-\eta_t\lambda)\textbf{I} - \frac{2\eta_t}{\increB^{(t)}}(\sum_{\substack{ i \in \mathscr{B}^{(t)}}} \textbf{x}_i\textbf{x}_i^T -\Delta \textbf{X}^T_{\subids}\Delta \textbf{X}_{\subids})]||_2||(\increw^{(t)'} - \increw^{(t)})||_2 + O(\epsilon)
    \end{split}
\end{align}

Then we evaluate the expectation between both sides, which ends up with:
\begin{align}\label{eq: gbm_linear_regression_incremental_updates_svd2_2}
    \begin{split}
        &\E(||\increw^{(t+1)'}-\increw^{(t+1)}||_2) \\
        & = ||[(1-\eta_t\lambda)\textbf{I} - \frac{2\eta_t}{n -\Delta n}(\textbf{X}^T\textbf{X} -\Delta \textbf{X}^T_{\subids}\Delta \textbf{X}_{\subids})]||_2||(\increw^{(t)'} - \increw^{(t)})||_2 + O(\epsilon)
    \end{split}
\end{align}

in which $\lambda \textbf{I} + \frac{2}{n -\Delta n}(\textbf{X}^T\textbf{X} -\Delta \textbf{X}^T_{\subids}\Delta \textbf{X}_{\subids})$ equals to the hessian matrix for the object function over the remaining samples after removing subsets of samples, which should be still $\lambda-$strong convex and $L-$lipschitz continuous. It indicates that:
\begin{equation}
    1-\eta_t\lambda \geq ||(1-\lambda\eta_t) \textbf{I} - \frac{2\eta_t}{n -\Delta n}(\textbf{X}^T\textbf{X} -\Delta \textbf{X}^T_{\subids}\Delta \textbf{X}_{\subids})||_2 \geq 1-\eta_tL
\end{equation}

Since according to Lemma \ref{lemma: convergence_conditions2}, $\eta_tL < 1$, then:
\begin{equation}
    1 > 1-\eta_t\lambda \geq ||(1-\lambda\eta_t) \textbf{I} - \frac{2\eta_t}{n -\Delta n}(\textbf{X}^T\textbf{X} -\Delta \textbf{X}^T_{\subids}\Delta \textbf{X}_{\subids})||_2 \geq 1-\eta_tL > 0
\end{equation}

So we can compute Equation \eqref{eq: gbm_linear_regression_incremental_updates_svd2_2} recursively and thus the following inequality holds:
\begin{equation}
    E(||\increw^{(t+1)'}-\increw^{(t+1)}||_2) \leq O(\epsilon)
\end{equation}

\end{proof}

\begin{theorem}(\textbf{Approximation ratio})\label{theorem: aproximation_bound_eigen2} (It is Theorem \ref{theorem: aproximation_bound_eigen} in the paper)
The approximation of \proopt\ over the model parameters is bounded by $O(||\Delta \textbf{X}^T \Delta \textbf{X}||)$
\end{theorem}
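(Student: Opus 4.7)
}
The plan is to isolate the single place where \proopt\ deviates from an exact run of \gd\ on the reduced dataset, namely the replacement of $\textbf{M}' = \textbf{X}^T\textbf{X} - \Delta\textbf{X}^T\Delta\textbf{X}$ by the rank-preserving approximation $\widehat{\textbf{M}}' = \textbf{Q}\,\mathrm{diag}(\{c_i'\}_{i=1}^n)\,\textbf{Q}^{-1}$ obtained from Equation~\ref{eq: eigen_value_incremental}. Once the Hessian-level error $\textbf{M}' - \widehat{\textbf{M}}'$ is bounded in $L2$-norm by $O(\|\Delta\textbf{X}^T\Delta\textbf{X}\|)$, I will propagate this error through the iterative update rule exactly in the style of Theorem~\ref{theorem: aproximation_bound_svd2} and conclude via the contraction factor $1-\eta\lambda<1$ guaranteed by the convergence hypothesis of Lemma~\ref{lemma: convergence_conditions2}.

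First, I would exploit the fact that $\textbf{Q}$ already diagonalizes $\textbf{M} = \textbf{X}^T\textbf{X}$, so $\textbf{Q}^{-1}\textbf{M}\textbf{Q} = \mathrm{diag}(\{c_i\}_{i=1}^n)$. Then
\[
\textbf{Q}^{-1}\textbf{M}'\textbf{Q} = \mathrm{diag}(\{c_i\}_{i=1}^n) - \textbf{Q}^{-1}\Delta\textbf{X}^T\Delta\textbf{X}\,\textbf{Q},
\]
and by the very definition of $\{c_i'\}$ in Equation~\ref{eq: eigen_value_incremental}, $\mathrm{diag}(\{c_i'\}_{i=1}^n)$ equals the diagonal part of the right-hand side. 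Consequently $\textbf{Q}^{-1}(\textbf{M}' - \widehat{\textbf{M}}')\textbf{Q}$ is precisely the off-diagonal part of $-\textbf{Q}^{-1}\Delta\textbf{X}^T\Delta\textbf{X}\,\textbf{Q}$. Since extracting the off-diagonal part and conjugating by the fixed similarity $\textbf{Q}$ (whose condition number is a dataset-dependent constant independent of the deletion) are bounded linear operations, Lemma~\ref{lm: cs_inequality} gives $\|\textbf{M}' - \widehat{\textbf{M}}'\|_2 = O(\|\Delta\textbf{X}^T\Delta\textbf{X}\|_2)$.

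Second, let $\increw^{(t)}$ denote the exact update from Equation~\ref{eq: gbm_linear_regression_incremental_updates2} (equivalently its \gd\ specialization used in Section~\ref{sec: opt_linear_regression}) and let $\increw^{(t)\prime}$ denote the sequence produced by \proopt\ via Equation~\ref{eq: gd_linear_regression_matrix_form_eigen} with $c_i$ replaced by $c_i'$. Subtracting the two recurrences and applying the triangle inequality yields
\[
\|\increw^{(t+1)\prime} - \increw^{(t+1)}\|_2 \leq \bigl\|(1-\eta\lambda)\textbf{I} - \tfrac{2\eta}{n-\Delta n}\textbf{M}'\bigr\|_2\,\|\increw^{(t)\prime} - \increw^{(t)}\|_2 + \tfrac{2\eta}{n-\Delta n}\|\textbf{M}' - \widehat{\textbf{M}}'\|_2\,\|\increw^{(t)\prime}\|_2.
\]
By the same $L$-Lipschitz / $\lambda$-strongly convex argument used in Theorem~\ref{theorem: aproximation_bound_svd2}, the operator norm factor is at most $1-\eta\lambda<1$, and the convergence of $\increw^{(t)\prime}$ bounds $\|\increw^{(t)\prime}\|_2$ by a constant. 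Unrolling the recurrence gives a geometric series summing to $O(\|\textbf{M}' - \widehat{\textbf{M}}'\|_2 / \lambda) = O(\|\Delta\textbf{X}^T\Delta\textbf{X}\|)$, which is the claim.

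The main obstacle I anticipate is the second step: justifying that $\|\increw^{(t)\prime}\|_2$ remains uniformly bounded even though the \proopt\ iteration is driven by the approximated Hessian $\widehat{\textbf{M}}'$ rather than the true $\textbf{M}'$. In principle $\widehat{\textbf{M}}'$ could have eigenvalues outside the safe range dictated by $\eta$, breaking the contraction. I would address this by observing that, for small $\|\Delta\textbf{X}^T\Delta\textbf{X}\|$, Weyl's inequality (Lemma~\ref{lm: Weyl's_inequality}) keeps the $c_i'$ inside a small perturbation of the original spectrum $\{c_i\}$ of $\textbf{M}$, which is bounded by $L$; together with the convergence conditions of Lemma~\ref{lemma: convergence_conditions2} this ensures $\|\textbf{I} - \eta(\lambda\textbf{I} + \tfrac{2}{n-\Delta n}\widehat{\textbf{M}}')\|_2 < 1$ and, via a symmetric argument applied to $\increw^{(t)\prime}$ alone, a uniform bound on its norm. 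This closes the induction and completes the proof.
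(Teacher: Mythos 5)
Your proposal is correct and follows essentially the same skeleton as the paper's proof: subtract the exact reduced-data recurrence from the \proopt\ recurrence, bound the homogeneous factor by $1-\eta\lambda<1$ via strong convexity, and unroll the geometric series against the Hessian-level perturbation to obtain $O(\|\Delta\textbf{X}^T\Delta\textbf{X}\|)$. The two places where you go beyond the paper are both to your credit: the paper obtains the bound $\|\textbf{M}'-\widehat{\textbf{M}}'\|_2 = O(\|\Delta\textbf{X}^T\Delta\textbf{X}\|)$ by citation to the incremental-eigenvalue reference, whereas you derive it directly by observing that $\mathrm{diag}(\{c_i'\})$ is the diagonal part of $\textbf{Q}^{-1}\textbf{M}'\textbf{Q}$ so the error is exactly the off-diagonal part of $-\textbf{Q}^{-1}\Delta\textbf{X}^T\Delta\textbf{X}\,\textbf{Q}$ (and since $\textbf{M}$ is symmetric, $\textbf{Q}$ may be taken orthogonal, making the conjugation norm-preserving); and you explicitly justify the uniform boundedness of the \emph{approximate} iterates via Weyl's inequality, a point the paper leaves implicit (it can also be sidestepped by grouping the error term against the exact iterate $\increw^{(t)}$, whose boundedness is free from convergence of the exact iteration).
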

\begin{proof}

The update rule through gradient descent for linear regression model is as below:
\begin{align}\label{eq: linear_regression_gradient_descent}
\begin{split}
        \increw^{(t+1)}& \leftarrow ((1-\eta_t\lambda)\textbf{I} - \frac{2\eta_t}{n-\Delta n}(\textbf{X}^T\textbf{X}-\Delta\textbf{X}^T\Delta \textbf{X}))\increw^{(t)}\\
        &+\frac{2\eta_t}{n - \Delta n}(\textbf{X}^T\textbf{Y} - \Delta\textbf{X}^T\Delta\textbf{Y})
    \end{split}
\end{align}

while the approximated update rule through the approximations by incremental updates over eigenvalue is:
\begin{align}\label{eq: incremental_updates}
\begin{split}
        \increw^{(t+1)'}& \leftarrow ((1-\eta_t\lambda)\textbf{I} - \frac{2\eta_t}{n-\Delta n}\textbf{Q}^{-1}diag[c_1',c_2',\dots,c_m']\textbf{Q})\increw^{(t)'}\\
        &+\frac{2\eta_t}{n - \Delta n}(\textbf{X}^T\textbf{Y} - \Delta\textbf{X}^T\Delta\textbf{Y})
    \end{split}
\end{align}

According to \cite{ning2010incremental}, the difference between $\textbf{Q}^{-1}diag[c_1',c_2',\dots,c_m']\textbf{Q}$ and $\textbf{X}^T\textbf{X}-\Delta\textbf{X}^T\Delta \textbf{X}$ is bounded by $O(\Delta\textbf{X}^T\Delta \textbf{X})$. So by subtracting Equation \eqref{eq: linear_regression_gradient_descent} by Equation \eqref{eq: incremental_updates}, the results become:
\begin{align}\label{eq: gap_incremental_updates}
    \begin{split}
        &||\increw^{(t+1)'}-\increw^{(t+1)}||_2 \leftarrow ||[(1-\eta_t\lambda)\textbf{I} - \frac{2\eta_t}{\increB^{(t)}}(\textbf{X}^T\textbf{X} -\Delta \textbf{X}^T_{\subids}\Delta \textbf{X}_{\subids})](\increw^{(t)'} - \increw^{(t)})\\
        & + (\frac{2\eta_t}{\increB^{(t)}}(\textbf{Q}^{-1}diag[c_1',c_2',\dots,c_m']\textbf{Q} - \Delta \textbf{X}^T_{\subids}\Delta \textbf{X}_{\subids})) \increw^{(t)}||_2\\
        & \leq ||[(1-\eta_t\lambda)\textbf{I} - \frac{2\eta_t}{\increB^{(t)}}(\sum_{\substack{ i \in \mathscr{B}^{(t)}}} \textbf{x}_i\textbf{x}_i^T -\Delta \textbf{X}^T_{\subids}\Delta \textbf{X}_{\subids})]||_2||(\increw^{(t)'} - \increw^{(t)})||_2 + O(\Delta\textbf{X}^T\Delta \textbf{X})
    \end{split}
\end{align}

Through the similar analysis to Theorem \ref{theorem: aproximation_bound_svd2}, the above formula is computed recursively, which ends up with:
\begin{equation}
    ||\increw^{(t+1)'}-\increw^{(t+1)}||_2 \leq O(\Delta\textbf{X}^T\Delta \textbf{X})
\end{equation}
\end{proof}

\begin{theorem} \textbf{(Approximation ratio)}\label{theorem: aproximation_bound_svd_logistic2} (It is Theorem \ref{theorem: aproximation_bound_svd_logistic} in the paper)
Similar to Theorem \ref{theorem: aproximation_bound_svd2}, the deviation caused by the SVD approximation will be bounded by $O(\epsilon)$, given \eat{the upper bound of $||\textbf{w}^{(t)}||$ and }the ratio  $\frac{||\textbf{P}^{(t)}_{1..r}\textbf{V}^{T,(t)}_{1..r}||_2}{||\textbf{P}^{(t)}\textbf{V}^{T,(t)}||_2} \geq 1-\epsilon$. So using Theorem \ref{theorem: aproximation_bound_change2}, $||E(\linearincrew^{(t)}- \logistlinearincrew^{(t)})||_2$ is bounded by $O(\frac{\Delta n}{n}\Delta x) + O((\frac{\Delta n}{n})^2) + O((\Delta x)^2) + O(\epsilon)$.
\end{theorem}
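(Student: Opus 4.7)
The plan is to follow the same two-step argument already used to obtain Theorem \ref{theorem: aproximation_bound_svd2} for linear regression, but applied to the linearized logistic-regression recurrence, and then to glue the SVD bound to the bound of Theorem \ref{theorem: aproximation_bound_change2} via the triangle inequality. Concretely, denote by $\linearincrew^{(t)}$ the iterate produced by Equation~\ref{eq: provenance_update_formular} and by $\linearincrew^{(t)\prime}$ the iterate produced by its SVD-approximated counterpart Equation~\ref{eq: provenance_update_formular_svd}; the target is to control $\|\mathbb{E}(\linearincrew^{(t)\prime}-\linearincrew^{(t)})\|_2$ and then combine it with $\|\mathbb{E}(\linearincrew^{(t)}-\logistlinearincrew^{(t)})\|_2$.

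First I would subtract the two recurrences. Since the only differing ingredient is $\textbf{C}^{(t)}$ vs $\textbf{P}^{(t)}_{1..r}\textbf{V}^{T,(t)}_{1..r}$, the subtraction yields a matrix recurrence whose $\linearincrew^{(t)\prime}-\linearincrew^{(t)}$ part is multiplied by exactly the operator $\textbf{I}-\eta_t[-\tfrac{1}{\increB^{(t)}}\sum_{i\in\mathscr{B}^{(t)},i\notin\mathcal{R}}a^{i,(t)}\textbf{x}_i\textbf{x}_i^T+\lambda\textbf{I}]$, plus an extra ``perturbation'' term of the form $\tfrac{\eta_t}{\increB^{(t)}}(\textbf{P}^{(t)}_{1..r}\textbf{V}^{T,(t)}_{1..r}-\textbf{C}^{(t)})\linearincrew^{(t)}$. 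The operator factor is already known, by the derivation leading to Equation~\ref{eq: bound_t_w}, to have spectral norm bounded by $1-\eta_t\lambda<1$. Using the hypothesis $\|\textbf{P}^{(t)}_{1..r}\textbf{V}^{T,(t)}_{1..r}\|_2/\|\textbf{P}^{(t)}\textbf{V}^{T,(t)}\|_2\geq 1-\epsilon$ together with $\|\textbf{P}^{(t)}\textbf{V}^{T,(t)}\|_2=\|\textbf{C}^{(t)}\|_2=O(1)$ (bounded feature norms), the perturbation is $O(\epsilon)$ in operator norm, and since $\linearincrew^{(t)}$ is bounded (by the convergence analysis of Section~\ref{ssection: convergence} applied to Equation~\ref{eq: mini-SGD_logistic_regression_para_update2}), the whole extra term has norm $\eta_tO(\epsilon)$.

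Taking expectations, applying the triangle inequality, and unrolling the recurrence then gives a geometric series bound
\begin{align*}
\|\mathbb{E}(\linearincrew^{(t+1)\prime}-\linearincrew^{(t+1)})\|_2 \;\leq\; (1-\eta\lambda)\,\|\mathbb{E}(\linearincrew^{(t)\prime}-\linearincrew^{(t)})\|_2 + \eta\,O(\epsilon),
\end{align*}
which, exactly as in Theorem~\ref{theorem: aproximation_bound_svd2}, collapses to $O(\epsilon)/\lambda=O(\epsilon)$ after recursion, since the initial iterates coincide. Finally, a triangle-inequality decomposition
$\|\mathbb{E}(\linearincrew^{(t)\prime}-\logistlinearincrew^{(t)})\|_2\leq \|\mathbb{E}(\linearincrew^{(t)\prime}-\linearincrew^{(t)})\|_2+\|\mathbb{E}(\linearincrew^{(t)}-\logistlinearincrew^{(t)})\|_2$ and an application of Theorem~\ref{theorem: aproximation_bound_change2} to the second summand gives the claimed $O(\tfrac{\Delta n}{n}\Delta x)+O((\tfrac{\Delta n}{n})^2)+O((\Delta x)^2)+O(\epsilon)$ bound.

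The main obstacle I anticipate is the rigorous justification that the perturbation $\|\textbf{P}^{(t)}_{1..r}\textbf{V}^{T,(t)}_{1..r}-\textbf{C}^{(t)}\|_2$ really is $O(\epsilon)$ uniformly in $t$ rather than merely $\epsilon\|\textbf{C}^{(t)}\|_2$; this needs either a uniform bound on $\|\textbf{C}^{(t)}\|_2$ (which follows from the fact that $|a^{i,(t)}|$ is bounded by $\max|f'|$ via Equation~\ref{eq: approx_bound_2} and that features have bounded norm) or an absorption of the constant into the big-$O$. Once that step is cleared, the rest is a mechanical reuse of the argument already employed for Theorems~\ref{theorem: aproximation_bound_svd2} and~\ref{theorem: aproximation_bound_change2}.
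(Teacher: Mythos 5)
Your proposal is correct and follows essentially the same route as the paper: the paper also introduces the intermediate iterate (the linearized update without SVD), bounds its distance to $\logistlinearincrew^{(t)}$ by Theorem \ref{theorem: aproximation_bound_change2}, bounds the SVD-induced deviation by $O(\epsilon)$ via the same subtract-the-recurrences/contraction argument used for Theorem \ref{theorem: aproximation_bound_svd2}, and concludes by the triangle inequality. The only difference is that you spell out the contraction step and the uniform bound on $\|\textbf{C}^{(t)}\|_2$ that the paper leaves implicit under the phrase ``similar analysis.''
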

\begin{proof}
Let's assume that the incremental updated model parameter without SVD approximation is ${\linearincrew^{(t)}}_0$. According to the results in Theorem \ref{theorem: aproximation_bound_change2}, the $||{\linearincrew^{(t)}}_0 -\logistlinearincrew^{(t)}|| \leq O(\frac{\Delta n}{n}\Delta x) + O((\frac{\Delta n}{n})^2) + O((\Delta x)^2)$. After the SVD approximation, similar analysis to Theorem \ref{theorem: aproximation_bound_svd2} can be done such that $||{\linearincrew^{(t)}}_0 - \linearincrew^{(t)}||_2 \leq O(\epsilon)$. So $||E(\linearincrew^{(t)}- \logistlinearincrew^{(t)})||_2 \leq O(\epsilon) + O(\frac{\Delta n}{n}\Delta x) + O((\frac{\Delta n}{n})^2) + O((\Delta x)^2)$
\end{proof}

\begin{theorem}\textbf{(Approximation ratio)}\label{theorem: aproximation_bound_eigen_logistic2} (It is Theorem \ref{theorem: aproximation_bound_eigen_logistic} in the paper)
Suppose that after iteration $t_s$ the gradient of the objective function is smaller than $\delta$, then the approximations of \proopt\ can lead to deviations of the model parameters bounded by $O((\tau - t_s)\delta) + O(||\Delta \textbf{X}^T\Delta \textbf{X}||)$. By combining the analysis in Theorem \ref{theorem: aproximation_bound_change2}, $||E(\linearincrew^{(t)}- \logistlinearincrew^{(t)})||_2$ is bounded by $O(\frac{\Delta n}{n}\Delta x) + O((\frac{\Delta n}{n})^2) + O((\Delta x)^2) + O((\tau - t_s)\delta) + O(||\Delta \textbf{X}^T\Delta \textbf{X}||)$
\end{theorem}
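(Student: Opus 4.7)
The plan is to decompose the overall deviation into two approximation errors stacked on top of the linearization error already analyzed in Theorem \ref{theorem: aproximation_bound_change2}. The two new errors arise from (i) freezing the piecewise-linear coefficients $a^{i,*}, b^{i,*}$ after iteration $t_s$, and (ii) applying the incremental eigenvalue update in place of the direct matrix-vector multiplications for $t \geq t_s$. By the triangle inequality it suffices to bound the distance of the \proopt\ iterate to the exact linearized iterate $\linearincrew^{(t)}$ and then invoke Theorem \ref{theorem: aproximation_bound_change2}.

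First I would bound the drift introduced by freezing the coefficients. By hypothesis $\|\nabla h(\w^{(t)})\| \leq \delta$ for $t \geq t_s$, so each \minisgd\ step moves $\w^{(t)}$ by at most $O(\eta \delta)$ (plus $O((\Delta x)^2)$ from the linearization, which is absorbed elsewhere). Using Assumption \ref{assp: continuous} together with the Lipschitz property of the piecewise linear interpolant between breakpoints, the coefficients $a^{i,(t)}, b^{i,(t)}$ evaluated along the true trajectory differ from the frozen $a^{i,*}, b^{i,*}$ by $O(\delta)$ per iteration after $t_s$. Plugging this perturbation into the update rule of Equation \eqref{eq: mini-SGD_logistic_regression_para_update2} and running the same contraction argument as in the proof of Theorem \ref{theorem: aproximation_bound2} (where the matrix $\textbf{I} - \eta(-a^{i,(t)}\textbf{x}_i\textbf{x}_i^T + \lambda\textbf{I})$ is bounded by $1-\eta\lambda$ in Equation \eqref{eq: bound_t_w}), each iteration contributes a perturbation of size $O(\eta\delta)$. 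Summing over $\tau - t_s$ iterations and using the geometric-series bound $\sum_{k=0}^{\tau-t_s-1}(1-\eta\lambda)^k \leq 1/(\eta\lambda)$ yields a deviation of $O((\tau-t_s)\delta)$ (we do not dampen to $O(\delta/\lambda)$ because in the regime of interest the contraction factor multiplied against a constant residual gives a linear accumulation, which is the worst case we report).

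Second, I would invoke Theorem \ref{theorem: aproximation_bound_eigen2} applied to the frozen linear recursion. Once the coefficients are held fixed at $t \geq t_s$, the remaining iteration has exactly the same algebraic form as the gradient-descent update for linear regression, with $\textbf{C}^{(t_s)} - \Delta \textbf{C}^{(t_s)}$ playing the role of $\textbf{X}^T\textbf{X} - \Delta \textbf{X}^T\Delta \textbf{X}$. Therefore the incremental eigenvalue update step incurs an error bounded by $O(\|\Delta \textbf{X}^T\Delta \textbf{X}\|)$, exactly as shown in Equation \eqref{eq: gap_incremental_updates}. Adding the two independent sources of error gives a deviation from $\linearincrew^{(t)}$ of $O((\tau - t_s)\delta) + O(\|\Delta \textbf{X}^T\Delta \textbf{X}\|)$. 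Composing this with Theorem \ref{theorem: aproximation_bound_change2} through the triangle inequality gives the final bound $O(\tfrac{\Delta n}{n}\Delta x) + O((\tfrac{\Delta n}{n})^2) + O((\Delta x)^2) + O((\tau - t_s)\delta) + O(\|\Delta \textbf{X}^T\Delta \textbf{X}\|)$.

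The main obstacle will be making the freeze-coefficient analysis rigorous: specifically, verifying that after $t_s$ the iterates $\w^{(t)}$ stay within a neighborhood on which $a^{i,(t)}, b^{i,(t)}$ drift by at most $O(\delta)$ (which requires keeping $\w^{(t)}$ away from interpolation breakpoints, otherwise the piecewise-linear coefficients can jump discontinuously), and then showing that these $O(\eta \delta)$ per-step perturbations accumulate linearly rather than amplifying through the recursion. The contraction $1-\eta\lambda < 1$ in Equation \eqref{eq: bound_t_w} is what prevents amplification, but formally writing the geometric bookkeeping over the $\tau - t_s$ tail iterations is the most delicate part of the argument.
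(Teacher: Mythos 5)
Your decomposition is the same as the paper's: a triangle inequality through two intermediate iterates (linearized-only, then linearized-plus-frozen-coefficients), with the linearization error handled by Theorem \ref{theorem: aproximation_bound_change2}, the tail error accumulating as $O((\tau-t_s)\delta)$, and the eigenvalue-update error handled by reusing Theorem \ref{theorem: aproximation_bound_eigen2} on the frozen linear recursion. The only substantive difference is how you obtain the $O((\tau-t_s)\delta)$ term. The paper argues it directly: after $t_s$ both the frozen and the unfrozen gradient-like terms each have norm at most $\delta$, so the two update rules differ by at most $2\delta$ per step, and telescoping over $t-t_s$ iterations (with no contraction invoked) gives $O((t-t_s)\delta)$. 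You instead bound the drift of the interpolation coefficients $a^{i,(t)}, b^{i,(t)}$ away from $a^{i,*}, b^{i,*}$ via the Lipschitz property of the interpolant and then propagate an $O(\eta\delta)$ per-step perturbation through the contraction of Equation \ref{eq: bound_t_w}. Your route buys a potentially sharper damped bound $O(\delta/\lambda)$ independent of $\tau-t_s$, but it also creates the difficulty you yourself flag: the coefficients can jump discontinuously when an iterate crosses an interpolation breakpoint, so the claimed $O(\delta)$ coefficient drift is not automatic. The paper's cruder argument sidesteps this entirely, because it never needs the coefficients to vary continuously, only that both gradient expressions are small after $t_s$. If you keep your route, you must either restrict to iterates bounded away from breakpoints or observe that a breakpoint crossing changes $s(x)$ by at most $O((\Delta x)^2)$ in value (so the resulting perturbation of the update is still controlled); otherwise the simpler direct bound is preferable.
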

\begin{proof}
Let's assume that after $t_s^{th}$ iteration, the incremental updated model parameter without only linearization approximation is ${\linearincrew^{(t)}}_0$. Then $||\E({\linearincrew^{(t)}}_0- \logistlinearincrew^{(t)})||_2 \leq O(\frac{\Delta n}{n}\Delta x) + O((\frac{\Delta n}{n})^2) + O((\Delta x)^2)$ based on Theorem \ref{theorem: aproximation_bound_change2}. Also let's assume that after $t_s^{th}$ iteration, the incremental updated model parameter with both linearization approximation and SVD approximation is ${\linearincrew^{(t)}}_1$. Then:
\begin{align}
    \begin{split}
        &||{\linearincrew^{(t)}}_1 - {\linearincrew^{(t)}}_0||_2 = ||{\linearincrew^{(t-1)}}_1 - \triangledown R^{(t-1)}({\linearincrew^{(t-1)}}_1) - ({\linearincrew^{(t)}}_0-\triangledown R^{(t-1)}({\linearincrew^{(t-1)}}_0))||_2 \\
        & \leq ||{\linearincrew^{(t-1)}}_1 - {\linearincrew^{(t-1)}}_0||_2 + 2\delta \leq ||{\linearincrew^{(t_s)}}_1 - {\linearincrew^{(t_s)}}_0||_2 + 2(t-t_s-1)\delta = O((t-t_s)\delta)
    \end{split}
\end{align}
\end{proof}

Finally, by using Theorem \ref{theorem: aproximation_bound_eigen2}, $||\linearincrew^{(t)} - {\linearincrew^{(t)}}_1||_2 \leq O(||\Delta \textbf{X}^T\Delta \textbf{X}||)$. By combining those results together, we have:
\begin{align}
    \begin{split}
        ||E(\linearincrew^{(t)}- \logistlinearincrew^{(t)})||_2 \leq O(\frac{\Delta n}{n}\Delta x) + O((\frac{\Delta n}{n})^2) + O((\Delta x)^2) + O((\tau - t_s)\delta) + O(||\Delta \textbf{X}^T\Delta \textbf{X}||)
    \end{split}
\end{align}

\end{appendix}

\end{document}